\documentclass[11pt]{article}
\usepackage[T1]{fontenc} %
\usepackage[utf8]{inputenc}

\usepackage[margin=1in]{geometry}
\usepackage{tgpagella}
\usepackage{mathpazo}
\let\palatinosum\sum %
\usepackage{newpxtext}
\usepackage{setspace}
\makeatletter
\DeclareOldFontCommand{\rm}{\normalfont\rmfamily}{\mathrm}
\makeatother

\usepackage{microtype} %
\usepackage{xcolor}
\usepackage{color}
\definecolor{niceRed}{RGB}{190,38,38}
\definecolor{niceYellow}{HTML}{f5b400}
\definecolor{blueGrotto}{HTML}{059DC0}
\definecolor{royalBlue}{HTML}{057DCD}
\definecolor{navyBlue}{HTML}{0B579C}
\definecolor{limeGreen}{HTML}{81B622}
\definecolor{nicePurple}{HTML}{9c27b0}
\definecolor{lightRoyalBlue}{HTML}{def2ff}  
\definecolor{gold}{HTML}{ffa300}
\definecolor{orange}{HTML}{db880b}

\newcommand{\white}[1]{\textcolor{white}{#1}}

\usepackage[suppress]{color-edits} 
\addauthor[Constantine]{cc}{teal}
\addauthor[Alkis]{aka}{blue}
\addauthor[Alex]{ako}{orange}
\addauthor[Anay]{am}{gold}
\addauthor[Claude]{claude}{orange}

\usepackage{hyperref}
\hypersetup{
  colorlinks = true, 
  urlcolor = {blueGrotto},
  linkcolor = {royalBlue},
  citecolor = {navyBlue}
}
\usepackage[
  backref=true,
  backend=biber,
  natbib=true,
  style=alphabetic,
  sorting=alphabeticlabel,
  sortcites=true,
  minbibnames=3,
  maxbibnames=999,
  mincitenames=4,
  maxcitenames=4,
  minalphanames=4,
  maxalphanames=4
]{biblatex}

\DeclareSortingTemplate{alphabeticlabel}{
  \sort[final]{%
    \field{labelalpha} %
  }
  \sort{%
    \field{year}   %
  }
  \sort{%
    \field{title}  %
  }
}

\AtBeginRefsection{\GenRefcontextData{sorting=ynt}}
\AtEveryCite{\localrefcontext[sorting=ynt]}
\usepackage{booktabs} 
\usepackage{multicol} 
\usepackage{multirow} 
\usepackage{longtable}

\usepackage{subfigure}
\usepackage{graphicx}
\usepackage{float} 
\usepackage{tikz}
\usepackage{pgfplots}
\pgfplotsset{compat=1.17}
\usepgfplotslibrary{fillbetween}

\usepackage{physics} %
\usepackage{amsmath}
\usepackage{amsthm}
\usepackage{bbm}
\usepackage{blkarray}
\usepackage{mathtools}
\usepackage{nicefrac}
\usepackage{dsfont}
\usepackage{pifont} %
\usepackage{thm-restate} %
\usepackage[capitalise,noabbrev,nameinlink]{cleveref} %
\usepackage{derivative}

\usepackage{typed-checklist}
\usepackage{enumerate} 
\usepackage[inline]{enumitem} 
\usepackage{tcolorbox}
\usepackage[framemethod=TikZ]{mdframed}
\mdfsetup{%
backgroundcolor=yellow!00, , %
roundcorner=4pt,
skipabove=15pt,
skipbelow=5pt,
linewidth=1pt}
\usepackage{changepage} %

\usepackage{algorithm}
\usepackage{algpseudocode}[1]

\usepackage{mathrsfs} %
\usepackage{soul} %

\theoremstyle{plain} 
\newtheorem{theorem}{Theorem}[section]
\newtheorem{corollary}[theorem]{Corollary}
\newtheorem{proposition}[theorem]{Proposition}
\newtheorem{lemma}[theorem]{Lemma}
\newtheorem{fact}[theorem]{Fact}

\newtheorem{assumption}{Assumption}
\newtheorem{assumptionMain}{Assumption}

\newtheorem{definition}{Definition}

\newtheorem*{definition*}{Definition}

\theoremstyle{definition} 
\newtheorem{example}[theorem]{Example}
\newtheorem{remark}[theorem]{Remark}

\theoremstyle{remark}

\AfterEndEnvironment{definition}{\noindent\ignorespaces}
\AfterEndEnvironment{infdefinition}{\noindent\ignorespaces}
\AfterEndEnvironment{assumption}{\noindent\ignorespaces}
\AfterEndEnvironment{assumptionMain}{\noindent\ignorespaces}
\AfterEndEnvironment{problem}{\noindent\ignorespaces}
\AfterEndEnvironment{lemma}{\noindent\ignorespaces}
\AfterEndEnvironment{theorem}{\noindent\ignorespaces}
\AfterEndEnvironment{proposition}{\noindent\ignorespaces}
\AfterEndEnvironment{fact}{\noindent\ignorespaces}
\AfterEndEnvironment{question}{\noindent\ignorespaces}
\AfterEndEnvironment{corollary}{\noindent\ignorespaces}
\AfterEndEnvironment{model}{\noindent\ignorespaces}
\AfterEndEnvironment{remark}{\noindent\ignorespaces}
\AfterEndEnvironment{proof}{\noindent\ignorespaces}
\AfterEndEnvironment{fact}{\noindent\ignorespaces}
\AfterEndEnvironment{minftheorem}{\noindent\ignorespaces}
\AfterEndEnvironment{inftheorem}{\noindent\ignorespaces}
\AfterEndEnvironment{maintheorem}{\noindent\ignorespaces}
\AfterEndEnvironment{restatable}{\noindent\ignorespaces}
\AfterEndEnvironment{infassumption}
{\noindent\ignorespaces}
\AfterEndEnvironment{infcorollary}{\noindent\ignorespaces}

\crefname{section}{Section}{Sections}
\crefname{theorem}{Theorem}{Theorems}
\crefname{lemma}{Lemma}{Lemmas}
\crefname{problem}{Problem}{Problems}
\crefname{program}{Program}{Programs}
\crefname{definition}{Definition}{Definitions}
\crefname{conjecture}{Conjecture}{Conjectures}
\crefname{corollary}{Corollary}{Corollaries}
\crefname{construction}{Construction}{Constructions}
\crefname{conjecture}{Conjecture}{Conjectures}
\crefname{claim}{Claim}{Claims}
\crefname{observation}{Observation}{Observations}
\crefname{proposition}{Proposition}{Propositions}
\crefname{fact}{Fact}{Facts}
\crefname{question}{Question}{Questions}
\crefname{problem}{Problem}{Problems}
\crefname{remark}{Remark}{Remarks}
\crefname{example}{Example}{Examples}
\crefname{equation}{Equation}{Equations}
\crefname{appendix}{Appendix}{Appendices}
\crefname{algorithm}{Algorithm}{Algorithms}
\crefname{model}{Model}{Models}
\crefname{figure}{Figure}{Figures}
\crefname{infassumption}{Informal Assumption}{Informal Assumptions}
\crefname{inftheorem}{Informal Theorem}{Informal Theorems}
\crefname{infdefinition}{Informal Definition}{Informal Definitions}
\crefname{minftheorem}{Main Informal Theorem}{Main Informal Theorems}
\crefname{maintheorem}{Main Theorem}{Main Theorems}
\crefname{assumption}{Assumption}{Assumptions}
\crefname{assumptionMain}{Assumption}{Assumptions}
\crefname{step}{Step}{Steps}
\crefname{result}{Result}{Results}
\crefname{event}{Event}{Events}
\crefname{none}{}{}

\newlist{asmpenum}{enumerate}{1} %
\setlist[asmpenum]{label={\arabic*.},ref=\theassumption.{\arabic*}}
\crefname{asmpenumi}{Assumption}{Assumptions}

\usepackage{etoolbox}

\BeforeBeginEnvironment{subenvironment}{\white{.}\\ \vspace{-10mm}}
\AfterEndEnvironment{subenvironment}{\vspace{4mm}}

\makeatletter
\newcommand{\tagnum}[2]{%
    \refstepcounter{equation}%
    \tag{#1) \ (\theequation}%
    \protected@write \@auxout {}{%
        \string \newlabel {#2}{{\theequation}{\thepage}{}{equation.\theequation}{}}%
    }%
}
\makeatother

\makeatletter
\renewcommand{\eqref}[1]{\textup{\eqrefform@{\ref{#1}}}}
\let\eqrefform@\tagform@

\newcommand{\changetag}[1]{%
  \renewcommand\tagform@[1]{\maketag@@@{(\ignorespaces#1\unskip\@@italiccorr)}}%
}
\makeatother

\newcommand{\quadtext}[1]{\quad\text{#1}\quad}
\newcommand{\qquadtext}[1]{\qquad\text{#1}\qquad}
 
\newcommand{\quadand}{\quadtext{and}}
\newcommand{\qquadand}{\qquadtext{and}}

\newcommand{\quadwhere}{\quadtext{where}}

\def\abs#1{\left| #1 \right|}

\newcommand{\given}{\;\middle|\;}
\newcommand{\sinparen}[1]{(#1)}

\newcommand{\sinsquare}[1]{[#1]}
\newcommand{\inbrace}[1]{\left\{#1\right\}}

\newcommand{\inparen}[1]{\left(#1\right)}
\newcommand{\insquare}[1]{\left[#1\right]}
\newcommand{\inangle}[1]{\left\langle#1\right\rangle}

\newcommand{\snorm}[1]{\ensuremath{\| #1 \|}}
\let\norm\relax
\newcommand{\norm}[1]{\ensuremath{\left\lVert #1 \right\rVert}}

\newcommand{\R}{\mathbb{R}}

\newcommand{\E}{\operatornamewithlimits{\mathbb{E}}} 
\newcommand{\Ex}{\E}
\newcommand{\Exp}{\Ex}

\renewcommand{\var}{\ensuremath{\operatornamewithlimits{\rm Var}}}
\newcommand\ind{\mathds{1}}

\newcommand{\argmin}{\operatornamewithlimits{arg\,min}}

\newcommand{\tv}[2]{\operatorname{d}_{\mathsf{TV}}\sinparen{#1,#2}}

\newcommand{\sfrac}[2]{{#1/#2}} 
\newcommand{\nfrac}[2]{\nicefrac{#1}{#2}}

\newcommand{\poly}{\mathrm{poly}}

\newcommand{\vc}{\textrm{\rm VC}}

\newcommand{\iid}{i.i.d.}

\let\op\relax
\newcommand{\op}{\mathrm{op}}

\newcommand{\eps}{\varepsilon}
\renewcommand{\epsilon}{\varepsilon}
\makeatletter
\newcommand*{\tran}{{\mathpalette\@tran{}}}
\newcommand*{\@tran}[2]{\raisebox{\depth}{$\m@th#1\intercal$}}
\makeatother

\mathchardef\NABLA"272
\newcommand*{\Nabla}{\boldsymbol\NABLA}
\let\nabla\Nabla

\renewcommand{\hat}{\widehat}
\newcommand{\wh}[1]{\widehat{#1}}

\newcommand{\hw}{\wh{w}}
\newcommand{\hx}{\wh{x}}
\newcommand{\hy}{\wh{y}}

\newcommand{\hmu}{\wh{\mu}}
\newcommand{\hnu}{\wh{\nu}}
\newcommand{\hSigma}{\wh{\Sigma}}

\renewcommand{\bar}{\overline}

\renewcommand{\tilde}{\widetilde}
\newcommand{\wt}[1]{\widetilde{#1}}

\newcommand{\tw}{\wt{w}}
\newcommand{\tx}{\wt{x}}
\newcommand{\ty}{\wt{y}}
\newcommand{\tz}{\wt{z}}

\newcommand{\customcal}[1]{\euscr{#1}}
\newcommand{\cA}{\customcal{A}}
\newcommand{\cB}{\customcal{B}}
\newcommand{\cC}{\customcal{C}}
\newcommand{\cD}{\customcal{D}}
\newcommand{\cE}{\customcal{E}}

\newcommand{\cH}{\customcal{H}}

\newcommand{\cL}{\customcal{L}}

\newcommand{\cN}{\customcal{N}}
\newcommand{\cO}{\customcal{O}}
\newcommand{\cP}{\customcal{P}} 
\newcommand{\cQ}{\customcal{Q}} 
 
\newcommand{\cS}{\customcal{S}}

\newcommand{\cX}{\customcal{X}}

\DeclareMathAlphabet{\mathdutchcal}{U}{dutchcal}{m}{n}
\SetMathAlphabet{\mathdutchcal}{bold}{U}{dutchcal}{b}{n}
\DeclareMathAlphabet{\mathdutchbcal}{U}{dutchcal}{b}{n}

\DeclareMathAlphabet\urwscr{U}{urwchancal}{b}{n}%
\DeclareMathAlphabet\rsfscr{U}{rsfso}{m}{n}
\DeclareMathAlphabet\euscr{U}{eus}{m}{n}
\DeclareFontEncoding{LS2}{}{}
\DeclareFontSubstitution{LS2}{stix}{m}{n}
\DeclareMathAlphabet\stixcal{LS2}{stixcal}{m} {n}

\renewcommand{\paragraph}[1]{\medskip \noindent\textbf{#1}~}

\newcommand{\ie}{\textit{i.e.}}
\newcommand{\eg}{\textit{e.g.}}

\newcommand{\hypo}[1]{\mathdutchcal{#1}}

\newcommand{\hyH}{\hypo{H}}

\renewcommand{\hat}{\widehat}
\renewcommand{\tilde}{\widetilde}

\newcommand{\wStar}{w^\star}

\newcommand{\Sstar}{S^\star}

\newcommand{\negLL}{\mathscr{L}}

\newcommand{\Dobs}{\cD_{\mathrm{obs}}}

\usepackage{array}
\newcolumntype{L}[1]{>{\raggedright\let\newline\\\arraybackslash\hspace{0pt}}m{#1}}
\newcolumntype{C}[1]{>{\centering\let\newline\\\arraybackslash\hspace{0pt}}m{#1}}
\newcolumntype{R}[1]{>{\raggedleft\let\newline\\\arraybackslash\hspace{0pt}}m{#1}}

\newcommand{\normal}[2]{\cN\inparen{#1,#2}}
\newcommand{\truncatedNormal}[3]{\cN\inparen{#1,#2,#3}}
\newcommand{\normalMass}[3]{\cN\inparen{#1; #2, #3}}
\newcommand{\truncatedNormalMass}[4]{\cN\inparen{#1; #2,#3,#4}}

\usepackage{newpxmath} %
\let\sum\palatinosum

\newcommand{\obs}{\mathrm{obs}}
\renewcommand{\cH}{\hyH}

\newcommand{\negspace}{\vspace{0mm}}
 
\title{Linear Regression with Unknown Truncation\\ Beyond Gaussian Features 
} 
\date{}
\author{ 
        \centering 
        \begin{tabular}{C{5cm}C{2cm}C{5cm}}
        {\bf Alexandros Kouridakis} && {\bf Anay Mehrotra}\\[0mm]
        UT Austin && Stanford University  \\[-1mm] \mbox{\small\href{mailto:alexkouridakis@utexas.edu}{\texttt{alexkouridakis@utexas.edu}}} &&  \mbox{\small\href{mailto:anaymehrotra1@gmail.com}{\texttt{anaymehrotra1@gmail.com}}}
        \\[4mm] 
         {\bf Alkis Kalavasis} && {\bf Constantine Caramanis}\\[0mm]
        Yale University && UT Austin \\[-1mm] 
        \mbox{\small\href{mailto:alkis.kalavasis@yale.edu}{\texttt{alkis.kalavasis@yale.edu}}}  &&  
        \small\href{mailto:constantine@utexas.edu}{\texttt{constantine@utexas.edu}}
        \\
        \end{tabular} 
}

\begin{document}
\maketitle

\begin{abstract}
    In truncated linear regression, samples $(x,y)$ are shown only when the outcome $y$ falls inside a certain survival set $\Sstar$ and the goal is to estimate the unknown {$d$-dimensional} regressor $w^\star$.
    This problem has a long history of study in Statistics and Machine Learning going back to the works of \cite{Galton1897,tobin1958estimation} and more recently in, \eg{}, \cite{daskalakis2019computationally,daskalakis2021efficient,lee2023learning,lee2024efficient}.
    Despite this long history, however, most prior works are limited to the special case where $\Sstar$ is precisely known.
    The more practically relevant case, where $\Sstar$ is unknown and must be learned from data, remains open: indeed, {here} the only available algorithms require strong assumptions on the distribution of the feature vectors (\eg{}, Gaussianity) and, even then, have a $d^{\poly(1/\eps)}$ run time for achieving $\epsilon$ accuracy.  

    In this work, we give the first algorithm for truncated linear regression with unknown survival set that runs in $\poly(\nfrac{d}{\eps})$ time, by only requiring that the feature vectors are sub-Gaussian.
    Our algorithm relies on a novel subroutine for efficiently learning unions of a bounded number of intervals using access to positive examples (without any negative examples) under a certain smoothness condition.
    This learning guarantee adds to the line of works on positive-only PAC learning and may be of independent interest.
\end{abstract}

\newpage
{\setstretch{1.025} \tableofcontents}
\newpage

\negspace{}
\section{Introduction}
\negspace{}
    \label{sec:intro}
Selection biases, that is, systematic omissions in data, arise in almost all domains, from the physical sciences to econometrics and social network analysis.
Indeed, as Lisa Gitelman argues, \textit{all} data exhibits selection biases and the term ``raw data'' is an oxymoron \cite{lisa2013oxymoron}.
Among different types of selection biases, \textit{truncation} is one of the most ubiquitous and challenging: truncation arises when samples are recorded only when they fall inside a ``survival'' region $\Sstar$, and all other samples are deleted, or \textit{truncated}, from the data, \eg{}, \cite{tobin1958estimation}. 
Truncation arises naturally across diverse domains: from econometrics \cite{haussman1976truncation} and astronomy \cite{woodroofe1985estimating} to medical and biological sciences \cite{klein2003survival}, engineering \cite{jiang2020reliability,meeker2014reliability}, and the social sciences \cite{breen1996regression}.
The difficulty of truncation lies in the fact that no samples are observed outside the survival region.
Thus, any algorithm must extrapolate from data observed within $\Sstar$ to the region outside it.
Standard estimation procedures such as ordinary least squares fail to handle this and produce biased estimates, \eg{}, \cite{maddala1983limited,Cohen91,tobin1958estimation}.

In this work, we revisit the classical problem of \emph{truncated linear regression}, which has a rich history of study in statistics \cite{Cohen91}, machine learning (\eg{}, \cite{daskalakis2018efficient,daskalakis2019computationally,ons_switch_grad,lee2023learning,lee2024efficient}), and related fields \cite{maddala1983limited}.
In this problem, we observe samples of the form $(x,y)$ where $x$ is a $d$-dimensional feature vector and $y$ is a scalar outcome. 
These samples are generated as follows: first, the feature vector $x$ is drawn from a distribution $\cD$; then, the output $y = x^\top \wStar + \xi$ is computed, where $\wStar$ is an unknown regression vector and $\xi\sim \normal{0}{1}$ is standard Gaussian noise. 
Unlike in standard linear regression, we do not observe every sample $(x,y)$. 
Instead, there exists a set $\Sstar \subseteq \R$, called the \emph{survival set}, such that we observe $(x,y)$ if and only if $y\in\Sstar$; otherwise, the sample is hidden from us. 
The observed dataset thus consists of i.i.d.\ draws from the distribution of $(x,y)$ conditioned on $y\in\Sstar$, and the goal is to \mbox{estimate $\wStar$ from these truncated samples (see \cref{def:truncatedRegression}).}

To gain some intuition, consider the following example from labor economics: 
\citet{haussman1976truncation} studied the effect of education and (different measures of) intelligence (which {are modeled as a} feature vector $x$) on the earnings $y$ in low-income populations, but discovered that the dataset only included an observation $(x,y)$ when the annual income $y$ lay below a prescribed eligibility threshold $\tau$, leading to the truncation set $\Sstar=(-\infty, \tau]$.
More broadly, the survival sets arising in real-world datasets can be far more complicated than a single threshold on the response.
A representative example comes from modern astronomical surveys, {where whether a} star of a certain luminosity $y$ {is observed} is governed by a multi-stage pipeline that can truncate different non-contiguous intervals of luminosities.
In practice, the resulting truncation set reflects a composition of instrument detection limits, survey design choices, and upstream catalog filters, among other factors, together making $\Sstar$ complex and hard to know explicitly \cite{teerikorpi1997malmquist,woodroofe1985estimating,sdss_dr17_selection_biases,sdss_dr17_targets}
(See \cref{app:astronomy} for a concrete list of various sources of truncation in large-scale astronomical surveys).
{Beyond direct applications in problems where truncation arises,} algorithms for truncated linear regression also have applications in other domains, including  treatment effect estimation in observational studies (see, \eg{}, \cite{cai2025makestreatmenteffectsidentifiable} and references therein). %

\paragraph{Prior Work.}
Classical works in statistics laid the groundwork for truncated linear regression \cite{tobin1958estimation,haussman1976truncation,Cohen91}.
However, these works were restricted to simple survival sets and did not come with computational efficiency guarantees in high-dimensional feature-regimes that regularly arise in modern applications. 
Indeed, no computationally efficient algorithm was known in high dimensions until the work of \citet{daskalakis2019computationally}, who studied the special case where the truncation set $\Sstar$ is precisely known via a membership oracle.\footnote{A membership oracle for $\Sstar$ is a primitive that, given a point $x$, outputs $\textsf{Yes}$ if $x\in \Sstar$ and $\textsf{No}$ otherwise.}
Under certain assumptions (which we discuss {later in this section}), they gave a $\poly(\nfrac{d}{\eps})$-time algorithm to obtain an $\eps$-approximation of $\wStar$. 
Several subsequent works have continued the study of truncated linear regression in this known-survival-set setting, \eg{}, \cite{daskalakis2020truncated,daskalakis2021efficient,lee2023learning}.

However, as the earlier examples illustrate, in many practical settings the survival set is not known to the analyst \cite{zampetakis2022analyzing}. 
When $\Sstar$ is unknown, no efficient algorithms are known in high dimensions. 
In fact, even getting finite sample complexity bounds is non-trivial:
if we put \textit{no} assumptions on $\Sstar$, the parameter estimation problem is information-theoretically impossible \cite{daskalakis2018efficient}.
To ensure statistical tractability, one must assume that $\Sstar$ has bounded ``complexity,'' for instance, that $\Sstar$ is a union of at most $k$ intervals (note that as $\Sstar$ is a subset of $\R$, it is always a union of intervals.) 
This leads us to \mbox{the following problem, which is the main focus of this paper.}
\begin{mdframed}[innerleftmargin=4pt,innerrightmargin=4pt]
    \textbf{Main Question.} Is there an algorithm that learns $\wStar \in \R^d$ to $\eps$ accuracy in $\poly(\nfrac{dk}{\eps})$ time in truncated linear regression when $\Sstar$ is unknown and a union of (at most) $k$ intervals? 
\end{mdframed}
Under this assumption, \citet{Kontonis2019EfficientTS} established a sample complexity bound of $\poly(\nfrac{dk}{\eps})$ but had an exponential-in-$d$ running time and, hence, were not qualitatively faster than brute-force algorithms {also running in exponential-in-$d$} time (see \cref{sec:relatedWork}).
The recent work of \citet{lee2024efficient} made progress by designing the first faster-than-brute-force algorithm for this problem, but only applied to the stylized case where the feature distribution $\cD$ is Gaussian and its running time remains exponential, namely $d^{\poly(k/\eps)}$.\footnote{When $d{\,\gg\,}k,\nfrac{1}{\eps}$, it holds that $d^{\poly(k/\eps)}{\,\ll\,} \inparen{\sfrac{d}{\eps}}^d$.} 
This assumption of Gaussianity on the features is crucial for their algorithm to work, but it constitutes a very strong requirement on the feature vectors, which does not hold in real-world {regression} data.
\footnote{
    While \citet{lee2024efficient} require Gaussianity and have an exponential-in-$d$ running time, their approach is technically interesting as they are able to handle high-dimensional truncation sets, over both $x$ and $y$, while all prior work on truncated regression, including ours, focus on handling truncation on $y$; see \cref{rem:generalityLee24}.
}

\paragraph{Our Contributions.}
Our main contribution (\cref{infthm:main}) is a $\poly(\nfrac{dk}{\eps})$-time algorithm that, given access to samples from the truncated regression model, outputs an $\eps$-accurate estimate of $\wStar$ without knowledge of $\Sstar$. Our algorithm is computationally efficient and importantly operates under a set of assumptions that are the same as, or much weaker than, those in prior works.
Notably, we do not require the feature-distribution $\cD$ to have a parametric form, \eg{}, Gaussian.
Instead, we only need $\cD$ to satisfy the significantly weaker requirement of sub-Gaussianity.
Concretely, we require the following assumptions. 
    \begin{assumptionMain}[Survival Probability]\label{asmp:massMain}
        There is a known constant $\alpha \in (0,1]$, s.t., $\Ex_{x\sim \cD}[\Pr_{}[y\in \Sstar{\mid} x]] \geq \alpha.$
    \end{assumptionMain}
    \vspace{-7mm}
    \begin{assumptionMain}[Sub-Gaussianity and  Boundedness]\label{asmp:subGaussianMain}
        There are known $\sigma,R {\,\geq\,} 1$, s.t., the feature distribution $\cD$ is $\sigma$-sub-Gaussian and regression parameter satisfies $\norm{\wStar}\leq R$.
    \end{assumptionMain}
    \vspace{-7mm}
    \begin{assumptionMain}[Identifiability]\label{asmp:conservationMain}
        Let $\Dobs$ be the distribution of $x$ in the observed data (see \cref{eq:DobsMain}).
        There exists a constant $\rho \in (0,1)$ such that $\Ex_{x \sim \Dobs}\!\insquare{x x^\top}
            \succeq \rho^2 \cdot I$.
    \end{assumptionMain}
A few remarks {about the assumptions} are in order:
\begin{enumerate}[itemsep=-2pt] %
    \item The first assumption ensures that the amount of truncation is not too aggressive; at least a constant fraction of the points survive.
    This assumption is information-theoretically necessary: indeed, as $\Pr[y\in \Sstar]$ approaches $0$, then an unbounded number of samples are required to estimate $\wStar$ (as proved by \cite{daskalakis2018efficient}).
    \item The second assumption is a mild and standard tail condition on the features.
    In fact, \cref{asmp:subGaussianMain} is even weaker than the assumptions required by prior works studying the easier version of truncated regression with \textit{known} survival sets: they require $\cD$ to have a bounded support \cite{daskalakis2019computationally,daskalakis2021efficient}.
    \item The third assumption is required to ensure that {$\wStar$} can be identified from data in the survival region; it is also made by prior works, \eg{}, \cite{daskalakis2021efficient}. 
We prove that it is \emph{necessary} in \cref{sec:asmpConservation}.
We also mention that since we have sample access to $\cD_{\obs}$, we can estimate the parameter $\rho$ from samples.
\end{enumerate}
In short, \cref{asmp:conservationMain,asmp:massMain} are \emph{necessary} for the problem to be solvable by a sample-efficient algorithm, while \cref{asmp:subGaussianMain} is weaker compared to corresponding assumptions in prior work.
We are now ready to state our main result in more detail.
\begin{theorem}
    [Informal; see \cref{thm:main}]
    \label{infthm:main}
    Assume
    \Cref{asmp:massMain,asmp:subGaussianMain,asmp:conservationMain} hold with $\alpha,\rho{\,=\,}\Omega(1)$, $\sigma{\,=\,}O(1)$, $R{\,=\,}\poly(d)$, and $\Sstar$ {is} a union of at most $k$ intervals (for known $k$).
    Then, there is an algorithm that given $n = \poly(\nfrac{dk}{\epsilon})$ i.i.d.\ samples from the truncated linear regression model, with probability $0.99$, outputs an estimate $\wh w$ satisfying $\| \wh w - \wStar\|_2 \leq \epsilon$.
    The algorithm runs in time $\poly(n).$
\end{theorem}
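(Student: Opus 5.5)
We plan a three-stage algorithm: get a coarse estimate of $\wStar$, use it to learn the survival set approximately from positive examples only, and then run the known-survival-set machinery (projected SGD on the truncated negative log-likelihood, in the style of \citet{daskalakis2019computationally}) with the learned set in place of $\Sstar$.

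\emph{Stage one: a coarse, robust initialization.} Ordinary least squares on the observed samples, $\hw_0 = \hSigma_{\obs}^{-1}\,\hat{\Ex}[xy]$, is biased, but the bias is controlled. Conditioned on $x$, the observed $y$ is a truncated Gaussian whose survival mass, averaged over $\Dobs$, is at least $\alpha$ by \cref{asmp:massMain}. Hence the conditional mean shift $\Ex[y\mid x, y\in\Sstar]-x^\top\wStar$ has second moment $O(\log(1/\alpha))$ under $\Dobs$, and \cref{asmp:conservationMain} gives $\|\hw_0-\wStar\|\le O(\sqrt{\log(1/\alpha)}/\rho)=O(1)$. Sub-Gaussianity (\cref{asmp:subGaussianMain}) supplies the concentration of $\hSigma_{\obs}$ and of $\hat{\Ex}[xy]$ with $\poly(d)$ samples. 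After this stage the residuals $r=y-x^\top\hw_0$ are, up to an $O(1)$-variance perturbation $x^\top(\wStar-\hw_0)$, Gaussian and truncated to a shifted copy of $\Sstar$.

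\emph{Stage two: learning $\Sstar$ from positive examples.} We learn the set $\hat S$, a union of at most $k$ intervals, minimizing $\Pr_{x\sim\cD}$-weighted mass of $\hat S\triangle \Sstar$ under the law of $y\mid x$, via the positive-only interval learner previewed in the abstract. The observed $y$ has density proportional to $\ind_{\Sstar}(y)\,\Ex_x[\phi(y-x^\top\wStar)]$, so the unconditioned base density $q(y)=\Ex_x[\phi(y-x^\top\wStar)]$ is smooth and positive. We estimate $q$ using $\hw_0$, or better by plugging in the current parameter estimate and alternating. Once $q$ is known, the normalized ratio of empirical mass to $q$-mass is $1$ on $\Sstar$ and $0$ off it. A dynamic program over $O(\poly(k/\eps))$ grid points then selects the $k$ intervals maximizing likelihood under a penalty on $q$-mass. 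Smoothness of $q$, meaning its Lipschitzness and sub-Gaussian tails inherited from $\cD$, bounds the error of discretization and of misestimating $q$. This is the step I expect to be the main obstacle. The difficulty is that the error in $q$ comes from the error in $\hw_0$, which is only $O(1)$, so a single pass is circular. I would first try an alternating scheme: each round halves the parameter error and shrinks $\Pr[\hat S\triangle\Sstar]$ proportionally. The key bound is a perturbation bound in which the estimated set mass error is Lipschitz in $\|w-\wStar\|$, with a constant strictly below one after reweighting by $\rho$. If this contraction fails, the fallback is to learn $\hat S$ as a union of intervals that merely contains most of the observed mass, and to rely on Stage three's robustness.

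\emph{Stage three: refinement.} Run projected SGD on $\negLL_{\hat S}(w)=\Ex_{\obs}\bigl[-\log \normalMass{y}{x^\top w}{1}+\log \cN(\hat S; x^\top w,1)\bigr]$ over the ball of radius $O(1)$ around $\hw_0$. Following \citet{daskalakis2019computationally}, this objective is strongly convex with parameter $\Omega(\rho^2\cdot\poly(\alpha))$ on that ball, using \cref{asmp:conservationMain} and the mass bound; gradients are unbiased via rejection sampling, which is efficient since $\hat S$ has mass $\Omega(\alpha)$. Replacing $\Sstar$ by $\hat S$ moves the minimizer by $O(\Pr[\hat S\triangle\Sstar]/(\alpha\rho^2))$, where sub-Gaussian tails control the gradient change. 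Taking the Stage-two accuracy $\eps'=\poly(\eps\alpha\rho)$ yields $\|\hw-\wStar\|\le\eps$ with $\poly(dk/\eps)$ samples and time.
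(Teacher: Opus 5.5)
Your overall architecture matches the paper's: an OLS warm start, positive-only learning of $\Sstar$, then projected SGD on a set-dependent likelihood over a ball around the warm start. Stage one is fine. Your $O(\sqrt{1+\log(1/\alpha)}/\rho)$ bound follows from Jensen, $\Ex_{\Dobs}[\log(1/p_{\Sstar}(x))]\le\log(1/\alpha)$, and is slightly sharper than \cref{lem:wTildeWStarCloseness}.

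The genuine gap is Stage two, where you suspected it. With $\|\hw_0-\wStar\|_2=\Theta(1)$, a base density built from $\hw_0$ is not multiplicatively close to the true one; already for a single $x$, $\phi(y-a)/\phi(y-b)=e^{(a-b)(y-(a+b)/2)}$ is unbounded in $y$. So the ratio of positive mass to estimated mass is not constant on $\Sstar$, and a region of true mass $\eps'$ can receive polynomially smaller estimated mass. Your likelihood DP must cover every positive sample, so it is essentially one iteration of \cref{alg:piu}: it deletes the $k-1$ gaps between consecutive positives with the largest estimated mass. A true gap of $\Sstar$ with mass $\eps'$ can be outranked by many spacings inside $\Sstar$ that the estimate overweights. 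Nothing therefore forces the set error below a constant.

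The alternating scheme then has no base case. With constant set error, \cref{eq:epsRequirement} fails, so Stage three has no strong convexity. The minimizer-shift bound, of order $\sqrt{d}\,\eps^{1/6}$ via \cref{lem:gradientNormBound}, beats the $O(1)$ warm start only once the set error is $\lesssim d^{-3}$. So no contraction is available. The fallback fails outright, because false-positive mass is exactly what biases the objective. For $\hat S=\R$, your $\negLL_{\hat S}$ is population least squares, whose minimizer is in general a constant away from $\wStar$.

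The missing idea is that you never need an accurate base density; a one-sided polynomial domination suffices. \cref{lem:smoothness} uses \cref{lem:survivalProbabilityBounds} and the sub-Gaussian tails of $\inangle{\hw-\wStar,x}$ under $\Dobs$. It shows $\cD_y(\wStar;T)\le s^{-1}\cD_y(\hw;T)^{1/q}$ for every $T$, with $s=\Theta(1)$ and $q=\Theta(D^2(\sigma^2\log(1/\alpha)+\beta^2))=O(1)$ when $D=\|\hw-\wStar\|_2=O(1)$ (here $\beta\ge\|\Ex_{\cD}x\|_2$). You can sample $\cD_y(\hw)$ by drawing $x$ from the model and then $z\sim\cN(\inangle{\hw,x},1)$.

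\cref{alg:unions} then deletes not $k-1$ but $r=(k-1)/(\eps s)^q$ spacings, namely those with the most reference samples. Roughly, a true gap of target mass $\eps$ has reference mass at least $(\eps s)^q$, and only about $(\eps s)^{-q}$ disjoint spacings can carry that much. So every significant true gap is deleted, while each wrongly deleted spacing carries only $\wt{O}(1/n)$ positive mass. This gives accuracy $\eps$ in a single pass from $\eps^{-O(q)}(k+\log(1/\delta))$ samples, with no alternation. The output is improper, a union of up to $r+1$ intervals; insisting on at most $k$ intervals is what pushed you toward needing an accurate density.

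There are also two smaller issues in Stage three. First, rejection sampling from $\cN(\inangle{w,x},1,\hat S)$ costs $\Ex_x[1/p(x,w;\hat S)]$ per sample, and average mass $\Omega(\alpha)$ does not control this. The paper instead uses the approximate sampler of \cref{lem:truncatedGaussianSampler}, the high-probability lower bound of \cref{lem:survivalProbabilityLowerBound}, and a biased-gradient PSGD analysis. Second, your claim that the minimizer moves linearly in $\Pr[\hat S\triangle\Sstar]$ is unproven. The paper's $\sqrt{d}\,\eps^{1/6}$ bound, which relies on clipping $S$ to $[-L,L]$, is what actually closes the argument; any polynomial bound suffices.
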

Importantly, both the sample complexity and running time of our algorithms scale polynomially in the relevant parameters $d,k,\nfrac{1}{\eps}$.
Moreover, the success probability can be made $1-\delta$ for any $\delta \in (0,1).$
Since the algorithm only requires \cref{asmp:conservationMain,asmp:subGaussianMain} on distribution $\cD$, it extends to the cases where $\cD$ is non-Gaussian.
Further, in the special case where $\cD$ is Gaussian, the above result improves the best running time (namely, $d^{\poly(k/\eps)}$ by \cite{lee2024efficient}) to polynomial in all parameters of interest (see \cref{thm:main}).

\paragraph{Technical Overview.}
Next, we briefly overview the main challenges in proving \cref{infthm:main} and how we overcome them, while comparing with the work of \cite{lee2024efficient} which requires Gaussian features.
A more detailed overview appears in \cref{sec:results}.

\paragraph{Approach with known $\Sstar$.}
Without any truncation, the standard method for regression is ordinary least squares (OLS).
A principled way to derive OLS is to view it as a special case of minimizing the negative log-likelihood (NLL).
This viewpoint extends naturally to truncated linear regression: existing algorithms for estimating $\wStar$ with known survival set $\Sstar$ minimize the (population version of the) NLL function $\cL_{\Sstar}(\cdot)$ using, \eg, stochastic gradient descent (SGD)~\cite{lee2023learning,daskalakis2019computationally,daskalakis2021efficient}.
This approach works because the NLL satisfies two key properties:
\begin{enumerate}[itemsep=-1pt,label=P\arabic*.] %
    \item $\cL_{\Sstar}(\cdot)$ is convex and $\wStar$ is its unique minimizer.
    \item $\cL_{\Sstar}(\cdot)$ is strongly convex near $\wStar$.
\end{enumerate}
Convexity ensures that one can find $\wh{w}$ such that $\cL_{\Sstar}(\wh{w})\approx \cL_{\Sstar}(\wStar)$, and strong convexity then implies $\wh{w}\approx \wStar$.
However, the NLL depends explicitly on $\Sstar$, and hence new techniques are required when $\Sstar$ is unknown.

\paragraph{Existing approach with unknown $\Sstar$.}
A natural idea is to first learn an approximation $S$ of $\Sstar$ and then minimize $\cL_{S}(\cdot)$.
However, $\cL_{S}(\cdot)$ is not even well-defined when $S\neq \Sstar$, so this direct approach fails.
\citet{lee2024efficient} bypass this issue by designing a generalized objective $\smash{\wt{\cL}}_S(\cdot)$ that remains well-defined for $S\neq \Sstar$ and approximately retains the good properties of the NLL when $\cD$ is Gaussian.
Their algorithm works in two phases: first, (i) it learns an approximation $S$ of $\Sstar$ in $d^{\poly(k/\eps)}$ time; then, (ii) it minimizes $\smash{\wt{\cL}}_S(\cdot)$ using SGD.
Both phases (i) and (ii) require the Gaussianity assumption on the covariates, and the first set learning part runs in super-polynomial time. We examine these issues and discuss how to overcome them next.

\paragraph{Issue I: Properties {and} Optimization of $\smash{\wt{\cL}}_S(\cdot)$ without Gaussianity.}
We begin by examining phase (ii), and we will discuss phase (i) afterwards. Hence, let us assume that the learning algorithm is given an approximation to the true set $S\approx \Sstar$. Now, we have to understand how to show the desired properties of NLL (phase (ii)) without Gaussianity. 
In particular, using the fact that $x$ is Gaussian, \citet{lee2024efficient} prove that $\E_{x}[\nabla \wt{\cL}_S(w;x)] \approx 0$\footnote{Here $\nabla \wt{\cL}_S(w;x)$ corresponds to the gradient of the NLL evaluated at the single point $x$ with guess vector $w$. Since we study the population version of NLL, we take an expectation over $x$ drawn from $\Dobs$. For a formal derivation of the gradient, we refer to \cref{sec:gradientHessian}.}
when $w\approx \wStar$ (which implies that $\wStar$ is an approximate minimizer of $\wt{\cL}_S(\cdot)$).
The key technical ingredient for their proof is to show that the gradient $\nabla \wt{\cL}_S(w; x)$ evaluated at $x$ is sub-exponential, which crucially uses the Gaussianity of $x$.

To go beyond Gaussian features, we follow a different route. We prove two important properties: {for any $w \approx \wStar$,} we show that (i)~$\nabla \smash{\wt{\cL}}_S(w;x)\approx 0$ with probability $1-o(1)$ over the draw of $x$ in the observed data, and (ii)~while $\nabla \smash{\wt{\cL}}_S(w;x)$ can be large (and unbounded) with probability $o(1)$, {we show that by suitably post-processing the set $S$, we can ensure} $\Ex[\snorm{\nabla \smash{\wt{\cL}}_S(w;x)}]\leq \poly(d)${; this is one place where we utilize the sub-Gaussianity of $x$ to ensure that this post-processing can be performed while retaining the fact that $S$ is a good approximation of $\Sstar$}.
These two properties together suffice to show that $\nabla \smash{\wt{\cL}}_S(w)\approx 0$ when $w\approx \wStar$ (\cref{lem:gradientNormBound}), implying that $\wStar$ is close to the minimizer of $\smash{\wt{\cL}}_S$ (\cref{lem:closeness}).

Finally, we must still minimize $\smash{\wt{\cL}}_S(\cdot)$ efficiently. Here, prior works perform SGD analysis on NLL but require a stronger assumption on the survival probability. To this end, since our assumption on survival probability (\cref{asmp:massMain}) is simpler than that in~\cite{daskalakis2019computationally}, we have to perform a new, more complicated analysis for SGD (see \cref{sec:results} for details).

\paragraph{Issue II: Efficiently learning $\Sstar$ from positive samples only.} 
For the above phase (ii) to be useful, we need to get an estimation for the true set $S^\star$ (and in particular in time $\poly(\nfrac{dk}{\eps})$).
Hence, our goal is to efficiently find $S\subseteq\R$ such that $\Pr[\mathds{1}\{y\in \Sstar\}\neq \mathds{1}\{y\in S\}]\approx 0$;
the challenge is that we only observe samples $y$ inside $\Sstar$ (\ie{}, only positive samples) and no samples outside $\Sstar$ {(\ie{}, no negative samples)}. In particular, if we denote by $\cD_y$ the marginal on $y$, then the learner observes points $y$ from the distribution with density proportional to $\cD_y(y) \mathds{1}\{y \in S^\star\}$.
Most results on PAC learning from positive examples are negative: without assumptions on the underlying distribution  $\cD_y$, even simple classes such as unions of $k>1$ intervals are information-theoretically unlearnable \cite{natarajan1987learning,shvaytser1990necessary}.
A standard way to circumvent this impossibility is to obtain auxiliary information about the marginal distribution $\cD_y$ of $y$, \eg, by finding (and using samples from) a distribution $\wt{\cD}_y$ that is ``close'' to it~\cite{denis1990positive,lee2025learning}{; see \cref{def:smoothness}}.

In the truncated linear regression setting, \citet{lee2024efficient} show how to construct $\wt{\cD}_y$ when $x$ is Gaussian; this relies heavily on the parametric form of the Gaussian and even fails for other parametric distributions. 
Our main contribution here is to give a new way to construct this auxiliary distribution $\wt{\cD}_y$ (\cref{lem:smoothness}) by only using the sub-Gaussianity of $x$ (without having any specific parametric form) and appropriate upper and lower bounds on the tail of the noise-distribution.

Finally, we mention that even with access to $\wt{\cD}_y$, we need to design an algorithm for learning the set that runs in time $\poly(d,k,\nfrac{1}{\epsilon}).$ Existing algorithms require constrained variants of empirical risk minimization~\cite{lee2025learning}, which are often \textsf{NP}-hard in high dimensions.
We show that the constrained empirical risk minimization {algorithm} of~\citet{lee2025learning} can be implemented in $\poly(\nfrac{dk}{\eps})$ time for one-dimensional sets (\cref{thm:unionsOfIntervalsPAC}), using a carefully designed greedy procedure (see \Cref{alg:unions}).

\negspace{}
\subsection{Related Works}
\label{sec:relatedWork}
\negspace{}
Truncated statistics has a long history of study in statistics and econometrics~\cite{Galton1897,Pearson1902,PearsonLee1908,Lee1915,fisher31,hannon1999estimation,raschke2012inference} and has attracted considerable recent interest in machine learning~\cite{daskalakis2018efficient,daskalakis2020truncated, Kontonis2019EfficientTS,nagarajan2020truncatedMixtureEM,ons_switch_grad,fotakis2020booleanProductTruncated,nagarajan2023EMMixtureTruncation,tai2023mixtureCensored,lee2023learning,de2023testing,de2024detecting,diakonikolas2024statistical,galanis2024discreteTruncated,kalavasis2024transferlearningboundeddensity,zampetakis2025private,chauhan2026learning}.
We refer the reader to~\citet{maddala1983limited} for foundational works. %
{Below, we focus on works most relevant to ours.}

\paragraph{Truncated Linear Regression.}
\citet{daskalakis2019computationally} give the first polynomial-time algorithms for truncated linear regression with bounded features and a \emph{known} survival set under variants of \cref{asmp:massMain}; \citet{daskalakis2021efficient} later extended this to unknown noise variance.
In contrast, we study the significantly harder case where the survival set is \emph{unknown}.
Even for known $\Sstar$, our results slightly generalize~\cite{daskalakis2019computationally} by only requiring the feature distribution to be sub-Gaussian rather than bounded.%
\footnote{\citet{daskalakis2019computationally} also work in the fixed-design setting. Our algorithms extend to fixed design with known $\Sstar$, but we focus on the i.i.d.\ setting as it is more natural with unknown $\Sstar$.}

For unknown $\Sstar$, \citet{Kontonis2019EfficientTS} gave an algorithm for Gaussian features with $\poly(\nfrac{dk}{\eps})$ sample complexity but $k\cdot (\nfrac{d}{\eps})^d$ running time.\footnote{{\citet{Kontonis2019EfficientTS} also had a $d^{\poly(k/\eps)}$-time method, but this did not apply to truncated linear regression; it was for the simpler task of truncated gaussian mean-estimation with covariance $\Sigma\approx I$.}}
\citet{lee2024efficient} improved the running time to $d^{\poly(k/\eps)}$, still for Gaussian features.
\cref{thm:main} improves upon both works: we only assume \mbox{sub-Gaussian features and achieve $\poly(\nfrac{dk}{\eps})$ running time.}

\paragraph{Learning from Positive Examples.}
A key subroutine in our work is learning the survival set from positive examples only.
This problem dates back to Valiant~\cite{valiant1984theory} and was formalized by~\citet{natarajan1987learning}, who characterized classes that can be properly PAC learned from positive samples alone.
\citet{shvaytser1990necessary,kivinen1995one} completed the characterization for improper learners and bounded the sample complexity.
These results are largely negative: even unions of $k>1$ intervals in one dimension are information-theoretically unlearnable without additional assumptions.
Subsequent works have designed algorithms when some information about the data distribution $\cD$ is available:
\citet{denis1990positive} reduced the problem to agnostic PAC learning when $\cD$ is known;
\citet{lee2024efficient} relaxed this to knowing a distribution $\wt{\cD}$ close to $\cD$; and
\citet{lee2025learning} further generalized this to mild smoothness assumptions on $\cD$.
However, most general algorithms in these works run in time $d^{\poly(1/\eps)}$ or slower.
Our work contributes to this line by giving an efficient algorithm to learn unions of $k$ intervals in one dimension when the data distribution satisfies a smoothness condition (\cref{def:smoothness}).

\negspace{}
\section{Preliminaries and Model}
\negspace{}
    \label{sec:preliminaries}
    In this section, we introduce key preliminaries and then formally introduce the model studied in this work.
    We begin with some basic notation.

\paragraph{Notation.}      
    Let $\cN(\mu,\sigma^2)$ be the 1-dimensional normal distribution with mean $\mu$ and variance $\sigma^2$.
    We use $\cN(x; \mu, 1)$ to denote the density of $\cN(\mu, 1)$ at $x$ and $\cN(S; \mu, 1) \coloneqq \int_S \cN(x; \mu, 1) \odif{x}$ to denote the mass that $\cN(\mu, 1)$ assigns to a measurable set $S$. %
    Further, we denote by $\cN(\mu, 1, S)$ the distribution $\cN(\mu, 1)$ truncated on the set $S$, and the density of this distribution at $y$ by $\cN(y; \mu, 1, S)$. 
    For an event $\cE$, we write $\ind\{\cE\} \in \{0,1\}$ for the indicator of $\cE$.
    For vectors $u,v\in\R^d$, we denote $\inangle{u,v}\coloneqq u^\top v$ and $\norm{u}_2^2\coloneqq \sum_i u_i^2$.
    For $A\in\R^{d\times d}$, we write $\norm{A}_{\op}\coloneqq \max_{\norm{v}_2=1}\norm{Av}_2$.
    For symmetric $A,B \in \R^{d \times d}$, $A\succeq B$ denotes that $A-B$ is positive semi-definite.
    For $k\ge 1$, let $\cS_k$ be the family of unions of at most $k$ (possibly unbounded) intervals in $\R$, and $\cH_k\coloneqq \{\ind_S:S\in\cS_k\}$.
    We use standard asymptotic notation (\eg{}, $O(\cdot), \wt{O}(\cdot), \lesssim$) to suppress constants.

\paragraph{Sub-Gaussianity.}
We will use standard definitions for sub-Gaussian random variables and vectors (see \cite{vershynin2018high}). For a random variable $X \in \R$, its \emph{sub-Gaussian} norm is (see also \cref{def:subGaussianNorm})
\[
      \norm{X}_{\psi_2}
    \coloneqq \inf \inbrace{
        \sigma > 0 \colon \Exp e^{\sfrac{X^2}{\sigma^2}} \leq 2
    } \;.
\]
We will say that $X$ is $\sigma$-sub-Gaussian if $\norm{X - \Exp X}_{\psi_2} \leq \sigma$. For a random vector $X \in \R^d$, we will say that $X$ is a $\sigma$-sub-Gaussian random vector if, for any unit vector $v \in \R^d$, the random variable $\inangle{X, v}$ is $\sigma$-sub-Gaussian.

\begin{definition}[Truncated Linear Regression Model] \label{def:truncatedRegression}
    The truncated linear regression model is parameterized by 
        a distribution $\cD$ on $\R^d$, 
        a \emph{survival set} $\Sstar$ that is a measurable subset of the real line $\R$ (w.r.t.\ the Lebesgue measure), and 
        an unknown \emph{parameter} $\wStar\in \R^d$.
    \mbox{A sample $(x,y)\in \R^d\times \R$ from the model is generated as follows:} 
    \begin{enumerate}[itemsep=-1pt] %
        \item The feature $x$ is sampled from $\cD$ (independent of all other random variables)
        \item The label $y$ is computed as $y  = \inangle{\wStar, x} + \xi$ where $\xi$ is independent Gaussian noise, \ie{}, $\xi\sim \cN(0,1)$.
        \item If $y\in \Sstar$, the sample is returned and, otherwise, Step 1 is repeated.
    \end{enumerate}
\end{definition}
Equivalently, a sample $(x,y)$ of the truncated linear regression model is generated by sampling $x\sim \cD$ and $y=\inangle{\wStar, x}+\xi$ conditioned on $y\in\Sstar$; thus the observed data are i.i.d.\ from $(x,y)\mid (y\in\Sstar)$. Note that both $\cD$ and $\Sstar$ are unknown to the learner. 
Furthermore, we denote by $\Dobs$ the marginal distribution of $x$ in the above model; \cref{def:truncatedRegression} implies that the density of $\Dobs$ is:
\begin{equation}\label{eq:DobsMain}
    \Dobs(x)
    \coloneqq \frac{
        \cN(\Sstar; \inangle{\wStar, x}, 1) \cdot \cD(x)
    }{
        \Exp_{x \sim \cD} \insquare{\cN(\Sstar; \inangle{\wStar, x}, 1)}
    }\,. 
\end{equation}
Additionally, the marginal distribution of $y$ given $x$ is $\cN(\inangle{\wStar, x}, 1, \Sstar)$, and its density is given by
\[
    \cN\inparen{
        y; \inangle{\wStar, x}, 1, \Sstar
    }
    \coloneqq \frac{
        \cN\inparen{y; \inangle{\wStar, x}, 1}
        \cdot \ind\inbrace{y \in \Sstar}
    }{
        \cN\inparen{\Sstar; \inangle{\wStar, x}, 1}
    }\,.  
\]

\negspace{}
\section{Our Results} \label{sec:results}
\negspace{}
    In this section, we present our end-to-end algorithm (\cref{alg:wStar}) for efficiently estimating the parameter vector $\wStar$ in the truncated linear regression model, which comes with the following guarantees. 

\begin{theorem} \label{thm:main}
    Suppose \cref{asmp:massMain,asmp:subGaussianMain,asmp:conservationMain} hold with parameters $(\alpha,R,\sigma,\rho)$ and that $\Sstar$ is a union of at most $k$ intervals for some known integer $k$. Further assume that $\norm{\Exp_{\cD} x}_2 \leq \beta$ with $\beta\geq 0$. Let $\eps, \delta \in (0, \nfrac{1}{2})$. There exists an algorithm that, given parameters $(\alpha,\beta,\eps,\delta,R,k)$ and $n$ i.i.d.\ samples from the truncated linear regression model of \Cref{def:truncatedRegression} for
    \[
        n = \poly \inparen{
            d,
            R,
            k,
            \frac{1}{\eps},
            \log{\frac{1}{\delta}}
        }^{
            \poly \inparen{
                \sigma,
                \beta,
                \sfrac{1}{\rho},
                \sfrac{1}{\alpha}
            }
        }
    \]
    runs in time $\poly(n)$ and outputs an estimate $\bar{w}$ satisfying $\norm{\bar{w} - \wStar}_2 \leq \eps$ with probability at least $1 - \delta$.
\end{theorem}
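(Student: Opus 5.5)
The plan is to build the two-phase algorithm described in the technical overview and prove each phase correct, then compose them with a union bound over failure events.

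\textbf{Phase 0: a warm start.} Run ordinary least squares on the observed samples to get $w_0$. Under \cref{asmp:massMain,asmp:subGaussianMain,asmp:conservationMain}, we would show $\norm{w_0-\wStar}_2\le \poly(\sigma,\beta,1/\alpha,1/\rho)$. The argument has two parts. First, the truncated noise $y-\inangle{\wStar,x}$ has conditional mean bounded by $O(\sqrt{\log(1/\cN(\Sstar;\inangle{\wStar,x},1))})$. Second, by \cref{asmp:massMain} and Markov, this quantity is small for most $x$. Combining this with the covariance lower bound $\rho^2 I$ on $\Dobs$ gives a constant-radius ball around $w_0$ that contains $\wStar$. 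We take this ball as the projection set for SGD.

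\textbf{Phase 1: learning $\Sstar$.} We first use the sub-Gaussianity of $x$ together with Gaussian tail bounds on $\xi$ to construct a reference distribution $\wt{\cD}_y$, namely the law of $\inangle{w_0,x}+\xi'$ with $\xi'$ a slightly inflated Gaussian, built from observed $x$'s reweighted appropriately. We then verify the smoothness condition of \cref{def:smoothness} relative to the true $y$-marginal, as in \cref{lem:smoothness}. This step incurs density-ratio losses of order $e^{\poly(\sigma,\beta)\cdot\log(\cdot)}$, which explains the exponent $\poly(\sigma,\beta,1/\rho,1/\alpha)$ in the sample complexity.

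Next we apply the positive-only learner for $\cS_k$ (\cref{thm:unionsOfIntervalsPAC}, implemented greedily via \cref{alg:unions}). This yields $S\in\cS_{k}$ with $\Pr[\ind\{y\in S\}\ne\ind\{y\in\Sstar\}]\le\eps'$ in $\poly(k/\eps')$ time. Finally we post-process $S$ by removing pieces lying beyond $\poly(R,\sigma)\sqrt{\log(d/\eps')}$ and pieces of tiny mass. By sub-Gaussianity this changes the symmetric difference by only $\eps'$, and it guarantees that $\cN(S;\inangle{w,x},1)$ is not too small for typical $x$.

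\textbf{Phase 2: optimizing the surrogate.} We run projected SGD on the surrogate objective $\wt{\cL}_S$. The analysis rests on three properties:
\begin{enumerate}
\item[(a)] For $w$ near $\wStar$, the gradient $\nabla\wt{\cL}_S(w;x)\approx 0$ except with probability $o(1)$, and $\Ex\snorm{\nabla\wt{\cL}_S(w;x)}\le\poly(d)$. Together these give $\norm{\nabla\wt{\cL}_S(\wStar)}\lesssim\poly(d)\sqrt{\eps'}$ (\cref{lem:gradientNormBound}).
\item[(b)] Strong convexity with modulus of order $\rho^2\alpha^{O(1)}$ on the projection ball. This follows from the Hessian being the covariance of $x$ weighted by truncated variances, combined with \cref{asmp:conservationMain} and the mass lower bound.
\item[(c)] A bounded second moment of stochastic gradients, which gives the SGD rate.
\end{enumerate}
From (a) and (b), the minimizer $\hat w$ satisfies $\norm{\hat w-\wStar}\le\norm{\nabla\wt{\cL}_S(\wStar)}/\mathrm{modulus}$ (\cref{lem:closeness}). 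We then choose $\eps'=\eps^2/\poly(d)$ so that this error is at most $\eps/2$. By (c), SGD reaches $\eps/2$ of $\hat w$ in $\poly$ many steps with probability $1-\delta$, using averaging plus a median-boosting step.

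\textbf{Main obstacle.} I expect the hardest step to be (b)--(c) under only \cref{asmp:massMain}. Survival probabilities $\cN(S;\inangle{w,x},1)$ can be tiny for some $x$, which blows up gradient variance and weakens the curvature lower bound. My first attempt would be to split $x$ into a good set, where mass $\ge\alpha/4$ (probability at least $\alpha/2$ under $\Dobs$ by Markov), and a bad set. On the good set I would lower-bound the truncated variance uniformly. On the bad set I would control the gradient using a tail bound $\sqrt{\log(1/\text{mass})}$ integrated against the sub-Gaussian tails of $\inangle{w,x}$.
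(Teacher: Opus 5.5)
Your outline follows the paper's architecture: an OLS warm start with a projection ball of radius $O(\Lambda/(\rho^2\alpha))$, a smooth reference built from the warm start via \cref{lem:smoothness}, the greedy learner of \cref{alg:unions}, clipping $S$ to $[-L,L]$, then PSGD with median boosting. Phase 2, however, has a genuine gap.

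\textbf{Missing gradient oracle.} You never say where the stochastic gradients come from. Properties (a)--(c) tacitly assume an unbiased oracle for $\nabla\wt{\cL}_S$, which the sample access does not readily provide. You also leave the surrogate unspecified, and its form matters. The paper uses $\negLL_S(w)=-\Exp_{\Dobs}\Exp_{y\sim\cN(\inangle{\wStar,x},1,S\cap\Sstar)}[\log\cN(y;\inangle{w,x},1,S)]$, whose gradient is $\Exp_{\Dobs}\bigl[\bigl(\Exp_{\cN(\inangle{w,x},1,S)}z-\Exp_{\cN(\inangle{\wStar,x},1,S\cap\Sstar)}y\bigr)\,x\bigr]$.
\begin{itemize}
\item The $y$-term has no obvious exact sampler. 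Rejecting observed pairs with $y\notin S$ gives the right conditional law, but it reweights the $x$-marginal by $p_{S\cap\Sstar}(x)/p_{\Sstar}(x)$, so the marginal is no longer $\Dobs$.
\item The $z$-term needs a truncated-Gaussian sampler on the explicit intervals of $S$ (\cref{lem:truncatedGaussianSampler}), and its cost grows with $\log(1/p(x,w;S))$.
\end{itemize}
The paper therefore uses the biased estimator of \cref{alg:gradientSampler}, which takes an unrejected observed pair for the $y$-term. It then proves three things.
\begin{itemize}
\item The bias is of order $\sqrt d\,(\eps')^{1/4}$ up to problem constants, via the TV bounds of \cref{lem:tvDistanceBound,lem:gradientSamplerBiasBound}.
\item The running time is controlled, because $\log(1/p(x,w;S))$ is polynomial in $d$, $\norm{\wStar}_2$ and $\log(1/\eta)$ with probability $1-\eta$ (\cref{lem:survivalProbabilityLowerBound}).
\item PSGD tolerates an additive bias (\cref{lem:psgdConvergence}).
\end{itemize}
This is where tiny survival probabilities actually bite: in the sampler's running time and in the bias, not in the variance. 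Once $S\subseteq[-L,L]$, the term $z\,x$ has second moment at most $L^2\,\Exp_{\Dobs}\norm{x}_2^2$. The observed $y\,x$ is sub-exponential by \cref{lem:truncatedSubExp}.

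\textbf{Smaller corrections.}
\begin{itemize}
\item In (a), ``small except with probability $o(1)$'' together with a first-moment bound does not control the mean. The paper handles the bad set ($x\notin\cA$, where $p_{\Sstar}(x)$ is small or $p_{S\triangle\Sstar}(x)$ is large) using the clipping, which gives $|z|,|y|\le L$ there. It then applies Cauchy--Schwarz: $2L\,\Exp_{\Dobs}[\norm{x}_2\ind\{x\notin\cA\}]\le 2L\sqrt{\Exp_{\Dobs}\norm{x}_2^2\cdot\Pr_{\Dobs}(x\notin\cA)}$. The rate this yields in \cref{lem:gradientNormBound} is $(\eps')^{1/6}$, not $(\eps')^{1/2}$. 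Combined with the bias, the paper needs $\eps'\approx\eps^{8}/\poly(d)$, which is still polynomial.
\item The warm start needs an expectation bound, not Markov. The density of $\Dobs$ carries the factor $p_{\Sstar}(x)$, and $t(\sqrt{2\log(1/t)}+1)\le 3/2$, so $\norm{\tw-\wStar}_2\le 3\Lambda/(2\rho^2\alpha)$ (\cref{lem:productMeanDistanceBound,lem:wTildeWStarCloseness}).
\item In (b), your route uses the truncated variance bound $p(x,w;S)^2/12$ and $p(x,w;S)\ge p(x,\wStar;S)^2e^{-\inangle{w-\wStar,x}^2-2}$. On a constant-radius ball this only gives a modulus of order $e^{-\poly(M,\Lambda,1/\rho,1/\alpha)}$, not $\rho^2\alpha^{O(1)}$. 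The argument must also absorb the gap between $S$ and $\Sstar$ (\cref{lem:strongConvexity}). This loss is harmless, since such factors fit inside $\poly(\cdot)^{\poly(\sigma,\beta,1/\rho,1/\alpha)}$.
\item In Phase 1, neither noise inflation nor reweighting is needed. Reweighting observed $x$'s back to $\cD$ would require the unknown $p_{\Sstar}(x)$. The paper instead pairs an observed $x$ with a fresh $z\sim\cN(\inangle{\hw,x},1)$ and measures the set error under $x\sim\Dobs$, which is exactly what Phase 2 consumes.
\item \cref{alg:unions} is improper: it returns up to $(k-1)/(\eps' s)^q+1$ intervals, not an element of $\cS_k$. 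This is harmless, because the sampler's running time is polynomial in the number of intervals.
\end{itemize}
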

The proof of \cref{thm:main} is given in \cref{sec:detailedProof}. Crucially, both the sample complexity and the runtime of our algorithm scale polynomially with the dimension $d$ and the inverse accuracy $\nfrac{1}{\eps}$. 
Hence, our algorithm indeed achieves polynomial dependence on {the key} problem parameters, without relying on an explicit parametric form for the covariate distribution $\cD$, but instead requiring only sub-Gaussianity (\cref{asmp:subGaussianMain}) along with the very mild \cref{asmp:massMain} and the information-theoretically necessary \cref{asmp:conservationMain}.
The dependence on the parameters in the exponent is expected when $\Sstar$ is unknown, and is similar to that appearing in \cite{lee2024efficient}, who consider the problem of learning exponential families under unknown truncation. On the other hand, if the truncation set $\Sstar$ is known, this exponential dependence can be avoided (see \cref{sec:consequence:knownSet}). 
{Further, the assumption $\|\Ex_{\cD} x\|\leq \beta$ is redundant under a non-centered notion of sub-Gaussianity (which controls $\norm{x}$ rather than $\norm{x-\Ex_{\cD}x}$); we state this requirement separately to avoid confusion.}
Finally, note that in the natural regime where $\beta$ is constant but $R$ (\ie{},\ $\norm{\wStar}_2$) is allowed to scale as $O(\sqrt{d})$ (which is the setting also considered in \cite{daskalakis2019computationally}), the sample complexity and runtime still remain \mbox{polynomial in $d$ and $\nfrac{1}{\eps}$.}

\begin{algorithm}[H] \caption{Learning $\wStar$ with unknown truncation} \label{alg:wStar}
\begin{algorithmic}[1]
    \Require Constants $\alpha, \rho, \sigma, k, \beta, R$ (see \cref{thm:main}), accuracy $\zeta$, confidence $\delta$ 
    \Ensure Sample access to the truncated regression model with parameter $w^\star$
    \item[]\item[]
    \Comment{Efficiently find warm-start (\cref{cor:strongConvexity})}
    \State Set
    $
        n \coloneqq \wt{O} \inparen{
            \frac{
                (\sigma + \beta)^8
            }{
                \rho^8 \alpha^2
            } 
            \cdot R^2
            \cdot d
            \cdot \log\frac{1}{\delta}
        }
    $
    \State Draw $n$ samples $\inparen{x^{(1)}, y^{(1)}}, \ldots, \inparen{x^{(n)}, y^{(n)}}$ from the truncated regression model
    \State Compute
    $
        \hw \coloneqq \inparen{
            \frac{1}{n} \sum_i x^{(i)} x^{(i) \top}
        }^{-1} 
        \cdot \inparen{
            \frac{1}{n} \sum_i y^{(i)} \cdot x^{(i)}
        }
    $
    \State Define the projection set 
    $
        \cP \coloneqq B\inparen{
            \hw, O\inparen{
                \frac{
                    \sigma + \beta
                }{
                    \rho^2 \alpha
                }
            }
        }
    $
    \item[]\item[]
    \Comment{Phase I:~Approximately learn truncation set (\cref{thm:setLearningGuarantee})}
    \State Set
    $
        s \coloneq O(1)
    $,
    $
        q \coloneq \wt{O}\inparen{
            \frac{
                (\sigma + \beta)^4
            }{
                \rho^4 \alpha^2
            }
        }
    $ and
    $
        \eps
        \coloneq e^{
            - \widetilde{O}\inparen{
                \frac{
                    (\sigma + \beta)^8
                }{
                    \rho^6 \alpha^3
                }
            }
        }
        \cdot \wt{O} \inparen{
            \frac{
                \zeta^8
            }{
                d^3 
                R^6
            }
        }
    $
    \State Let $\cO_{\cD^\star_{+}}$ be an oracle that draws $(x,y)$ from the truncated regression model and returns $y$
    \State Let $\cO_{\cD}$ be an oracle that draws $(x,y)$ from the truncated regression model and returns $z \sim \cN(\inangle{\hw, x}, 1)$; this oracle can be implemented efficiently
    \State Call \cref{alg:unions} with parameters $s, q, k, \eps, \nfrac{\delta}{3}$ and oracles $\cO_{\cD^\star_{+}}$, $\cO_{\cD}$
    \State Let $\wt{S}$ be the output of the call
    \State Set
    $
        L \coloneq \wt{O}\inparen{
            R
            \cdot \frac{(\sigma + \beta)^9}{\rho^6 \alpha^3}
            \cdot \log \frac{1}{\zeta}
            \cdot \log \frac{1}{\delta}
        }
    $
    \State Set $S \coloneq \wt{S} \cap [-L, L]$
    \item[]\item[]
    \Comment{Phase II:~ Repeatedly run PSGD and aggregate (\cref{thm:reduction})}
    \State Set $K \coloneqq O(\log \nfrac{1}{\delta})$ and
    $
        T
        \coloneqq \exp\inparen{
            \widetilde{O}\inparen{
                \frac{
                    (\sigma + \beta)^8
                }{
                    \rho^6 \alpha^3
                }
            }
        }
        \cdot \wt{O} \inparen{
            \frac{R^2}{\zeta^2}
            \cdot d
        }
    $
    \For{$i \in [K]$}
        \State Call \cref{alg:psgd} (PSGD) with projection set $\cP$, initialization $\hw$, set $S$ and parameters $T, \eps$
        \State Let $\bar{w}_i$ be the output of this call
    \EndFor
    \State Find some $\ell \in [K]$ such that $\norm{\bar{w}_\ell - \bar{w}_i}_2 \leq \nfrac{2 \zeta}{3}$ for at least $\nfrac{3}{5}$ of the points $\bar{w}_1, \ldots, \bar{w}_K$
    \item[]
    \State \Return $\bar{w}_\ell$
\end{algorithmic}
\end{algorithm}

\begin{remark}[Explicit Exponential Dependencies]
    The explicit sample complexity in \cref{thm:main} is
    \[
        n = k \cdot \inparen{
            \nfrac{d}{\eps}
        }^{
            \wt{O}\inparen{
                \sfrac{
                    (\sigma + \beta)^6
                }{
                    (\rho^4 \alpha^2)
                }
            }
        }
    \]
    As we discussed, in the simpler case where $\Sstar$ is known, the sample complexity required to learn $\wStar$ is $\Theta(\nfrac{d}{\eps^2})$ \cite{daskalakis2019computationally}. 
    Our increased sample complexity is due to the fact that we must learn $\Sstar$. 
    Here, our algorithm has a dependence on $d$ because, to estimate $\wStar \in \R^d$ to distance $\eps$, our algorithm needs to learn $\Sstar$ to accuracy $\poly(\nfrac{\eps}{d})$, for which we need $\poly(\nfrac{\eps}{d})$ samples. For more details, we refer the reader to the full proof in \cref{sec:detailedProof}, and specifically \cref{thm:setLearningGuarantee,thm:reduction}.
\end{remark}

\paragraph{Outline of This Section.}
{Our \cref{alg:wStar} operates in two phases: Phase~I approximately learns $\Sstar$, and Phase~II uses this approximation to estimate $\wStar$.
Both phases are technically involved and require various tools to analyze, as we mentioned in \cref{sec:intro}, and present in more detail in this section:
In \cref{subsec:PACLearningUnions}, we give an efficient algorithm for PAC learning unions of intervals from positive-only samples.
\Cref{subsec:learningTruncationSet} shows how we use this algorithm to learn $\Sstar$.
Finally, \cref{subsec:learningPSGD} describes our approach for learning $\wStar$ given $\Sstar$: we define a \textit{generalized} negative log-likelihood objective (which is different from the usual negative log-likelihood), establish its required properties, and show how to optimize it to obtain an estimate for $\wStar$.}

\negspace{}
\subsection{Positive PAC Learning Unions of Intervals} \label{subsec:PACLearningUnions}
 {Next, we introduce PAC learning from positive samples only (henceforth, positive-only learning) and give an efficient algorithm for learning unions of intervals in this model.}

 \paragraph{Positive-Only Learning.}
 In positive-only learning \cite{natarajan1987learning}, {as in classical PAC learning,} {there is a} distribution $\cD^\star$ over a feature space $\cX$ and some concept class $\cH$ containing the {target concept} $h^\star$. 
 {In contrast to standard PAC learning, here we are only} given sample access to the conditional distribution $\cD^\star_{+} \coloneqq \cD^\star \vert_{h^\star(x)=1}$ of positive samples ({a distribution over $\cX$) {and not given any information about the negative samples}.} 
 Our goal is to output a hypothesis $\hat{h}$ {with small misclassification error under $\cD^\star$, \ie{},} $\Pr_{x \sim \cD^\star}\! \sinparen{\hat{h}(x) \neq h^\star(x)}$ {$\leq \eps$}. Crucially, {we aim for} small misclassification error {on both positive and negative examples}, even though we only have sample access to {positives}.

\paragraph{Smoothed Positive-Only Learning.}
\cite{lee2025learning} {develop a} \emph{smoothed} {version} of positive-only learning, where, in addition to positive samples, the learner also has access to a distribution $\cD$ over the feature space which is ``smooth'' with respect to $\cD^\star$. 
{Intuitively, samples from $\cD$ help the learner identify high-probability regions of the unconditional $\cD^\star$.}
In particular, \cite{lee2025learning} {introduce the following notion of generalized smoothness, which extends notions of smoothness studied by works in smoothed online learning, \eg{}, \cite{haghtalab2022efficientSmooth,haghtalab2020smooth,haghtalab2024smoothJACM,block2024performanceempiricalriskminimization,blanchard2024agnostic}.}
\begin{definition}[Smoothness; \cite{lee2025learning}] \label{def:smoothness}
    For any $s \in (0, 1]$ and $q \geq 1$, and any two distributions $\cD, \cD^\star$ over the same space, we say that $\cD$ is \emph{$(s, q)$-smooth} w.r.t.\ $\cD^\star$ if, for any measurable set $S$:
    \begin{equation*}
        \cD^\star(S)
        \leq \frac{1}{s} \cdot
        \cD(S)^{1/q}\,.
    \end{equation*}
\end{definition}

\begin{figure}
    \centering
    \includegraphics[width=0.8\linewidth]{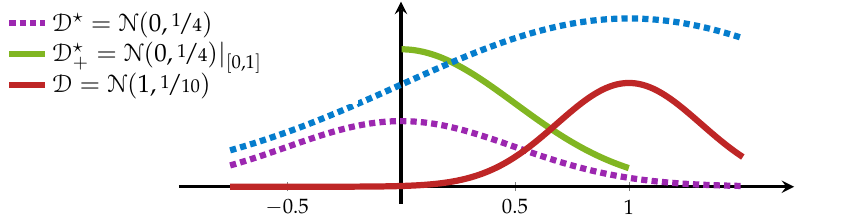}
    \caption{%
        Example of positive-only learning under smoothness. Here, we wish to PAC learn the interval $[0,1]$ under base distribution $\cD^\star$. We are given sample access only to $\cD^\star_{+}$, which is the restriction of $\cD^\star$ to $[0,1]$, and to $\cD$, which is smooth w.r.t.\ $\cD^\star$. In particular, the dotted blue line shows the function $2 \cD(x)^{1/10}$, which lies above the density $\cD^\star(x)$, hence $\cD$ is $(\nfrac{1}{2}, 10)$-smooth w.r.t.\ $\cD^\star$. 
    }
    \label{fig:smoothness}
\end{figure}

\noindent To gain some intuition about this definition, we refer the reader to \cref{fig:smoothness} which illustrates this setting. Under this smoothness assumption {on} $\cD$, \cite{lee2025learning} {establish the following learning guarantee}.
\begin{theorem}[Theorem 1.1 of \cite{lee2025learning}] \label{thm:piu}
Suppose that $\cD$ is $(s, q)$-smooth w.r.t.\ $\cD^\star$, and fix any $\eps, \delta \in (0, \nfrac{1}{2})$. There exists an algorithm that, given $\eps, \delta$ and $n = \wt{O}\inparen{(\eps s)^{-2q} \cdot (\vc(\cH) + \log \nfrac{1}{\delta})}$ independent samples from $\cD^\star_{+}$ and $\cD$ (with $h^\star \in \cH)$, outputs a hypothesis $\hat{h}$, such that, with probability at least $1 - \delta$
    \begin{equation*}
        \Pr\nolimits_{x \sim \cD^\star}\sinparen{
            \hat{h}(x) \neq h^\star(x)
        }
        \leq \eps\,.
    \end{equation*}
\end{theorem}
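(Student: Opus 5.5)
The plan is a constrained empirical risk minimization over $\cH$, driven by a comparison between the two oracles. Draw $n$ samples $P=\{x_1,\dots,x_n\}$ from $\cD^\star_{+}$ and $n$ samples $U=\{z_1,\dots,z_n\}$ from $\cD$. Output any $\hat h\in\cH$ that labels every sample of $P$ positive, i.e.\ $\hat h(x_i)=1$ for all $i$ (the target $h^\star$ is feasible), and that minimizes the empirical $\cD$-mass $\frac1n\sum_j \hat h(z_j)$ among such hypotheses.

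The analysis has two uniform-convergence steps followed by a smoothness argument.

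\textbf{Step 1 (false negatives).} Consider the class of sets $\{h^\star=1,\hat h=0\}$. Its VC dimension is $O(\vc(\cH))$. Since $\hat h$ has zero empirical error on $P$, the realizable-case bound gives $\cD^\star_{+}(\hat h=0)\le \eps_1$ with $n\gtrsim (\vc(\cH)+\log\frac1\delta)/\eps_1$. Hence $\cD^\star(h^\star=1,\hat h=0)\le\eps_1$.

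\textbf{Step 2 (false positives under $\cD$).} Uniform convergence over $\cH$ under $\cD$ gives $|\cD(h)-\hat\cD(h)|\le\eps_2$ for all $h\in\cH$. Because $h^\star$ is feasible, $\cD(\hat h=1)\le \cD(h^\star=1)+2\eps_2$.

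\textbf{Step 3 (the main obstacle: converting to $\cD^\star$).} The mass of the false-positive region $\{\hat h=1,h^\star=0\}$ must be bounded under $\cD^\star$, but smoothness only transfers sets that are small under $\cD$, and Step 2 controls only total $\cD$-mass, not the symmetric difference. I will write
\[
\cD(\hat h=1,h^\star=0)=\cD(\hat h=1)-\cD(\hat h=1,h^\star=1)\le \cD(h^\star=1,\hat h=0)+2\eps_2 .
\]
This reduces the problem to bounding the $\cD$-mass of the false-negative region. That region is small under $\cD^\star$ by Step 1, but smoothness goes the wrong direction (it bounds $\cD^\star$ by $\cD$).

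First I would try to fix this by strengthening Step 1 to control false negatives under $\cD$ as well. One route is the further constraint, used by \cite{lee2025learning}, that $\hat h$ dominate the empirical positives in a fractional sense. Another is to exploit that on $\{h^\star=1\}$ the distribution $\cD^\star_+$ is $\cD^\star$ rescaled by $\cD^\star(h^\star=1)$, and then compare to $\cD$ via reweighting.

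If that fails, I will instead accept the bound $\cD(\hat h=1,h^\star=0)\le \eps_3$ conditional on controlling it. Smoothness then gives $\cD^\star(\hat h=1,h^\star=0)\le s^{-1}\eps_3^{1/q}$. Choosing $\eps_3=(\eps s/2)^q$ and $\eps_1=\eps/2$ yields total error $\le\eps$. Uniform convergence to accuracy $(\eps s)^q$ costs $(\eps s)^{-2q}(\vc(\cH)+\log\frac1\delta)$ samples, matching the stated rate, and a union bound over the two steps gives confidence $1-\delta$. I expect the bookkeeping in this step, getting the false-negative $\cD$-mass small, to be where the true argument of \cite{lee2025learning} departs from this sketch.
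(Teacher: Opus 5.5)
Your Step 3 is not just unfinished. For the single constrained ERM you propose, the bound it needs is false, so this route cannot be completed. Write $H^\star\coloneqq\{h^\star=1\}$ and $\hat{H}\coloneqq\{\hat h=1\}$. Your reduction needs $\cD(H^\star\setminus\hat{H})$ to be small. Positive samples carry no information about regions of tiny $\cD^\star$-mass, even when $\cD$ puts a lot of mass there, and smoothness bounds $\cD^\star$ by $\cD$ but never the reverse.

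Here is a concrete failure. Take $\cH=\cH_2$, $\cD^\star$ uniform on $[0,3]$, $H^\star=[0,1]\cup[2,3]$, and $\cD=0.6\,\cD^\star+0.4\,\mathrm{Unif}(A)$ with $A=(\tfrac12-\eta,\tfrac12+\eta)$ and $\eta\ll 1/n$.
\begin{itemize}
\item Since $\cD(B)\geq 0.6\,\cD^\star(B)$ for every $B$, $\cD$ is $(0.6,1)$-smooth.
\item With high probability no positive lands in $A$.
\item A union of two intervals covering $P$ can skip only one gap between consecutive positives.
\item Skipping the gap around $A$ forces the output to contain roughly $(1,2)$, which holds about $0.2n$ points of $U$. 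Skipping the gap near $(1,2)$ forces it to contain $A$, which holds about $0.4n$ points.
\end{itemize}
Your ERM therefore outputs a set containing roughly $(1,2)$. Its $\cD^\star$-error is about $\nfrac{1}{3}$, no matter how large $n$ is. In this example $\cD(H^\star\setminus\hat{H})\approx 0.4$. The correct learner described next also discards $A$, so a valid analysis never makes false-negative $\cD$-mass small. Strengthening Step 1 under $\cD$ or reweighting is aimed at the wrong quantity.

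The missing idea is the iteration in the cited algorithm (\cref{alg:piu}):
\begin{itemize}
\item Set $S_1=\cX$.
\item In round $i$, let $H_i$ minimize $|U\cap S_i\cap H|$ subject to $P\subseteq H$, and set $S_{i+1}=S_i\cap H_i$.
\item Output $H=S_{T+1}$ with $T\approx(\eps s)^{-q}$.
\end{itemize}
Let $\hat{\cD}$ be the empirical measure of $U$. Since $H^\star$ is feasible in round $i$,
\[
\hat{\cD}(S_i\cap H_i\setminus H^\star)\leq\hat{\cD}(S_i\cap H^\star\setminus H_i)=\hat{\cD}(S_i\cap H^\star)-\hat{\cD}(S_{i+1}\cap H^\star),
\]
and the right-hand side telescopes to at most $1$ over $i\leq T$. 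Because $H\setminus H^\star\subseteq S_i\cap H_i\setminus H^\star$ for every $i$, this gives $\hat{\cD}(H\setminus H^\star)\leq 1/T$. Each round may still trade false-negative $\cD$-mass for false positives, but the total mass available to trade is at most one, and intersecting keeps the best round's small false-positive mass.

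A single relative-deviation bound over $T$-fold intersections of $\cH$ (VC dimension $\wt{O}(T\,\vc(\cH))$) turns this into $\cD(H\setminus H^\star)\lesssim(\eps s)^q$ once $n=\wt{O}(T^2\,\vc(\cH))$. That is exactly the $(\eps s)^{-2q}$ rate. Only at this point is smoothness applied, giving $\cD^\star(H\setminus H^\star)\lesssim\eps$ up to rescaling $\eps$ by a constant. False negatives are handled by your Step 1 applied to $H$, which contains all of $P$, over the same intersection class.
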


\begin{algorithm}[h]
    \caption{Iterative Pessimistic ERM (Algorithm 1 of \cite{lee2025learning})}
    \label{alg:piu}
\begin{algorithmic}[1]
    \Require Parameters $\eps, \delta, s, q$, hypothesis class $\cH$
    \Ensure Sample access to distributions $\cD^\star_{+}$ and $\cD$
    \item[]
    \State Set $n \coloneqq \wt{O}\inparen{(\eps s)^{-2q} \cdot (\vc(\cH) + \log \nfrac{1}{\delta})}$
    \State Create sets $P \subseteq \cX$ and $U \subseteq \cX$ by drawing $n$ samples from $\cD^\star_{+}$ and $\cD$  respectively
    \State Set $T \coloneqq (\eps s)^{-q}$
    \For{$i \in [T]$}
        \State Define the survival set $S_i \coloneqq H_1 \cap H_2 \cap \ldots \cap H_{i-1}$ if $i > 1$, and, otherwise, $S_i \coloneqq \R^d$
        \State Find a hypothesis $h_i \in \cH$ that minimizes $\abs{\inbrace{x \in U \cap S_i \given h_i(x) = 1}}$ subject to satisfying $h_i(x) = 1$ for all $x \in P$ 
        \State Define $H_i \coloneqq \inbrace{x \colon h_i(x) = 1}$ 
    \EndFor
    \item[]
    \State \Return $H \coloneqq H_1 \cap \ldots \cap H_T$
\end{algorithmic}
\end{algorithm}

\noindent The algorithm of \cref{thm:piu} is given in \cref{alg:piu} for completeness. We stress that the algorithm is only sample-efficient and does not come with any computational-efficiency guarantee.
That said, this result is surprising because it ensures high-accuracy w.r.t.\ the base distribution $\cD^\star$, even though we do \emph{not} have sample access to $\cD^\star$ itself at training time. This framework thus provides a way to \textit{extrapolate} information about $\cD^\star$ from only $\cD^\star_{+}$ and a smooth reference distribution $\cD$. The connection to learning $\Sstar$ in truncated regression is now apparent: in that setting, we only observe samples in $\Sstar$ (\ie{}, positive samples). We formalize this connection in \cref{subsec:learningTruncationSet}.

Although \cref{thm:piu} is sample-efficient, it is not computationally efficient, since it solves an empirical-risk-minimization problem which is, typically, computationally hard. However, when $\cH$ consists of unions of at most $k$ intervals in $\R$, we prove that the above becomes computationally tractable. We thus obtain a polynomial-time algorithm for learning unions of $k$ intervals from positive examples provided the distribution $\cD^\star$ satisfies the smoothness requirement (\cref{def:smoothness}).

Formally, let $\cS_k$ be the collection of all unions of at most $k$ intervals in $\R$, and define the hypothesis class $\cH_k \coloneqq \inbrace{\ind_S \colon S \in \cS_k}$. Given sample access to $\cD$ and $\cD^\star_{+}$ as defined above, we can learn $h^\star$ $\poly_{s,q}(k/\eps)$ time:

\begin{theorem}[Positive PAC Learning unions of intervals under Smoothness]  \label{thm:unionsOfIntervalsPAC}
    Assume that the true concept $h^\star$
    belongs to the hypothesis class $\cH_k$ and suppose that $\cD$ is $(s, q)$-smooth w.r.t.\ $\cD^\star$. There exists an algorithm that, given $\eps, \delta \in (0, \nfrac{1}{2})$ and $n = \wt{O}\inparen{(\eps s)^{-2q} \cdot (k + \log \nfrac{1}{\delta})}$ independent samples from $\cD^\star_{+}$ and $\cD$, outputs a hypothesis $\hat{h}$, such that, with probability at least $1 - \delta$
    \begin{equation*}
          \Pr\nolimits_{x \sim \cD^\star}\sinparen{
            \hat{h}(x) \neq h^\star(x)
        }
        \leq \eps\,.
    \end{equation*}
    Moreover, the algorithm runs in time $O(n \log n)$, and its output $\hat{h}$ is a union of at most $\frac{k-1}{(\eps s)^q} + 1$ intervals.
\end{theorem}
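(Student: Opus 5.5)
The plan is to instantiate \cref{alg:piu} with $\cH = \cH_k$ and to show that each constrained ERM step can be solved exactly and quickly in one dimension. Since $\vc(\cH_k) = 2k$, \cref{thm:piu} directly gives the sample bound $n = \wt{O}((\eps s)^{-2q}(k + \log \nfrac{1}{\delta}))$ and the accuracy guarantee. This holds provided each $h_i$ is an exact minimizer of $|\{x \in U \cap S_i : h_i(x)=1\}|$ over $h \in \cH_k$, subject to $h(x)=1$ for all $x \in P$. So the remaining work is computational: solve this constrained problem efficiently, and bound the complexity of the final output $H = H_1 \cap \cdots \cap H_T$.

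For the ERM step, sort $P \cup U$ once, in $O(n\log n)$ time. Let $p_1 \le \cdots \le p_m$ be the sorted positive points. Any feasible union of at most $k$ intervals must cover $[p_1, p_m]$ except for at most $k-1$ open gaps. We may also take each interval to start at a positive point and end at a positive point: shrinking it this way only removes unlabeled points and keeps feasibility. Hence the optimal hypothesis has the form $[p_1,p_m] \setminus \bigcup_{j\in G}(p_j,p_{j+1})$, where $G$ is a set of at most $k-1$ consecutive-gap indices. For each gap $j$, let $c_j$ be the number of points of $U \cap S_i$ lying strictly between $p_j$ and $p_{j+1}$. The objective equals $|U\cap S_i\cap[p_1,p_m]| - \sum_{j\in G} c_j$. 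It is therefore minimized greedily by choosing the $k-1$ gaps with the largest $c_j$. All $c_j$ can be computed in one linear sweep over the sorted list, and selecting the top $k-1$ takes linear time. This settles both exactness and per-iteration cost.

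For the output size, observe that $H_1 = [p_1,p_m]$ minus at most $k-1$ gaps. Each later $H_i$ removes at most $k-1$ further open gaps from this same hull $[p_1,p_m]$. The intersection is therefore $[p_1,p_m]$ minus at most $T(k-1)$ gaps, which is a union of at most $T(k-1)+1 = \frac{k-1}{(\eps s)^q}+1$ intervals.

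The step I expect to be the main obstacle is the running time. Done naively, there are $T$ rounds, each costing $O(n)$ after sorting, for a total of $O(n\log n + Tn)$. To reach $O(n\log n)$, the plan is to note that $T=(\eps s)^{-q} \le \sqrt{n}$ and to maintain the gap counts $c_j$ incrementally. After round $i$, only the points of $U$ inside the newly removed gaps leave $S_{i+1}$, and each point of $U$ is removed at most once. So if the $c_j$ are kept in a heap or order-statistics structure, with chosen gaps deleted from further consideration since their counts become zero, the total update cost is $O(n\log n)$. Each round then extracts its $k-1$ maxima in $O(k\log n)$, and $Tk \lesssim n$ under the stated sample size. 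I would first verify this amortization carefully. If the accounting fails, I would settle for the slightly weaker polynomial bound $O(n\log n + Tn)$.
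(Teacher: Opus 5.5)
Your proposal is correct and follows the paper's proof. Both run \cref{alg:piu} on $\cH_k$, solve each constrained ERM exactly by deleting the $k-1$ consecutive-positive gaps with the most surviving unlabeled points, and get the bound of $T(k-1)+1$ output intervals. For the running time, the paper uses the same invariance you noted (counts of undeleted gaps never change) to conclude directly that round $i$ removes the next $k-1$ gaps in one global ordering by $|U\cap(p_j,p_{j+1})|$, so a single sort gives $O(n\log n)$ and no heap or amortization is needed; your $\vc(\cH_k)=2k$ is also the correct value, whereas the paper's stated $k+1$ is wrong but harmless for the $\wt{O}$ bound.
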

Thus, \cref{thm:unionsOfIntervalsPAC} shows that with sample access to $\cD_+^\star$ and to a distribution $\cD$ which witnesses the smoothness of $\cD^\star$, we can efficiently PAC learn any union of $k$ intervals under $\cD^\star$. 
The proof appears in \cref{subsec:setEstimation}. 
This result holds independent of our other results on truncated linear regression and can be of broader interest.

\paragraph{Overview of Set-Learning Algorithm.}
The algorithm of \cref{thm:unionsOfIntervalsPAC} (\cref{alg:unions}) is simple. After drawing $n$ samples from $\cD^\star_{+}$ and $\cD$, we consider the $n-1$ intervals on the real line defined by consecutive samples from $\cD^\star_{+}$. We count how many samples from $\cD$ fall into each of these intervals, and discard the $\frac{k-1}{(\eps s)^q}$ intervals containing the most samples. 
Finally, we return the union of the remaining intervals. \cref{fig:setLearningAlgo} illustrates this procedure on one example.

The intuition behind discarding the intervals with the most samples from $\cD$ is as follows: if there are many samples from $\cD$ between two consecutive samples $y_1 < y_2$ from $\cP$, then w.h.p.\ either $(y_1, y_2)$ lies outside the support of $\cD^\star_{+}$ (\ie{}, outside of the set we wish to learn) and should be discarded, or $\cD^\star_{+}$ places little mass in that interval (otherwise we would have seen another sample from $\cD^\star_{+}$ inside it), so discarding it incurs little learning error.
To make this intuition rigorous, we need a careful analysis of the above greedy process; see \cref{subsec:setEstimation}.

\begin{figure*}[h]
    \centering
    \vspace{3mm}
    \includegraphics[width=\linewidth]{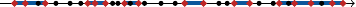}
    \vspace{1mm}
    \caption{
        Example of \cref{alg:unions} in action, for learning a union of $k = 2$ intervals with error $\eps = 0.2$. The points above represent samples on the real line. The red diamonds are positive samples drawn from the distribution $\cD^\star_{+}$, while the black dots are samples from the distribution $\cD$ which is smooth w.r.t.\ the target $\cD^\star$. The algorithm considers the intervals defined by consecutive red points, and discards the $\frac{k-1}{\eps} = 5$ intervals with the most black points (we assume that the smoothness parameters are $s = q = 1$ for simplicity). The final output is the union of the intervals denoted in blue.
    }
    \label{fig:setLearningAlgo}
\end{figure*}

\begin{algorithm}[h] \caption{Learning Unions of Intervals under Smoothness} \label{alg:unions}
\begin{algorithmic}[1]
    \Require Smoothness parameters $s, q$, number of intervals $k$, accuracy $\eps$, confidence $\delta$
    \Ensure Sample access to $\cD^\star_{+}$ and $\cD$
    \item[]
    \State Set $n \coloneqq \wt{O}\inparen{(\eps s)^{-2q} \cdot (k + \log \nfrac{1}{\delta})}$ and $r \coloneqq \frac{k - 1}{(\eps s)^q}$
    \State Draw $n$ samples $y^{(1)} \leq \ldots \leq y^{(n)}$ from $\cD^\star_{+}$ and sort them
    \State Draw $n$ samples from $\cD$ and sort them; let $U$ be the set of those samples
    \State For $i \in [n]$, compute $u_i \coloneqq \abs{U \cap (y^{(i)}, y^{(i+1)})}$ with a single pass through the data
    \State Find the $r$ largest values $u_i$ by sorting; let them be $u_{j_1}, \ldots, u_{j_r}$ 
    \item[]
    \State \Return 
    $
        \insquare{
            y^{(1)}, y^{(n)}
        } 
        \setminus \inparen{
            \bigcup_{\ell=1}^{r} \inparen{
                y^{(j_\ell)}, 
                y^{(j_{\ell+1})}
            }
        }
    $
\end{algorithmic}
\end{algorithm}

\negspace{}
\subsection{Learning the Truncation Set $\Sstar$} \label{subsec:learningTruncationSet}
\negspace{}
Having presented our result on positive-only learning of unions of intervals under smoothness, we now return to the truncated regression setting and show how to efficiently learn the truncation set $\Sstar$.
To learn $\Sstar$ (which is a union of $k$ intervals), we use \cref{alg:unions} from \cref{thm:unionsOfIntervalsPAC}. The challenge in this setting is that we do not have access to an auxiliary distribution $\cD$ a priori (in order to apply the smoothness framework from the previous section). To overcome this, we show how to construct this distribution in the truncated regression problem.

Concretely, we must define the base distribution $\cD^\star$ under which we wish to learn $\Sstar$, and show how to obtain sample access to $\cD^\star_{+} = \cD^\star \vert_{\Sstar}$ and to some distribution $\cD$ smooth w.r.t.\ $\cD^\star$. The natural base distribution $\cD^\star$ is the \emph{untruncated} distribution over $\R$ of the response $y$. 
Formally, for any $w \in \R^d$, consider the distribution over $(x, y) \in \R^d \times \R$ where the marginal of $x$ is $\Dobs$ (see \cref{eq:DobsMain}) and the conditional distribution of $y \mid x$ is the untruncated Gaussian $\cN(\inangle{w, x}, 1)$. Denote by $\cD_y(w)$ the (unconditional) marginal distribution of $y$. Our goal is to approximately learn $\Sstar$ under $\cD_y(\wStar)$, \ie{} to find a set $S \subseteq \R$ such that
\begin{equation} \label{eq:closenessGuaranteeS}
    \cD_y(w; \Sstar \triangle S)
    \coloneqq \Pr\nolimits_{y \sim \cD_y(\wStar)} \inparen{
        y \in \Sstar \triangle S
    }
    \leq \eps\,.
\end{equation}
Here, $A \triangle B \coloneq (A {\setminus} B) \cup (B {\setminus} A)$ denotes the set symmetric difference.
To learn such a set $S$, we apply \cref{thm:unionsOfIntervalsPAC}. This requires access to positive samples from $\cD_y(\wStar)$, as well as samples from a distribution smooth w.r.t.\ $\cD_y(\wStar)$. We can obtain positive samples directly from our truncated regression model, however it is not clear how to construct the smooth distribution.
If we knew $\wStar$, then we could obtain a sample $x \sim \Dobs$ from the truncated regression model and then generate (the untruncated) $y \sim \cN(\inangle{\wStar, x}, 1)$ to sample exactly from $\cD_y(\wStar)$, but $\wStar$ is of course unknown. 

\paragraph{Key Lemma: Sub-Gaussianity Implies Smoothness.}
To obtain the smooth distribution, we rely on the observation that, under our sub-Gaussianity assumption for $x$, for any parameter vector $w$ within constant distance of $\wStar$, the distribution $\cD_y(w)$ is smooth w.r.t.\ $\cD_y(\wStar)$.
\begin{restatable}[Smoothness]{lemma}{lemSmoothness} \label{lem:smoothness}
    Suppose that \cref{asmp:massMain,asmp:subGaussianMain} hold. Let $w \in \R^d$ be such that $\norm{w - \wStar}_2 \leq D$, and assume that $\norm{\Exp_{\cD} x}_2 \leq \beta$ for $\beta > 0$. For any measurable $T \subseteq \R$ such that $\Exp_{x \sim \Dobs} \insquare{\cN(T; \inangle{\wStar, x}, 1} > 0$:
    \[
        \cD_y(\wStar; T)
        \leq \inparen{\nfrac{1}{s}}\cdot \insquare{
            \cD_y(w; T)
        }^{1/q}
    \]
    for constants $s = \Theta(1)$ and $q = \Theta\inparen{
            D^2 \inparen{
                \sigma^2 \log{\nfrac{1}{\alpha}}
                + \beta^2
            }
        }$.
\end{restatable}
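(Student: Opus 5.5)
The plan is to prove the inequality pointwise in $x$ via a change-of-measure argument, and then average over $x \sim \Dobs$ using Hölder's inequality. The only place sub-Gaussianity enters is in controlling the exponential moment of the likelihood ratio.

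\emph{Step 1: write both masses as averages over $x$.} Write $\mu^\star(x)=\inangle{\wStar,x}$ and $\mu(x)=\inangle{w,x}$, and let $\Delta(x)=\mu(x)-\mu^\star(x)=\inangle{w-\wStar,x}$. Then
\[
\cD_y(\wStar;T)=\Exp_{x\sim\Dobs}\!\insquare{\cN(T;\mu^\star(x),1)}
\quadand
\cD_y(w;T)=\Exp_{x\sim\Dobs}\!\insquare{\cN(T;\mu(x),1)}.
\]

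\emph{Step 2: a pointwise Gaussian comparison.} Fix $x$ and write $N(y)=\cN(y;\mu^\star,1)$, $M(y)=\cN(y;\mu,1)$. The likelihood ratio is
\[
\frac{N(y)}{M(y)}=\exp\!\inparen{-\Delta(y-\mu)-\tfrac{\Delta^2}{2}} .
\]
For a conjugate pair $p,p'$ with $\tfrac1p+\tfrac1{p'}=1$, Hölder gives
\[
\cN(T;\mu^\star,1)=\int_T \frac{N}{M}\,M\,dy\le \cN(T;\mu,1)^{1/p}\inparen{\Exp_{y\sim M}\insquare{(N/M)^{p'}}}^{1/p'} .
\]
The Gaussian moment evaluates to $\Exp_{M}[(N/M)^{p'}]=\exp\!\inparen{\tfrac{p'(p'-1)}{2}\Delta^2}$. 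Taking $p'=2$ and $p=2$ yields the pointwise bound
\[
\cN(T;\mu^\star,1)\le \cN(T;\mu,1)^{1/2}\,e^{\Delta(x)^2/2}.
\]

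\emph{Step 3: average over $x$.} Apply Hölder once more, in $x$, with a conjugate pair $(a,b)$ where $a$ is large:
\[
\cD_y(\wStar;T)\le \inparen{\Exp_{\Dobs}\cN(T;\mu,1)^{a/2}}^{1/a}\inparen{\Exp_{\Dobs}e^{b\Delta^2/2}}^{1/b}.
\]
Since $a/2\ge1$ and $\cN(T;\mu,1)\le1$, we have $\cN^{a/2}\le\cN$, so the first factor is at most $\cD_y(w;T)^{1/a}$. Taking $b=2$ (hence $a=2$) already gives exponent $q=2$ on $\cD_y(w;T)$. However, the second factor $\Exp_{\Dobs} e^{\Delta^2}$ is finite only when $\snorm{\Delta}_{\psi_2}$ under $\Dobs$ is small, and with $D=\Theta(1)$ that may fail.

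\emph{Step 4: balance the exponents.} To handle large $D$, choose a small $b>1$ (so $b-1\approx c/(D^2\sigma^2)$). This is also why the integrability condition must be relaxed: in Step 2 we replace the pair $(2,2)$ by general $(p,p')$ and take $p'\to1$. Then the pointwise constant is $\exp(\tfrac{p'-1}{2}\Delta^2)$ and the first Hölder factor becomes $\cN(T;\mu,1)^{1/p}$ with $p\approx 1/(p'-1)$ large. Setting $p'-1=\Theta(1/(D^2(\sigma^2\log(1/\alpha)+\beta^2)))$ makes the needed exponential moment of $\Delta^2$ finite. The overall exponent on $\cD_y(w;T)$ is then $1/q$ with $q=\Theta(p)=\Theta(D^2(\sigma^2\log\frac1\alpha+\beta^2))$, and the remaining factor is an $O(1)$ constant, which gives $s=\Theta(1)$. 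Since $\cD_y(w;T)\le1$, any additional losses can be absorbed by enlarging $q$ by constant factors.

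\emph{Main obstacle: the moment bound under $\Dobs$.} The crux is showing
\[
\Exp_{x\sim\Dobs}\exp\!\inparen{\lambda\Delta(x)^2}=O(1)\quad\text{for }\lambda\lesssim 1/(D^2(\sigma^2\log\tfrac1\alpha+\beta^2)).
\]
The plan is to change measure back to $\cD$ using \eqref{eq:DobsMain}: the density ratio $\Dobs/\cD$ is at most $1/\alpha$ by \cref{asmp:massMain}. Then bound $\Exp_\cD e^{\lambda\Delta^2}$ by decomposing $\Delta=\inangle{w-\wStar,x-\Exp x}+\inangle{w-\wStar,\Exp x}$. The centered part is $D\sigma$-sub-Gaussian, and the mean part is at most $D\beta$. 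Plain sub-Gaussian MGF bounds give $\Exp_\cD e^{\lambda\Delta^2}\le 2e^{2\lambda D^2\beta^2}$ for $\lambda\le 1/(4D^2\sigma^2)$. The factor $1/\alpha$ would spoil the constant $s$, so instead I would take the $1/b$ power in Step 3 and choose $\lambda$ small enough that $(1/\alpha)^{1/b'}$ stays bounded. Equivalently, I would use a truncation: split on the event $\{|\Delta|\gtrsim D(\sigma\sqrt{\log(1/\alpha)}+\beta)\}$, which has $\Dobs$-probability at most $\alpha^{O(1)}/\alpha$. Tuning this split is exactly where the $\log(1/\alpha)$ enters $q$, and it is the step I expect to require the most care.
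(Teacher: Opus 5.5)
Your argument is correct, and it takes a genuinely different route from the paper's.

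\textbf{The paper's route.} The paper starts from the pointwise reverse bound of \cref{lem:survivalProbabilityBounds}, $p(x,w;T)\ge p(x,\wStar;T)^2e^{-\Delta(x)^2-2}$. Because the penalty $e^{-\Delta^2}$ sits on the lower-bound side, it cannot be averaged with a moment bound. The paper therefore restricts to $\{\abs{\Delta}\le t\}$ and controls the complement with Cauchy--Schwarz and the sub-Gaussian tail of $\Delta$ under $\Dobs$. It then chooses $t$ adaptively in terms of $\Exp_{\Dobs}[p_T^2]$ and $\Exp_{\Dobs}[p_T^4]$, and finishes with Jensen.

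\textbf{Your route and what it buys.} Your H\"older (R\'enyi-divergence) change of measure puts a tunable penalty $e^{(p'-1)\Delta^2/2}$ on the upper-bound side instead. One further H\"older step in $x$, together with the exponential moment of $\Delta^2$ under $\Dobs$, then closes the argument. The trade-off between $p'-1$ and $q=p'/(p'-1)$ shows transparently why $q\propto D^2(\sigma^2\log\frac1\alpha+\beta^2)$. Concretely, let $K_\Delta$ be the $\psi_2$-norm of $\Delta$ under $\Dobs$, and take $a=b=2$ and $p'-1=\min\{1,K_\Delta^{-2}\}$. You get $\cD_y(\wStar;T)\le\sqrt2\,\cD_y(w;T)^{1/\max\{2,1+K_\Delta^2\}}$ with explicit constants and no case analysis. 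This also sidesteps a weak spot in the paper's adaptive threshold. As written, that step degenerates when $p_T$ is nearly constant in $x$, because the logarithm defining $t$ vanishes; it needs a constant inside the logarithm. In exchange, the paper's route reuses a lemma it needs anyway for strong convexity and for the survival-probability lower bound.

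\textbf{Your main obstacle.} What you flag as the hard step is exactly \cref{lem:truncatedXSubgaussianity}. The paper proves it essentially as in your first suggestion. The bound $\Dobs\le\cD/\alpha$ gives an exponential moment of at most $2/\alpha$ for the centered part. Jensen then turns this into a $\psi_2$ bound at the cost of a $\log(1/\alpha)$ factor, and the mean is handled separately, as in your decomposition.

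\textbf{One small correction.} Once $p>a$, the inequality $\cN^{a/p}\le\cN$ used in your Step 3 reverses. Bound the first factor by Jensen instead, using concavity of $u\mapsto u^{a/p}$. This gives exactly the exponent $1/p$ you claim.
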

The proof of \cref{lem:smoothness} appears in \cref{subsubsec:smoothnessProof}; as we discuss there, the proof crucially relies on (i) upper and lower bounds on the tails of the (Gaussian) noise, and (ii) sub-Gaussianity of the covariate distribution $\cD$.
Observe that the smoothness parameter $q$ scales with $D$, the distance between $w$ and $w^\star$, affecting the sample and time complexity of set learning. 
However, as we show in \cref{subsubsec:warmStartProjectionSet}, we can efficiently find an initial parameter vector $\hw$ with $\norm{\hw - \wStar}_2 \leq D = O(1)$. 
Thus, we can use $\cD_y(\hw)$ (from which we can sample efficiently by drawing $x \sim \Dobs$ and (the untruncated) $y \sim \cN(\inangle{\hw, x}, 1)$ conditionally on $x$) as the smooth distribution $\cD$ in \cref{alg:unions}, yielding an efficient procedure for approximately learning $\Sstar$.

\negspace{}
\subsection{Learning $\wStar$ via PSGD} \label{subsec:learningPSGD}
\negspace{}
We now turn to Phase II of our algorithm: Estimating the parameter vector $\wStar$ via PSGD on an appropriate objective function, given a sufficiently accurate approximation $S \approx \Sstar$ (from previous sections). 
If we knew the survival set $\Sstar$, we could use the following (pseudo) negative log-likelihood (NLL) as our objective:
\[
    \negLL(w) \coloneqq - \Exp_{x \sim \Dobs} \insquare{
        \Exp_{
            y \sim \cN\inparen{\inangle{\wStar, x}, 1, \Sstar}
        } \insquare{
            \log \cN(y; \inangle{w, x}, 1, \Sstar)
            \given x
        }
    }\
\]
Here, the outer expectation is over the observed distribution $\Dobs$ (see \cref{eq:DobsMain}) of the features (not the base distribution $\cD$), while the inner expectation is over the true $w^\star$, but the loss function uses the ``guess'' $w$, which we are optimizing. Since we do not have membership access to $\Sstar$ but only to the set $S$, we instead consider the following ``perturbed'' NLL, following Definition 3 of \citet{lee2024efficient}: given a set $S$, define the perturbed pseudo-NLL $\negLL_S$ as 
\[
    \negLL_S(w) \coloneqq - \Exp_{x \sim \Dobs} \insquare{
        \Exp_{
            y \sim \cN\inparen{\inangle{\wStar, x}, 1, S \cap \Sstar}
        } \insquare{
            \log \cN(y; \inangle{w, x}, 1, S)
            \given x
        }
    }\
\]
When $S = \Sstar$, the perturbed NLL is equal to the negative log-likelihood. However, in general, it is \emph{not} and may lack the desirable properties of the negative log-likelihood.

In \cref{subsec:parameterEstimationProof}, we prove that $\negLL_S$ has the desired properties. 
While we defer the details to the appendix, we highlight some key steps here. 
An important issue with the perturbed log-likelihood $\negLL_S$ is that its global minimizer is not $\wStar$, so there is no a priori guarantee that optimizing $\negLL_S$ will recover $\wStar$. 
Moreover, even if $\wStar$ were the unique minimizer, unless $\negLL_S$ is also \emph{strongly} convex around $\wStar$, optimizing $\negLL_S$ in function value need not yield convergence to $\wStar$. 
To resolve both issues, we do not optimize $\negLL_S$ globally, but instead restrict optimization to an appropriately defined \emph{projection set} $\cP$ such that 
\begin{enumerate}[itemsep=-3pt,label=R\arabic*.] %
    \item $\wStar \in \cP$;
    \item the minimizer of $\negLL_S$ in $\cP$ is close to $\wStar$; and
    \item $\negLL_S$ is strongly convex over $\cP$.
\end{enumerate}

\negspace{}
\subsubsection{Finding a Warm Start and Defining the Projection Set} \label{subsubsec:warmStartProjectionSet}
\negspace{}
To define our projection set, we first show how to efficiently find a \emph{warm start}: a point $\hw$ within constant distance of $\wStar$ (recall that we also require this to learn $\Sstar$ in \cref{subsec:learningTruncationSet}). We draw $n$ i.i.d.\ samples $\inbrace{\inparen{x^{(i)}, y^{(i)}}}_{i \in [n]}$ from our truncated regression model, and define $\hw$ as the output of ordinary least-squares (OLS), ignoring truncation:
\[
      \hw \coloneqq \inparen{
        \frac{1}{n} \sum_i x^{(i)} x^{(i) \top}
    }^{-1} \cdot \inparen{
        \frac{1}{n} \sum_i y^{(i)} \cdot x^{(i)}
    } \;.
\]
Even though the above OLS solution does not account for truncation, sub-Gaussian concentration of $x$ implies that it is not too far from $\wStar$. In fact, for sufficiently large $n$, w.h.p.\ $\norm{\hw - \wStar}_2 \leq O(1)$.\footnote{Throughout this section, we hide any constants from \cref{asmp:massMain,asmp:subGaussianMain,asmp:conservationMain} in the big-O notation, to simplify the overview and highlight dependencies on the dimension $d$ and the set learning error $\eps$.}
This argument is formalized in \cref{lem:wHatWStarCloseness}, whose proof appears in \cref{subsec:warmStart}. 

Given $\hw$, we define our projection set as a constant-radius ball around it: $\cP \coloneqq B(\hw, \Theta(1))$.
This ensures that $\wStar \in \cP$ w.h.p. -- satisfying requirement R1. To ensure strong convexity of $\negLL_S$ in $\cP$ (\ie{}, requirement R3), since $\negLL_S$ is defined using our approximation $S$ of $\Sstar$, we need the set learning error $\eps$ to be sufficiently small. Choosing $\eps$ small enough ensures that $\nabla^2 \negLL_{S}(w) \succeq \Omega(1) \cdot I$ for all $w \in \cP$, yielding strong convexity. This is established in \cref{cor:strongConvexity}, proved in \cref{subsec:warmStart}.

Finally, for requirement R2, we show in \cref{subsec:closeness} that the gradient norm satisfies $\norm{\nabla \negLL_S(\wStar)}_2 \leq \wt{O}(d^{1/2} \eps^{1/6})$ for $\eps$ small enough (\cref{lem:gradientNormBound}). This, combined with strong convexity and $\wStar \in \cP$, implies that the minimizer of $\negLL_S$ over $\cP$ is at most a distance $\wt{O}(d^{1/2} \eps^{1/6})$ away from $\wStar$ (\cref{lem:closeness}). Thus, we can make it arbitrarily close to $\wStar$ by choosing $\eps$.

\negspace{}
\subsubsection{Algorithmic Considerations for PSGD}
\negspace{}
The above results yield a projection set $\cP$ such that approximately minimizing $\negLL_S$ over $\cP$ produces an estimate of $\wStar$. 
To actually run PSGD and obtain this approximate minimizer, we need to satisfy some algorithmic considerations. 
Namely, we must (C1) find an initial point in $\cP$ for the first PSGD iteration, (C2) efficiently project onto $\cP$ at each step, and (C3) obtain an unbiased gradient estimate of $\nabla \negLL_S$ at each iteration. For (C1), we use our warm start $\hw$ from before. Ensuring (C2) is straightforward since $\cP$ is a ball with known center and radius.
Satisfying (C3), however, is challenging, and we discuss it further in \cref{subsubsec:gradientSampler}.

To circumvent this challenge, we allow our gradient estimate to be \textit{biased:} in particular, given a point $w \in \R^d$, we implement a subroutine that returns $g(w) \in \R^d$ satisfying $\Exp\insquare{g(w) \given w} = \nabla \negLL_S(w) + b$, where $b$ is a bias vector satisfying $\norm{b}\leq \poly(d\eps)$.
In \cref{subsubsec:gradientSampler}, we show how to implement this biased sampler. 
We bound the runtime in \cref{cor:gradientSamplerRuntime}, while in \cref{lem:gradientSamplerBiasBound} we show that the bias satisfies $\norm{b}_2 \leq \wt{O}(d^{1/2} \eps^{1/4})$, so we can control it by appropriately setting $\eps$. We also prove an upper bound on the second moment of the (biased) stochastic gradients (\cref{lem:gradientSecondMomentBound}), necessary for the convergence of PSGD. 
Finally, we combine these results with an analysis of PSGD with biased gradients to show that running PSGD with projection set $\cP$ and our biased gradient sampler yields an estimate of $\wStar$, as desired.

\section{Simulations with Synthetic Data}
We empirically evaluate the rate-of-convergence of our algorithm on synthetic data. The simulation code can be found in \href{https://github.com/alexkouridakis/truncated-regression/}{this repository}.

\smallskip

\noindent\textbf{Setup.}
The feature distribution is a 10-dimensional mixture of five Gaussians, and the truncation set is a union of five intervals. 

\smallskip

\noindent\textbf{Baselines.}
Our baseline is Ordinary Least Squares (OLS) (which does not account for any truncation).
We compare the performance of three algorithms: 
\begin{itemize}[leftmargin=13pt,itemsep=-1pt]
    \item Projected Stochastic Gradient Descent (PSGD) on the MLE objective with misspecified truncation set $S$ (which is the algorithm by \citet{daskalakis2019computationally} and accounts for truncation but is provided an incorrect set $S\neq S^\star$);
    \item PSGD on the MLE objective with the correct truncation set $\Sstar$ (which is the same as the previous algorithm except that it is provided knowledge of $\Sstar$);
    \item Our algorithm (\cref{alg:psgd}).
\end{itemize}
We note PSGD with $\Sstar$ is an idealized algorithm that knows $\Sstar$ a priori; we \ul{cannot actually implement it in practice} since $\Sstar$ is unknown, and we only include it for comparison.

\begin{figure}
    \centering
    \includegraphics[width=0.75\linewidth]{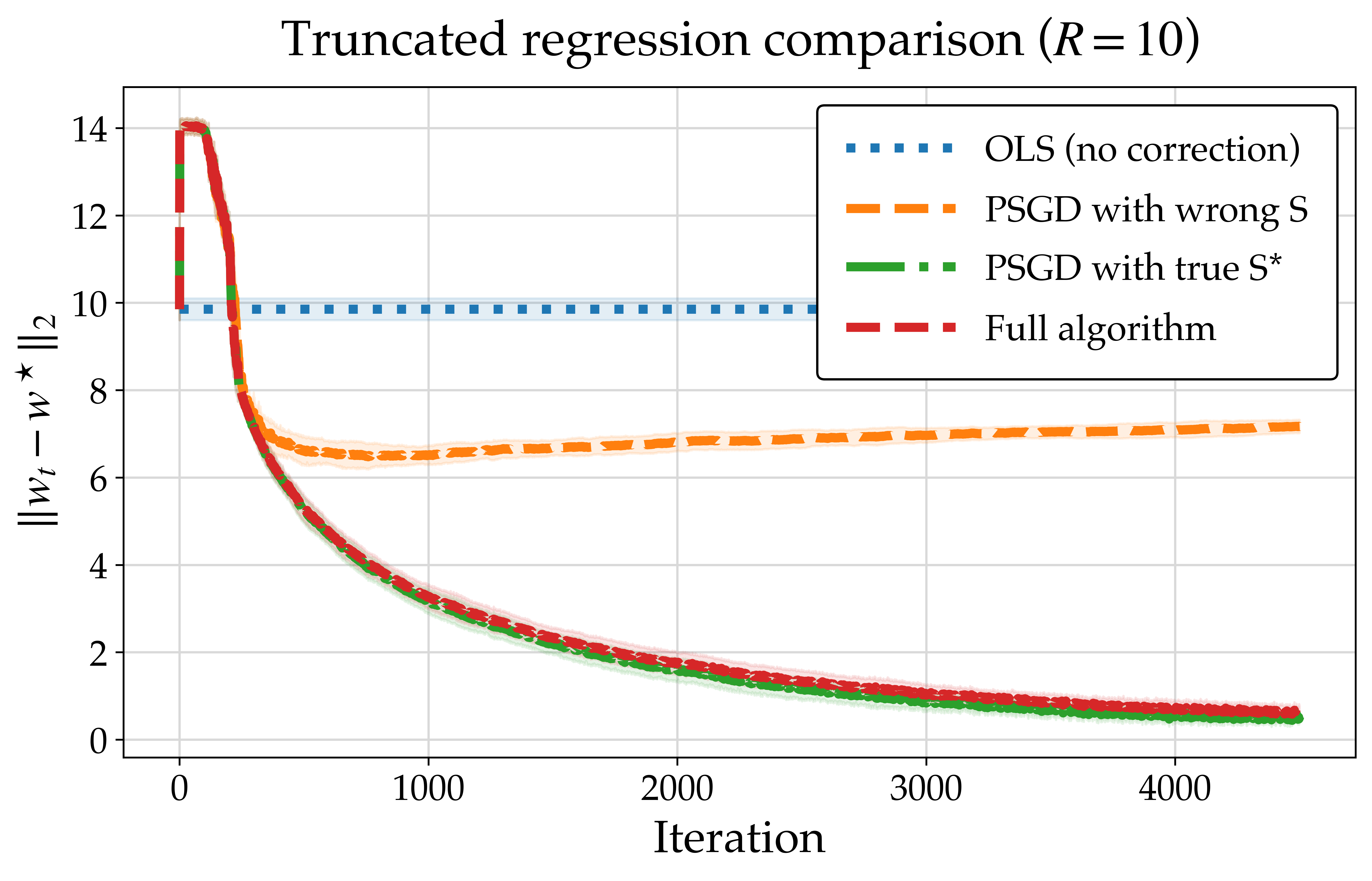}
    \caption{%
        \textbf{Simulation Results.} The plot shows the Euclidean distance of the PSGD iterates $w_t$ to the true parameter vector $\wStar$ (note that OLS is not an iterative algorithm, so its error is simply plotted as a constant line).
        The experiment is repeated $10$ times, and the shaded regions around each line show standard deviation around the mean Euclidean distance.
        Our algorithm (``Full Algorithm'') converges to $\wStar$ at almost the same rate as the ideal algorithm (PSGD with true $\Sstar$) which has exact knowledge of the truncation set $\Sstar$ and, hence, is not implementable in practice.
    }
    \label{fig:experiments}
\end{figure}

\smallskip

\noindent\textbf{Results.}
The results of our experiments are shown in \cref{fig:experiments}. As expected, we observe that blindly applying OLS without accounting for truncation can give significantly biased estimates, and the same is true if we attempt to run PSGD on the MLE objective with an incorrect truncation set. On the other hand, our algorithm converges to the true $\wStar$, at essentially the same rate as the ideal algorithm which uses PSGD on the true MLE objective and assumes the exact knowledge of the truncation set $\Sstar$ (which is unavailable in practice).

\negspace{}
\section{Conclusion}
\negspace{}
We provide the first polynomial-time algorithm for truncated linear regression with an \emph{unknown} survival set (\cref{thm:main}). 
Our algorithm (\cref{alg:wStar}) operates under the mild assumption that the covariate vectors are sub-Gaussian, and has polynomial dependence on all key problem parameters. 
An important ingredient in establishing this result is an efficient procedure for learning unions of intervals from \emph{positive examples} under an appropriate smoothness condition (\cref{thm:unionsOfIntervalsPAC}), which may be of independent interest.

Our results suggest several directions for future work. 
First, truncated regression beyond the linear setting, such as Poisson and logistic regression, have been studied with known survival set~\citep{ilyas2020theoretical} and extending our techniques to \emph{unknown} survival set variants of these is a natural next step. 
Further, we expect that natural extensions of our interval-learning algorithm can learn richer hypothesis classes in constant dimensions ($d = O(1)$) with just positive examples under smoothness, provided they are well-approximated by unions of axis-aligned hypercubes. Some such classes are studied by \citet{durvasula2023smoothAuctions} and even shown to include classes with infinite VC dimension.

\subsection*{Acknowledgements}
Alexandros Kouridakis was supported by a scholarship from the Onassis Foundation.
Alkis Kalavasis was supported by the Institute for Foundations of Data Science at Yale.

\newpage

\printbibliography
\newpage

\appendix

\newpage

\section{Complex Survival Sets in Astronomical Surveys}\label{app:astronomy}

In this section, we illustrate that survival sets arising in real-world data can be highly nontrivial: rather than being simple threshold rules, they are often induced by a layered sequence of instrument constraints, engineering decisions, and scientific priorities, many of which interact in ways that are difficult to formalize. 

To make this concrete, we focus on astronomy, where modern \emph{large-scale surveys} have become foundational infrastructure for astrophysics. 
These surveys, such as the Sloan Digital Sky Survey (SDSS), Gaia, and the Vera Rubin Observatory's Legacy Survey of Space and Time (LSST), systematically measure properties of millions to billions of objects, enabling population-level studies of stars and galaxies, precision tests of cosmological models, and the discovery of rare phenomena \citep{york2000sloan,gaia2016,ivezi2019lsst}. 
At the same time, the scale and complexity that make these surveys powerful also make selection effects unavoidable, and careful statistical analysis requires understanding (or approximating) the induced survival set.

\paragraph{Warm-up.}
We begin with a classical and widely studied selection effect that arises from physical limits of instruments.

\begin{mdframed}
    \vspace{1mm}
    \begin{example}[Malmquist Bias]\label{scenario:malmquist}
        Malmquist bias is a selection effect in astronomical surveys driven by flux limits: any instrument can only observe objects whose \emph{apparent} brightness $y$ exceeds a detection threshold, while dimmer objects remain unobserved \citep{malmquist1922some,malmquist1925contribution}.
        This truncation can systematically bias downstream inferences if it is ignored, leading to incorrect conclusions about stellar populations and distances \citep{teerikorpi1997malmquist}.
        For instance, among objects at a fixed true distance, the brightness cutoff removes faint members; consequently, the observed sample over-represents intrinsically brighter objects, unless the selection mechanism is explicitly accounted for (as we do in our algorithm).
    \end{example}
\end{mdframed}

\paragraph{Selection effects in large-scale survey pipelines.}
While Malmquist bias is already a nontrivial form of truncation, real survey pipelines typically induce much richer survival sets.

\begin{mdframed}
    \vspace{1mm}
    \begin{example}[Truncation Biases in Large-Scale Astronomical Surveys]\label{scenario:astronomy}
    Large-scale astronomical surveys form the backbone of modern astrophysics: they enable scientists to identify promising regions to study, evaluate foundational theoretical models, and discover rare and new phenomena (\eg{}, \citep{york2000sloan,gaia2016,ivezi2019lsst}).
    However, their massive scale and operational complexity inevitably introduce truncation biases that must be corrected to draw valid scientific conclusions.
    
    A representative case study is the Apache Point Observatory Galactic Evolution Experiment (APOGEE) \citep{majewski2017apache}, one of the most comprehensive stellar spectroscopic surveys conducted to date.
    APOGEE collected high-resolution near-infrared spectra for over 146{,}000 red giant stars using custom-built spectrographs.
    The survey's scale---spanning multiple years, numerous institutions, and extensive operational logistics---illustrates why truncation effects are inevitable: each stage of the pipeline introduces constraints and trade-offs that shape which objects are ultimately observed and how.
    
    Survey pipelines introduce numerous truncation biases, both known and unknown. 
    Some arise from fundamental physical limitations (such as Malmquist bias), while others emerge from practical engineering constraints and resource allocation decisions.
    The question is therefore not whether truncation biases exist, but how to account for their combined effects on the observed sample.
    Even within a single survey like APOGEE, truncation effects arise at multiple levels \citep{majewski2017apache}.
    To illustrate their diversity and the resulting complexity of the induced survival set, we group some common mechanisms by their underlying causes (see \citep{sdss_dr17_selection_biases} for further discussion):
    
    \begin{enumerate}
        \item \emph{Sampling-dependent biases} arise from target truncation strategies.
        Surveys often partition targets into cohorts with different observation priorities, leading to truncation of some cohorts. %
        For example, APOGEE stratifies targets by apparent brightness (H-band magnitude) and preferentially targets certain strata.
        Importantly, cohort definitions can depend on modeling choices: APOGEE preferentially sampled red giants over dwarfs using a ``redness'' criterion, but this criterion depends on the chosen de-reddening model used to convert observed color $C_{\rm obs}$ into intrinsic color $C_{\rm corr}$, making truncation model-dependent \citep{sdss_dr17_targets}.
    
        \item \emph{Physical limitations} of instruments impose unavoidable constraints.
        Beyond flux limits, atmospheric absorption can prevent observation of certain objects, and instrument design can induce truncation over sky position.
        For instance, APOGEE uses multi-fiber spectrographs that route light from many targets through optical fibers to a separate spectrograph for high-precision analysis \citep{wilson2019apogeeSpectographs,oconnell2020state}.
        Physical constraints create an ``exclusion zone'' around each fiber that prevents placing another fiber nearby, which systematically under-samples dense regions such as star clusters and the inner Galaxy \citep{sdss_dr17_targets}, truncating some complex set of outcomes $y$.

        \item \emph{Cascading effects} propagate biases from upstream catalogs to downstream surveys.
        Many surveys begin with a ``parent catalog'' and inherit its selection criteria.
        For example, galaxy surveys that start from the Uppsala General Catalogue inherit criteria that omit small angular-size galaxies unless they exceed brightness thresholds \citep{freudling1995perculiarvelocity}.
        These inherited filters, composed with survey-specific constraints, yield selection functions that are effectively complex convolutions of multiple stages.
    \end{enumerate}
    
    The consequences of ignoring truncation effects can be severe.
    A particularly striking example comes from NASA's \emph{Kepler} mission: an incorrectly implemented veto in pipeline version 9.2 created an unexpected period-dependent detection bias, substantially reducing the detectability of planets with orbital periods exceeding 40 days relative to theoretical expectations \citep{christiansen2016measuring}.
    The flaw went undetected until extensive validation efforts, and its discovery necessitated reprocessing the catalog and issuing cautionary guidance for analyses performed on earlier releases \citep{christiansen2016measuring}.
\end{example}
\end{mdframed}

\begin{figure}[htb!]
    \centering
    {\includegraphics[width=0.5\linewidth]{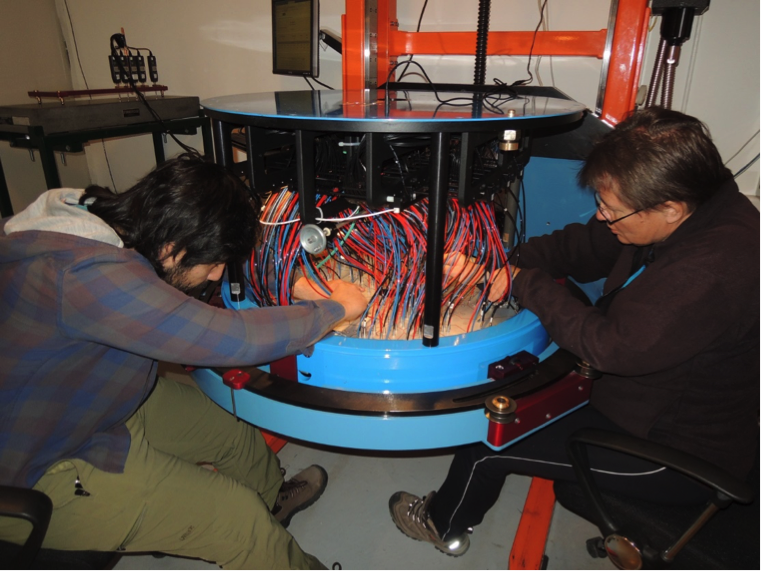}}
    \caption[Illustrations for \cref{scenario:astronomy} (Selection Biases in Astronomical Surveys)]{Illustrations for \cref{scenario:astronomy}. 
    A multi-fiber spectroscope from the APOGEE survey, where the requirement of a minimum distance between fibers creates spatially dependent selection effects. 
    Image credit: \citep{sdss_dr17_instruments}. %
    } 
    \label{fig:intro:apogee}
\end{figure}

\section{Further Consequences of Main Result (\cref{thm:main})}

{In this section, we present some additional consequences of \cref{thm:main} in special cases where the covariates are Gaussian (\cref{sec:consequence:gaussian}) and when the survival set $\Sstar$ is known (\cref{sec:consequence:knownSet}).}

\subsection{The Gaussian Case}
    \label{sec:consequence:gaussian}
When the marginal distribution $\cD$ of the features is a Gaussian distribution over $\R^d$, \cref{thm:main} takes the following form.
\begin{corollary}\label{cor:Gaussian}
    Suppose that $\cD = \cN(\mu, \Sigma)$ for some $\mu \in \R^d$ and $\Sigma \in \R^{d \times d}_{\succeq 0}$, such that $\norm{\mu}_2 \leq \beta$ and $\norm{\Sigma}_{\mathrm{op}} \leq \lambda$ for constants $\beta, \lambda$. Assume that $\cD$ satisfies 
    \cref{asmp:massMain,asmp:conservationMain}, and that $\Sstar$ is a union of at most $k$ intervals for some known integer $k$. Further assume that $\norm{\wStar}_2 \leq R$ for constant $R \geq 0$. Let $\eps, \delta \in (0, \nfrac{1}{2})$. There exists an algorithm that, given $n$ i.i.d.\ samples from the truncated regression model for
    \[
        n = \poly \inparen{
            d,
            R,
            k,
            \frac{1}{\eps},
            \log{\frac{1}{\delta}}
        }^{
            \poly \inparen{
                \lambda,
                \beta,
                \sfrac{1}{\rho},
                \sfrac{1}{\alpha}
            }
        }
    \]
    runs in time $\poly(n)$ and outputs $\bar{w} \in \R^d$ satisfying $\norm{\bar{w} - \wStar}_2 \leq \eps$ with probability at least $1 - \delta$.
\end{corollary}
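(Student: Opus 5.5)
The plan is to derive \cref{cor:Gaussian} directly from \cref{thm:main} by checking its hypotheses for $\cD = \cN(\mu,\Sigma)$ and then substituting the resulting parameter values. \cref{asmp:massMain} and \cref{asmp:conservationMain} are assumed verbatim, and $\norm{\Exp_{\cD} x}_2 = \norm{\mu}_2 \leq \beta$ gives the mean bound with the same $\beta$. The bound $\norm{\wStar}_2 \leq R$ also carries over. The only hypothesis left to check is the sub-Gaussian half of \cref{asmp:subGaussianMain}, and for that we need an explicit $\sigma$ depending only on $\lambda$.

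For sub-Gaussianity, fix a unit vector $v$. The random variable $\inangle{x,v} - \inangle{\mu,v}$ is distributed as $\cN(0, v^\top \Sigma v)$, and its variance is at most $\lambda$. For $Z \sim \cN(0,\tau^2)$ one has $\Exp e^{Z^2/t^2} = (1 - 2\tau^2/t^2)^{-1/2}$ whenever $t^2 > 2\tau^2$. This equals $2$ exactly at $t^2 = \tfrac{8}{3}\tau^2$, so $\norm{Z}_{\psi_2} = \sqrt{8/3}\,\tau$. Hence $\cD$ is $\sigma$-sub-Gaussian with $\sigma \coloneqq \max\{1, \sqrt{8\lambda/3}\}$; the maximum with $1$ is there because \cref{asmp:subGaussianMain} requires $\sigma \geq 1$. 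Similarly, if $R<1$ we replace $R$ by $\max\{R,1\}$, which only weakens the hypothesis.

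Applying \cref{thm:main} with these parameters yields an algorithm with sample complexity $\poly(d,R,k,1/\eps,\log(1/\delta))^{\poly(\sigma,\beta,1/\rho,1/\alpha)}$, running time $\poly(n)$, and success probability $1-\delta$. Since $\sigma = O(1+\sqrt{\lambda})$, any polynomial in $\sigma$ is bounded by a polynomial in $\lambda$ (taking $\lambda$ bounded below by a constant, or absorbing the constant otherwise). The exponent therefore becomes $\poly(\lambda,\beta,1/\rho,1/\alpha)$, which is the claimed bound.

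There is no substantive obstacle here. The only care needed is the $\psi_2$ computation and the degenerate cases: $\Sigma$ may be singular, which is harmless since a point mass has $\psi_2$-norm $0$, and the constants may fall below $1$, which the max/min adjustments handle. The Gaussian noise and the algorithm itself are unchanged from the general setting.
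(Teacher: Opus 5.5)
Your proposal is correct and follows the paper's proof, which likewise invokes \cref{thm:main} after observing that $\cN(\mu,\Sigma)$ satisfies \cref{asmp:subGaussianMain} with a parameter controlled by $\lambda$. Your explicit computation $\norm{\inangle{x-\mu,v}}_{\psi_2}\le\sqrt{8\lambda/3}$ and your replacement of $\sigma$ and $R$ by $\max\{\cdot,1\}$ are a bit more careful than the paper's one-line claim $\sigma\lesssim\lambda$; the correct scaling is $\sqrt{\lambda}$, and either way it is absorbed into the polynomial exponent.
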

Ignoring the constants defined in the assumptions, and focusing on the dimension $d$ and the inverse accuracy $\sfrac{1}{\eps}$, the above corollary gives a $\poly(d, \nfrac{1}{\eps})$ sample complexity for truncated linear regression with Gaussian marginals and unknown truncation.
In this Gaussian case, there is an algorithm for truncated linear regression with unknown survival set, by \citet{lee2024efficient}, and we improve their running time from $d^{\poly(k/\eps)}$ to $\poly(\nfrac{dk}{\eps})$.
We note that \citet{lee2024efficient} algorithm extends beyond the truncated regression problem we study; see \cref{rem:generalityLee24} for details.

\begin{proof}[Proof of \cref{cor:Gaussian}]
    The result is immediate from \cref{thm:main}, upon observing that $\cD = \cN(\mu, \Sigma)$ is a $\sigma$-sub-Gaussian distribution for $\sigma \lesssim \lambda$, so $\cD$ satisfies \cref{asmp:subGaussianMain} as well. 
\end{proof}
\begin{remark}[Generality of \cite{lee2024efficient}'s algorithm]
    \label{rem:generalityLee24}
    We note that \cite{lee2024efficient}'s algorithm extends to a different variant of the truncated linear regression problem, which we do not study in this work:
    Namely, where samples $(x, y) \in \R^d \times \R$ drawn from the (untruncated) regression model and only shown when $(x,y)$ falls into a high-dimensional survival set $\Sstar\subseteq \R^d \times \R$.
    This variant is somewhat incompatible with the one studied in this work because of two reasons: 
    \begin{enumerate}
        \item On the one hand, they consider truncation jointly in $x$ and $y$, which is more general than the model we study, where truncation happens on $y$ independent of $x$;
        \item On the other hand, they require the distribution of $x$ to be Gaussian, where as we do not require any such parameteric restriction.
    \end{enumerate}
    That said, since we allow $\cD$ to be non-Gaussian, we can capture an important special case of their model where the (high-dimensional) survival set is of the form $S_X^\star\times S_Y^\star$ for $S_X^\star\subseteq \R^d$ and $S_Y^\star\subseteq \R$.
    Indeed, in this case, we can simply set $\Sstar$ (in our model) to be $S_Y^\star$ and let $\cD$ (in our model) be the truncation of $\cN(\mu,\Sigma)$ to the set $S_X^\star$.
    Importantly, the complexity of \citet{lee2024efficient}'s algorithm is not improved in this special case, as they still require $d^{\poly(k/\eps)}$ samples and runtime to learn the survival set $S_X^\star\times S_Y^\star$. Hence, in the regime where truncation happens on $y$ independent of $x$, our algorithm is the first that provably learns $\wStar$ in $\poly(\nfrac{dk}{\eps})$ time, and further does not require parametric assumptions on the feature distribution.
\end{remark}

\subsection{Known Survival Set}
    \label{sec:consequence:knownSet}

If the survival set $\Sstar$ is actually known in advance, then we can skip the first phase of our learning algorithm (\ie{}, approximately learning $\Sstar$) and directly implement the second phase (\ie{}, PSGD on the log-likelihood function) to estimate $\wStar$. 
Since set estimation is the more computationally demanding part of our algorithm, in this case, we obtain significantly improved sample complexity and running time: %

\begin{corollary} \label{cor:knownSurvivalSet}
    Suppose that \cref{asmp:massMain,asmp:subGaussianMain,asmp:conservationMain} hold, and also that the truncation set $\Sstar$ is known.\footnote{Specifically, if $\Sstar$ is a union of $k$ intervals, we require knowledge of all $2k$ endpoints defining it.} Further, assume that $\norm{\Exp_{\cD} x}_2 \leq \beta$ and $\norm{\wStar}_2 \leq R$ for some constants $\beta, R \geq 1$. Let $\eps, \delta \in (0, \nfrac{1}{2})$. There exists an algorithm that, given $n$ samples from the truncated regression model for
    \[
        n = e^{
            \poly \inparen{
                \sigma,
                \beta,
                \sfrac{1}{\rho},
                \sfrac{1}{\alpha}
            }
        }
        \cdot \wt{O} \inparen{
            R^2
            \cdot \frac{d}{\eps^2}
            \cdot \log{\frac{1}{\delta}}
        }
    \]
    runs in time $\poly(n)$ and outputs a $\bar{w} \in \R^d$ satisfying $\norm{\bar{w} - \wStar}_2 \leq \eps$ with probability at least $1 - \delta$.
\end{corollary}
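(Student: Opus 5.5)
The proof reuses Phase II of \cref{alg:wStar} with the exact set $S=\Sstar$, so that no set learning is needed. We first compute the OLS warm start $\hw$ from $n_0=\wt{O}\inparen{\frac{(\sigma+\beta)^8}{\rho^8\alpha^2}R^2 d\log\frac1\delta}$ samples. By \cref{lem:wHatWStarCloseness}, with probability $1-\delta/3$ we have $\norm{\hw-\wStar}_2\le O(\nfrac{(\sigma+\beta)}{\rho^2\alpha})$. We then set $\cP=B(\hw,O(\nfrac{(\sigma+\beta)}{\rho^2\alpha}))$, which guarantees $\wStar\in\cP$ (requirement R1). With $S=\Sstar$, the perturbed objective $\negLL_S$ coincides with the true pseudo-NLL $\negLL$. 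Its gradient at $\wStar$ is then exactly zero, since $\nabla\negLL(w)=\Ex_{x}[(\Ex_{y\sim\cN(\inangle{w,x},1,\Sstar)}y-\Ex_{y\sim\cN(\inangle{\wStar,x},1,\Sstar)}y)x]$ vanishes at $w=\wStar$. So $\wStar$ is the constrained minimizer (R2), with no closeness loss, and \cref{lem:gradientNormBound,lem:closeness} are not needed.

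For strong convexity (R3), we invoke \cref{cor:strongConvexity} at set-learning error $0$. It gives $\nabla^2\negLL(w)\succeq \lambda I$ on $\cP$ with $\lambda\ge e^{-\poly(\sigma,\beta,1/\rho,1/\alpha)}$. This holds because the Hessian is $\Ex_x[\var(y\mid x,w,\Sstar)\,xx^\top]$, and truncated Gaussian variances are bounded below in terms of the survival mass. That mass, for $w\in\cP$, is controlled by the same sub-Gaussian tail argument used in \cref{lem:smoothness}, which yields the exponential-in-constants factor.

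Next comes the gradient sampler. With $\Sstar$ known, we can draw an exact unbiased gradient: sample $(x,y)$ from the model, then sample $z\sim\cN(\inangle{w,x},1,\Sstar)$ by rejection, and return $(z-y)x$. The bias term of \cref{lem:gradientSamplerBiasBound} is therefore $0$. The expected rejection cost is $1/\cN(\Sstar;\inangle{w,x},1)$. Taking \cref{cor:gradientSamplerRuntime} at $\eps=0$, its expectation under $\Dobs$ is $e^{\poly(\cdot)}\cdot\poly(d)$, which gives polynomial runtime (truncating rare expensive draws costs only a negligible bias). \cref{lem:gradientSecondMomentBound} bounds the second moment by $G^2=e^{\poly(\cdot)}\wt O(R^2 d)$, using sub-Gaussianity of $x$ and $\norm{w}\le R+O(1)$.

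Finally, we apply the standard PSGD guarantee for $\lambda$-strongly convex objectives with step size $1/(\lambda t)$. After $T=O(G^2/(\lambda^2\eps^2))=e^{\poly(\cdot)}\wt O(R^2 d/\eps^2)$ iterations this gives $\Ex\norm{\bar w-\wStar}^2\le\eps^2/9$, hence constant success probability by Markov. We boost to $1-\delta$ with $K=O(\log\nfrac1\delta)$ independent runs and the median-like aggregation of \cref{thm:reduction}, for total sample count $n=e^{\poly}\wt O(R^2 d\eps^{-2}\log\nfrac1\delta)$, as claimed. The main obstacle is the exponential-in-constants lower bound on $\lambda$ together with the expected rejection-sampling cost; both reduce to lower-bounding $\cN(\Sstar;\inangle{w,x},1)$ for $w\in\cP$ given only \cref{asmp:massMain} and sub-Gaussian $x$. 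We would handle this as in \cref{lem:smoothness}, via a Gaussian shift/Hölder comparison between $\cN(\Sstar;\inangle{w,x},1)$ and $\cN(\Sstar;\inangle{\wStar,x},1)$, with the shift $\inangle{w-\wStar,x}$ controlled by sub-Gaussianity.
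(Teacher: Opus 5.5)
Your skeleton matches the paper's route: skip Phase~I of \cref{alg:wStar} and run Phase~II, i.e.\ \cref{thm:reduction}, whose sample count does not depend on the set error. The pieces you use are the same (OLS warm start, the ball $\cP$, $\nabla\negLL_{\Sstar}(\wStar)=0$, strong convexity from \cref{lem:strongConvexity} with zero set error, PSGD with $\eta_t=1/(\lambda t)$, median-style boosting).

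\textbf{The gap is your gradient sampler.} Rejection sampling $z\sim\cN(\inangle{w,x},1,\Sstar)$ costs about $1/p(x,w;\Sstar)$ trials, and \cref{cor:gradientSamplerRuntime} does not bound this. That corollary concerns the sampler of \cref{lem:truncatedGaussianSampler}, whose time is polylogarithmic in $1/p$. The estimate behind it (\cref{lem:survivalProbabilityLowerBound}) only gives $p(x,w;S)\ge\exp(-O(\cdot)\,(d+\log\frac1\eta))$, so rejection could need $e^{\Omega(d)}$ trials even on the good event.

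In fact $\Ex_{x\sim\Dobs}[1/p(x,w;\Sstar)]$ can be infinite for $w\in\cP$. Take $d=1$, $\cD=\cN(0,1)$, $\wStar=0$, $\Sstar=[0,\infty)$. Then $\alpha=\nfrac12$, $\rho=\Lambda=1$, and $\cP$ is a ball of radius $12$ around $\hw\approx 0$. For $w=2\in\cP$ the expected cost is $\int\phi(x)/\Phi(2x)\,dx=\infty$, since the integrand grows like $|x|e^{3x^2/2}$ as $x\to-\infty$. The paper itself notes that rejection sampling cannot be bounded under the average-only \cref{asmp:massMain}.

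Your closing reduction does not cover this. An average lower bound on $p$ suffices for the Hessian, but the rejection cost is governed by $\Ex[1/p]$. The comparison $p(x,w;\Sstar)\gtrsim p_{\Sstar}(x)^2e^{-\inangle{w-\wStar,x}^2}$ leaves a factor $e^{\inangle{w-\wStar,x}^2}$, which need not be integrable under sub-Gaussian $x$ when $\norm{w-\wStar}_2$ is comparable to the radius of $\cP$.

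Capping the loop, as your parenthetical suggests, does not recover the claimed time. In the example with $\norm{w-\wStar}_2=D$, $\Pr_x(1/p>u)$ decays only like $u^{-1/D^2}$. On the failing $x$'s, $\Ex[z\mid x]\,x$ has size about $1/D$, so the truncation bias is of the order of the failure probability. Keeping it below the tolerance of \cref{lem:psgdConvergence}, which is polynomially small in $\eps$, forces a cap of $\eps^{-\Omega(D^2)}$. Since $D$ can be as large as $\Theta((\sigma+\beta)/(\rho^2\alpha))$, this gives per-step time $\poly(d/\eps)^{\poly(\sigma,\beta,1/\rho,1/\alpha)}$, not $e^{\poly(\sigma,\beta,1/\rho,1/\alpha)}\cdot\poly(d,R,1/\eps,\log\frac1\delta)$.

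\textbf{The fix is the paper's \cref{alg:gradientSampler}.} Draw $z$ with the sampler of \cref{lem:truncatedGaussianSampler}, which runs in $\poly(\log(1/p),\log(1/\zeta),k)$ time and stays polynomial even when $p=e^{-O(d)}$. This is why the corollary's footnote requires the $2k$ endpoints of $\Sstar$ rather than a membership oracle. Its TV error $\zeta$ introduces a bias, controlled as in \cref{lem:gradientSamplerBiasBound} once the samples are bounded. So run it with $S=\Sstar\cap[-L,L]$, which meets \cref{thm:setLearningGuarantee} with arbitrarily small error at no sample cost. Then \cref{thm:reduction}, with $\Lambda,M=O(\sigma+\beta)$, $\norm{\Ex_{\cD}x}_2\le\beta$ and $\norm{\wStar}_2\le R$, gives the stated $n$.
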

Importantly, when the truncation set is known, the sample complexity scales as $\wt{O}(\nfrac{d}{\eps^2})$, which matches the best possible sample complexity even without truncation and qualitatively recovers the result of \cite{daskalakis2019computationally}.
\section{Additional Preliminaries} \label{sec:deferredProofs}
\subsection{Useful Lemmas from Previous Works}
We begin by citing a few helpful lemmas from previous works. The following lemma upper bounds the distance between the mean of an untruncated Gaussian distribution and the mean of the same distribution after it is truncated to some set $S$.
\begin{lemma}[Lemma 6 of \cite{daskalakis2018efficient}]\label{lem:gaussianMeanDistanceBound}
    Let $\mu_S$ be the mean of a one-dimensional truncated Gaussian $\cN(\mu, 1, S)$, and let $a \coloneqq \cN(S; \mu, 1)$ be the mass that the untruncated Gaussian $\cN(\mu, 1)$ places on $S$. Then $\abs{
            \mu - \mu_S
        } 
        \leq \sqrt{
            2 \log{\nfrac{1}{a}}
        } + 1.$
\end{lemma}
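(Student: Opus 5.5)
We may assume $\mu = 0$, by translating both the Gaussian and the set $S$. We may also assume $a > 0$, since otherwise the right-hand side is infinite. By the symmetry $x \mapsto -x$, it suffices to prove the one-sided bound $\mu_S \le \sqrt{2\log(1/a)} + 1$.

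The plan is to reduce to the extremal case of a tail set and then use standard Gaussian tail and Mills-ratio estimates. Write $\phi$ for the standard normal density and $\bar\Phi(t) = \int_t^\infty \phi$. Then
\[
\mu_S = \frac{1}{a}\int_S x\,\phi(x)\odif{x}.
\]

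\textbf{Step 1 (rearrangement).} Choose $t$ with $\bar\Phi(t) = a$ and set $T \coloneqq [t,\infty)$. I claim $\int_S x\phi \le \int_T x\phi$. Since $S$ and $T$ both have Gaussian mass $a$, the sets $S\setminus T$ and $T \setminus S$ have equal Gaussian mass $m$. On $S \setminus T$ every point satisfies $x < t$, and on $T \setminus S$ every point satisfies $x \ge t$. Hence
\[
\int_S x\phi - \int_T x\phi \;=\; \int_{S\setminus T} x\phi - \int_{T\setminus S} x\phi \;\le\; t m - t m \;=\; 0.
\]
This gives $\mu_S \le \mu_T = \phi(t)/\bar\Phi(t)$, the inverse Mills ratio at $t$.

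\textbf{Step 2 (Mills ratio).} I will use the classical bound
\[
\frac{\phi(t)}{\bar\Phi(t)} \le \frac{t + \sqrt{t^2+4}}{2}.
\]
This follows from the Birnbaum/Sampford-type inequality $\bar\Phi(t) \ge \frac{2\phi(t)}{t+\sqrt{t^2+4}}$, which holds for all real $t$. The right-hand side is at most $\max(t,0) + 1$. In the case $t \le 0$ (that is, $a \ge 1/2$) there is a simpler route: $\phi(t)/\bar\Phi(t) \le \phi(0)/(1/2) < 1$.

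\textbf{Step 3 (locating $t$).} For $t \ge 0$, the Chernoff bound gives $a = \bar\Phi(t) \le e^{-t^2/2}$, so $t \le \sqrt{2\log(1/a)}$. Combining Steps 1–3 yields $\mu_S \le \sqrt{2\log(1/a)} + 1$. The lower-tail case is symmetric, which completes the proof of $\abs{\mu - \mu_S} \le \sqrt{2\log(1/a)} + 1$.

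The only delicate point is Step 2, namely getting an additive constant $1$ in the Mills-ratio bound rather than a $1/t$ term that blows up near $t = 0$. If I prefer not to cite the Birnbaum-type inequality, I would split into cases. For $t \ge 1$, use $\bar\Phi(t) \ge \frac{t}{t^2+1}\phi(t)$, which gives $\phi/\bar\Phi \le t + 1/t \le t+1$. For $t \in [0,1]$, use monotonicity of the Mills ratio to bound it by its value at $t=1$, which is about $1.53 \le 2$; this is still at most $\sqrt{2\log(1/a)} + 1$ once $t$ is treated carefully. If that is too tight, I would fall back on the sharper inequality above.
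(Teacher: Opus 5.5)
Your proof is correct. The paper does not prove this statement itself: it imports it verbatim as Lemma~6 of \cite{daskalakis2018efficient}. So there is no in-paper argument to compare against, and your write-up supplies a complete, self-contained one.

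Your argument has three steps:
\begin{itemize}
\item a bathtub-type exchange argument showing that, among all sets of Gaussian mass $a$, the half-line $[t,\infty)$ with $\bar{\Phi}(t)=a$ maximizes $\int_S x\phi(x)\,dx$, so $\mu_S\le\phi(t)/\bar{\Phi}(t)$;
\item the inverse Mills ratio bound $\phi(t)/\bar{\Phi}(t)\le\max(t,0)+1$;
\item the Chernoff bound $t\le\sqrt{2\log(1/a)}$ for $t\ge 0$.
\end{itemize}
This is the natural route. Your Step~1 also makes rigorous the ``worst-case set is a tail'' reasoning that the paper uses only informally elsewhere, e.g.\ in the proof of \cref{lem:truncatedNormalSubgaussianity}.

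Three small touch-ups:
\begin{itemize}
\item If $a=1$, there is no finite $t$ with $\bar{\Phi}(t)=a$. But then $S$ has full measure and $\mu_S=\mu$, so this case is trivial and should just be stated.
\item Your fallback for $t\in[0,1]$ works once you say explicitly that $t\ge 0$ forces $a\le 1/2$. The right-hand side is then at least $1+\sqrt{2\log 2}>2$, which exceeds $\phi(1)/\bar{\Phi}(1)\approx 1.53$.
\item You can avoid both the Birnbaum/Sampford citation and the case split. Set $g(t)\coloneqq\bar{\Phi}(t)-\phi(t)/(t+1)$. Then $g'(t)=-t\phi(t)/(t+1)^2\le 0$ on $[0,\infty)$ and $g(t)\to 0$ as $t\to\infty$, so $g\ge 0$ there. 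This gives $\phi(t)/\bar{\Phi}(t)\le t+1$ for all $t\ge 0$ directly.
\end{itemize}
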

Next, we will make use of the following lemma, which lower bounds the variance of a one-dimensional truncated Gaussian.
\begin{lemma}[Lemma 9 of \cite{daskalakis2019computationally}]\label{lem:truncatedVarianceLB}
    Consider a one-dimensional truncated Gaussian $\cN(\mu, 1, S)$, and let $a \coloneqq \cN(S; \mu, 1)$ be the mass that the untruncated Gaussian $\cN(\mu, 1)$ places on $S$. Then 
    $
        \var_{z \sim \cN(\mu, 1, S)} [z]
        \geq \sfrac{a^2}{12}
    $.
\end{lemma}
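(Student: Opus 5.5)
The plan is to avoid the geometry of $S$ entirely and use only that the truncated density is pointwise bounded. Shifting by $\mu$ changes neither the variance nor the mass $a$, so we may take $\mu=0$. The density of $\cN(0,1,S)$ is
\[
p(z)=\frac{\cN(z;0,1)\,\ind\{z\in S\}}{a}\le \frac{1}{\sqrt{2\pi}\,a}\eqqcolon M
\]
for every $z$. So it suffices to show that any probability density on $\R$ bounded by $M$ has variance at least $1/(12M^2)$. With our $M$ this gives $\var\ge 2\pi a^2/12\ge a^2/12$.

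For the general claim, let $c$ be the mean of $p$, so that $\var=\int (z-c)^2p(z)\,dz$. This is a bathtub-principle argument: among functions $0\le p\le M$ with $\int p=1$, the integral of the increasing function $|z-c|$-weight $(z-c)^2$ is minimized by putting all the mass as close to $c$ as possible. The minimizer is $p^\ast=M\,\ind\{|z-c|\le 1/(2M)\}$.

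To verify this cleanly, set $r=1/(2M)$ and $g=p-p^\ast$. Then $\int g=0$, $g\le 0$ on $[c-r,c+r]$, and $g\ge0$ outside it. Hence
\[
\int (z-c)^2 g \;\ge\; r^2\int_{|z-c|>r} g + r^2\int_{|z-c|\le r} g = 0.
\]
Therefore $\var\ge\int (z-c)^2p^\ast = M\int_{-r}^{r}t^2\,dt = \frac{2Mr^3}{3}=\frac{1}{12M^2}$.

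There is no real obstacle here. The only care needed is the sign bookkeeping in the comparison $g\le 0$ inside versus $g\ge 0$ outside. If $a=0$ the statement is vacuous, and for $a>0$ every step above is finite.
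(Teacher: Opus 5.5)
Your proof is correct. The paper does not prove this statement itself. It imports it verbatim as Lemma 9 of \cite{daskalakis2019computationally} and uses it only as a black box in the proof of \cref{lem:strongConvexity}, so there is no in-paper argument to compare against, and your proposal supplies a complete, self-contained one.

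The steps all hold:
\begin{itemize}
\item The reduction to $\mu=0$ is valid.
\item The truncated density is pointwise at most $M=1/(\sqrt{2\pi}\,a)$.
\item The bathtub comparison with the height-$M$ uniform density $p^\ast$ centered at the mean $c$ works. The sign pattern of $g=p-p^\ast$ (nonpositive on $[c-r,c+r]$, nonnegative outside) gives $\int (z-c)^2 g \ge r^2\int g = 0$.
\item Splitting the integral is legitimate because $p\le \cN(z;0,1)/a$ guarantees a finite second moment.
\end{itemize}

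Your argument also buys two things beyond the cited statement. First, it gives the sharper bound $\var\ge \pi a^2/6$. Second, it makes clear that nothing about $S$ matters beyond the pointwise density bound. The constant $1/12$ in the stated lemma is exactly what your argument yields with the cruder bound $\cN(z;0,1)\le 1$, that is, comparing with the uniform distribution on an interval of length $a$. That is presumably where the constant in the cited source comes from.
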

Finally, the following lemma is useful for bounding the change of the survival probability (see \cref{def:survivalProbability}) under shifts on the parameter vector $w$ or the feature vector $x$.
\begin{lemma}[Lemma 2 of \cite{daskalakis2019computationally}]\label{lem:survivalProbabilityBounds}
    For any $x, x', w, w' \in \R^d$ and any $S \subseteq R$, it holds that
    \[
        p(x, w, S)
        \geq p(x, w', S)^2 \cdot e^{
            - \inangle{w - w', x}^2 - 2
        }
        \qquadand
        p(x, w, S)
        \geq p(x', w, S)^2 \cdot e^{
            - \inangle{w, x - x'}^2 - 2
        }\,.
    \]
\end{lemma}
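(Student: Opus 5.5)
The plan is to reduce both inequalities to a single one-dimensional statement about Gaussian masses. Here $p(x,w,S)=\cN(S;\inangle{w,x},1)$ is the survival probability, which depends on $(x,w)$ only through the mean $\mu=\inangle{w,x}$. The first inequality compares means $\mu=\inangle{w,x}$ and $\mu'=\inangle{w',x}$, with $\mu-\mu'=\inangle{w-w',x}$. The second compares $\mu=\inangle{w,x}$ and $\mu'=\inangle{w,x'}$, with $\mu-\mu'=\inangle{w,x-x'}$. So it suffices to prove that for all $\mu,\mu'\in\R$ and measurable $S$,
\[
\cN(S;\mu,1)\;\geq\;\cN(S;\mu',1)^2\, e^{-(\mu-\mu')^2-2}.
\]
We may assume $a\coloneqq\cN(S;\mu',1)>0$, since otherwise there is nothing to prove.

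First I will change measure. Write $\delta=\mu-\mu'$ and $z=\mu'+g$ with $g\sim\cN(0,1)$. The density ratio is $\cN(z;\mu,1)/\cN(z;\mu',1)=\exp(\delta g-\delta^2/2)$, so
\[
\cN(S;\mu,1)=\Ex_g\insquare{\ind\{\mu'+g\in S\}\,e^{\delta g-\delta^2/2}}.
\]
Next I restrict to the event $|g|\le t$ with $t\coloneqq\sqrt{2\log(4/a)}$. The Gaussian tail bound $\Pr(|g|>t)\le 2e^{-t^2/2}=a/2$ gives $\Pr(\mu'+g\in S,\ |g|\le t)\ge a/2$. On this event the integrand is at least $\exp(-|\delta|t-\delta^2/2)$, hence
\[
\cN(S;\mu,1)\;\ge\;\tfrac{a}{2}\,e^{-|\delta| t-\delta^2/2}.
\]

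Finally I apply AM--GM in the form $|\delta|t\le \delta^2/2+t^2/2=\delta^2/2+\log(4/a)$. This gives $e^{-|\delta|t-\delta^2/2}\ge \tfrac{a}{4}e^{-\delta^2}$, and therefore
\[
\cN(S;\mu,1)\ge \tfrac{a^2}{8}e^{-\delta^2}\ge a^2e^{-\delta^2-2},
\]
since $1/8<e^{-2}$. The only delicate step is choosing $t$ so that the quadratic penalty is exactly $\delta^2$ and the constant fits inside $e^{-2}$. With the tail bound $2e^{-t^2/2}$ the numbers work out: $t$ is chosen so that $e^{-t^2/2}=a/4$, and the resulting constant $1/8$ is below $e^{-2}\approx 0.135$. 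If a sharper constant were needed, one could instead use Jensen's inequality on the conditional law of $g$ given $\mu'+g\in S$, bounding $\Ex[g\mid S]$ via \cref{lem:gaussianMeanDistanceBound}. The truncation argument above already suffices.
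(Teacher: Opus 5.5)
Your reduction to one dimension, the change of measure $\cN(S;\mu,1)=\Ex_g[\ind\{\mu'+g\in S\}e^{\delta g-\delta^2/2}]$, the truncation to $|g|\le t$, and the AM--GM step are all correct. Together they give $\cN(S;\mu,1)\ge \tfrac{a^2}{8}e^{-\delta^2}$. The final comparison, however, goes the wrong way. Since $\tfrac18<e^{-2}\approx 0.135$, you have $\tfrac{a^2}{8}e^{-\delta^2}<a^2e^{-\delta^2-2}$. So the bound you proved is weaker than the lemma by a factor $8e^{-2}\approx 1.08$ and does not imply it.

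Retuning $t$ within your scheme does not help. With the two-sided bound $\Pr(|g|>t)\le 2e^{-t^2/2}$, set $2e^{-t^2/2}=\theta a$ and use your AM--GM weighting, which is the best available since the coefficient of $\delta^2$ may not exceed $1$. This yields $\tfrac{\theta(1-\theta)}{2}\,a^2e^{-\delta^2}\le \tfrac{a^2}{8}e^{-\delta^2}$ for every $\theta\in(0,1)$. Your suggested fallback is lossier, not sharper. Jensen combined with $|\Ex[g\mid \mu'+g\in S]|\le\sqrt{2\log(1/a)}+1$ leaves an extra $-\sqrt{2\log(1/a)}$ in the exponent at the worst-case $\delta$. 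So it cannot produce a constant like $e^{-2}$ for small $a$.

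The repair is a small change to the tail estimate, with three options:
\begin{itemize}
\item Use $\Pr(g>t)\le\tfrac12e^{-t^2/2}$ for $t\ge 0$, so that $\Pr(|g|>t)\le e^{-t^2/2}$, and take $t=\sqrt{2\log(2/a)}$. The same argument then gives $\cN(S;\mu,1)\ge\tfrac{a}{2}\cdot\tfrac{a}{2}e^{-\delta^2}=\tfrac{a^2}{4}e^{-\delta^2}\ge a^2e^{-\delta^2-2}$.
\item Assume without loss of generality that $\delta\ge 0$ and truncate only to $\{g\ge -t\}$. This pays for one tail instead of two.
\item Avoid truncation altogether via Cauchy--Schwarz. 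Writing $\phi_\nu$ for the density of $\cN(\nu,1)$, one has $a^2=\bigl(\int_S\phi_{\mu'}\bigr)^2\le\int_S\phi_\mu\cdot\int_{\R}\phi_{\mu'}^2/\phi_\mu=\cN(S;\mu,1)\,e^{\delta^2}$. This is the lemma even without the factor $e^{-2}$.
\end{itemize}
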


\subsection{Useful Facts}
Here, we establish some simple facts and relations that are used repeatedly in proofs.

\begin{fact} \label{fact:expectationBounds}
    Suppose \cref{asmp:mass} holds and $f: \R^d \to \R_{\geq 0}$ takes non-negative values. Then
    \[
        \Exp_{x \sim \cD} \insquare{
            p_{\Sstar}(x) \cdot f(x)
        }
        \leq \Exp_{x \sim \Dobs} \insquare{
            f(x)
        }
        \leq \frac{1}{\alpha} \Exp_{x \sim \cD} \insquare{
            p_{\Sstar}(x) \cdot f(x)
        }
        \leq \frac{1}{\alpha} \Exp_{x \sim \cD} \insquare{
            f(x)
        }\,.
    \]
\end{fact}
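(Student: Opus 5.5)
The statement is the last Fact: for non-negative $f$,
\[
\Exp_{\cD}[p_{\Sstar}(x) f(x)] \le \Exp_{\Dobs}[f(x)] \le \tfrac{1}{\alpha}\Exp_{\cD}[p_{\Sstar}(x) f(x)] \le \tfrac{1}{\alpha}\Exp_{\cD}[f(x)].
\]
Here $p_{\Sstar}(x) \coloneqq \cN(\Sstar;\inangle{\wStar,x},1) = \Pr[y\in\Sstar\mid x] \in [0,1]$ is the survival probability. Write $Z \coloneqq \Exp_{x\sim\cD}[p_{\Sstar}(x)]$. By \cref{asmp:massMain}, $Z \ge \alpha$, and trivially $Z \le 1$.

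The plan is to rewrite the middle term using the density formula \cref{eq:DobsMain}, which says $\Dobs(x) = p_{\Sstar}(x)\cD(x)/Z$. Integrating $f$ against it (more precisely, using that $\Dobs$ has Radon–Nikodym derivative $p_{\Sstar}/Z$ with respect to $\cD$, so no density for $\cD$ is needed) gives the identity
\[
\Exp_{\Dobs}[f] = \frac{1}{Z}\,\Exp_{\cD}[p_{\Sstar} f].
\]

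All three inequalities then follow because $f \ge 0$, so the quantity $\Exp_{\cD}[p_{\Sstar} f]$ is non-negative.
\begin{itemize}
\item \textbf{First inequality.} Since $Z \le 1$, we have $1/Z \ge 1$, which gives the lower bound $\Exp_{\cD}[p_{\Sstar} f] \le \Exp_{\Dobs}[f]$.
\item \textbf{Second inequality.} Since $Z \ge \alpha$, we have $1/Z \le 1/\alpha$, which gives $\Exp_{\Dobs}[f] \le \frac{1}{\alpha}\Exp_{\cD}[p_{\Sstar} f]$.
\item \textbf{Third inequality.} Since $p_{\Sstar} \le 1$ and $f \ge 0$, we have $p_{\Sstar} f \le f$ pointwise, so $\Exp_{\cD}[p_{\Sstar} f] \le \Exp_{\cD}[f]$.
\end{itemize}

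The only point needing care is infinite expectations. These are handled by working in $[0,\infty]$, where monotonicity of the integral and multiplication by positive constants remain valid. There is no real obstacle here.
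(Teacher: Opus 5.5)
Your proof is correct and follows the same route as the paper: write $\Exp_{\Dobs}[f] = \Exp_{\cD}[p_{\Sstar} f]/\Exp_{\cD}[p_{\Sstar}]$ from the definition of $\Dobs$, then use $\alpha \le \Exp_{\cD}[p_{\Sstar}] \le 1$ from \cref{asmp:mass} together with $p_{\Sstar} \le 1$. Your spelled-out handling of each inequality (and of possibly infinite expectations) is just a more detailed version of the paper's one-line argument.
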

\begin{proof}
    All inequalities follow immediately from the definition of $\Dobs$ in \cref{eq:Dobs}, \cref{asmp:mass}, and the fact that $p_{\Sstar}(x) \leq 1$.
\end{proof}

\begin{fact} \label{fact:expectationShift}
    Suppose \cref{asmp:mass} holds and \mbox{$f: \R^d \to \R$. Then $\abs{
            \Exp_{x \sim \cD} \insquare{
                f(x)
            }
            - \Exp_{x \sim \Dobs} \insquare{
                f(x)
            }
        }
        \leq \frac{1}{\alpha} \Exp_{x \sim \cD} \abs{f(x)}.$}
\end{fact}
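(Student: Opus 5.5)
The claim follows by writing both expectations as integrals against $\cD$ and using the density formula for $\Dobs$ in \cref{eq:Dobs}, namely $\Dobs(x) = p_{\Sstar}(x)\cD(x)/Z$ with $Z \coloneqq \Exp_{x\sim\cD}[p_{\Sstar}(x)] \in [\alpha,1]$ by \cref{asmp:mass}. I am implicitly assuming $\Exp_{x\sim\cD}\abs{f(x)}<\infty$; otherwise the right-hand side is infinite and there is nothing to prove. Under this assumption both expectations are finite, since by \cref{fact:expectationBounds} applied to $\abs{f}$ we have $\Exp_{\Dobs}\abs{f} \le \frac1\alpha \Exp_{\cD}\abs{f}$.

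Next I compute the difference directly:
\[
    \Exp_{\cD}[f] - \Exp_{\Dobs}[f] = \Exp_{x\sim\cD}\insquare{f(x)\inparen{1 - \frac{p_{\Sstar}(x)}{Z}}}.
\]
Taking absolute values and moving them inside the expectation, it remains to bound the weight pointwise, i.e., to show $\abs{1 - p_{\Sstar}(x)/Z} \le \frac1\alpha$ for every $x$. This holds because $p_{\Sstar}(x)/Z \in [0, 1/Z] \subseteq [0, 1/\alpha]$, so $1 - p_{\Sstar}(x)/Z \in [1 - 1/\alpha, 1]$. Since $\alpha\le 1$, both $1$ and $1/\alpha - 1$ are at most $1/\alpha$, which gives the pointwise bound.

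Multiplying the pointwise bound by $\abs{f(x)}$ and integrating against $\cD$ yields
\[
    \abs{\Exp_{\cD}[f] - \Exp_{\Dobs}[f]} \le \frac1\alpha \Exp_{\cD}\abs{f},
\]
which is the claim. There is no real obstacle here. The only points needing care are the integrability remark above and keeping the normalization $Z\ge\alpha$ explicit, rather than conflating $p_{\Sstar}$ with the normalized density ratio.
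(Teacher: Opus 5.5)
Your proof is correct and is essentially the paper's argument: both write the difference as $\Exp_{x\sim\cD}[f(x)(1 - p_{\Sstar}(x)/\Exp_{\cD} p_{\Sstar})]$ and use \cref{asmp:mass} to bound the weight pointwise in absolute value by $\max(\frac{1}{\alpha}-1,1)\le\frac{1}{\alpha}$. Your integrability remark is a small point of care that the paper leaves implicit.
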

\begin{proof}
    \begin{align*}
        \abs{
            \Exp_{x \sim \cD} \insquare{
                f(x)
            }
            - \Exp_{x \sim \Dobs} \insquare{
                f(x)
            }
        }
        &= \abs{
            \int f(x) \inparen{
                1 - \frac{
                    p_{\Sstar}(x)
                }{
                    \Exp_{\cD} p_{\Sstar}(x)
                }
            } \cD(x) \odif{x}
        } \\
        &\leq \int \abs{f(x)} \abs{
            1 - \frac{
                p_{\Sstar}(x)
            }{
                \Exp_{\cD} p_{\Sstar}(x)
            }
        } \cD(x) \odif{x} \\
        &\leq \frac{1}{\alpha} \Exp_{x \sim \cD} \abs{f(x)}
    \end{align*}
    where we used the fact that, by \cref{asmp:mass}, $1 - \frac{1}{\alpha}
        \leq 1 - \frac{p_{\Sstar}(x)}{\Exp_{\cD} p_{\Sstar}(x)}
        \leq 1$,
    and therefore
    \[
        \abs{
            1 - \frac{p_{\Sstar}(x)}{\Exp_{\cD} p_{\Sstar}(x)}
        }
        \leq \max\inparen{\frac{1}{\alpha} - 1, 1}
        \leq \frac{1}{\alpha}
    \]
\end{proof}
\begin{lemma} \label{lem:xMeanDistanceBound}
    Let $\mu \coloneqq \Exp_{\cD} x$ and $\hmu \coloneqq \Exp_{\Dobs} x$. Suppose that \cref{asmp:mass,asmp:moments} hold. Then, %
    \[
        \text{for any unit vector $v$}\,,\qquad 
        \inangle{v, \mu - \hmu}^2
        \leq \frac{\Lambda^2}{\alpha^2}\,.
    \]
\end{lemma}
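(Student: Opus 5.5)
The plan is to apply \cref{fact:expectationShift} to a centered linear test function, which gives the bound almost immediately. Fix a unit vector $v$ and set $f(x)\coloneqq \inangle{v, x-\mu}$, where $\mu=\Exp_{\cD}x$. Then $\Exp_{x\sim\cD}[f(x)]=0$ and $\Exp_{x\sim\Dobs}[f(x)]=\inangle{v,\hmu-\mu}$. So \cref{fact:expectationShift}, which needs only \cref{asmp:mass}, yields
\[
    \abs{\inangle{v,\mu-\hmu}}
    \leq \frac{1}{\alpha}\Exp_{x\sim\cD}\abs{\inangle{v,x-\mu}}\,.
\]

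Next, I will bound the first absolute moment by the second moment using Jensen's inequality (equivalently, Cauchy--Schwarz):
\[
    \Exp_{\cD}\abs{\inangle{v,x-\mu}}\leq \inparen{\Exp_{\cD}\inangle{v,x-\mu}^2}^{1/2}.
\]
I will then invoke \cref{asmp:moments}. I am reading it as the directional second-moment bound $\Exp_{\cD}\inangle{v,x-\mu}^2\leq\Lambda^2$ for every unit $v$; if it instead controls a higher moment or the sub-Gaussian norm, the same inequality follows after a further application of Jensen or a standard moment bound for $\psi_2$ norms, at the cost of absorbing a constant into $\Lambda$. Combining the two displays gives $\abs{\inangle{v,\mu-\hmu}}\leq \Lambda/\alpha$, and squaring gives the claim.

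The only step that needs care is matching the exact form of \cref{asmp:moments}, which is not visible at this point in the paper: I must check that it bounds the \emph{centered} second moment in direction $v$ by $\Lambda^2$. If it is instead stated for uncentered $x$, I would pick $f(x)=\inangle{v,x}-\inangle{v,\mu}$ anyway and use $\Exp\inangle{v,x-\mu}^2\leq \Exp\inangle{v,x}^2$, since the variance is at most the raw second moment; the argument goes through unchanged. The rest is routine.
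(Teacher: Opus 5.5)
Your proof is correct and follows the paper's argument: \cref{fact:expectationShift}, then Jensen, then \cref{asmp:moments}. The paper applies the fact to the uncentered $f(x)=\inangle{v,x}$ and uses $\Exp_{\cD}[xx^\top]\preceq\Lambda^2 I$ directly; that uncentered form is the actual assumption, so your centered choice of $f$ needs your fallback observation $\Exp_{\cD}\inangle{v,x-\mu}^2\le\Exp_{\cD}\inangle{v,x}^2$, which you correctly supply.
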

\begin{proof}
    By definition of $\Dobs$ in \cref{eq:Dobs}, we have that
    \begin{align*}
        \inangle{
            v, \mu - \hmu
        }^2
        = \inparen{
            \Exp_{\cD} \inangle{v, x}
            - \Exp_{\Dobs} \inangle{v, x}
        }^2 
        \leq \frac{1}{\alpha^2} \inparen{
            \Exp_{\cD} \abs{\inangle{v, x}}
        }^2 
        \leq \frac{1}{\alpha^2} \cdot \Exp_{\cD} \inangle{v, x}^2 
        \leq \frac{\Lambda^2}{\alpha^2}\,.
    \end{align*}
    where the second line is by \cref{fact:expectationShift}, the third is by Jensen's inequality, and the last is by \cref{asmp:moments}.
\end{proof}
\begin{lemma} \label{lem:productMeanDistanceBound}
    Let $(x,y) \in \R^d \times \R$ be a \emph{truncated} random vector such that the marginal of $x$ is $\Dobs$ and the marginal of $y$ conditioned on $x$ is $\cN(\inangle{\wStar, x}, 1, \Sstar)$. Suppose that \cref{asmp:mass,asmp:moments} hold. Then, %
    \[
        \text{for any unit vector $v$}\,,\qquad 
        \abs{
            \Exp \inangle{v, \inparen{y - \inangle{\wStar, x}} \cdot x}
        }
        \leq \frac{3 \Lambda}{2 \alpha}\,.
    \]
\end{lemma}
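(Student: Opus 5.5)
Write $\xi \coloneqq y - \inangle{\wStar, x}$. Then $\inangle{v, (y - \inangle{\wStar,x})\, x} = \xi \inangle{v,x}$, and the plan is to condition on $x$ first. Given $x$, the variable $y$ is distributed as $\cN(\inangle{\wStar,x},1,\Sstar)$. Hence $\Exp[\xi \mid x] = \mu_{\Sstar}(x) - \inangle{\wStar,x}$, the gap between the truncated and untruncated means. Writing $p_{\Sstar}(x) \coloneqq \cN(\Sstar; \inangle{\wStar,x},1)$, \cref{lem:gaussianMeanDistanceBound} gives
\[
    \abs{\Exp[\xi \mid x]} \leq \sqrt{2\log\nfrac{1}{p_{\Sstar}(x)}} + 1 .
\]
By the tower property,
\[
    \abs{\Exp \xi\inangle{v,x}} \leq \Exp_{x\sim\Dobs}\insquare{\abs{\inangle{v,x}}\inparen{\sqrt{2\log\nfrac{1}{p_{\Sstar}(x)}}+1}} .
\]

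Next I move the expectation from $\Dobs$ to $\cD$. By \cref{fact:expectationBounds}, applied to the nonnegative integrand, the expectation under $\Dobs$ is at most $\frac{1}{\alpha}\Exp_{x\sim\cD}\insquare{p_{\Sstar}(x)\,\abs{\inangle{v,x}}\inparen{\sqrt{2\log\nfrac{1}{p_{\Sstar}(x)}}+1}}$. The extra factor $p_{\Sstar}(x)$ neutralizes the logarithmic blow-up: for $p\in(0,1]$ the function $p\inparen{\sqrt{2\log\nfrac1p}+1}$ is bounded by an absolute constant. Its maximum is $1$ at $p=1$ for the $+1$ part, and $\sup_p p\sqrt{2\log(1/p)} = e^{-1/2}\approx 0.61$ for the other part, though the two maxima are not attained at the same $p$. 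I will check numerically that the combined function $p\sqrt{2\log(1/p)}+p$ has maximum at most $3/2$. Setting $u=\log(1/p)$, one maximizes $e^{-u}(\sqrt{2u}+1)$; at the optimum $\sqrt{2u}\approx 0.5$ the value is roughly $1.3 \le 3/2$. This yields the bound $\frac{3}{2\alpha}\Exp_{\cD}\abs{\inangle{v,x}}$.

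Finally, Jensen's inequality together with \cref{asmp:moments} (the second-moment bound $\Exp_{\cD}\inangle{v,x}^2\le\Lambda^2$, as used in \cref{lem:xMeanDistanceBound}) gives $\Exp_{\cD}\abs{\inangle{v,x}}\le\Lambda$. The conclusion is $\frac{3\Lambda}{2\alpha}$.

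The main obstacle is the constant in the middle step. The route itself is clear, but it requires verifying that $\sup_{p\in(0,1]} p\inparen{\sqrt{2\log(1/p)}+1}\le 3/2$; a crude additive bound $0.61+1$ would exceed $3/2$. If the paper's constant turns out to come from a slightly different bookkeeping, the structure of the argument is unchanged.
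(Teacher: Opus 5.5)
Your proof is correct and follows the paper's argument step for step: conditioning on $x$ (the paper phrases this as Jensen), then \cref{lem:gaussianMeanDistanceBound} together with \cref{fact:expectationBounds}, then the bound $t\bigl(\sqrt{2\log(1/t)}+1\bigr)\le 3/2$, and finally Jensen with \cref{asmp:moments}. One small numerical correction that does not affect the conclusion: the maximizer of $e^{-u}(\sqrt{2u}+1)$ lies at $\sqrt{2u}=(\sqrt{5}-1)/2\approx 0.62$, not $0.5$, and the maximum value is $\approx 1.34$, which is still below $3/2$.
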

\begin{proof}
    We have that
    \begin{align*}
        \abs{
            \Exp \inangle{v, \inparen{y - \inangle{\wStar, x}} \cdot x}
        }
        &\leq \Exp_{x \sim \Dobs} \insquare{
            \abs{
                \Exp_{y \sim \cN(\inangle{\wStar, x}, 1, \Sstar)} \insquare{
                    y \given x
                } - \inangle{
                    \wStar, x
                }
            } 
            \cdot \abs{
                \inangle{v, x}
            }
        } \\
        &\leq \frac{1}{\alpha} \Exp_{x \sim \cD} \insquare{
            p(x, \wStar; \Sstar) \cdot \inparen{
                \sqrt{
                    2 \log \frac{1}{p(x, \wStar; \Sstar)}
                } + 1
            } \cdot \abs{
                \inangle{v, x}
            }
        } \\
        &\leq \frac{3}{2 \alpha} \Exp_{x \sim \cD} \abs{\inangle{v, x}} \\
        &\leq \frac{3 \Lambda}{2 \alpha} \,.
    \end{align*}
    where the first line is by Jensen's inequality, the second is by \cref{fact:expectationBounds} and \cref{lem:gaussianMeanDistanceBound}, the third is because
    $
        \inparen{\sqrt{2 \log \frac{1}{t}} + 1} t
        \leq \frac{3}{2}
    $
    for all $t \in (0,1)$, and the last is by \cref{asmp:moments}.
\end{proof}

\subsection{Consequences of Sub-Gaussianity of $\cD$}
In this section, we prove some concentration results that are required in many of our proofs. Specifically, we show that if $x \sim \cD$ is a sub-Gaussian random vector (as in \cref{asmp:subGaussian}) then $x \sim \Dobs$ is also a sub-Gaussian random vector, and furthermore, for $(x, y)$ drawn from our truncated regression model, the random vector $y \cdot x$ is sub-exponential. We start by reviewing the definition of the sub-Gaussian and sub-Exponential norm, as well as some standard facts about them.
\begin{definition}[sub-Gaussian and sub-exponential norms] \label{def:subGaussianNorm}
    For any random variable $X$, its \emph{sub-Gaussian} norm is defined as
    \[
        \norm{X}_{\psi_2}
        \coloneqq \inf \inbrace{
            \sigma > 0 \colon \Exp\exp\inparen{\frac{X^2}{\sigma^2}} \leq 2
        }
    \]
    while its \emph{sub-exponential} norm is defined as
    \[
        \norm{X}_{\psi_1}
        \coloneqq \inf \inbrace{
            \sigma > 0 \colon \Exp\exp\inparen{\frac{\abs{X}}{\sigma}} \leq 2
        }\,.
    \]
\end{definition}
A fundamental relation between these two norms (see Lemma 2.8.6 of \cite{vershynin2018high}) is that, for any random variables $X, Y$
\begin{equation}\label{eq:subExpSubGausNorm}
    \norm{XY}_{\psi_1}
    \leq \norm{X}_{\psi_2} \norm{Y}_{\psi_2}\,.
\end{equation}
It is also standard that the sub-Gaussian (or sub-exponential) norm of a centered random variable is at most a constant factor different from the same norm of the respective uncentered random variable (see Lemma 2.7.8 and Equation 2.26 of \cite{vershynin2018high}). That is, there exist absolute constants $C_1, C_2$ such that, for any random variable $X$
\begin{equation}\label{eq:centeredNormBounds}
    \norm{X - \Exp X}_{\psi_1} \leq C_1 \norm{X}_{\psi_1}
    \quadand
    \norm{X - \Exp X}_{\psi_2} \leq C_2 \norm{X}_{\psi_2}\,.
\end{equation}
Having established the above, we now turn to showing that if $\cD$ is a sub-Gaussian distribution, then so is $\Dobs$.
\begin{lemma}\label{lem:truncatedXSubgaussianity}
    Let $x \sim \Dobs$, and suppose \cref{asmp:mass,asmp:moments,asmp:subGaussian} hold. Then, for any unit vector $v$,
    \[
        \norm{\inangle{v, x}}_{\psi_2}
        \leq O\inparen{
            \sigma \cdot \sqrt{\log\frac{1}{\alpha}}
            + \norm{\mu}_2
        }
    \qquadand
        \norm{\inangle{v, x - \Exp_{\Dobs} x}}_{\psi_2}
        \leq \wt{O}\inparen{
            \sigma + \frac{\Lambda}{\alpha}
        }
    \]
    where $\mu \coloneqq \Exp_{\cD} x$.
\end{lemma}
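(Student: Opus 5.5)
We plan to control the moment generating function of $\inangle{v,x}^2$ under $\Dobs$ through the change-of-measure bound in \cref{fact:expectationBounds}. That fact, applied to a nonnegative $f$, gives $\Exp_{\Dobs} f(x) \le \frac{1}{\alpha}\Exp_{\cD} f(x)$, so any exponential moment under $\Dobs$ is at most $\frac{1}{\alpha}$ times the corresponding one under $\cD$. The difficulty is that the sub-Gaussian norm is defined through the threshold $2$, while the factor $\frac1\alpha$ may be large. We will absorb this factor with Jensen's inequality: by enlarging the scale parameter by a factor of order $\sqrt{\log(1/\alpha)}$, the moment becomes small enough that dividing by $\alpha$ still leaves it below $2$.

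\textbf{First bound.} Write $\inangle{v,x} = \inangle{v,x-\mu} + \inangle{v,\mu}$. Under $\cD$, the centered part $Z \coloneqq \inangle{v,x-\mu}$ has $\norm{Z}_{\psi_2}\le\sigma$, so $\Exp_\cD e^{Z^2/\sigma^2}\le 2$. For $t\ge 1$, Jensen's inequality (concavity of $u\mapsto u^{1/t}$) gives
\[
\Exp_\cD e^{Z^2/(t\sigma^2)} \le \inparen{\Exp_\cD e^{Z^2/\sigma^2}}^{1/t} \le 2^{1/t}.
\]
Combining with \cref{fact:expectationBounds},
\[
\Exp_\Dobs e^{Z^2/(t\sigma^2)} \le \tfrac{1}{\alpha}\, 2^{1/t}.
\]
This is not below $2$ for any $t$, so Jensen has to be applied once more, now under $\Dobs$. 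Set $K^2 = t m\sigma^2$ for an integer $m$. Then
\[
\Exp_\Dobs e^{Z^2/K^2} \le \inparen{\Exp_\Dobs e^{Z^2/(t\sigma^2)}}^{1/m} \le \inparen{2/\alpha}^{1/m},
\]
and this is at most $2$ once $m \ge \log_2(2/\alpha) = 1+\log_2(1/\alpha)$. Hence, under $\Dobs$,
\[
\norm{Z}_{\psi_2} \lesssim \sigma\sqrt{\log(2/\alpha)}.
\]
The constant term has $\norm{\inangle{v,\mu}}_{\psi_2}\lesssim \abs{\inangle{v,\mu}}\le\norm{\mu}_2$, and the $\psi_2$ triangle inequality then yields the first claimed bound. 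In the regime $\alpha$ close to $1$ the $\log(2/\alpha)$ is absorbed into the $O(\cdot)$.

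\textbf{Second bound.} Use $x - \Exp_\Dobs x = (x-\mu) + (\mu-\hmu)$ with $\hmu = \Exp_\Dobs x$. The first summand was shown above to have $\Dobs$-sub-Gaussian norm $O(\sigma\sqrt{\log(1/\alpha)})$, which is $\wt O(\sigma)$. The second is deterministic, and by \cref{lem:xMeanDistanceBound}, $\abs{\inangle{v,\mu-\hmu}}\le \Lambda/\alpha$, so its $\psi_2$ norm is $O(\Lambda/\alpha)$. The triangle inequality gives $\wt O(\sigma + \Lambda/\alpha)$. Alternatively, one could apply \cref{eq:centeredNormBounds} to the first bound, but that would reintroduce $\norm{\mu}_2$; the decomposition above avoids this dependence.

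\textbf{Main obstacle.} The only delicate step is the first one: the density ratio $\Dobs/\cD$ is bounded by $1/\alpha$, which exceeds $2$, so it cannot be inserted directly into the $\psi_2$ definition. The plan handles this with the two-step Jensen argument, choosing the exponent of order $\log(1/\alpha)$; this is exactly what produces the $\sqrt{\log(1/\alpha)}$ factor. Everything else is routine bookkeeping with the triangle inequality for $\norm{\cdot}_{\psi_2}$.
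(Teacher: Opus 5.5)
Your proposal is correct and is essentially the paper's proof: use \cref{fact:expectationBounds} to bound the $\Dobs$-moment of $\exp(\inangle{v,x-\mu}^2/\sigma^2)$ by $2/\alpha$, and rescale via Jensen to get $O(\sigma\sqrt{\log(1/\alpha)})$. Then apply the $\psi_2$ triangle inequality, using $\abs{\inangle{v,\mu}}\le\norm{\mu}_2$ for the first claim and \cref{lem:xMeanDistanceBound} for the centered claim.

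Two small remarks. Your preliminary Jensen step under $\cD$ (the parameter $t$) is unnecessary; $t=1$ suffices. Also, your $\sqrt{\log(2/\alpha)}$ is the accurate form of the bound and cannot be absorbed into $O(\sqrt{\log(1/\alpha)})$ as $\alpha\to 1$, so the statement should be read as $\sqrt{1+\log(1/\alpha)}$.
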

\begin{proof}
    Note that, by \cref{fact:expectationBounds,asmp:subGaussian}
    \[
        \Exp_{\Dobs} \exp\inparen{
            \frac{\inangle{v, x - \mu}^2}{\sigma^2}
        }
        \leq \frac{1}{\alpha} \Exp_{\cD} \exp\inparen{
            \frac{\inangle{v, x - \mu}^2}{\sigma^2}
        }
        \leq \frac{2}{\alpha}\,.
    \]
    from which it follows that 
    $
        \norm{\inangle{v, x - \mu}}_{\psi_2}
        \leq O\inparen{
            \sigma \cdot \sqrt{\log\frac{1}{\alpha}}
        }
    $.
    Therefore, the first claim follows by
    \[
        \norm{\inangle{v, x}}_{\psi_2}
        \lesssim \norm{\inangle{v, x - \mu}}_{\psi_2}
        + \abs{\inangle{v, \mu}}
        \leq O\inparen{
            \sigma \cdot \sqrt{\log\frac{1}{\alpha}}
        }
        + \norm{\mu}_2
    \]
    where we used the triangle inequality and the fact that $\norm{1}_{\psi_2} =\sfrac{1}{\sqrt{\ln{2}}}$. For the second claim, let $\hmu \coloneqq \Exp_{\Dobs} x$, and observe that
    \[
        \norm{\inangle{v, x - \hmu}}_{\psi_2}
        \lesssim \norm{\inangle{v, x - \mu}}_{\psi_2}
        + \abs{\inangle{v, \mu - \hmu}}
        \leq O\inparen{
            \sigma \cdot \sqrt{\log\frac{1}{\alpha}}
        }
        + \frac{\Lambda}{\alpha}
    \]
    where we used the triangle inequality and \cref{lem:xMeanDistanceBound}. 
\end{proof}
We continue by considering the random vector $y \cdot x$. First, we establish that, in the simplest setting where we have no truncation on $y$, this vector concentrates.
\begin{lemma}\label{lem:untruncatedSubExp}
    Let $(x,y) \in \R^d \times \R$ be an \emph{untruncated} random vector such that the marginal of $x$ is $\cD$ and the marginal of $y$ conditioned on $x$ is $\cN(\inangle{\wStar, x}, 1)$. Let $\mu \coloneqq \Exp_{\cD} x$, and suppose that \cref{asmp:subGaussian} holds. Then there exists some absolute constant $C > 0$ such that
    \[
        \norm{\inangle{v, y \cdot x - \Exp y \cdot x}}_{\psi_1}
        \leq C \cdot \inparen{
            1 + \norm{\wStar}_2
        } \cdot \inparen{
            \sigma + \norm{\mu}_2
        }^2\,.
    \]
\end{lemma}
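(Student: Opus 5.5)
We will write $y = \inangle{\wStar, x} + \xi$ with $\xi \sim \cN(0,1)$ independent of $x \sim \cD$. Then for any unit vector $v$, we split
\[
    \inangle{v, y \cdot x}
    = \inangle{\wStar, x}\inangle{v, x} + \xi \inangle{v, x}\,,
\]
bound the sub-exponential norm of each term separately, and combine them via the triangle inequality for $\norm{\cdot}_{\psi_1}$. Finally, we pass to the centered quantity using \cref{eq:centeredNormBounds}, which costs only an absolute constant $C_1$.

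The basic ingredient is a bound on the uncentered sub-Gaussian norm of a linear functional of $x$. For any vector $u$, write $\inangle{u, x} = \inangle{u, x - \mu} + \inangle{u, \mu}$. By \cref{asmp:subGaussian}, the first part has $\psi_2$-norm at most $\sigma\norm{u}_2$. The second part is a constant, so its $\psi_2$-norm is $\abs{\inangle{u,\mu}}/\sqrt{\ln 2} \lesssim \norm{u}_2\norm{\mu}_2$. Hence
\[
    \norm{\inangle{u,x}}_{\psi_2} \lesssim \norm{u}_2(\sigma + \norm{\mu}_2)\,.
\]
We also have $\norm{\xi}_{\psi_2} = O(1)$, since $\xi$ is standard Gaussian.

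Next we apply \cref{eq:subExpSubGausNorm} to each term. For the first term,
\[
    \norm{\inangle{\wStar,x}\inangle{v,x}}_{\psi_1} \le \norm{\inangle{\wStar,x}}_{\psi_2}\norm{\inangle{v,x}}_{\psi_2} \lesssim \norm{\wStar}_2(\sigma+\norm{\mu}_2)^2\,.
\]
For the second term,
\[
    \norm{\xi\inangle{v,x}}_{\psi_1} \lesssim \sigma + \norm{\mu}_2\,.
\]
We need this in the form $(\sigma+\norm{\mu}_2)^2$. Since \cref{asmp:subGaussian} gives $\sigma \ge 1$, we have $\sigma+\norm{\mu}_2 \le (\sigma+\norm{\mu}_2)^2$. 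Summing the two bounds gives
\[
    \norm{\inangle{v,yx}}_{\psi_1} \lesssim (1+\norm{\wStar}_2)(\sigma+\norm{\mu}_2)^2\,.
\]
Centering via \cref{eq:centeredNormBounds} then yields the claim.

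There is no real obstacle here. The only points needing care are that \cref{eq:subExpSubGausNorm} is valid without independence, so it applies to the product $\inangle{\wStar,x}\inangle{v,x}$, and that sub-Gaussianity in \cref{asmp:subGaussian} is centered, which is why the $\norm{\mu}_2$ term must be added back explicitly.
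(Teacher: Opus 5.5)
Your proposal is correct and follows essentially the same route as the paper. Both write $y=\inangle{\wStar,x}+\xi$ and bound each product term via $\norm{XY}_{\psi_1}\le\norm{X}_{\psi_2}\norm{Y}_{\psi_2}$, using the uncentered estimate $\norm{\inangle{u,x}}_{\psi_2}\lesssim\norm{u}_2(\sigma+\norm{\mu}_2)$; they then absorb the noise term using $\sigma\ge 1$ and handle centering with \cref{eq:centeredNormBounds}. The only difference is that the paper applies \cref{eq:centeredNormBounds} first rather than last.
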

\begin{proof}
    Let $\xi$ be a standard normal random variable, independent of $x$, and let $v$ be an arbitrary unit vector. By \cref{eq:centeredNormBounds}, for some absolute constant $C'$, we have that
    \begin{align*}
        \norm{\inangle{v, y \cdot x - \Exp y \cdot x}}_{\psi_1}
        &\leq C' \norm{\inangle{v, y \cdot x}}_{\psi_1} \\
        &= C' \norm{
            \inangle{\wStar, x} \inangle{v, x}
            + \xi \inangle{v, x}
        }_{\psi_1} \\
        &\leq C' \inparen{
            \norm{\inangle{\wStar, x} \inangle{v, x}}_{\psi_1}
            + \norm{\xi \inangle{v, x}}_{\psi_1}
        }
    \end{align*}
    where we used the triangle inequality. We will bound each of those two terms separately. We have that
    \begin{align*}
        \norm{\inangle{\wStar, x} \inangle{v, x}}_{\psi_1}
        &\leq \norm{\inangle{\wStar, x}}_{\psi_2}
        \norm{\inangle{v, x}}_{\psi_2} \\
        &\lesssim \inparen{
            \norm{\inangle{\wStar, x - \mu}}_{\psi_2}
            + \abs{\inangle{\wStar, \mu}}
        }
        \cdot \inparen{
            \norm{\inangle{v, x - \mu}}_{\psi_2}
            + \abs{\inangle{v, \mu}}
        } \\
        &\leq \norm{\wStar}_2 \inparen{
            \sigma + \norm{\mu}_2
        }^2
    \end{align*}
    where the first line is by \cref{eq:subExpSubGausNorm}, the second is by the triangle inequality, and the third is by the assumption on $x$ and the Cauchy--Schwarz inequality. Similarly, we have that
    \begin{align*}
        \norm{\xi \inangle{v, x}}_{\psi_1}
        &\leq \norm{\xi}_{\psi_2}
        \norm{\inangle{v, x}}_{\psi_2} 
        \lesssim \inparen{
            \norm{\inangle{v, x - \mu}}_{\psi_2}
            + \abs{\inangle{v, \mu}}
        } 
        \leq \inparen{
            \sigma + \norm{\mu}_2
        }
    \end{align*}
    where we used the simple fact that $\norm{\xi}_{\psi_2} = \sqrt{\nfrac{8}{3}}$ for $\xi \sim \cN(0, 1)$.\footnote{To see this, note that $\xi^2$ is a chi-squared random variable whose MGF expression gives $\Exp \exp\inparen{\frac{\xi^2}{t^2}} = \inparen{1 - \frac{2}{t^2}}^{-1/2}$ for $t \geq \sqrt{2}$, and the right-hand side of this expression becomes $2$ when $t = \sqrt{\nfrac{8}{3}}$} The lemma follows by combining our above displays and using the fact that $\sigma \geq 1$.
\end{proof}
We now generalize to the case where we actually have truncation on $y$, as is the case in our truncated regression model. First, we show an auxiliary lemma which establishes the sub-Gaussianity of a one-dimensional truncated Gaussian with arbitrary truncation set.
\begin{lemma}\label{lem:truncatedNormalSubgaussianity}
    Consider a one-dimensional truncated Gaussian $\cN(\mu, 1, S)$, and let $a \coloneqq \cN(S; \mu, 1)$ be the mass that the untruncated Gaussian $\cN(\mu, 1)$ places on $S$. Let $\xi \sim \cN(\mu, 1, S)$. Then
    \[
        \norm{\xi - \Exp \xi}_{\psi_2} \leq O\inparen{\sqrt{1 + \log \frac{1}{a}}}\,.
    \]
\end{lemma}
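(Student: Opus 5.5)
We prove the bound on $\norm{\xi - \mu}_{\psi_2}$ first and only re-center at the end. By \eqref{eq:centeredNormBounds} and the triangle inequality,
\[
\norm{\xi - \Exp\xi}_{\psi_2} \le C_2 \norm{\xi-\mu}_{\psi_2} .
\]
Moreover, $\norm{\xi-\mu}_{\psi_2}$ is controlled by the $\psi_2$ norm of a truncated standard normal. Set $Z \coloneqq \xi - \mu$. Then $Z \sim \cN(0,1,S-\mu)$ and $\cN(S-\mu;0,1) = a$, so it suffices to treat $\mu = 0$.

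The key step is to compare the moment generating function of $Z^2$ with the untruncated one. The density of $Z$ is at most $\phi(z)/a$, where $\phi$ is the standard normal density. Hence for any $t > \sqrt{2}$,
\[
\Exp \exp\inparen{Z^2/t^2} \le \frac{1}{a}\,\Exp_{g\sim\cN(0,1)} \exp\inparen{g^2/t^2} = \frac{1}{a}\inparen{1-\frac{2}{t^2}}^{-1/2} .
\]
We cannot simply make this at most $2$, because of the $1/a$ factor. Instead we rescale using Jensen's inequality. For $\lambda \ge 1$,
\[
\Exp \exp\inparen{Z^2/(\lambda t)^2} \le \inparen{\Exp \exp\inparen{Z^2/t^2}}^{1/\lambda^2} .
\]
Choose $t = 2$, so that the bracket is at most $\sqrt{2}/a$. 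Then choose
\[
\lambda^2 = \frac{\log(\sqrt{2}/a)}{\log 2} = O\inparen{1+\log\frac1a},
\]
which makes the right-hand side equal to $2$. This gives
\[
\norm{Z}_{\psi_2} \le 2\lambda = O\inparen{\sqrt{1+\log\frac1a}} .
\]
Combining this with the centering step proves the claim.

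The main obstacle is handling the $1/a$ factor cleanly, and the Jensen rescaling above is the intended fix. As an alternative, one could use tail bounds: $\Pr[|Z|>u] \le \min\inparen{1, 2e^{-u^2/2}/a}$, which is at most $2e^{-u^2/4}$ once $u^2 \ge 4\log(1/a)$. The trivial bound covers smaller $u$, and the standard tail characterization of $\psi_2$ then gives the same $O(\sqrt{1+\log(1/a)})$ bound.
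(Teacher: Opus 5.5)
Your proof is correct, and it takes a different and cleaner route to the key exponential-moment bound than the paper does.

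\textbf{How the two routes differ.} The paper works directly with the centered variable. After setting $\mu = 0$, it asserts that for fixed mass $a$ the quantity $\Exp \exp\inparen{(\xi - \Exp\xi)^2/4}$ is maximized when $S$ is the pair of symmetric tails $(-\infty,-c)\cup(c,\infty)$. It then bounds the resulting ratio $\sqrt{2}\,Q(c/\sqrt{2})/Q(c)$ of Gaussian tail functions to obtain $\Exp\exp\inparen{(\xi-\Exp\xi)^2/4}\le 8/a$. That extremal-set reduction is stated rather than proved, and it is not obvious, because $\Exp\xi$ itself moves with $S$.

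You instead bound the uncentered variable $\xi-\mu$ by dominating its density by $\phi/a$. This gives $\Exp\exp\inparen{(\xi-\mu)^2/4}\le \sqrt{2}/a$ in one line, with no optimization over $S$. You pay only an absolute constant when recentering via the standard $\psi_2$ centering inequality.

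From that point both arguments finish the same way, with the rescaling $\Exp e^{X/\lambda^2}\le (\Exp e^{X})^{1/\lambda^2}$ for $\lambda\ge 1$. Your route is more elementary and fully rigorous. The paper's route, if its extremal claim is granted, controls the centered quantity without the centering constant, which does not matter for an $O(\cdot)$ statement.

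\textbf{One small slip.} Jensen needs $\lambda\ge 1$, but your choice $\lambda^2=\log(\sqrt{2}/a)/\log 2$ drops below $1$ when $a>1/\sqrt{2}$. Take $\lambda^2=\max\{1,\log(\sqrt{2}/a)/\log 2\}$ instead. In the case $\lambda=1$ you already have $\Exp\exp(Z^2/4)\le\sqrt{2}/a<2$, so the conclusion is unchanged.
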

\begin{proof}
    We can assume without loss of generality that $\mu = 0$, as the result for general $\mu$ follows by shifting. We begin by upper-bounding the following expectation
    \[
        \Exp \exp\inparen{\frac{(\xi - \Exp \xi)^2}{4}}\,.
    \]
    For fixed $a = \cN(S; \mu, 1)$, the above expectation is maximized by picking a set $S$ of mass $a$ that is as far away from $\Exp \xi$ as possible. Thus, the worst-case $S$ is $(-\infty, -c) \cup (c, \infty)$, which consists of both tails of $\cN(0, 1)$, each having mass $a/2$. In other words, $Q(c) \geq a$, where $Q(c) \coloneqq \Pr_{Z \sim \cN(0, 1)} (Z \geq c)$ is the Gaussian Q-function. It is standard that $Q(c) \leq \exp\inparen{-\frac{c^2}{2}}$, which implies that
    \[
        c \leq \sqrt{2 \log\frac{2}{a}}\,.
    \]
    For the above $S$, it is also clear that $\Exp \xi = 0$. Thus, the expectation we wish to upper bound becomes
    \[
        \Exp \exp\inparen{\frac{\xi^2}{4}}
        = \frac{
            \int_S \exp\inparen{\frac{z^2}{4}} \exp\inparen{- \frac{z^2}{2}} \odif{z}
        }{
            \int_S \exp\inparen{- \frac{z^2}{2}} \odif{z}
        }
        = \frac{
            \int_c^\infty \exp\inparen{- \frac{z^2}{4}} \odif{z}
        }{
            \int_c^\infty \exp\inparen{- \frac{z^2}{2}} \odif{z}
        }
        = \frac{\sqrt{2} Q(c / \sqrt{2})}{Q(c)}
    \]
    where the second step follows because the integrands are even functions and $S$ is symmetric about the origin. We now claim that
    \[
        \frac{\sqrt{2} Q(c / \sqrt{2})}{ Q(c)}
        \leq 2 (1 + c^2) \exp\inparen{\frac{c^2}{4}}\,.
    \]
    If $c < 1$, this bound can be checked analytically, while if $c \geq 1$ the bound follows by the standard fact that $\frac{x}{1 + x^2} e^{-x^2/2} \leq \sqrt{2 \pi} \cdot Q(x) \leq \frac{1}{x} e^{-x^2/2}$ for all $x \in \R$. Combining our last three displays with the fact that $1 + x^2 \leq 2 e^{x^2/4}$, we obtain that
    \[
        \Exp \exp\inparen{\frac{\xi^2}{4}}
        \leq 4 \exp\inparen{\frac{c^2}{2}}
        \leq \frac{8}{a}\,.
    \]
    Therefore, we have shown for any truncation set $S$ that
    \[
        \Exp \exp\inparen{\frac{(\xi - \Exp \xi)^2}{4}}
        \leq \frac{8}{a}\,.
    \]
    Picking $k \coloneqq \Theta(1 + \log \nfrac{1}{a}) > 1$ such that $(\nfrac{8}{a})^{1/k} = 2$, and using Jensen's inequality, we finally obtain
    \[
        \Exp \exp\inparen{\frac{(\xi - \Exp \xi)^2}{4k}}
        \leq \inparen{
            \Exp \exp\inparen{\frac{(\xi - \Exp \xi)^2}{4}}
        }^{1/k}
        \leq 2
    \]
    and the lemma follows by recalling the definition of the sub-Gaussian norm.
\end{proof}
We are now ready to show concentration of the vector $y \cdot x$ in our truncated regression model. 
\begin{lemma}\label{lem:truncatedSubExp}
    Let $(x,y) \in \R^d \times \R$ be a \emph{truncated} random vector such that the marginal of $x$ is $\Dobs$ and the marginal of $y$ conditioned on $x$ is $\cN(\inangle{\wStar, x}, 1, \Sstar)$. Let $\mu \coloneqq \Exp x$, and suppose that \cref{asmp:mass,asmp:subGaussian} hold. Then there exists some absolute constant $C > 0$ such that
    \[
        \norm{\inangle{v, y \cdot x - \Exp y \cdot x}}_{\psi_1}
        \leq C \cdot \inparen{
            1 + \norm{\wStar}_2
        } \cdot \inparen{
            \sigma^2 \log\frac{1}{\alpha}
            + \norm{\mu}_2^2
        }\,.
    \]
\end{lemma}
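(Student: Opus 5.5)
We follow the proof of \cref{lem:untruncatedSubExp}, replacing the untruncated Gaussian noise by the truncated residual. Write $y = \inangle{\wStar, x} + \eta$, where conditionally on $x$ the residual $\eta \coloneqq y - \inangle{\wStar, x}$ is distributed as $\cN(0,1,\Sstar - \inangle{\wStar,x})$. Its conditional mean is $m(x) \coloneqq \Exp[\eta \mid x]$. Fix a unit vector $v$. By \cref{eq:centeredNormBounds} it suffices to bound $\norm{\inangle{v, y\cdot x}}_{\psi_1}$ up to an absolute constant. We split
\[
    \inangle{v, y \cdot x}
    = \inangle{\wStar, x}\inangle{v,x} + m(x)\inangle{v,x} + \bigl(\eta - m(x)\bigr)\inangle{v,x}
\]
and bound each of the three terms separately in $\psi_1$-norm, using the triangle inequality.

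\textbf{First term.} Apply \cref{eq:subExpSubGausNorm} together with \cref{lem:truncatedXSubgaussianity}. Since $x\sim\Dobs$, each of $\inangle{\wStar,x}$ and $\inangle{v,x}$ has $\psi_2$-norm at most $O(\norm{\cdot}_2(\sigma\sqrt{\log(1/\alpha)} + \norm{\mu_{\cD}}_2))$. This gives a bound of
\[
    \lesssim \norm{\wStar}_2\bigl(\sigma^2\log\tfrac{1}{\alpha} + \norm{\mu_{\cD}}_2^2\bigr).
\]
Here $\mu_{\cD} = \Exp_{\cD} x$. The statement writes $\mu$ for the mean of $x$; if it means the $\Dobs$-mean, we relate the two via \cref{lem:xMeanDistanceBound}, which only changes constants.

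\textbf{Second and third terms: the obstacle.} These require controlling the residual, and this is the main difficulty: its conditional law depends on $x$ through the survival mass $p(x) \coloneqq \cN(\Sstar;\inangle{\wStar,x},1)$, which may be tiny for some $x$.

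For the third term, \cref{lem:truncatedNormalSubgaussianity} gives the conditional bound $\norm{\eta - m(x)}_{\psi_2 \mid x} \lesssim \sqrt{1+\log(1/p(x))}$, but this is not uniform in $x$. Rather than working conditionally, we bound the MGF directly under the joint law. The joint density of $(x,\eta)$ is $\cD(x)\,\phi(\eta)\ind\{\inangle{\wStar,x}+\eta\in\Sstar\}/\Exp_{\cD} p$. Therefore, for any nonnegative $g$,
\[
    \Exp_{\text{trunc}}\, g(x,\eta) \le \tfrac{1}{\alpha}\, \Exp_{x\sim\cD,\ \xi\sim\cN(0,1)}\, g(x,\xi).
\]
This is the same change-of-measure trick as \cref{fact:expectationBounds}. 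Applying it with $g = \exp(\eta^2/C^2)$ shows
\[
    \Exp \exp(\eta^2/C^2) \le \tfrac{2}{\alpha}
    \qquad\text{for an absolute } C,
\]
and hence, by the Jensen/power argument at the end of the proof of \cref{lem:truncatedNormalSubgaussianity}, $\norm{\eta}_{\psi_2} \lesssim \sqrt{\log(2/\alpha)}$ unconditionally.

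With this in hand, we avoid splitting off $m(x)$ at all and combine the second and third terms into $\eta\inangle{v,x}$. By \cref{eq:subExpSubGausNorm},
\[
    \norm{\eta\inangle{v,x}}_{\psi_1}
    \le \norm{\eta}_{\psi_2}\,\norm{\inangle{v,x}}_{\psi_2}
    \lesssim \sqrt{\log\tfrac1\alpha}\,\bigl(\sigma\sqrt{\log\tfrac1\alpha}+\norm{\mu}_2\bigr)
    \lesssim \sigma^2\log\tfrac1\alpha + \norm{\mu}_2^2 + 1.
\]
The last step uses AM-GM and $\sigma\ge1$.

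\textbf{Conclusion.} Summing the first-term bound and the bound on $\eta\inangle{v,x}$, and absorbing constants using $\sigma\ge 1$ and $\alpha\le 1$ (with $\log(1/\alpha)$ replaced by $1+\log(1/\alpha)$ where needed), yields the claimed bound $C(1+\norm{\wStar}_2)(\sigma^2\log\frac1\alpha+\norm{\mu}_2^2)$.
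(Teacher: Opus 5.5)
Your argument is correct and uses the same skeleton as the paper. Both proofs reduce to the uncentered $\psi_1$-norm via \cref{eq:centeredNormBounds}. Both use the two-term split $\inangle{\wStar,x}\inangle{v,x}+\eta\inangle{v,x}$ (your three-term split collapses to this). Both bound the first term via \cref{eq:subExpSubGausNorm} and \cref{lem:truncatedXSubgaussianity}.

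The genuine difference is how you bound $\norm{\eta}_{\psi_2}$.
\begin{itemize}
\item The paper conditions on $x$. It controls the centered residual with \cref{lem:truncatedNormalSubgaussianity}, which gives a conditional MGF bound $8/p_{\Sstar}(x)$ via a worst-case symmetric-tails argument. It controls the conditional mean shift with \cref{lem:gaussianMeanDistanceBound}. Only then does it average, using $\Exp_{\Dobs}[1/p_{\Sstar}(x)]=1/\Exp_{\cD}[p_{\Sstar}(x)]\le 1/\alpha$.
\item You observe that the joint density of $(x,\eta)$ is at most $\alpha^{-1}$ times that of $\cD\otimes\cN(0,1)$. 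So $\Exp e^{\eta^2/C^2}\le 2/\alpha$ follows in one line, with no auxiliary lemma and no centering.
\end{itemize}
The mechanism is the same: $p_{\Sstar}(x)$ in $\Dobs$ cancels the normalization of the truncated Gaussian. Both routes give $\norm{\eta}_{\psi_2}=O(\sqrt{1+\log(1/\alpha)})$. Yours is more elementary. It also avoids the paper's detour through $\Exp[\xi_x\mid x]-\inangle{\wStar,x}$, where \cref{lem:gaussianMeanDistanceBound} is applied as if $\xi_x$ were $y$. The paper's per-$x$ lemma has its own advantage: it is reused where there is no $p_{\Sstar}(x)$-weighting to exploit. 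Examples are $z\sim\cN(\inangle{w,x},1,S)$ with $w\neq\wStar$ in \cref{lem:secondMomentBoundAux}, and a fixed mean in \cref{lem:expectationDifferenceBound}.

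Two small remarks. First, replacing $\log(1/\alpha)$ by $1+\log(1/\alpha)$ is genuinely needed. As literally stated, the right-hand side tends to $0$ when $\mu=0$ and $\alpha\to 1$. The paper inherits this from \cref{lem:truncatedXSubgaussianity}, where $\Exp_{\Dobs}e^{\inangle{v,x-\mu}^2/\sigma^2}\le 2/\alpha$ only yields $O(\sigma\sqrt{\log(2/\alpha)})$.

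Second, the paper intends $\mu=\Exp_{\cD}x$, since that is the $\mu$ in \cref{lem:truncatedXSubgaussianity}, so your main argument already matches. However, your fallback via \cref{lem:xMeanDistanceBound} does not ``only change constants.'' That lemma costs an additive $\Lambda/\alpha$ and needs \cref{asmp:moments}. If you want the $\Dobs$-mean version, argue directly. Under $\Dobs$ one has $\abs{\inangle{v,\mu-\hmu}}=\abs{\Exp_{\Dobs}\inangle{v,x-\mu}}\lesssim\sigma\sqrt{1+\log(1/\alpha)}$, by the same $\psi_2$ bound, and this is absorbed.
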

\begin{proof}
    We proceed similarly to the proof of \cref{lem:untruncatedSubExp}. Let $\hmu \coloneqq \Exp_{\Dobs} x$. For any $x$, define the truncated Gaussian random variable $\xi_x \sim \cN(0, 1, \Sstar - \inangle{\wStar, x})$, where $\Sstar - \inangle{\wStar, x} = \inbrace{z \given z + \inangle{\wStar, x} \in \Sstar}$. Then we can write $y = \inangle{\wStar, x} + \xi_x$. Hence, letting $v$ be an arbitrary unit vector, we have by \cref{eq:centeredNormBounds} that
    \begin{align}
        \norm{\inangle{v, y \cdot x - \Exp y \cdot x}}_{\psi_1}
        &\leq C' \norm{\inangle{v, y \cdot x}}_{\psi_1} \notag\\
        &= C' \norm{
            \inangle{\wStar, x} \inangle{v, x}
            + \xi_x \inangle{v, x}
        }_{\psi_1} \notag \\
        &\leq C' \inparen{
            \norm{\inangle{\wStar, x} \inangle{v, x}}_{\psi_1}
            + \norm{\xi_x \inangle{v, x}}_{\psi_1}
        }
        \label{eq:subexpNormBoundMain}
    \end{align}
    where we used the triangle inequality. We will bound each of those terms separately. We have that
    \[
        \norm{\inangle{\wStar, x} \inangle{v, x}}_{\psi_1}
        \leq \norm{\inangle{\wStar, x}}_{\psi_2}
        \norm{\inangle{v, x}}_{\psi_2} \notag
        \leq \norm{\wStar}_2 \cdot O\inparen{
            \sigma^2 \log\frac{1}{\alpha}
            + \norm{\mu}_2^2
        }
    \]
    where the first step is by \cref{eq:subExpSubGausNorm} and the second is by \cref{lem:truncatedXSubgaussianity}. Similarly, we have that
    \begin{align*}
        \norm{\xi_x \inangle{x, x}}_{\psi_1}
        &\leq \norm{\xi_x}_{\psi_2}
        \norm{\inangle{v, x}}_{\psi_2} \\
        &\leq \inparen{
            \norm{\xi_x - \Exp\insquare{\xi_x \given x}}_{\psi_2}
            + \norm{\Exp\insquare{\xi_x \given x} - \inangle{\wStar, x}}_{\psi_2}
            + \norm{\inangle{\wStar, x}}_{\psi_2}
        }
        \cdot \norm{\inangle{v, x}}_{\psi_2}\,.
    \end{align*}
    To bound the two terms appearing in the above expression, observe that, by \cref{lem:truncatedNormalSubgaussianity}
    \[
        \Exp \insquare{
            \exp\inparen{
                \frac{
                    (\xi_x - \Exp\insquare{\xi_x \given x})^2
                }{
                    4
                }
            }
        }
        = \Exp_{\Dobs} \insquare{
            \Exp \insquare{
                \exp\inparen{
                    \frac{
                        (\xi_x - \Exp\insquare{\xi_x \given x})^2
                    }{
                        4
                    }
                }
                \given x
            }
        }
        \leq \Exp_{\Dobs} \insquare{
            \frac{8}{p_{\Sstar}(x)}
        }
        \leq \frac{8}{\alpha}
    \]
    where the last step is by \cref{eq:Dobs} and \cref{asmp:mass}, and we denote $p_S(x) \coloneqq p(x, \wStar; S)$ for brevity. Hence, $\norm{\xi_x - \Exp\insquare{\xi_x \given x}}_{\psi_2} \leq O\inparen{\sqrt{1 + \log \frac{1}{\alpha}}}$. Furthermore, by \cref{lem:gaussianMeanDistanceBound}, if follows that
    \[
        \Exp_{\Dobs} \insquare{
            \exp\inparen{
                \frac{
                    (\Exp\insquare{\xi_x \given x} - \inangle{\wStar, x})^2
                }{
                    4
                }
            }
        }
        \leq O(1) \cdot \Exp_{\Dobs} \insquare{
            \frac{1}{p_{\Sstar}(x)}
        }
        \leq O\inparen{\frac{1}{\alpha}}
    \]
    so again $\norm{\Exp\insquare{\xi_x \given x} - \inangle{\wStar, x}}_{\psi_2} \leq O\inparen{\sqrt{1 + \log \frac{1}{\alpha}}}$. Thus, applying \cref{lem:truncatedXSubgaussianity}, we obtain that
    \[
        \norm{\xi_x \inangle{x, x}}_{\psi_1}
        \leq O\inparen{1 + \norm{\wStar}_2} \cdot 
        O\inparen{
            \sigma^2 \log\frac{1}{\alpha}
            + \norm{\mu}_2^2
        }\,.
    \]
    We can now substitute our bounds into \cref{eq:subexpNormBoundMain} to obtain the lemma.
\end{proof}

\section{Detailed Proof of Main Result (\cref{thm:main})} \label{sec:detailedProof}
In this section, we will formally prove \cref{thm:main}, which our main result for learning $\wStar$ in our truncated regression setting. We gave an overview of the proof in \cref{sec:results}, but here we will rigorously present all technical parts and intermediate lemmas needed to prove the theorem. To begin with, we state the following definition that will be used repeatedly throughout the proofs.
\begin{definition}[Survival Probability] \label{def:survivalProbability}
    Given a measurable $S\subseteq \R$ and two vectors $x,w\in \R^d$, define the survival probability of $x$ and parameter $w$ with respect to the survival set $S$ as $p\sinparen{x,w; S} \coloneqq \cN\inparen{S; \inangle{w, x}\!, 1}$.
\end{definition}
We will often use the shorthand $p_S(x) \coloneq p(x, \wStar; S)$. With this definition, the observed distribution of the features (see \cref{eq:DobsMain}) can be rewritten as
\begin{equation}\label{eq:Dobs}
    \Dobs(x)
    \coloneqq \frac{
        p(x, \wStar; \Sstar) \cdot \cD(x)
    }{
        \Exp_{x \sim \cD} \insquare{p(x, \wStar; \Sstar)}
    }
    = \frac{
        p_{\Sstar}(x) \cdot \cD(x)
    }{
        \Exp_{x \sim \cD} \insquare{p_{\Sstar}(x)}
    }\,. 
\end{equation}
Next, recall from our discussion in the main body that we are working under the following assumptions, which we state again using our new notation.

\setcounter{assumption}{0}

\begin{assumption}[Survival Probability]\label{asmp:mass}
    There is a known $\alpha \in (0,1)$ such that $\Ex_{x\sim \cD}\sinsquare{
        p\sinparen{x, \wStar; \Sstar}
    } \geq \alpha.$
\end{assumption}
\vspace{-7mm}
\begin{assumption}[Sub-Gaussianity of $x$]\label{asmp:subGaussian}
    There is a known $\sigma \geq 1$ such that $x\sim \cD$ is $\sigma$-sub-Gaussian. That is, for any unit vector $v$, it holds that $\norm{\inangle{x - \Exp_{\cD} x, v}}_{\psi_2} \leq \sigma$.
\end{assumption}
\vspace{-7mm}
\begin{assumption}[Identifiability]\label{asmp:conservation}
    There is a constant $\rho > 0$ such that $\Ex_{x \sim \Dobs} \insquare{x x^\top} \succeq \rho^2 \cdot I$.
\end{assumption}
In \cref{asmp:subGaussianMain} as given in the main body, we additionally assume that $\norm{\wStar}$ is bounded by a known constant $R$. Also, in the statement of \cref{thm:main}, we assume a constant upper bound on $\norm{\Exp_{\cD} x}_2$. Moving forward, in order to keep dependencies clear, we will not assume any such bounds and will instead explicitly write these norms. We will also often use the shorthand $\mu \coloneq \Exp_{\cD} x$. Hence, $\norm{\mu}_2$ and $\norm{\wStar}_2$ will appear repeatedly in our results, and of course if an upper bound is known for either of them it can be substituted. Furthermore, we will also state the following assumption, which is essentially redundant.

\begin{assumption}[Moment Bounds]\label{asmp:moments}
    There exist known constants $\Lambda, M \geq 0$ such that, for any unit vector $v$,
    \[
        \Exp_{x \sim \cD} \insquare{x x^\top}
        \preceq \Lambda^2 \cdot I
        \qquadand
        \Exp_{x \sim \cD} \insquare{\inangle{x,v}^4} \leq M^4\,.
    \]
\end{assumption}
To explain why we describe this assumption as redundant, note that, by standard properties of sub-Gaussian random variables (see Proposition 2.6.6 of \cite{vershynin2018high}), it holds for any unit vector $v$ that $\Exp_{x \sim \cD} \inangle{x - \mu, v}^2 \lesssim \sigma^2$, and therefore $\Exp_{x \sim \cD} \inangle{x, v}^2 \lesssim \Exp_{x \sim \cD} \inangle{x - \mu, v}^2 + \inangle{\mu, v}^2 \lesssim \sigma^2 + \norm{\mu}_2^2$. Similarly, we can derive $\Exp_{x \sim \cD} \inangle{x, v}^4 \lesssim \sigma^4 + \norm{\mu}_2^4$. Hence, both $\Lambda$ and $M$ in \cref{asmp:moments} can be substituted by $O(\sigma + \norm{\mu}_2)$. The reason why we choose to explicitly define $\Lambda$ and $M$ is because (i) they allow us to more concisely present many of our results and calculations in the proofs, and (ii) some of our results do not require the full power of sub-Gaussianity, but continue to hold under only the moment bounds given in \cref{asmp:moments}. Hence, we will repeatedly use $\Lambda$ and $M$ in our results, and the reader can substitute each of them with $O(\sigma + \norm{\mu}_2)$ if desired. 

Finally, recall from our discussion in \cref{subsec:learningTruncationSet} that, in order to be able to approximately learn the truncation set $\Sstar$, we require a known upper bound $k$ on the number of intervals it consists of. It will be convenient for us to state this requirement as an additional assumption.

\begin{assumption}[Union of Intervals] \label{asmp:unionOfIntervals}
    The survival set $\Sstar$ is a union of at most $k$ intervals in $\R$, for a known integer $k$.
\end{assumption}
With all the preliminary discussion out of the way, we now turn to presenting the proof of \cref{thm:main}. Recall that our main algorithm is divided into two steps: first, we learn an approximation $S \approx \Sstar$ of the truncation set, and then given this approximation we run PSGD on the perturbed log-likelihood $\negLL_S$ to obtain an estimate for $\wStar$. For each of these steps, we will state a theorem quantifying the guarantees of our algorithm. For the set-learning step, we have the following.

\begin{theorem}[Set Learning Guarantees] \label{thm:setLearningGuarantee}
    Suppose that \cref{asmp:mass,asmp:moments,asmp:conservation,asmp:subGaussian,asmp:unionOfIntervals} hold. For $\eps, \delta \in (0, \nfrac{1}{2})$, let 
    \[
        n \coloneqq \wt{O}\inparen{
            \eps^{
                - \wt{\Theta}\inparen{
                    \frac{\Lambda^4}{\rho^4 \alpha^2}
                    \inparen{
                        \sigma + \norm{\mu}_2
                    }^2
                }
            }
            \cdot (k + \log \nfrac{1}{\delta})
        }\,.
    \]
    There exists an algorithm which draws $2n$ independent samples from the truncated regression model and outputs a set $S$, such that, with probability at least $1 - \delta$
    \[
        \Exp_{\Dobs} \insquare{p(x, \wStar; \Sstar \triangle S)}
        \leq \eps\,.
    \]
    Moreover, the algorithm runs in time $O(n \log n)$, its output $S$ is a union of at most 
    $
        k \eps^{
            - \wt{\Theta}\inparen{
                \frac{
                    \Lambda^2
                }{
                    \rho^4 \alpha^2
                }
                \inparen{
                    \sigma + \norm{\mu}_2
                }^2
            }
        }
    $ 
    intervals, and further it satisfies $S \subseteq [-R, R]$, where
    \[
        R \leq \wt{O}\inparen{
            \inparen{1 + \norm{\wStar}_2} \cdot \inparen{
                \norm{\mu}_2
                + \inparen{
                    \sigma + \frac{\Lambda}{\alpha}
                }
                \sqrt{
                    \log\frac{1}{\eps}
                }
            }
        }
    \]
    and $\mu \coloneqq \Exp_{D} x$.
\end{theorem}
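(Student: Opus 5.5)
The proof reduces to \cref{thm:unionsOfIntervalsPAC}, with $\cD^\star \coloneqq \cD_y(\wStar)$, the untruncated marginal of $y$ when $x\sim\Dobs$ and $y\mid x\sim\cN(\inangle{\wStar,x},1)$. The target concept is $h^\star=\ind_{\Sstar}\in\cH_k$ by \cref{asmp:unionOfIntervals}. The key observation is that the error measured by \cref{thm:unionsOfIntervalsPAC} is exactly the quantity in the statement: $\Pr_{y\sim\cD_y(\wStar)}(y\in\Sstar\triangle S)=\Exp_{\Dobs}[p(x,\wStar;\Sstar\triangle S)]$.

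\textbf{Step 1 (oracles).} For positive samples, note that $\cD^\star_+=\cD^\star\vert_{\Sstar}$ has density proportional to $\Exp_{\Dobs}[\cN(y;\inangle{\wStar,x},1)]\ind\{y\in\Sstar\}$. I will check this equals the marginal of $y$ in the truncated model. By \cref{eq:Dobs}, that marginal is
\[
\int \frac{p_{\Sstar}(x)\cD(x)}{\Exp_\cD p_{\Sstar}}\cdot\frac{\cN(y;\inangle{\wStar,x},1)\ind\{y\in\Sstar\}}{p_{\Sstar}(x)}\,\odif{x}\,.
\]
This is proportional to $\Exp_\cD[\cN(y;\inangle{\wStar,x},1)]\ind\{y\in\Sstar\}$, which is not literally the same as the $\Dobs$-weighted version. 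So I would instead take the base distribution to be $\cD^\star\coloneqq$ the $y$-marginal under $x\sim\cD$ (untruncated), whose restriction to $\Sstar$ is exactly the observed $y$-marginal. The two error measures are then related by \cref{fact:expectationBounds}:
\[
\Exp_{\Dobs}[p(x,\wStar;T)]\le \tfrac1\alpha\Exp_\cD[p(x,\wStar;T)] = \tfrac1\alpha\,\cD^\star(T)\,,
\]
so running the PAC learner at accuracy $\alpha\eps$ suffices.

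For the smooth reference distribution, I take the warm start $\hw$ (OLS, \cref{lem:wHatWStarCloseness}) with $\norm{\hw-\wStar}_2\le D=O(\Lambda^2/(\rho^2\alpha))$-type bound. The samples are $z\sim\cN(\inangle{\hw,x},1)$ with $x$ drawn from the observed data, i.e.\ $x\sim\Dobs$.

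\textbf{Step 2 (smoothness).} Next I need the reference distribution to be $(s,q)$-smooth with respect to the chosen $\cD^\star$. \cref{lem:smoothness} gives smoothness of $\cD_y(\hw)$ with respect to $\cD_y(\wStar)$, with $s=\Theta(1)$ and $q=\Theta(D^2(\sigma^2\log\frac1\alpha+\norm{\mu}_2^2))$, where both distributions use $x\sim\Dobs$. Combining this with the factor-$\alpha$ comparisons of \cref{fact:expectationBounds} between $\cD$- and $\Dobs$-weighted masses costs only constant factors in $s$. If the mismatch is troublesome, I would instead re-derive \cref{lem:smoothness} directly for the mixed pair, since its proof only uses sub-Gaussian tails of $\inangle{\hw-\wStar,x}$ and Gaussian tail bounds. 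Substituting $D$ gives $q=\wt\Theta(\frac{\Lambda^4}{\rho^4\alpha^2}(\sigma+\norm{\mu}_2)^2)$. Then \cref{thm:unionsOfIntervalsPAC} yields the sample size $(\eps s)^{-2q}(k+\log\frac1\delta)$, runtime $O(n\log n)$, and at most $k(\eps s)^{-q}$ intervals, matching the statement. I spend $\delta/2$ failure probability on the warm start and $\delta/2$ on the set learner.

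\textbf{Step 3 (boundedness).} The output of \cref{alg:unions} lies in $[y^{(1)},y^{(n)}]$, so it suffices to bound $\max_i|y^{(i)}|$. Write $y=\inangle{\wStar,x}+\xi_x$. The term $\inangle{\wStar,x}$ is sub-Gaussian under $\Dobs$ with norm $\norm{\wStar}_2\cdot O(\sigma+\Lambda/\alpha)$ around a mean of size $\norm{\wStar}_2(\norm{\mu}_2+\Lambda/\alpha)$, by \cref{lem:truncatedXSubgaussianity,lem:xMeanDistanceBound}. The term $\xi_x$ has $\psi_2$-norm $O(\sqrt{\log(1/\alpha)})$, by the argument in \cref{lem:truncatedSubExp}. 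A union bound over $n$ samples gives $\max|y^{(i)}|\lesssim(1+\norm{\wStar}_2)(\norm{\mu}_2+(\sigma+\Lambda/\alpha)\sqrt{\log(n/\delta)})$. Since $\log n=\wt O(q\log\frac1\eps)$, this has the stated form with $\sqrt{\log(1/\eps)}$, with the polylogarithmic factors absorbed into $\wt O$.

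\textbf{Main obstacle.} I expect the hardest part to be Step 1/2: choosing the base distribution so that it simultaneously (a) has the observed $y$-marginal as its restriction to $\Sstar$, (b) measures error as $\Exp_{\Dobs}[p(x,\wStar;\cdot)]$, and (c) is dominated by a samplable smooth reference. The $\cD$-versus-$\Dobs$ reweighting must be handled carefully so that it costs only constant factors in $s$ and in $\eps$.
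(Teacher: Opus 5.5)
Your Step 1 observation is correct. The observed $y$'s are distributed as the restriction to $\Sstar$ of the untruncated $y$-law with $x\sim\cD$, not of $\cD_y(\wStar)$; the paper simply treats them as positives of $\cD_y(\wStar)$. However, your fix breaks Step 2, and the claim that \cref{fact:expectationBounds} costs only constants there is false. Smoothness of your reference $T\mapsto\Exp_{\Dobs}[p(x,\hw;T)]$ with respect to $T\mapsto\Exp_{\cD}[p(x,\wStar;T)]$ requires bounding a $\cD$-weighted mass by a $\Dobs$-weighted one. \cref{fact:expectationBounds} only gives the reverse, $\Exp_{\Dobs}[f]\le\frac{1}{\alpha}\Exp_{\cD}[f]$, and the direction you need fails because $\mathrm{d}\cD/\mathrm{d}\Dobs=\Exp_{\cD}[p_{\Sstar}]/p_{\Sstar}(x)$ is unbounded.

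Here is a concrete instance. Let $d=2$, $x_1$ uniform on $\{1,2\}$, $x_2\sim\cN(0,1)$ independent, $\wStar=(W,0)$, $\Sstar=[W-1,W+1]$. All assumptions hold with $W$-independent constants: $\alpha\approx 0.34$, $\Exp_{\Dobs}[xx^\top]\succeq I$, and $\sigma,\Lambda,M,\norm{\mu}_2=O(1)$, so $\norm{\hw-\wStar}_2=O(1)$. Yet $T=[2W-1,2W+1]$ has base mass at least $0.34$ and reference mass $e^{-\Omega(W^2)}$. Indeed, under $\Dobs$ the event $x_1=2$ has probability $e^{-\Omega(W^2)}$, and for $x_1=1$ the mean $\inangle{\hw,x}$ reaches $2W$ only if $|x_2|=\Omega(W)$. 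Consequently every admissible $(s,q)$ has $(\eps' s)^{-q}\ge e^{\Omega(W^2)}$ at your accuracy $\eps'=\alpha\eps$, and \cref{thm:unionsOfIntervalsPAC} only gives $n$ exponential in $\norm{\wStar}_2^2$. Re-deriving \cref{lem:smoothness} for the mixed pair cannot help, since the pair is genuinely non-smooth. Nor can you make the reference $\cD$-weighted, because only $x\sim\Dobs$ is observed.

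The repair goes the other way: keep the base $\cD_y(\wStar)$, whose error is exactly the target and to which \cref{lem:smoothness} applies, and absorb the oracle mismatch by domination. Let $P$ be the observed $y$-law. For $A\subseteq\Sstar$ one has $\cD_y(\wStar;A)=\Exp_{\cD}[p_{\Sstar}(x)\,p(x,\wStar;A)]/\Exp_{\cD}[p_{\Sstar}(x)]\le P(A)$, and $\cD_y(\wStar;\Sstar)=\Exp_{\Dobs}[p_{\Sstar}(x)]\ge\alpha$, so $\cD^\star_+\le P/\alpha$. In the pessimistic-ERM analysis, positives serve only to keep $\ind_{\Sstar}$ feasible and, via realizable uniform convergence, to bound the mass of $\Sstar$ missed by consistent hypotheses. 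So this should cost only a factor $1/\alpha$ in the false-negative term.

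Step 3 also misses the stated radius. The union bound gives $\sqrt{\log(n/\delta)}$ with $\log(n/\delta)\approx q\log\frac{1}{\eps}+\log k+\log\frac{1}{\delta}$. You therefore pick up a factor $\sqrt{q}$, which is polynomial in $\Lambda,\frac{1}{\rho},\frac{1}{\alpha},\sigma,\norm{\mu}_2$, and a factor $\sqrt{\log(1/\delta)}$; neither is polylogarithmic. The paper avoids any dependence on $n$. It learns $S'$ to accuracy $\eps/2$ and outputs $S=S'\cap[-R,R]$, where $[-R,R]$ contains the interval of \cref{lem:intervalBound} (placed via \cref{lem:xMeanDistanceBound}), so $\Exp_{\Dobs}[p_{[-R,R]}(x)]\ge 1-\eps/2$. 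An inclusion--exclusion computation then gives $\Exp_{\Dobs}[p_{S\triangle\Sstar}(x)]\le\Exp_{\Dobs}[p_{S'\triangle\Sstar}(x)]+1-\Exp_{\Dobs}[p_{[-R,R]}(x)]\le\eps$.
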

As we discussed in \cref{sec:results}, our main algorithm for set learning is \cref{alg:unions}, which is an efficient implementation of \cref{alg:piu} of \cite{lee2025learning} for the special case where our hypothesis class contains unions of at most $k$ intervals on the real line. Note that, by our definition for the distribution $\cD_y(w)$ in \cref{subsec:learningTruncationSet}, for any measurable $T  \subseteq  \R$, we can write the mass $\cD_y(w; T)$ that $\cD_y(w)$ places on $T$ as
\[
    \cD_y(w; T)
    \coloneqq \int
        \ind\inbrace{y \in T}
        \cN(y; \inangle{w, x}, 1)
        \Dobs(x)
        \odif{x,y}
    = \Exp_{x \sim \Dobs} \insquare{p(x, w; T)}\,,
\]
where we used the definition of the survival probability $p(x, w; T)$, see \cref{def:survivalProbability}. In words, the mass that $\cD_y(w)$ places on a set $T$ is, by definition, the probability that $y \in T$ if we draw $x \sim \Dobs$ and then draw $y \sim \cN(\inangle{w, x}, 1)$. Hence, our objective of learning a set $S$ such that $\cD_y(w; \Sstar \triangle S) \leq \eps$ is equivalent to the relation $\Exp_{x \sim \Dobs} \insquare{p(x, w; \Sstar \triangle S)} \leq \eps$ appearing in the statement of \cref{thm:setLearningGuarantee} above. However, \cref{thm:setLearningGuarantee} is stronger than \cref{thm:unionsOfIntervalsPAC} which we presented in \cref{subsec:PACLearningUnions}, in the sense that it also guarantees that the output set $S$ is fully contained within a bounded radius from the origin; this will be useful to us for technical reasons in our proofs. 

Given an approximation $S$ as in \cref{thm:setLearningGuarantee}, the guarantees of the second part of our algorithm (\ie{},\ estimating $\wStar$) are given in the following.

\begin{theorem} [Parameter Estimation Given $S\approx \Sstar$] \label{thm:reduction}
    Suppose that \cref{asmp:mass,asmp:moments,asmp:conservation,asmp:subGaussian,asmp:unionOfIntervals} hold, define $\mu \coloneqq \Exp_{\cD} x$, and let $\zeta > 0$ and $\delta \in (0, \nfrac{1}{2})$.
    Let $S \subset \R$ be a set satisfying the guarantees of \cref{thm:setLearningGuarantee} for some $\eps$ such that 
    \[
        \eps
        \leq e^{
            - \poly \inparen{
                \Lambda, 
                M,
                \frac{1}{\rho},
                \frac{1}{\alpha}
            }
        }
        \cdot \poly \inparen{
            \zeta,
            \frac{1}{d},
            \frac{1}{\sigma + \norm{\mu}_2},
            \frac{1}{1 + \norm{\wStar}_2}
        } \;.
    \]
    There exists an algorithm that, given sample access to the truncated regression model (\cref{def:truncatedRegression}), takes as input the set $S$\footnote{In particular, we require knowledge of the endpoints of the $\leq \nfrac{k}{\eps}$ intervals constituting $S$. A simple membership oracle to $S$ does not suffice to run the subroutine of \cref{lem:truncatedGaussianSampler} later on.}, the error $\eps$ and the constants $\zeta, \delta$, and outputs a vector $\bar{w}$. The algorithm draws $N$ i.i.d.\ samples from the truncated regression model, where
    \[
        N =  e^{
            \widetilde{O}\inparen{
                \frac{
                    M^4 \Lambda^4
                }{
                    \rho^6 \alpha^3
                }
            }
        }
        \cdot \wt{O} \inparen{
            \frac{
                d
                \cdot (\sigma + \norm{\mu}_2)^4
                \cdot (1 + \norm{\wStar}_2)^2
            }{
                \zeta^2
            }
            \cdot \log \frac{1}{\delta}
        } \;.
    \]
    With probability at least $1 - \delta$, the output of the algorithm satisfies
    \[
        \norm{\bar{w} - \wStar}_2 \leq \zeta
    \]
    and furthermore the algorithm terminates in time
    \[
        e^{
            \poly \inparen{
                \Lambda, 
                M,
                \frac{1}{\rho},
                \frac{1}{\alpha}
            }
        }
        \cdot \poly\inparen{
            \norm{\wStar}_2,
            \norm{\mu}_2,
            \sigma,
            d,
            k,
            \frac{1}{\zeta},
            \log \frac{1}{\delta}
        }\;.
    \]
\end{theorem}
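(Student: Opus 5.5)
We will prove \cref{thm:reduction} by projected stochastic gradient descent on the perturbed pseudo-NLL $\negLL_S$, restricted to the projection set $\cP \coloneqq B(\hw, r)$ with $r = O\bigl(\frac{\sigma+\norm{\mu}_2}{\rho^2\alpha}\bigr)$. We then boost the confidence by repetition and a median-type aggregation. The argument has three parts.

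\textbf{Step 1 (geometry of $\negLL_S$ on $\cP$).} First, the OLS warm start $\hw$ satisfies $\norm{\hw-\wStar}_2 \le O(1)$ with probability $1-\delta/3$. This follows from the sub-exponential concentration of $y\cdot x$ (\cref{lem:truncatedSubExp}) and of $xx^\top$ under $\Dobs$ (\cref{lem:truncatedXSubgaussianity}), combined with \cref{asmp:conservation} and \cref{lem:productMeanDistanceBound}, which bounds the population OLS bias by $O(\Lambda/(\alpha\rho^2))$. Consequently $\wStar\in\cP$, which gives requirement R1.

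Next, for every $w\in\cP$ the Hessian is $\nabla^2\negLL_S(w)=\Exp_{\Dobs}[\var_{z\sim\cN(\inangle{w,x},1,S)}[z]\,xx^\top]$. We lower bound it as follows.
\begin{itemize}
\item By \cref{lem:truncatedVarianceLB}, the variance term is at least $p(x,w;S)^2/12$.
\item By \cref{lem:survivalProbabilityBounds}, $p(x,w;S)$ is comparable to $p(x,\wStar;S)$, up to a factor $e^{-\inangle{w-\wStar,x}^2}$. This factor is controlled by sub-Gaussianity on a high-probability event.
\item Since $\Exp_{\Dobs}p(x,\wStar;\Sstar\triangle S)\le\eps$, the survival probability $p(x,\wStar;S)$ is close to $p_{\Sstar}(x)$ outside a small-probability set of $x$.
\end{itemize}
Restricting to $x$ with $p_{\Sstar}(x)$ bounded below (using \cref{asmp:mass}) and combining with \cref{asmp:conservation} gives $\nabla^2\negLL_S\succeq \lambda I$ for some $\lambda=e^{-\poly(\Lambda,M,1/\rho,1/\alpha)}$. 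The bad events are handled with the fourth-moment bound $M$ and Cauchy--Schwarz.

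\textbf{Step 2 (near-stationarity of $\wStar$).} The gradient is $\nabla\negLL_S(\wStar)=\Exp_{\Dobs}\bigl[(\Exp_{\cN(\inangle{\wStar,x},1,S)}z-\Exp_{\cN(\inangle{\wStar,x},1,S\cap\Sstar)}z)\,x\bigr]$. For most $x$, the two truncated means differ by $\poly(\eps)$, because the truncation sets differ by a set of small mass relative to $p_{\Sstar}(x)$. By Markov's inequality, this holds except for $x$ in a set of $\Dobs$-probability $\sqrt{\eps}$. On that exceptional set, the difference is large but bounded by $O(L+|\inangle{\wStar,x}|)$, using $S\subseteq[-L,L]$ from \cref{thm:setLearningGuarantee} together with \cref{lem:gaussianMeanDistanceBound}. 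Cauchy--Schwarz with sub-Gaussian $x$ then yields $\norm{\nabla\negLL_S(\wStar)}_2\le \wt O(\sqrt d\,\eps^{c})$ for some constant $c>0$. Strong convexity gives $\norm{w_S-\wStar}_2\le \norm{\nabla\negLL_S(\wStar)}/\lambda\le\zeta/3$, where $w_S$ is the minimizer of $\negLL_S$ over $\cP$, once $\eps$ is as small as assumed.

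\textbf{Step 3 (optimization).} The stochastic gradient at $w$ is formed by drawing $(x,y)$, conditioning on $y\in S$ (rejecting otherwise), sampling $z\sim\cN(\inangle{w,x},1,S)$ via the explicit interval endpoints of $S$, and returning $(z-y)x$. The rejection step introduces a bias of size at most $\poly(d)\eps^{c'}$, because samples with $y\in\Sstar\setminus S$ are discarded and reweight $\Dobs$. We will also bound the second moment by $\poly(d,\sigma,\norm{\mu},\norm{\wStar})\cdot e^{\poly}$, using \cref{lem:truncatedSubExp}.

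The standard analysis of PSGD for strongly convex objectives, with step sizes $1/(\lambda t)$ and a bias term, then gives $\Exp\norm{\bar w-w_S}^2\le G^2\log T/(\lambda^2T)+O(\norm{b}/\lambda)$. Choosing $T$ as stated and applying Markov's inequality, each run lands within $\zeta/3$ of $w_S$ with probability $3/4$. Running $K=O(\log 1/\delta)$ independent copies and applying the aggregation rule of \cref{alg:wStar} boosts the success probability to $1-\delta$.

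\textbf{Main obstacle.} The main difficulty is the bias and gradient analysis in Steps 2 and 3 on the exceptional $x$-sets. Since $p(x,\wStar;S)$ can be tiny, the truncated means can be unbounded. The plan is to handle this by clipping with $L$ and using sub-Gaussian tail integration, while ensuring that rejection sampling on $S$ remains efficient on average (expected cost $\lesssim 1/\alpha$).
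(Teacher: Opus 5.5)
Your statistical argument matches the paper's. It uses the OLS warm start and ball $\cP$, a Hessian lower bound from \cref{lem:truncatedVarianceLB,lem:survivalProbabilityBounds}, the good-set/bad-set bound on $\norm{\nabla\negLL_S(\wStar)}_2$ with clipping by $S\subseteq[-L,L]$, biased PSGD, and the $3/5$-aggregation. Your Step~2 claim that the two truncated means differ by $\poly(\eps)$ is true but needs an actual inequality: either \cref{lem:expectationDifferenceBound}, or Cauchy--Schwarz against the likelihood ratio, which gives $\lesssim\sqrt{(1+\log\frac{1}{p_S})(\frac{p_S}{p_{S\cap\Sstar}}-1)}$.

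Your gradient oracle differs from the paper's (one draw, reject $y\notin S$, reuse the same $x$), and it is sound. After acceptance, $y\mid x\sim\cN(\inangle{\wStar,x},1,S\cap\Sstar)$ exactly, and the $x$-marginal is $\Dobs$ reweighted by $p_{S\cap\Sstar}(x)/p_{\Sstar}(x)$. That reweighting is controlled by $\Exp_{\Dobs}[p_{\Sstar\setminus S}(x)/p_{\Sstar}(x)]\le 2\sqrt{\eps/\alpha}$, the same quantity as in \cref{lem:tvDistanceBound}. Since $|z|,|y|\le L$, the bias and second-moment bounds follow by Cauchy--Schwarz.

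\textbf{The gap is the running time of drawing $z\sim\cN(\inangle{w,x},1,S)$ at an arbitrary iterate $w\in\cP$.} Your bound ``expected cost $\lesssim 1/\alpha$'' is valid only in two situations:
\begin{itemize}
\item for the rejection on $y\in S$, whose acceptance probability is at least $1-2\sqrt{\eps/\alpha}$;
\item in the idealized case $w=\wStar$, $S=\Sstar$, where $\Exp_{\Dobs}[1/p_{\Sstar}(x)]=1/\Exp_{\cD}[p_{\Sstar}(x)]\le1/\alpha$.
\end{itemize}
If $z$ is drawn by rejection, the expected cost is $\Exp[1/p(x,w;S)]$, and this can be infinite. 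Take $d=1$, $\cD=\cN(0,1)$, $\wStar=0$, $S=\Sstar=[-1,1]$. Then $\Dobs=\cN(0,1)$, $\rho=\Lambda=1$, $\alpha\approx0.68$, so $\cP$ has radius $6\Lambda/(\rho^2\alpha)>8$. Yet for every $w$ with $|w|>1$ we have $p(x,w;S)\le e^{-(|w||x|-1)^2/2}$ whenever $|w||x|\ge1$, so $\Exp_{x\sim\Dobs}[1/p(x,w;S)]=\infty$. Early PSGD iterates with step size $1/(\lambda t)$ are projected onto the boundary of $\cP$, so they do reach such $w$.

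If instead you use an endpoint-based sampler, its cost (\cref{lem:truncatedGaussianSampler}) is $\poly(\log\frac{1}{p(x,w;S)},k')$. You then still need a lower bound on $p(x,w;S)$ that holds, with high probability over $x$, for every $w\in\cP$. The proposal does not supply one; this is the content of \cref{lem:survivalProbabilityLowerBound}, proved as follows:
\begin{itemize}
\item lower bound $\Exp_{\Dobs}[p(x,w;S)]\ge e^{-q}$ using the $w$-shift inequality of \cref{lem:survivalProbabilityBounds}, Jensen, and $\Exp_{\Dobs}[p_S(x)]\ge\alpha-\eps$;
\item find $x^\ast$ with $p(x^\ast,w;S)\ge e^{-q}/2$ inside a ball carrying $\Dobs$-mass $1-\eta$;
\item transfer to every $x$ in that ball via the $x$-shift inequality of the same lemma, giving $p(x,w;S)\ge\exp(-O(\cdot)\,(d+\log\frac{1}{\eta}))$;
\item union bound over the $TK$ iterations with $\eta=\delta/(TK)$.
\end{itemize}
Your accepted $x$ has density at most $1/\Pr[y\in S]\le2$ relative to $\Dobs$, so this argument carries over to your oracle. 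Without this step, the running-time bound in the statement, polynomial in $\log\frac{1}{\delta}$, is not established.
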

In other words, if we choose our set learning error $\eps$ to be sufficiently small in the first step of the algorithm, then we can efficiently approximate $\wStar$ to high accuracy. Combining \cref{thm:setLearningGuarantee} and \cref{thm:reduction} immediately yields the desired \cref{thm:main}, simply by first running the algorithm of \cref{thm:setLearningGuarantee} with error $\eps$ as given in \cref{thm:reduction}, and then running the algorithm of \cref{thm:reduction}. Hence, in the remainder of this section, we will present the proofs of these two intermediate theorems. 

\subsection{Proof of Set Estimation Guarantees (\cref{thm:setLearningGuarantee})} \label{subsec:setEstimation}
As we discussed in \cref{sec:results}, our approach for learning the truncation set $\Sstar$ relies on (i) an efficient algorithm (\cref{alg:unions}) for positive-only learning unions of intervals (\cref{thm:unionsOfIntervalsPAC}) and (ii) a way to obtain a distribution which is smooth w.r.t.\ the untruncated distribution of the response $y$ (\cref{lem:smoothness}). In this section, we first prove the correctness and efficiency of \cref{alg:unions} in \cref{subsubsec:setLearningCorrectness}, thus showing \cref{thm:unionsOfIntervalsPAC}, and then we prove our key \cref{lem:smoothness} for establishing smoothness of a distribution in \cref{subsubsec:smoothnessProof}. Afterwards, in \cref{subsubsec:setLearningWarmStart}, we examine how we can make use of a warm start $w$ which is a constant distance away from $\wStar$, in order to actually sample from such a smooth distribution. Finally, in \cref{subsubsec:puttingEverythingTogether}, we combine our previous results with some additional analysis to establish our desired \cref{thm:setLearningGuarantee}.

\subsubsection{Proof of \cref{thm:unionsOfIntervalsPAC} (Correctness of Set Learning Algorithm under Smoothness)} \label{subsubsec:setLearningCorrectness}

We begin by proving \cref{thm:unionsOfIntervalsPAC}, which is our main result for positive-only learning of unions of intervals under smoothness.

\begin{proof}[Proof of \cref{thm:unionsOfIntervalsPAC}]
    Consider running \cref{alg:piu} (whose guarantees are given in \cref{thm:piu}) in this setting, and we will show that this is equivalent to running \cref{alg:unions}. Hence, the correctness of our learner follows from the correctness of \Cref{alg:piu}. We begin by giving an efficient procedure for implementing the ERM step of \cref{alg:piu}. At any iteration $i$ of \cref{alg:piu}, let $y^{(1)} \leq \ldots \leq y^{(n)}$ be the (ordered) positive samples of $P \subset \R$, and let $Q \coloneqq U \cap S_i \subset \R$ be the unlabeled points that survive. Recall that our objective is to find a union of at most $k$ intervals which contain all points in $P$ and as few points of $Q$ as possible. Towards this, for any $1 \leq i \leq n-1$, let $Q^{(i)} \coloneqq Q \cap (y^{(i)}, y^{(i+1)})$ be the points of $Q$ between $y^{(i)}$ and $y^{(i+1)}$, and further let $Q^{(0)} \coloneqq Q \cap (-\infty, y^{(0)})$ and $Q^{(n)} \coloneqq Q \cap (y^{(n)}, \infty)$, so that $Q = \cup_{i=0}^{n} Q^{(i)}$.

    We can assume w.l.o.g.\ that the optimal solution to our ERM problem is a union of $k$ disjoint intervals with endpoints in $P$. Indeed, if a solution contains an interval with endpoints not in $P$, we can simply shrink the interval until we hit the closest points of $P$, and this maintains feasiblity (\ie{},all points of $P$ are still covered by the solution) while making the solution no worse (\ie{},we can only cover fewer points of $Q$). Also, if a solution contains fewer than $k$ intervals, we can pick any one of these intervals which covers at least 3 points of $R$\footnote{Such an interval always exists since $r < n$.}, and split it into two intervals by removing a sub-interval of the form $(y^{(i)}, y^{(i+1)})$, again maintaining feasibility while making the solution no worse. 
    
    Therefore, to optimally solve our ERM problem, it suffices to consider unions of $k$ disjoint intervals with endpoints in $P$, and we need to find the one which covers the least points of $Q$. It is clear that we can do this via the following greedy procedure: we start with the interval $[y^{(1)}, y^{(n)}]$, and we remove the $k-1$ sub-intervals of the form $(y^{(i)}, y^{(i+1)})$ corresponding to the $k-1$ sets $Q^{(i)}$ with the most points.

    We now claim that, when our hypothesis class is $\cH_k$, \cref{alg:piu} can be implemented in a \emph{single} iteration, instead of requiring $\nfrac{1}{(\eps s)^q}$ iterations. To see this, let $U^{(i)} \coloneqq U \cap (y^{(i)}, y^{(i+1)})$ be the unlabeled points of $U$ which are between $y^{(i)}$ and $y^{(i+1)}$, and define the re-indexing $j_1, \ldots, j_{n-1}$ so that $\abs{U^{(j_1)}} \geq \ldots \geq \abs{U^{(j_{n-1})}}$. Consider running the first iteration of \cref{alg:piu}, and implementing the ERM step as detailed above. That is, we start from the interval $[y^{(1)}, y^{(n)}]$, and we remove the sub-intervals corresponding to the sets $U^{(j_1)}, \ldots, U^{(j_{k-1})}$. Then, the set $H_1$ defined in \cref{alg:piu} will be 
    \[
        H_1 \coloneqq \insquare{
            y^{(1)}, y^{(n)}
        } 
        \setminus \inparen{
            \bigcup_{\ell=1}^{k-1} \inparen{
                y^{(j_\ell)}, 
                y^{(j_{\ell+1})}
            }
        }.
    \]
    Consider subsequently running the second iteration of \cref{alg:piu}, and again implementing the ERM step as above. Now, we wish to minimize the points of $U \cap H_1$ that are included in our solution. We again start from $[y^{(1)}, y^{(n)}]$, however, since $H_1$ already does not contain the sub-intervals corresponding to the sets $U^{(j_1)}, \ldots, U^{(j_{k-1})}$, we instead remove the sub-intervals corresponding to the next $k-1$ biggest sets, \ie{},$U^{(j_k)}, \ldots, U^{(j_{2k-2})}$. Thus, the set $H_2$ defined in \cref{alg:piu} will be
    \[
        H_2 \coloneqq \insquare{
            y^{(1)}, y^{(n)}
        } 
        \setminus \inparen{
            \bigcup_{\ell=k}^{2k-2} \inparen{
                y^{(j_\ell)}, 
                y^{(j_{\ell+1})}
            }
        }\,.
    \]
    In the next iteration, we will minimize the points of $U \cap H_1 \cap H_2$ contained in our solution, and so on. Hence, we can inductively apply the above argument and obtain that
    \[
        H_i \coloneqq \insquare{
            y^{(1)}, y^{(n)}
        } 
        \setminus \inparen{
            \bigcup_{\ell=(i-1)(k-1)+1}^{i(k-1)} \inparen{
                y^{(j_\ell)}, 
                y^{(j_{\ell+1})}
            }
        }\,.
    \]
    Therefore, the final set returned by \cref{alg:piu} will be
    \[
        \bigcap_{i=1}^{T} H_i
        = \insquare{
            y^{(1)}, y^{(n)}
        } 
        \setminus \inparen{
            \bigcup_{\ell=1}^{T(k-1)} \inparen{
                y^{(j_\ell)}, 
                y^{(j_{\ell+1})}
            }
        }\,.
    \]
    where $T = \nfrac{1}{(\eps s)^q}$. Overall, to implement \cref{alg:piu} for the hypothesis class $\cH_k$, it suffices to find the $\frac{k-1}{(\eps s)^q}$ sub-intervals $(y^{(i)}, y^{(i+1)})$ containing the most unlabeled points of $U$. This can clearly be done by performing a single pass through the data to count the unlabeled points in each sub-interval, and then sorting these counts, as described in \cref{alg:unions}. Finally, the sample complexity follows immediately from \cref{thm:piu} and from the fact that $\vc(\cH_k) = k + 1$.
\end{proof}

\subsubsection{Proof of \cref{lem:smoothness} (Smoothness Property in Truncated Linear Regression)} \label{subsubsec:smoothnessProof}

Next, we prove \cref{lem:smoothness}, which shows how to obtain the smooth distribution required by \cref{thm:setLearningGuarantee} in our truncated regresion setting. Recall from our discussion in \cref{subsec:learningTruncationSet} that, for any $w \in \R$ we define the distribution $\cD_y(w)$ as follows: consider the distribution over $(x, y) \in \R^d \times \R$ where the marginal of $x$ is the $\Dobs$ and the marginal of $y$ conditioned on $x$ is an untruncated Gaussian $\cN(\inangle{w, x}, 1)$. Then $\cD_y(w)$ is the unconditional distribution of $y \in \R$. 

To show \cref{lem:smoothness}, we need to relate the mass $\cD_y(\wStar, T)$ that $\cD_y(\wStar)$ places on some set $T \subseteq \R$ to the mass $\cD_y(w, T)$ that $\cD_y(w)$ places (for some $w \in \R^d$) on the same set $T$. Concretely, we need to ensure that there does not exist a set $T$ such that $\cD_y(\wStar)$ places significantly more mass on $T$ compared to $\cD_y(w)$. At a high level, there are two reasons why this might not hold: (i) the mean of $\cD_y(w)$ can be much further away from $T$ compared to the mean of $\cD_y(\wStar)$, and (ii) the distribution $\Dobs$ of the features $x$ projected on $w$ might be significantly different compared to projecting on $\wStar$. As we will see in the proof, the first issue is taken care of by upper and lower bounds on the tail of the (Gaussian) noise (which ensure that $\cD_y(w, T)$ will be lower bounded even if the mean of $\cD_y(\wStar)$ is significantly shifted) , while the second is resolved by sub-Gaussianity of $x$ (which ensures concentration around the mean of $\Dobs$ in all directions).

\lemSmoothness*
\begin{proof}[Proof of \cref{lem:smoothness}]
    By \cref{lem:survivalProbabilityBounds}, it follows that
    \begin{align*}
        \cD_y(w; T)
        &\coloneqq \int
            \ind\inbrace{y \in T}
            \cN(y; \inangle{w, x}, 1)
            \Dobs(x)
            \odif{x,y} \\
        &= \Exp_{\Dobs} \insquare{p(x, w; T)} \\
        &\gtrsim \Exp_{\Dobs} \insquare{
            p(x, \wStar; T)^2 \exp\inparen{
                -\inangle{w - \wStar, x}^2
            }
        }\,.
    \end{align*}
    We denote $p_T(x) \coloneqq p(x, \wStar; T)$ and $\tw \coloneqq w - \wStar$ for brevity. Since the quantity inside the expectation is positive, for any $t > 0$ we have that
    \begin{align*}
        \cD_y(w; T)
        &\gtrsim \Exp_{\Dobs}\insquare{
            p_T(x)^2 \cdot 
            \exp\inparen{
                -\inangle{\tw, x}^2
            } \cdot 
            \ind\inbrace{
                \abs{\inangle{\tw, x}} \leq t
            }
        } \\
        &\geq e^{- t^2} \Exp_{\Dobs}\insquare{
            p_T(x)^2 \cdot 
            \ind\inbrace{
                \abs{\inangle{\tw, x}} \leq t
            }
        } \\
        &= e^{- t^2} \inparen{
            \Exp_{\Dobs}\insquare{
                p_T(x)^2
            }
            - \Exp_{\Dobs}\insquare{
                p_T(x)^2 \cdot 
                \ind\inbrace{
                    \abs{\inangle{\tw, x}} > t
                }
            }
        } \\
        &\geq e^{- t^2} \inparen{ 
            \Exp_{\Dobs}\insquare{
                p_T(x)^2
            }
            - \sqrt{
                \Exp_{\Dobs}\insquare{
                    p_T(x)^4
                }
                \Pr_{\Dobs} \inparen{
                    \abs{\inangle{\tw, x}} > t
                }
            }
        }
    \end{align*}
    where the last step is by the Cauchy--Schwarz inequality. Note that, by \cref{lem:truncatedXSubgaussianity} and standard sub-Gaussian concentration (see Proposition 2.6.1 of \cite{vershynin2018high}), we have that for any $t > 0$
    \[
        \Pr\inparen{
            \abs{\inangle{\tw, x}} > t
        }
        \leq 2\exp\inparen{
            -\frac{c t^2}{K^2 \norm{\tw}_2^2}
        }
        \leq 2\exp\inparen{
            -\frac{c t^2}{K^2 D^2}
        }
    \]
    where 
    $
        K = \wt{\Theta}\inparen{
            \sqrt{
                \sigma^2 \log\frac{1}{\alpha}
                + \norm{\mu}_2^2
            }
        }
    $
    and $c$ is some absolute constant. Therefore, the above gives
    \[
        \cD_y(w; T)
        \gtrsim e^{- t^2}
        \inparen{
            \Exp_{\Dobs}\insquare{
                p_T(x)^2
            }
            - \exp\inparen{
                -\frac{c t^2}{2 K^2 D^2}
            }
            \sqrt{
                \Exp_{\Dobs}\insquare{
                    p_T(x)^4
                }
            }
        }\,.
    \]
    We now pick
    \[
        t \coloneqq \Theta\inparen{
            K D \sqrt{
                \log \inparen{
                    \frac{
                        \Exp_{\Dobs}\insquare{
                            p_T(x)^4
                        }^{1/2}
                    }{
                        \Exp_{\Dobs}\insquare{
                            p_T(x)^2
                        }
                    }
                }
            }
        }
    \]
    where $t > 0$ due to Jensen's inequality. Substituting our choice into the above relation gives
    \begin{align*}
        \cD_y(w; T)
        &\gtrsim \inparen{
            \frac{
                \Exp_{\Dobs}\insquare{
                    p_T(x)^2
                }
            }{
                \Exp_{\Dobs}\insquare{
                    p_T(x)^4
                }^{1/2}
            }
        }^{\Theta(K^2 D^2)}
        \cdot \Exp_{\Dobs}\insquare{
            p_T(x)^2
        } \\
        &\geq \inparen{
            \Exp_{\Dobs}\insquare{
                p_T(x)^2
            }
        }^{\Theta(K^2 D^2) + 1} \\
        &\geq \inparen{
            \Exp_{\Dobs}\insquare{
                p_T(x)
            }
        }^{\Theta(K^2 D^2 + 1)} \\
        &= \insquare{
            \cD_y(\wStar; T)
        }^{\Theta(K^2 D^2 + 1)}
    \end{align*}
    where the second line is because $p_T(x) \leq 1$, the third is by Jensen's inequality, and the last is by recalling the definition of $\cD_y$.
\end{proof}

Hence, using \cref{lem:smoothness} we can employ our set learning algorithm by using
\begin{itemize}
    \item $y \sim \cD^\star_+$ as the outcomes of true truncated linear regression model with parameter $w^\star$, \ie{}, $\cD^\star_+ = \cD_y(w^\star)$, and,

    \item $y \sim \cD$ as the distribution $\cN(\inangle{w, x},1)$ where $x$ is drawn from the true truncated linear regression model with parameter $w^\star$. Hence $\cD = \cD_y(w)$ with $x \sim \Dobs$.
\end{itemize}
Note that the quality of the smoothness depends crucially on the distance between $w$ and $w^\star$.

\subsubsection{Set Learning given Warm Start} \label{subsubsec:setLearningWarmStart}
In order to find the smooth distribution required to run the algorithm of \cref{thm:unionsOfIntervalsPAC}, by \cref{lem:smoothness}, it suffices to find a warm start $\hw$ which is a constant distance away from $\wStar$, and we show how to do this in \cref{lem:wHatWStarCloseness} later on. Combining the above, we obtain the following intermediate result for approximately learning $\Sstar$.

\begin{corollary}[Estimating the truncation set] \label{cor:setEstimation}
    Suppose \cref{asmp:mass,asmp:subGaussian,asmp:conservation,asmp:moments,asmp:unionOfIntervals} hold. For $\eps, \delta \in (0, \nfrac{1}{2})$, let 
    \[
        n \coloneqq \wt{O}\inparen{
            \eps^{
                - \wt{\Theta}\inparen{
                    \frac{
                        \Lambda^2
                    }{
                        \rho^4 \alpha^2
                    }
                    \inparen{
                        \sigma + \norm{\mu}_2
                    }^2
                }
            }
            \cdot \inparen{k + \log{\frac{1}{\delta}}}
        }
    \]
    where $\mu \coloneqq \Exp_{\cD} x$. There exists an algorithm which draws $2n$ independent samples from the truncated regression model and outputs a set $S$, such that, with probability at least $1 - \delta$:
    \[
        \Exp_{x \sim \Dobs} \insquare{p(x, \wStar; \Sstar \triangle S)}
        \leq \eps\,.
    \]
    Moreover, the algorithm runs in time $O(n \log n)$, and its output $S$ is a union of at most 
    $
        k \eps^{
            - \wt{\Theta}\inparen{
                \frac{
                    \Lambda^2
                }{
                    \rho^4 \alpha^2
                }
                \inparen{
                    \sigma + \norm{\mu}_2
                }^2
            }
        }
    $ 
    intervals.
\end{corollary}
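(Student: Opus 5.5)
The plan is to combine three ingredients: the warm start of \cref{lem:wHatWStarCloseness}, the smoothness property of \cref{lem:smoothness}, and the efficient positive-only learner of \cref{thm:unionsOfIntervalsPAC}.

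\textbf{Warm start.} First I draw $n_0$ samples and compute the OLS estimate $\hw$. By \cref{lem:wHatWStarCloseness} (taken as given), with probability at least $1-\delta/2$ we have $\norm{\hw-\wStar}_2\le D$ with $D=O\inparen{\Lambda^2/(\rho^2\alpha)}$, up to factors of $\sigma+\norm{\mu}_2$ as stated there. The sample cost $n_0$ is polynomial and is absorbed into $n$.

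\textbf{Smoothness of the reference distribution.} Conditioned on this event, I apply \cref{lem:smoothness} with $w=\hw$. It shows that $\cD\coloneqq\cD_y(\hw)$ is $(s,q)$-smooth with respect to $\cD^\star\coloneqq\cD_y(\wStar)$. Here $s=\Theta(1)$ and
\[
q=\Theta\inparen{D^2(\sigma^2\log\tfrac1\alpha+\norm{\mu}_2^2)}=\wt\Theta\inparen{\tfrac{\Lambda^4}{\rho^4\alpha^2}(\sigma+\norm{\mu}_2)^2}.
\]
The lemma's side condition $\Exp_{\Dobs}[p(x,\wStar;T)]>0$ needs no separate treatment: if it fails, the smoothness inequality holds trivially.

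\textbf{Sampling both oracles.} Sample access to $\cD^\star_+=\cD^\star|_{\Sstar}$ is exactly the observed $y$ in the truncated model. To see this, note that conditioning the pair $x\sim\Dobs$, $y\sim\cN(\inangle{\wStar,x},1)$ on $y\in\Sstar$ multiplies the $x$-marginal by $p_{\Sstar}(x)$. Since $\Dobs$ already carries one factor of $p_{\Sstar}$, the result carries a factor $p_{\Sstar}^2$, so I must double-check this identification.

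A cleaner route is to redefine the base distribution by taking $x\sim\cD$ (not $\Dobs$). With that choice, conditioning on $y\in\Sstar$ gives exactly the model's $y$-marginal. The corollary's error metric is in terms of $\Dobs$, however, so I need to relate the two. For any $T$,
\[
\Exp_{\Dobs}[p_T]\le \tfrac1\alpha\Exp_{\cD}[p_T]
\]
by \cref{fact:expectationBounds}. Hence a guarantee under $x\sim\cD$ transfers to $\Dobs$ after rescaling $\eps$ by $\alpha$.

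Smoothness must then hold for base $x\sim\cD$. I would rerun the proof of \cref{lem:smoothness} with $\cD$ in place of $\Dobs$, using sub-Gaussianity of $\cD$ directly. Samples from $\cD_y(\hw)$ with $x\sim\cD$ are not directly available, though, since we only see $x\sim\Dobs$. I expect this mismatch between what we can sample and the right base measure to be the main obstacle.

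\textbf{Resolving the mismatch.} My first attempt is to keep $\cD^\star=\cD_y(\wStar)$ with $x\sim\Dobs$ as in the text, and to show that the observed $y$-law is dominated by it. The observed $y$-marginal has density proportional to $\Exp_{\Dobs}[\cN(y;\inangle{\wStar,x},1)]\ind\{y\in\Sstar\}$, which matches $\cD^\star_+$ exactly, since the model returns $x\sim\Dobs$ and $y$ given $x$ truncated to $\Sstar$. On reflection this identification does hold, and \cref{thm:piu} only requires samples from $\cD^\star$ restricted to the target.

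\textbf{Applying the learner.} With both oracles in hand, I invoke \cref{thm:unionsOfIntervalsPAC} with target $\Sstar\in\cS_k$, accuracy $\eps$, and confidence $\delta/2$. It uses $n=\wt O\inparen{(\eps s)^{-2q}(k+\log\frac1\delta)}$ samples, which is the stated bound, and runs in time $O(n\log n)$. Its output is a union of at most $\frac{k-1}{(\eps s)^q}+1\le k\eps^{-\wt\Theta(\cdot)}$ intervals. The error guarantee $\Pr_{y\sim\cD^\star}(y\in\Sstar\triangle S)\le\eps$ is precisely $\Exp_{\Dobs}[p(x,\wStar;\Sstar\triangle S)]\le\eps$.

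A union bound over the warm-start event and the learner's failure gives overall success probability at least $1-\delta$.
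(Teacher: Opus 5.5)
\paragraph{The identification of the observed $y$-law is false.}
Your route is the paper's: OLS warm start, then \cref{lem:smoothness} with $w=\hw$, then \cref{thm:unionsOfIntervalsPAC} with the model's responses as positives and $z\sim\cN(\inangle{\hw,x},1)$ (with $x$ drawn from the model) as the reference. However, the identification you settle on under ``Resolving the mismatch'' is false; your first computation was the right one. Write $p_T\coloneqq p(\cdot,\wStar;T)$. Truncating $y\mid x$ to $\Sstar$ divides by $p_{\Sstar}(x)$, so the observed law $P_{\obs}$ of $y$ is
\[
    P_{\obs}(A)=\Exp_{x\sim\Dobs}\insquare{\frac{p_{A\cap\Sstar}(x)}{p_{\Sstar}(x)}}\,,
    \qquad
    \text{density}\;\propto\;\Exp_{x\sim\cD}\insquare{\cN(y;\inangle{\wStar,x},1)}\,\ind\inbrace{y\in\Sstar}\,.
\]
By contrast, $\cD_y(\wStar)$ restricted to $\Sstar$ has density $\propto\Exp_{x\sim\cD}\insquare{p_{\Sstar}(x)\,\cN(y;\inangle{\wStar,x},1)}\ind\inbrace{y\in\Sstar}$. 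These differ unless $p_{\Sstar}$ is constant. So the model does not supply positives for $\cD^\star=\cD_y(\wStar)$, and \cref{thm:unionsOfIntervalsPAC} cannot be invoked as you do. The paper's text makes the same identification without comment, so you are faithful to it, but the step still needs repair.

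Moving the base to $x\sim\cD$ is not the repair. The reference is only samplable with $x\sim\Dobs$. Take $d=1$, $\cD=\cN(0,1)$, $\wStar=R$, $\Sstar=[-R,R]$, $T=[2R,\infty)$. The $\cD$-based law gives $T$ mass $\Theta(1)$, while $\cD_y(w;T)=e^{-\Omega(R^2)}$ for every $w$ within $O(1)$ of $\wStar$. Hence $(\eps s)^{-2q}$ would be exponential in $\norm{\wStar}_2^2$.

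\paragraph{A repair that changes only the analysis.}
Keep the algorithm and use the hybrid base $\cD^\star(A)\coloneqq c\,P_{\obs}(A)+\cD_y(\wStar;A\setminus\Sstar)$, where $c\coloneqq\Exp_{\Dobs}[p_{\Sstar}(x)]=\Exp_{\cD}[p_{\Sstar}^2]/\Exp_{\cD}[p_{\Sstar}]\ge\alpha$ by Jensen.
\begin{itemize}
\item Since $\cD_y(\wStar;\Sstar)=c$, this is a probability measure, and its normalized restriction to $\Sstar$ is exactly $P_{\obs}$. So the model's responses are legitimate positives for it.
\item Split on $\inbrace{p_{\Sstar}\le t}$ and use $\Pr_{\Dobs}(p_{\Sstar}\le t)\le t/\alpha$. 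This gives $P_{\obs}(A)\le t/\alpha+\cD_y(\wStar;A)/t$, hence $\cD^\star(A)\le 3\sqrt{\cD_y(\wStar;A)/\alpha}$. Together with \cref{lem:smoothness}, the reference $\cD_y(\hw)$ is $(\sqrt{\alpha s}/3,\,2q)$-smooth w.r.t.\ this $\cD^\star$.
\item For the error, $\Exp_{\Dobs}[p_{\Sstar\setminus S}]\le P_{\obs}(\Sstar\setminus S)=\cD^\star(\Sstar\setminus S)/c$, and the part outside $\Sstar$ is unchanged. Therefore $\Exp_{\Dobs}[p_{\Sstar\triangle S}]\le\cD^\star(\Sstar\triangle S)/\alpha$.
\end{itemize}
Running the learner at accuracy $\alpha\eps$ with these parameters gives the corollary. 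The changes to $s$, $q$ and $\eps$ are absorbed into the $\wt\Theta$ exponent.

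Separately, \cref{lem:wHatWStarCloseness} gives $D=6\Lambda/(\rho^2\alpha)$, not $\Lambda^2/(\rho^2\alpha)$. This is what produces $\Lambda^2$ rather than $\Lambda^4$ in the stated exponent.
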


\subsubsection{Putting Everything Together} \label{subsubsec:puttingEverythingTogether}
We now show how to strengthen this result to obtain \cref{thm:setLearningGuarantee}, which also ensures that $S$ is contained within a bounded radius from the origin. To do so, we first establish that the truncated regression model places most of the mass of the response $y$ on an interval of bounded radius.
\begin{lemma} \label{lem:intervalBound}
    Suppose that \cref{asmp:mass,asmp:moments,asmp:subGaussian} hold. Let $\hmu \coloneqq \Exp_{\Dobs} x$, and define the interval
    \[
        T \coloneqq \insquare{
            \inangle{\wStar, \hmu} - L,
            \inangle{\wStar, \hmu} + L
        }
    \quadwhere
        L \leq \wt{O}\inparen{
            \inparen{
                1 + \norm{\wStar}_2
            }
            \inparen{
                \sigma + \frac{\Lambda}{\alpha}
            }
            \sqrt{
                \log\frac{1}{\eps}
            }
        }\,.
    \]
    Then it holds that
    \[
        \Exp_{x \sim \Dobs} \insquare{p_T(x)} \geq 1 - \frac{\eps}{2}
    \]
    where we denote $p_T(x) \coloneqq p(x, \wStar, T)$ for short.
\end{lemma}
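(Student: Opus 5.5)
We will show that the \emph{untruncated} response $y = \inangle{\wStar, x} + \xi$, with $x \sim \Dobs$ and $\xi \sim \cN(0,1)$ independent, lies in $T$ with probability at least $1 - \nfrac{\eps}{2}$. By \cref{def:survivalProbability}, $\Exp_{x \sim \Dobs}\insquare{p_T(x)} = \Pr\inparen{y \in T}$, so it suffices to bound
\[
    \Pr\inparen{\abs{y - \inangle{\wStar, \hmu}} > L}
    \leq \Pr\inparen{\abs{\inangle{\wStar, x - \hmu}} > \nfrac{L}{2}}
    + \Pr\inparen{\abs{\xi} > \nfrac{L}{2}} .
\]
This reduces the lemma to two tail bounds.

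For the first term, we apply the second claim of \cref{lem:truncatedXSubgaussianity} with the unit vector $v = \wStar / \norm{\wStar}_2$. If $\wStar = 0$, the term vanishes. Otherwise, $\inangle{\wStar, x - \hmu}$ is sub-Gaussian with $\psi_2$-norm at most
\[
    K \coloneqq \norm{\wStar}_2 \cdot \wt{O}\inparen{\sigma + \nfrac{\Lambda}{\alpha}} .
\]
Standard sub-Gaussian concentration (Proposition 2.6.1 of \cite{vershynin2018high}) then gives $\Pr\inparen{\abs{\inangle{\wStar, x - \hmu}} > \nfrac{L}{2}} \leq 2\exp\inparen{-c L^2 / K^2}$. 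This is at most $\nfrac{\eps}{4}$ once $L \gtrsim K \sqrt{\log(8/\eps)}$.

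For the second term, the Gaussian tail gives $\Pr\inparen{\abs{\xi} > \nfrac{L}{2}} \leq 2 e^{-L^2/8}$. This is at most $\nfrac{\eps}{4}$ once $L \geq \sqrt{8 \log(8/\eps)}$.

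Both requirements are met by
\[
    L = \wt{O}\inparen{(1 + \norm{\wStar}_2)\inparen{\sigma + \nfrac{\Lambda}{\alpha}}\sqrt{\log\nfrac{1}{\eps}}} .
\]
Here we use $\sigma \geq 1$ to absorb the noise contribution into the same expression, and we use $\eps < \nfrac{1}{2}$ so that $\log(8/\eps) \lesssim \log(1/\eps)$. Summing the two terms gives $\Pr(y \notin T) \leq \nfrac{\eps}{2}$, which is the claim.

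We do not expect a real obstacle. The only point needing care is that we center at $\hmu = \Exp_{\Dobs} x$ and not at $\mu$. This is exactly why we invoke the centered bound of \cref{lem:truncatedXSubgaussianity}, which carries the $\nfrac{\Lambda}{\alpha}$ term coming from \cref{lem:xMeanDistanceBound}. The logarithmic factors hidden in that lemma's $\wt{O}$ are absorbed into the $\wt{O}$ in the definition of $L$.
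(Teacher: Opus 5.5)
Your proof is correct and takes essentially the same route as the paper. Both control $\inangle{\wStar, x - \hmu}$ through the centered bound of \cref{lem:truncatedXSubgaussianity} and control the noise through a Gaussian tail, each contributing failure probability $\eps/4$; the only difference is that the paper conditions on $\abs{\inangle{\wStar, x - \hmu}} \leq r$, bounds the mass of $T$ under the worst-case shifted mean, and multiplies to get $(1 - \eps/4)^2 \geq 1 - \eps/2$, whereas you apply a direct union bound with the split $L/2 + L/2$.
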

\begin{proof}
    First, by \cref{lem:truncatedXSubgaussianity} and standard subgaussian concentration (see Proposition 2.6.1 of \cite{vershynin2018high}), %
    \[
        \Pr_{\Dobs} \inparen{
            \abs{\inangle{\wStar, x - \hmu}} \leq r
        }
        \geq 1 - \frac{\eps}{4}
    \]
    for 
    $
        r = \wt{O}\inparen{
            \inparen{
                1 + \norm{\wStar}_2
            }
            \inparen{
                \sigma + \frac{\Lambda}{\alpha}
            }
            \sqrt{
                \log\frac{1}{\eps}
            }
        }
    $.
    Now, let $\cP$ be the distribution over $(x, y) \in \R^d \times \R$ where the marginal of $x$ is $\Dobs$ and the marginal of $y$ conditioned on $x$ is an \emph{untruncated} Gaussian $\cN(\inangle{\wStar, x}, 1)$. It is immediate that
    \begin{align*}
        \Exp_{\Dobs} \insquare{p_T(x)}
        &= \Pr_{\cP} \inparen{y \in T} \\
        &\geq \Pr_{\cP} \inparen{
            y \in T
            \given \abs{\inangle{\wStar, x - \hmu}} \leq r
        }
        \cdot \Pr_{\Dobs} \inparen{
            \abs{\inangle{\wStar, x - \hmu}} \leq r
        } \\
        &\geq \inparen{
            1 - \frac{\eps}{4}
        } \cdot
        \Pr_{\cP} \inparen{
            y \in T
            \given \abs{\inangle{\wStar, x - \hmu}} \leq r
        }\,.
    \end{align*}
    To bound this last probability, observe that since $T$ is symmetric around $\inangle{\wStar, \hmu}$, and the marginal of $y$ conditioned on $x$ is $\cN(\inangle{\wStar, x}, 1)$, the probability that $y \in T$ is minimized when $\inangle{\wStar, x}$ is as far away from $\inangle{\wStar, \hmu}$ as possible. Hence, the conditional probability above is at least the probability that $y \in T$ when $\inangle{\wStar, x} = \inangle{\wStar, \hmu} + r$ a.s., that is
    \begin{align*}
        \Pr_{\cP} \inparen{
            y \in T
            \given \abs{\inangle{\wStar, x - \hmu}} \leq r
        }
        &\geq \Pr_{y \sim \cN\inparen{
            \inangle{\wStar, \hmu} + r, 1
            }
        } \inparen{
            y \in T
        } \\
        &= \Pr_{z \sim \cN(0, 1)} \inparen{
            - L - r
            \leq z 
            \leq L - r
        } \\
        &\geq \Pr_{z \sim \cN(0, 1)} \inparen{
            \abs{z} 
            \leq L - r
        } \\
        &\geq 1 - 2\exp\inparen{
            - \frac{(L - r)^2}{2}
        } \\
        &\geq 1 - \frac{\eps}{4}
    \end{align*}
    where the second line is by definition of the set $T$, the second-to-last is by Gaussian concentration, and the last is because $L = O\inparen{r + \sqrt{\log\nfrac{1}{\eps}}}$. Therefore, combining our last two displays yields
    \[
        \Exp_{\Dobs} \insquare{p_T(x)}
        \geq \inparen{
            1 - \frac{\eps}{4}
        }^2
    \]
    and the lemma follows by using the inequality $\inparen{1 - \nfrac{t}{2}}^2 \geq 1 - t$.
\end{proof}
Combining our above results, we can finally obtain a set estimation algorithm which satisfies all of our desired guarantees.
\begin{proof}[Proof of \cref{thm:setLearningGuarantee}]
    Denote $I \coloneqq [-R,R]$ and $p_T(x) \coloneqq p(x, \wStar; T)$ for brevity. We first use the algorithm of \cref{cor:setEstimation} to learn a set $S'$ such that
    $
        \Exp_{\Dobs} \insquare{
            p_{S' \triangle \Sstar}(x)
        }
        \leq \frac{\eps}{2}
    $,
    and then we return $S \coloneqq S' \cap I$. We will now analyze the quality of the estimator $S$, \ie{}, the loss $\Exp_{\Dobs} \insquare{
            p_{S \triangle \Sstar}(x)}$.
            
    Let $T \subseteq \R$ be the interval and $L$ be the radius defined in \cref{lem:intervalBound}. Also, let $\hmu \coloneq \Exp_{\Dobs} x$. By \cref{lem:xMeanDistanceBound}, it follows that:
    \[
        \abs{\inangle{\wStar, \hmu}} + L
        \leq \abs{\inangle{\wStar, \mu}} + \frac{\Lambda}{\alpha} + L
        \leq \norm{\wStar}_2 \norm{\mu}_2 + \frac{\Lambda}{\alpha} + L
        \leq R\,.
    \]
    Therefore, $T \subseteq I$, and by \cref{lem:intervalBound} it follows that
    $
        \Exp_{\Dobs} \insquare{p_I(x)} 
        \geq 1 - \frac{\eps}{2}\,.
    $
    Thus, to prove the claim, observe that
    \begin{align*}
        \Exp_{\Dobs} \insquare{p_{S \triangle \Sstar}(x)}
        &= \Exp_{\Dobs} \insquare{p_{(S' \cap I) \triangle \Sstar}(x)} \\
        &= \Exp_{\Dobs} \insquare{p_{(S' \cap I) \cup \Sstar}(x)}
        - \Exp_{\Dobs} \insquare{p_{S' \cap I \cap \Sstar}(x)}\,.
    \end{align*}
    To bound the first term, we have
    \[
        p_{(S' \cap I) \cup \Sstar}(x)
        = p_{(S' \cup \Sstar) \cap (\Sstar \cup I)}(x)
        \leq p_{S' \cup \Sstar}(x)
    \]
    and for the second term we have
    \begin{align*}
        p_{I \cap (S' \cap \Sstar)}(x)
        &\geq p_{I}(x) 
        + p_{S' \cap \Sstar}(x)
        - p_{I \cap (S' \cap \Sstar)}(x) \\
        &= p_{I}(x) 
        + p_{S' \cup \Sstar}(x)
        - p_{S' \triangle \Sstar}(x)
        - p_{I \cap (S' \cap \Sstar)}(x) \\
        &\geq p_{I}(x) 
        + p_{S' \cup \Sstar}(x)
        - p_{S' \triangle \Sstar}(x)
        - 1
    \end{align*}
    so substituting into our above display we obtain
    \[
        \Exp_{\Dobs} \insquare{p_{S \triangle \Sstar}(x)}
        \leq \Exp_{\Dobs} \insquare{
            p_{S' \triangle \Sstar}(x)
        } 
        - \Exp_{\Dobs} \insquare{
            p_I(x)
        }
        + 1
        \leq \eps
    \]
    since
    $
        \Exp_{\Dobs} \insquare{
            p_{S' \triangle \Sstar}(x)
        }
        \leq \nfrac{\eps}{2}
    $.
\end{proof}

\subsection{Proof of Parameter Estimation Guarantees (\cref{thm:reduction})} \label{subsec:parameterEstimationProof}
Throughout this section, we will assume that we have computed an approximation $S \approx \Sstar$ satisfying the guarantees of \cref{thm:setLearningGuarantee} with some set learning error $\eps$, using the machinery of the previous section. Given such an $S$, we will show how to obtain an estimate for the desired parameter vector $\wStar$, thus proving \cref{thm:reduction}. As we discussed in \cref{subsec:learningPSGD}, our algorithm is PSGD on the following perturbed negative log-likelihood objective:
\[
    \negLL_S(w) \coloneqq - \Exp_{x \sim \Dobs} \insquare{
        \Exp_{
            y \sim \cN\inparen{\inangle{\wStar, x}, 1, S \cap \Sstar}
        } \insquare{
            \log \cN\inparen{y; \inangle{w, x}, 1, S}
            \given x
        }
    }\,.
    \tag{Perturbed NLL}
\]
In all that follows, we will drop the inner conditioning when it is clear from context. The reason why the inner expectation is over $\cN\inparen{\inangle{\wStar, x}, 1, S \cap \Sstar}$ instead of $\cN\inparen{\inangle{\wStar, x}, 1, \Sstar}$ (from which our samples $y$ are actually drawn conditioned on $x$) is because, unless $\Sstar \subseteq S$, in the second case the distribution would place mass in regions where the quantity inside the expectation is undefined. Furthermore, note that the perturbed NLL is different from simply substituting $S$ for $\Sstar$ in the expression of the negative log-likelihood, \ie{}, it is different from the expression
\[
    - \Exp_{x \sim \Dobs} \insquare{
        \Exp_{
            y \sim \cN\inparen{\inangle{\wStar, x}, 1, S}
        } \insquare{
            \log \cN\inparen{y; \inangle{w, x}, 1, S}
            \given x
        }
    }\,.
\]
The reason for this choice is that, to run our PSGD algorithm, we will need to implement a gradient sampler for $\negLL_S$, and this will require us to approximately sample from the distribution in the expression of $\negLL_S$. That is, we will need to approximately sample from the distribution over $(x, y) \in \R^d \times \R$ where the marginal of $x$ is $\Dobs$ and the marginal of $y$ conditioned on $x$ is $\cN(\inangle{\wStar, x}, 1, S \cap \Sstar)$, and we will see that this is indeed possible. However, if we instead used the latter display for the expression of the perturbed likelihood, the marginal of $y$ conditioned on $x$ in the distribution we wish to sample from would be $\cN(\inangle{\wStar, x}, 1, S)$. Unless $S \subseteq \Sstar$, it is unclear how we would be able to sample from the desired distribution in this case; indeed, we only have sample access to the truncated regression model, and this model only places mass for $y$ inside of $\Sstar$.

As we discussed, in the general case, the perturbed log-likelihood is not a log-likelihood and, hence, may not have any desirable typical properties of log-likelihood, such as a global minimum at $\wStar$. Through the analysis in the following sections, we will establish that the perturbed NLL indeed satisfies properties that allow us to optimize it and obtain an estimate for $\wStar$.

\subsubsection{Gradient and Hessian of Perturbed NLL}
    \label{sec:gradientHessian}
To begin with, we give the expressions for the gradient and Hessian of the perturbed log-likelihood. These expressions can be verified by direct computation, and we present the details in \cref{subsec:gradientHessianProof}.
\begin{restatable}[Gradient and Hessian of $\negLL_S$]{lemma}{lemGradientHessian} 
    \label{lem:gradientHessian}
    For any measurable set $S\subseteq \R$, and any $w \in \R^d$
    \begin{align*}
        \nabla \negLL_S(w) &= \Exp_{x \sim \Dobs} \insquare{
            - \Exp_{
                y \sim \cN\inparen{
                    \inangle{\wStar, x}, 1, \Sstar \cap S
                }
            } \insquare{
                y \cdot x
            }
            + \Exp_{
                z \sim \cN\inparen{
                    \inangle{w, x}, 1, S
                }
            } \insquare{
                z \cdot x
            }
        } \,,\\
        \nabla^2 \negLL_S(w) &= \Exp_{x \sim \Dobs} \insquare{
            \var_{
                z \sim \cN\inparen{
                    \inangle{w, x}, 1, S
                }
            } \insquare{z} \cdot x x^\top
        }\,.
    \end{align*} 
\end{restatable}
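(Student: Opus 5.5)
The plan is to compute pointwise in $x$ and then pass the derivatives through the outer expectation over $\Dobs$. Fix $x$ and set $m \coloneqq \inangle{w,x}$. Write the log of the truncated density explicitly:
\[
    \log \cN(y; m, 1, S) = -\tfrac{1}{2}(y-m)^2 - \tfrac12\log(2\pi) - \log \cN(S; m, 1)\,,
    \qquad y\in S\,.
\]
The inner expectation is over $y \sim \cN(\inangle{\wStar,x},1,\Sstar\cap S)$, which is supported in $S$, so this expression is well defined on the support. Crucially, the sampling distribution does not depend on $w$. Hence differentiating in $w$ only touches the integrand.

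\textbf{Gradient.} The first term gives $\nabla_w \tfrac12 (y-m)^2 = -(y-m)x$. For the normalizer,
\[
    \nabla_w \log \cN(S;m,1) = \frac{\int_S (z-m)\,\cN(z;m,1)\,\mathrm{d}z}{\cN(S;m,1)}\, x = \big(\Exp_{z\sim\cN(m,1,S)}[z]-m\big)\,x\,.
\]
Adding the two, with the overall minus sign, the $m x$ terms cancel. This leaves
\[
    -\Exp_{y}[y\,x] + \Exp_{z\sim\cN(m,1,S)}[z\,x]\,,
\]
which is the stated gradient after taking $\Exp_{x\sim\Dobs}$.

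\textbf{Hessian.} Only the second term depends on $w$. Write $\Exp_{z\sim \cN(m,1,S)}[z] = \int_S z\, e^{-(z-m)^2/2}\,\mathrm{d}z \,\big/ \int_S e^{-(z-m)^2/2}\,\mathrm{d}z$ and differentiate in $m$ using the quotient rule. The numerator's derivative is $\int_S z(z-m)\,\cdot$ and the denominator's is $\int_S (z-m)\,\cdot$. This gives
\[
    \frac{\mathrm{d}}{\mathrm{d}m}\Exp[z] = \Exp[z(z-m)] - \Exp[z]\,\Exp[z-m] = \var[z]\,,
\]
which is the usual exponential-family identity that the derivative of the mean equals the variance. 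The chain rule, via $\nabla_w m = x$, then yields $\var_{z}[z]\,xx^\top$.

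\textbf{Main obstacle: interchanging derivatives with the integrals.} Two interchanges need justification: differentiation under the inner Gaussian integrals over $S$, and differentiation under the outer $\Dobs$-expectation.

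For the inner integrals, the integrands $|z|^j e^{-(z-m)^2/2}$ for $j\le 2$ are dominated locally uniformly in $m$ by an integrable Gaussian envelope. Dominated convergence therefore applies.

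For the outer expectation, the pointwise gradient is bounded by $\big(|\Exp[y]| + |\Exp[z]|\big)\norm{x}$. By \cref{lem:gaussianMeanDistanceBound}, this is at most $\big(|\inangle{\wStar,x}| + |\inangle{w,x}| + O(\sqrt{\log(1/p)})\big)\norm{x}$.

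This bound can blow up where the masses $p(x,w;S)$ or $p(x,\wStar;S\cap\Sstar)$ are tiny, so here I would be careful. The safe route is to state the lemma for $w$ where these expectations are finite. Under sub-Gaussianity of $\Dobs$ (\cref{lem:truncatedXSubgaussianity}) combined with \cref{lem:survivalProbabilityBounds}, $\log(1/p(x,w;S))$ grows at most quadratically in $\inangle{w,x}$. That yields an integrable local envelope in $w$, and the interchange follows by dominated convergence. The Hessian is handled the same way, using $\var[z] \le O(1+\log(1/p))$ from \cref{lem:truncatedNormalSubgaussianity}.
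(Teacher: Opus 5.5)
Your proposal is correct and follows the paper's proof. Both fix $x$ and differentiate the log truncated density; the paper writes it in exponential-family form $-\tfrac{y^2}{2} + y\langle w,x\rangle - \log\int_S e^{-z^2/2 + z\langle w,x\rangle}\,\mathrm{d}z$ and you use the centered form, which makes no difference. Both then get the Hessian from the quotient rule, showing the derivative of the truncated mean is its variance. The only addition is yours: the paper just asserts the exchange of differentiation and expectation, and your dominated-convergence argument justifies it. Note that this argument tacitly needs $S$ and $S\cap\Sstar$ to have positive Lebesgue measure, and it uses the paper's standing moment/sub-Gaussian assumptions, which the lemma itself does not state.
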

Note that $\nabla^2 \negLL_S (w) \succeq 0$ for all $w \in \R^d$, hence $\negLL_S$ is convex, and we can indeed hope to optimize it via PSGD. However, since we care about converging to the actual minimizer of $\negLL_S$ (and not just converging in function value to the minimum), we additionally need to ensure that $\negLL_S$ is \emph{strongly} convex in an area around its minimizer. 

\subsubsection{Local Strong Convexity of $\negLL_S$}
    \label{sec:strongConvexity}
In order to show strong convexity of $\negLL_S$ around its minimizer, we begin by establishing a lower bound on the eigenvalues of its Hessian.
\begin{restatable}{lemma}{lemStrongConvexity}\label{lem:strongConvexity}
    Suppose \cref{asmp:mass,asmp:moments,asmp:conservation} hold. For any $w \in \R^d$ and any set $S \subseteq \R$ satisfying \cref{eq:closenessGuaranteeS} for some $\eps > 0$, it holds that
    \[
        \nabla^2 \negLL_{S}(w) \succeq \Omega\inparen{
            \frac{
                \alpha^5 \rho^{10}
            }{
                \Lambda^8
            } \cdot e^{
                - \frac{
                    10 M^4
                }{
                    \rho^2 \alpha
                } \norm{w - \wStar}_2^2
            } 
            - \frac{
                M^{4/3} \Lambda^{2/3} \eps^{1/3}
            }{
                \alpha^{2/3}
            }
        } \cdot I\,.
    \]
\end{restatable}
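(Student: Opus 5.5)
Fix a unit vector $v$. By \cref{lem:gradientHessian},
\[
v^\top \nabla^2 \negLL_S(w) v = \Exp_{x\sim\Dobs}\insquare{\var_{z\sim\cN(\inangle{w,x},1,S)}[z]\,\inangle{v,x}^2}.
\]
By \cref{lem:truncatedVarianceLB}, the variance is at least $p(x,w;S)^2/12$. So the goal is to lower bound $\Exp_{\Dobs}[p(x,w;S)^2\inangle{v,x}^2]$. I will do this in three steps: first replace $S$ by $\Sstar$ at a cost controlled by $\eps$, then replace $w$ by $\wStar$ at a cost that is exponential in $\norm{w-\wStar}_2^2$, and finally use \cref{asmp:conservation}.

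\textbf{Step 1 (replacing $S$ by $\Sstar$).} Use the pointwise bound $p(x,w;S)\ge p(x,w;\Sstar)-p(x,w;S\triangle\Sstar)$. Squaring gives $p_S^2\ge p_{\Sstar}^2-2p_{S\triangle\Sstar}$, where all survival probabilities here are at $w$. The error term to control is therefore $\Exp_{\Dobs}[p(x,w;S\triangle\Sstar)\inangle{v,x}^2]$.

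\textbf{Step 2 (the error term; main obstacle).} The difficulty is that \cref{eq:closenessGuaranteeS} only controls $\Exp_{\Dobs}[p(x,\wStar;S\triangle\Sstar)]\le\eps$, at the parameter $\wStar$ rather than at $w$, and without the factor $\inangle{v,x}^2$.

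I first drop $\inangle{v,x}^2$ using Hölder with exponents $(3,3/2)$:
\[
\Exp[p\inangle{v,x}^2]\le \Exp[p^3]^{1/3}\,\Exp[\inangle{v,x}^3]^{2/3}\le \Exp[p]^{1/3}\,\Exp_{\Dobs}[\inangle{v,x}^4]^{1/2}.
\]
By \cref{fact:expectationBounds} and \cref{asmp:moments}, $\Exp_{\Dobs}[\inangle{v,x}^4]^{1/2}\le M^2/\alpha^{1/2}$. This accounts for the $M^{4/3}$ and $\eps^{1/3}$ in the statement.

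To handle the change of parameter inside $\Exp[p]$, I will not change the parameter in the error term at all. Instead, I apply Step 1 after Step 3, so that the set replacement happens at parameter $\wStar$:
\[
p(x,w;S)\gtrsim p(x,\wStar;S)^2e^{-\inangle{w-\wStar,x}^2}
\]
by \cref{lem:survivalProbabilityBounds}, and then compare $p(x,\wStar;S)$ with $p(x,\wStar;\Sstar)$. The error is then $\Exp_{\Dobs}[p(x,\wStar;S\triangle\Sstar)\inangle{v,x}^2]\lesssim \eps^{1/3}M^{4/3}\Lambda^{2/3}\alpha^{-2/3}$, where the exact powers come from the Hölder split with the second-moment bound $\Lambda$. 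The exponential factor only decreases this term, so it can be bounded by $1$ there.

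\textbf{Step 3 (main term).} I need to lower bound $\Exp_{\Dobs}[p_{\Sstar}(x)^4 e^{-2\inangle{w-\wStar,x}^2}\inangle{v,x}^2]$. I will restrict to the event $G=\{p_{\Sstar}(x)\ge c\alpha\rho^2/\Lambda^2,\ \inangle{w-\wStar,x}^2\le C\norm{w-\wStar}^2M^4/(\rho^2\alpha)\}$. On $G$ the integrand is at least
\[
(\alpha\rho^2/\Lambda^2)^4\, e^{-O(M^4\norm{w-\wStar}^2/(\rho^2\alpha))}\,\inangle{v,x}^2.
\]
It remains to show $\Exp_{\Dobs}[\inangle{v,x}^2\ind_G]\gtrsim\rho^2$. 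Since $\Exp_{\Dobs}\inangle{v,x}^2\ge\rho^2$ by \cref{asmp:conservation}, it suffices that the mass removed by $G^c$ is at most $\rho^2/2$. I bound each part of $G^c$ separately:

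$\bullet$ On $\{p_{\Sstar}<\tau\}$, the $\Dobs$-density is at most $p_{\Sstar}\cD/\alpha$, so this part contributes at most $\tau\Lambda^2/\alpha$.

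$\bullet$ On the large-$\inangle{w-\wStar,x}$ part, Cauchy--Schwarz with the fourth moment $M^4/\alpha$ and Markov on $\inangle{w-\wStar,x}^2$ (second moment $\le\Lambda^2\norm{w-\wStar}^2/\alpha$) bounds the contribution by $\rho^2/4$ for the stated threshold.

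Combining Steps 1–3 yields the claimed form. The constant-level bookkeeping of the powers of $\alpha,\rho,\Lambda$ is routine.
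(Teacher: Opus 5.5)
Your skeleton matches the paper's: the variance bound from \cref{lem:truncatedVarianceLB}, then \cref{lem:survivalProbabilityBounds} to pass from $w$ to $\wStar$ with $S$ fixed, and only then replacing $S$ by $\Sstar$ at $\wStar$. However, the tail estimate in Step 3 fails as written. Write $u\coloneqq w-\wStar$ and $t=C\norm{u}_2^2M^4/(\rho^2\alpha)$. Cauchy--Schwarz plus Markov on the \emph{second} moment of $\inangle{u,x}$ gives
\[
\Exp_{\Dobs}\insquare{\inangle{v,x}^2\ind\{\inangle{u,x}^2>t\}}\le\sqrt{\frac{M^4}{\alpha}\cdot\frac{\Lambda^2\norm{u}_2^2}{\alpha t}}=\frac{\Lambda\rho}{\sqrt{C\alpha}},
\]
which is at most $\rho^2/4$ only if $C\ge 16\Lambda^2/(\alpha\rho^2)$. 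Since $\Lambda^2/(\alpha\rho^2)\ge 1$ can be arbitrarily large, this inflates the exponent to order $M^4\Lambda^2\norm{u}_2^2/(\rho^4\alpha^2)$. The exponent is not hidden by the $\Omega$, so this does not prove the lemma.

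The fix is to use the fourth moment instead. The bound $\ind\{\inangle{u,x}^2>t\}\le\inangle{u,x}^2/t$ together with Cauchy--Schwarz gives a contribution of at most $\Exp_{\Dobs}[\inangle{v,x}^2\inangle{u,x}^2]/t\le M^4\norm{u}_2^2/(\alpha t)$, so $t=4M^4\norm{u}_2^2/(\rho^2\alpha)$ suffices.

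There is a similar issue in the error term. Your $(3,\tfrac32)$ H\"older split yields $\eps^{1/3}M^2\alpha^{-1/2}$, which does not in general imply the stated $\eps^{1/3}M^{4/3}\Lambda^{2/3}\alpha^{-2/3}$. The stated term does follow from the three-factor H\"older inequality $\Exp[pX^2]\le\Exp[p]^{1/3}\Exp[X^2]^{1/3}\Exp[X^4]^{1/3}$. To get it, write $pX^2=p^{1/3}\cdot p^{2/3}|X|^{2/3}\cdot|X|^{4/3}$ and use $p\le 1$. Alternatively, use the paper's split at level $\gamma$ followed by Cauchy--Schwarz and Markov.

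With these repairs, your treatment of the main term is a genuinely different and more elementary route. The paper passes to $\cD$ via $\Exp_{\Dobs}[p_{\Sstar}^4g]\ge\Exp_{\cD}[p_{\Sstar}^5g]$ with $g=e^{-2\inangle{u,x}^2}\inangle{v,x}^2$. It then applies H\"older to get $\alpha^5\Lambda^{-8}(\Exp_{\Dobs}g)^5$, and lower-bounds $\Exp_{\Dobs}g$ by Jensen with respect to the density proportional to $\inangle{v,x}^2$; this is where $\rho^{10}$ and $e^{-10M^4\norm{u}_2^2/(\rho^2\alpha)}$ come from. Your good-event truncation avoids H\"older and Jensen entirely. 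It only needs that removing $\{p_{\Sstar}<\tau\}$ and $\{\inangle{u,x}^2>t\}$ costs at most half of $\Exp_{\Dobs}\inangle{v,x}^2\ge\rho^2$. It actually yields the slightly stronger bound $\Omega(\alpha^4\rho^{10}\Lambda^{-8})\,e^{-8M^4\norm{u}_2^2/(\rho^2\alpha)}$, which implies the stated one since $\alpha\le 1$.
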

The proof of \cref{lem:strongConvexity} is given in \cref{subsec:strongConvexityProof}. A couple of observations are in order. First, the above bound is only useful for sufficiently small values of $\eps$. This is not an issue, since $\eps$ is the set estimation accuracy we require from the algorithm of \cref{cor:setEstimation}, and therefore we can select it appropriately. Secondly, and more importantly, the above bound is only useful for parameter vectors $w$ that are a constant distance away from the true vector $\wStar$. In order to leverage the convergence of PSGD for strongly convex functions, \cref{lem:strongConvexity} implies that we need to ensure that PSGD remains within a constant distance of $\wStar$. A simple way to do so is to find some point $\hw$ a constant distance away from $\wStar$, and define our projection set to be a ball of constant radius around $\hw$. However, it is not trivial to find such a point $\hw$ in the first place. In the following section, we will see an efficient algorithm to find such a warm-start point.

\subsubsection{Efficiently Finding a Warm Start} \label{subsec:warmStart}
Our objective is to efficiently find an initial point $w_0$ that is a constant distance away from $\wStar$. To do so, we first recall the standard linear regression setting without truncation, where we are given $n$ datapoints $\inbrace{\inparen{x^{(i)}, y^{(i)}}}_{i \in [n]}$, and our objective is to find the Ordinary Least-Squares (OLS) minimizer, \ie{}
\[
    \hw \coloneqq \argmin_{w \in \R^d} \frac{1}{n}
    \sum_{i \in [n]} \inparen{
       \inangle{w, x^{(i)}} - y^{(i)}
    }^2 \;.
\]
In this setting, it is standard that writing down the first-order optimality condition for $\hw$ gives the following closed-form expression
\begin{equation} \label{eq:wHatDef}
    \hw \coloneqq \inparen{
        \frac{1}{n} \sum_i x^{(i)} x^{(i) \top}
    }^{-1} \cdot \inparen{
        \frac{1}{n} \sum_i y^{(i)} \cdot x^{(i)}
    } \;.
\end{equation}
Our main claim in this section is that, in our truncated regression setting, the OLS minimizer given in \cref{eq:wHatDef} is a good enough approximation of the true parameter vector $\wStar$, in the sense that $\norm{\hw - \wStar}_2 \leq O(1)$, \emph{in spite of} the presence of truncation.\footnote{Note that, by \cref{asmp:conservation}, it follows that $\Exp_{\Dobs}\insquare{x x^\top}$ is fully dimensional, and so if $n \geq d$ then $\frac{1}{n} \sum_i x^{(i)} x^{(i) \top}$ is invertible w.p.\ 1, hence $\hw$ above is well-defined.} The following lemma shows that this is indeed true, at least in the infinite-sample regime.
\begin{restatable}{lemma}{lemWTildeWStarCloseness}\label{lem:wTildeWStarCloseness}
    Suppose that \cref{asmp:mass,asmp:moments,asmp:conservation} hold, and define
    \begin{equation}\label{eq:wTildeDef}
        \tw
        \coloneqq \inparen{
            \Exp_{x \sim \Dobs} \insquare{x x^\top}
        }^{-1} \cdot
        \Exp_{x \sim \Dobs} \insquare{
            \Exp_{y \sim \cN(\inangle{\wStar, x}, 1, \Sstar)} \insquare{
                y
            } \cdot x
        }\,.
    \end{equation}
    Then it holds that
    \[
        \norm{\tw - \wStar}_2 \leq \frac{
            3 \Lambda
        }{
            2 \rho^2 \alpha
        }\,.
    \]
\end{restatable}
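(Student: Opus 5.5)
The plan is to write $\tw - \wStar$ as a single matrix--vector product and bound the two factors separately. Set $\Sigma_{\obs} \coloneqq \Exp_{x\sim\Dobs}[xx^\top]$. It is invertible by \cref{asmp:conservation}, since $\Sigma_{\obs}\succeq \rho^2 I$. Because $\wStar = \Sigma_{\obs}^{-1}\Sigma_{\obs}\wStar = \Sigma_{\obs}^{-1}\Exp_{\Dobs}[\inangle{\wStar,x}\,x]$, we get
\[
    \tw - \wStar = \Sigma_{\obs}^{-1}\, \Exp_{x\sim\Dobs}\insquare{\inparen{\Exp_{y\sim\cN(\inangle{\wStar,x},1,\Sstar)}[y] - \inangle{\wStar,x}}\cdot x} \eqqcolon \Sigma_{\obs}^{-1} b\,.
\]
Hence $\norm{\tw-\wStar}_2 \le \norm{\Sigma_{\obs}^{-1}}_{\op}\norm{b}_2 \le \rho^{-2}\norm{b}_2$.

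Next I bound $\norm{b}_2$. Write $\norm{b}_2 = \sup_{\norm{v}_2=1}\inangle{v,b}$. By the tower property, $\inangle{v,b} = \Exp\inangle{v,(y-\inangle{\wStar,x})x}$, where $(x,y)$ is drawn from the truncated model. This is exactly the quantity controlled by \cref{lem:productMeanDistanceBound}, which gives $|\inangle{v,b}|\le \tfrac{3\Lambda}{2\alpha}$ for every unit $v$. Therefore $\norm{b}_2\le \tfrac{3\Lambda}{2\alpha}$, and combining with the first paragraph gives $\norm{\tw-\wStar}_2\le \tfrac{3\Lambda}{2\rho^2\alpha}$, as claimed.

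The only step needing care is the identification of $b$ with the expression in \cref{lem:productMeanDistanceBound}. That lemma is itself proved by applying \cref{lem:gaussianMeanDistanceBound} pointwise in $x$, then \cref{fact:expectationBounds} to pass from $\Dobs$ to $\cD$ (which cancels the $p_{\Sstar}(x)$ weighting), then the bound $t(\sqrt{2\log(1/t)}+1)\le 3/2$, and finally \cref{asmp:moments}. So no new obstacle arises beyond checking that the operator-norm bound $\norm{\Sigma_{\obs}^{-1}}_{\op}\le\rho^{-2}$ follows from \cref{asmp:conservation}, which is immediate.
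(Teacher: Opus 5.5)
Your proposal is correct and follows essentially the same argument as the paper. Both write $\tw - \wStar = \Sigma_{\obs}^{-1}b$ using $\Sigma_{\obs}\wStar = \Exp_{\Dobs}[\inangle{\wStar,x}x]$, bound $\norm{\Sigma_{\obs}^{-1}}_{\op}\le\rho^{-2}$ via \cref{asmp:conservation}, and control $\norm{b}_2$ through \cref{lem:productMeanDistanceBound}; your operator-norm step is actually stated more cleanly than the paper's, which writes $\norm{\Exp_{\Dobs}[xx^\top]}_{\op}^{-1}$ where $\norm{(\Exp_{\Dobs}[xx^\top])^{-1}}_{\op}$ is meant.
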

\begin{proof}
    Note that, by \cref{asmp:conservation}, the matrix $\Exp_{\Dobs} \insquare{x x^\top}$ is invertible, so $\tw$ is well-defined. Furthermore, observe that $\Exp_{\Dobs} \insquare{x x^\top} \cdot \wStar = \Exp_{\Dobs} \insquare{\inangle{\wStar, x} \cdot x}$. Therefore
    \[
        \tw - \wStar
        = \inparen{
            \Exp_{x \sim \Dobs} \insquare{
                x x^\top
            }
        }^{-1} \cdot        
        \Exp_{x \sim \Dobs} \insquare{
            \inparen{
                \Exp_{y \sim \cN(\inangle{\wStar, x}, 1, \Sstar)} \insquare{
                    y
                } - \inangle{
                    \wStar, x
                }
            } \cdot x
        }
    \]
    By \cref{asmp:conservation} we have that $\Exp_{\Dobs} \insquare{x x^\top} \succeq \rho^2 \cdot I$, which implies that $0 \prec \inparen{\Exp_{\Dobs} \insquare{x x^\top}}^{-1} \preceq \frac{1}{\rho^2} \cdot I$. Thus, taking norms above and applying \cref{lem:productMeanDistanceBound}, we obtain
    \[
       \norm{\tw - \wStar}_2
        \leq \norm{
            \Exp_{x \sim \Dobs} \insquare{
                x x^\top
            }
        }_{\op}^{-1} \cdot
        \frac{3 \Lambda}{2 \alpha}
        \leq \frac{3 \Lambda}{2 \rho^2 \alpha}\,.
    \]
\end{proof}
In order to convert \cref{lem:wTildeWStarCloseness} into a finite-sample guarantee for $\hw$ in \cref{eq:wHatDef}, we will need to use concentration of the distribution of features, and in fact sub-Gaussianity (\cref{asmp:subGaussian}) suffices to show the following lemma, whose proof is given in \cref{sec:warmStartProofs}.
\begin{restatable}{lemma}{corWHatWStarCloseness} \label{lem:wHatWStarCloseness}
    Suppose that \cref{asmp:mass,asmp:moments,asmp:conservation,asmp:subGaussian} hold.  Let $\inbrace{\inparen{x^{(i)}, y^{(i)}}}_{i \in [n]}$ be \iid{}\ samples from the truncated regression model, and suppose that
    \begin{equation} \label{eq:initialPointSampleComplexity}
        n \geq \wt{\Omega} \inparen{
            \inparen{1 + \norm{\wStar}_2}^2 \cdot 
            \inparen{
                \sigma
                + \norm{\mu}_2
            }^4
            \cdot \frac{\Lambda^4}{\rho^8 \alpha^2} 
            \cdot d
            \cdot \log\frac{1}{\delta}
        }
    \end{equation}
    where $\mu \coloneqq \Exp_{\cD} x$. Let $\hw$ be defined as in \cref{eq:wHatDef}. Then with probability at least $1 - \delta$
    \[
        \norm{\hw - \wStar}_2 \leq \frac{6 \Lambda}{\rho^2 \alpha}\,.
    \]
\end{restatable}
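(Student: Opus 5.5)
We will prove \cref{lem:wHatWStarCloseness} by reducing to the population statement \cref{lem:wTildeWStarCloseness} via the triangle inequality,
$\norm{\hw - \wStar}_2 \le \norm{\hw - \tw}_2 + \norm{\tw - \wStar}_2$.
The second term is at most $\frac{3\Lambda}{2\rho^2\alpha}$. It therefore suffices to show that, with probability $1-\delta$, $\norm{\hw - \tw}_2 \le \frac{3\Lambda}{\rho^2\alpha}$; in fact we will aim for the stronger bound $O(\Lambda/(\rho^2\alpha))$ with a small constant.

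Write $\Sigma \coloneqq \Exp_{\Dobs}[xx^\top]$, $\hSigma \coloneqq \frac1n\sum_i x^{(i)}x^{(i)\top}$, $b \coloneqq \Exp[y x]$ and $\hat b \coloneqq \frac1n\sum_i y^{(i)}x^{(i)}$, so that $\tw = \Sigma^{-1}b$ and $\hw = \hSigma^{-1}\hat b$. We use the decomposition
\[
\hw - \tw = \hSigma^{-1}(\hat b - b) + \hSigma^{-1}(\Sigma - \hSigma)\tw .
\]

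The first step is covariance concentration. By \cref{lem:truncatedXSubgaussianity}, $x\sim\Dobs$ is sub-Gaussian with norm $K = O(\sigma\sqrt{\log(1/\alpha)} + \norm{\mu}_2)$. Standard covariance estimation for sub-Gaussian vectors (Vershynin, Thm.~4.6.1, applied to the non-centered second moment) then gives
\[
\norm{\hSigma - \Sigma}_{\op} \le O\inparen{K^2\inparen{\sqrt{(d+\log(1/\delta))/n} + (d+\log(1/\delta))/n}}
\]
with probability $1-\delta/2$. For $n \gtrsim K^4 d\log(1/\delta)/\rho^4$ this is at most $\rho^2/2$. Combined with \cref{asmp:conservation}, this yields $\hSigma \succeq \frac{\rho^2}{2} I$, hence $\norm{\hSigma^{-1}}_{\op} \le 2/\rho^2$.

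The second step bounds $\norm{\hat b - b}_2$. By \cref{lem:truncatedSubExp}, each projection $\inangle{v, yx - \Exp yx}$ is sub-exponential with norm $B = O((1+\norm{\wStar}_2)(\sigma^2\log(1/\alpha) + \norm{\mu}_2^2))$. Bernstein's inequality together with a $1/2$-net over the sphere (size $5^d$) gives
\[
\norm{\hat b - b}_2 \le O\inparen{B\inparen{\sqrt{(d+\log(1/\delta))/n} + (d+\log(1/\delta))/n}}
\]
with probability $1-\delta/2$. For the first term in the decomposition to be at most $\Lambda/(\rho^2\alpha)$, it suffices that $n \gtrsim B^2\alpha^2 d\log(1/\delta)/\Lambda^2$.

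The third step handles the term $(\Sigma - \hSigma)\tw$. Here we use $\norm{\tw}_2 \le \norm{\wStar}_2 + \frac{3\Lambda}{2\rho^2\alpha}$, together with the step-one bound on $\norm{\Sigma - \hSigma}_{\op}$ sharpened to $O(\rho^2\alpha\Lambda/(\rho^2\alpha\norm{\tw}_2+\Lambda))$. This requires $n$ to scale like $K^4(1+\norm{\wStar}_2)^2\Lambda^{-2}\rho^{-4}\alpha^{-2}\cdot d\log(1/\delta)$ up to the $\Lambda/(\rho^2\alpha)$ factors.

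Collecting these requirements and bounding $K \lesssim \sigma + \norm{\mu}_2$ (absorbing $\log(1/\alpha)$ into $\wt{\Omega}$), as well as $B \lesssim (1+\norm{\wStar}_2)(\sigma+\norm{\mu}_2)^2$, the sample size in \cref{eq:initialPointSampleComplexity} dominates all three requirements. A union bound over the two events then finishes the proof.

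The main obstacle is the bookkeeping of the third step and the net argument in the second. The remaining ingredients, namely the sub-Gaussian and sub-exponential norms of $x$ and $yx$ under $\Dobs$, are already supplied by the preliminaries.
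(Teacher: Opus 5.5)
\textbf{Overall.} Your plan is the paper's proof: go through the population solution $\tw$ of \cref{lem:wTildeWStarCloseness}, use $\hw-\tw=\hSigma^{-1}\bigl[(\hat b-b)+(\Sigma-\hSigma)\tw\bigr]$, and combine concentration of $\hSigma$ and $\hat b$ with $\norm{\tw}_2\le\norm{\wStar}_2+\frac{3\Lambda}{2\rho^2\alpha}$. Steps one and two are fine. Your covariance bound of order $K^2\sqrt{(d+\log(1/\delta))/n}$, with the unnormalized $K$, comes from the net-plus-Bernstein argument rather than Theorem~4.6.1 itself, since that theorem assumes isotropic rows. It is slightly sharper than the paper's use of Theorem~4.7.1 with $\norm{\Sigma}_{\op}\le\Lambda^2/\alpha$.

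\textbf{The error in step three.} This error breaks your closing claim as written. You need $\frac{2}{\rho^2}\norm{\Sigma-\hSigma}_{\op}\norm{\tw}_2\lesssim\frac{\Lambda}{\rho^2\alpha}$ while keeping $\norm{\Sigma-\hSigma}_{\op}\le\rho^2/2$. The right threshold for that is $\min\{\rho^2/2,\ \Lambda/(2\alpha\norm{\tw}_2)\}\asymp\rho^2\Lambda/(\rho^2\alpha\norm{\tw}_2+\Lambda)$. The threshold you wrote has a spurious extra factor $\alpha$ in the numerator, so it is at most $\rho^2\alpha$. With your concentration bound, this forces $n\gtrsim K^4 d/(\rho^4\alpha^2)$ no matter what $\norm{\wStar}_2$ is.

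Your stated requirement $K^4(1+\norm{\wStar}_2)^2\Lambda^{-2}\rho^{-4}\alpha^{-2}\,d\log(1/\delta)$ has $\Lambda$ in the denominator, whereas \eqref{eq:initialPointSampleComplexity} has $\Lambda^4$ in the numerator. The only general lower bound on $\Lambda$ is $\Lambda^2\ge\rho^2\alpha$, from $\rho^2\le\Exp_{\Dobs}\inangle{x,v}^2\le\Lambda^2/\alpha$. In the regime $\Lambda^2\approx\rho^2\alpha$, \eqref{eq:initialPointSampleComplexity} is only of order $(1+\norm{\wStar}_2)^2(\sigma+\norm{\mu}_2)^4 d\log(1/\delta)/\rho^4$. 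So the claim that it ``dominates all three requirements'' fails there by a factor of roughly $\alpha^{-2}(1+\norm{\wStar}_2)^{-2}$, up to logarithms.

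\textbf{How to fix it.} Either of two one-line repairs works, and the rest of your argument then goes through unchanged.
\begin{itemize}
\item Use the correct threshold. The requirement then becomes $K^4(d+\log(1/\delta))\bigl(\rho^{-4}+\alpha^2\norm{\wStar}_2^2\Lambda^{-2}\bigr)$. This is dominated by \eqref{eq:initialPointSampleComplexity} because $\Lambda^4\ge\rho^4\alpha^2$ and $\Lambda^6\ge\rho^6\alpha^3\ge\rho^8\alpha^4$.
\item Do what the paper does. Target $\norm{\Sigma-\hSigma}_{\op}\le\rho^2/(2(1+\norm{\wStar}_2))$, which gives $\norm{\hw-\tw}_2\le 3+\frac{3\Lambda}{2\rho^2\alpha}$. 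Then absorb the additive constant using $\Lambda\ge\rho\sqrt{\alpha}\ge\rho^2\alpha$, which holds since $\rho<1$.
\end{itemize}
With the first repair, your version needs somewhat fewer samples than the paper's, thanks to the sharper covariance bound.
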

Therefore, \cref{lem:wTildeWStarCloseness} shows that we can efficiently compute our initial point $\hw$ from \cref{eq:wHatDef}, using a number of samples $n$ almost-linear in the dimension, and then we can define the projection set for PSGD as
\begin{equation} \label{eq:projectionSetDefinition}
    \cP \coloneqq B\inparen{
        \hw, \frac{
            6 \Lambda
        }{
            \rho^2 \alpha
        }
    }
\end{equation}
so that $\wStar \in \cP$ w.p.\ $1 - \delta$. By \cref{lem:strongConvexity}, we obtain that for all $w \in \cP$ that
\[
    \nabla^2 \negLL_{S}(w) \succeq \Omega\inparen{
        \frac{
            \alpha^5 \rho^{10}
        }{
            \Lambda^8
        } \cdot e^{
            - \frac{
                45 M^4 \Lambda^4
            }{
                \rho^6 \alpha^3
            }
        } 
        - \frac{
            M^{4/3} \Lambda^{2/3} \eps^{1/3}
        }{
            \alpha
        }
    } \cdot I\,.
\]
where $\eps$ is the set learning accuracy we require from the algorithm of \cref{thm:setLearningGuarantee}. Hence, we can choose $\eps$ to be a sufficiently small constant, so that the first term above dominates the second, and obtain the following strong convexity guarantee.
\begin{corollary}\label{cor:strongConvexity}
    Suppose that \cref{asmp:mass,asmp:moments,asmp:conservation,asmp:subGaussian} hold.
    Let $\inbrace{\inparen{x^{(i)}, y^{(i)}}}_{i \in [n]}$ be \iid{}\ samples from the truncated regression model, for $n$ satisfying \cref{eq:initialPointSampleComplexity}.
    Let $\hw$ be defined as in \cref{eq:wHatDef},
    and $\cP$ be defined as in \cref{eq:projectionSetDefinition}. 
    Finally, let $S$ be a set satisfying the guarantee of \cref{thm:setLearningGuarantee}, with accuracy parameter $\eps$ such that
    \begin{equation} \label{eq:epsRequirement}
        \eps \leq \frac{
            \alpha^{17} \rho^{30}
        }{
            2 \Lambda^{26} M^4
        } \cdot e^{
            - \frac{
                135 M^4 \Lambda^4
            }{
                \rho^6 \alpha^3
            }
        } \;.
    \end{equation}
    Then, with probability at least $1 - \delta$, it holds that $\wStar \in \cP$, and further, for all $w \in \cP$
    \begin{equation} \label{eq:strongConvexityConstant}
        \nabla^2 \negLL_{S}(w) \succeq \Omega\inparen{
            \frac{
                \alpha^5 \rho^{10}
            }{
                \Lambda^8
            } \cdot e^{
                - \frac{
                    45 M^4 \Lambda^4
                }{
                    \rho^6 \alpha^3
                }
            }
        } \cdot I \;.
    \end{equation}
\end{corollary}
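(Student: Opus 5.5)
The corollary follows by combining \cref{lem:wHatWStarCloseness} with \cref{lem:strongConvexity}. We check that, on the good event, every $w \in \cP$ is close enough to $\wStar$ for the exponential term in the Hessian bound to be a fixed positive quantity, and that the $\eps$-dependent error term is at most half of it under \cref{eq:epsRequirement}.

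\textbf{Step 1 (good event).} Since $n$ satisfies \cref{eq:initialPointSampleComplexity}, \cref{lem:wHatWStarCloseness} gives, with probability at least $1-\delta$,
\[
\norm{\hw - \wStar}_2 \leq \frac{6\Lambda}{\rho^2\alpha}.
\]
Because $\cP = B\inparen{\hw, \frac{6\Lambda}{\rho^2\alpha}}$ by \cref{eq:projectionSetDefinition}, this immediately gives $\wStar \in \cP$. The rest of the argument is deterministic given this event and the guarantee on $S$. The set $S$ is fixed with its accuracy guarantee, and the guarantee of \cref{thm:setLearningGuarantee} is exactly \cref{eq:closenessGuaranteeS}, so \cref{lem:strongConvexity} applies.

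\textbf{Step 2 (distance bound inside $\cP$).} For any $w \in \cP$, the triangle inequality gives
\[
\norm{w-\wStar}_2 \leq \frac{12\Lambda}{\rho^2\alpha}, \qquad\text{so}\qquad \norm{w-\wStar}_2^2 \leq \frac{144\Lambda^2}{\rho^4\alpha^2}.
\]
Plugging this into the exponent of \cref{lem:strongConvexity} gives
\[
\frac{10M^4}{\rho^2\alpha}\norm{w-\wStar}_2^2 \leq \frac{1440\, M^4\Lambda^2}{\rho^6\alpha^3}.
\]
This does not literally match the displayed $45M^4\Lambda^4/(\rho^6\alpha^3)$. The paper's constant appears to absorb $\Lambda$-factors, using $\Lambda \gtrsim \rho$ since $\rho^2 \leq \Lambda^2/\alpha$ by \cref{fact:expectationBounds}. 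I would either redo the bookkeeping with the exact constant, or note that the constants are hidden by $\Omega(\cdot)$ and by the slack in the exponent. Either way, the first term of \cref{lem:strongConvexity} is at least
\[
c_0 \coloneqq \Omega\inparen{\frac{\alpha^5\rho^{10}}{\Lambda^8}\, e^{-C M^4\Lambda^4/(\rho^6\alpha^3)}}.
\]

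\textbf{Step 3 (absorbing the $\eps$ term).} Next we require
\[
\frac{M^{4/3}\Lambda^{2/3}\eps^{1/3}}{\alpha^{2/3}} \leq \frac{c_0}{2}.
\]
Cubing, this is
\[
\eps \leq \frac{\alpha^2\, c_0^3}{8\, M^4\Lambda^2},
\]
and $c_0^3$ contributes $\frac{\alpha^{15}\rho^{30}}{\Lambda^{24}}\, e^{-3C M^4\Lambda^4/(\rho^6\alpha^3)}$. This produces precisely the form
\[
\frac{\alpha^{17}\rho^{30}}{\Lambda^{26}M^4}\, e^{-135 M^4\Lambda^4/(\rho^6\alpha^3)}
\]
of \cref{eq:epsRequirement} when $C=45$. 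Hence $\nabla^2\negLL_S(w) \succeq (c_0/2)\, I$ for all $w \in \cP$, which is \cref{eq:strongConvexityConstant}.

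The only real obstacle is the constant bookkeeping in Steps 2 and 3: matching the exponent $45$ and the power $3\times 45 = 135$, and reconciling the $\alpha^{2/3}$ versus $\alpha$ denominator that appears in the two displayed versions of the bound. These checks are routine.
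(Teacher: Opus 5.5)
Your proposal follows the paper's argument. \cref{lem:wHatWStarCloseness} puts $\wStar$ in $\cP$, and the triangle inequality gives $\norm{w-\wStar}_2\le 12\Lambda/(\rho^2\alpha)$ on $\cP$. Then \cref{lem:strongConvexity} gives the claim once $\eps$ is small enough that the $\eps^{1/3}$ term is at most half the main term; as you noticed, it is the $\alpha^{2/3}$ denominator that produces the paper's $\alpha^{17}$.

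The exponent mismatch you flag is already present in the paper's own derivation. It cannot be absorbed into the outer $\Omega(\cdot)$, and $\Lambda\gtrsim\rho$ does not resolve it.

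What does resolve it is assuming $\Lambda\ge 6$, which you may do by enlarging the upper bound $\Lambda$; this also enlarges $\cP$ and the required $n$, but the corollary still holds with the new $\Lambda$. Then $1440\Lambda^2\le 45\Lambda^4$. Moreover, using $\Lambda^2\ge\alpha\rho^2$ and $M^4\ge\alpha^2\rho^4$, the two exponents differ by at least $180$. That slack also absorbs the absolute constants hidden in the lemma's $\Omega(\cdot)$, and your factor $8$ versus the paper's $2$.
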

Overall, we have shown how to construct a projection set $\cP$ such that the perturbed log-likelihood $\negLL_S$ is strongly convex everywhere in $\cP$, and so we can indeed run PSGD using this projection set in order to minimize $\negLL_S$. However, recall that the minimizer of $\negLL_S$ in $\cP$ is not, in general, the desired parameter vector $\wStar$. Hence, we now turn our attention to showing that this minimizer satisfies a desirable ``closeness'' property with respect to $\wStar$.
\subsubsection{Closeness of $w_{\min}$ and $\wStar$} \label{subsec:closeness}
In what follows, we use $w_{\min}$ to denote the minimizer of $\negLL_S$ in the projection set $\cP$ given in \cref{eq:projectionSetDefinition}, \ie{}
\[
    w_{\min} \coloneqq \argmin_{w \in \cP} \negLL_S(w)
\]
As we discussed, $w_{\min}$ will not, in general, be the desired $\wStar$. However, in this section, we will show that we can make $w_{\min}$ be arbitrarily close to $\wStar$, by picking a sufficiently small set learning accuracy $\eps$ in \cref{cor:setEstimation}. To control the distance $\norm{w_{\min} - \wStar}_2$, we will first bound the norm of $\nabla \negLL_S$ at $\wStar$ (which is in $\cP$ w.h.p.), and then we will leverage standard consequences of the strong convexity of $\negLL_S$ over $\cP$. Towards this, our main technical result is the following lemma, whose proof is given in \cref{sec:closenessProofs}.
\begin{restatable}[Bound on Norm of Gradient]{lemma}{lemGradientNormBound} \label{lem:gradientNormBound}
    Suppose that \cref{asmp:mass,asmp:moments,asmp:subGaussian} hold, and let $S$ be a measurable set satisfying the guarantee of \cref{thm:setLearningGuarantee}, for some $\eps > 0$ satisfying $\eps < \nfrac{1}{8} \cdot \alpha^3$. Then %
    \[
        \norm{\nabla \negLL_S(\wStar)}_2
        \leq \wt{O}\inparen{
            \frac{\Lambda^2}{\alpha} 
            \cdot \inparen{1 + \norm{\wStar}_2} 
            \cdot \inparen{\norm{\mu}_2 + \sigma}
            \sqrt{d} \eps^{1/6}
        }
    \]
    where $\mu \coloneqq \Exp_{\cD} x$.
\end{restatable}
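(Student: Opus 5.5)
Using \cref{lem:gradientHessian} at $w=\wStar$, the gradient is
\[
\nabla \negLL_S(\wStar)=\Exp_{x\sim\Dobs}\insquare{\inparen{m_S(x)-m_{S\cap\Sstar}(x)}\, x},
\]
where $m_A(x)$ denotes the mean of $\cN(\inangle{\wStar,x},1,A)$. For each unit vector $v$ I will bound $\Exp_{\Dobs}\abs{m_S(x)-m_{S\cap\Sstar}(x)}\,\abs{\inangle{v,x}}$. A $\sqrt d$ loss is acceptable, so it is enough to bound each coordinate in the standard basis and sum squares. The pointwise difference of truncated means is small whenever the symmetric-difference mass is small relative to the survival mass. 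Writing $A=S\cap\Sstar\subseteq S$ and $B=S\setminus\Sstar$, so that $S=A\cup B$ disjointly, we get
\[
m_S-m_A=\frac{p_B}{p_A+p_B}\,(m_B-m_A).
\]
By \cref{lem:gaussianMeanDistanceBound}, $\abs{m_B-\inangle{\wStar,x}}\le \sqrt{2\log(1/p_B)}+1$, and similarly for $A$. Hence
\[
\abs{m_S-m_A}\lesssim \frac{p_B}{p_A}\inparen{\sqrt{\log(1/p_B)}+\sqrt{\log(1/p_A)}+1}.
\]
Since $p_B\le p_{S\triangle\Sstar}$ and $p_B\sqrt{\log(1/p_B)}\lesssim p_B^{1-o(1)}$, the numerator is essentially governed by $p_{S\triangle\Sstar}(x)$.

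Next I split on a good event $G=\{x: p_{S\triangle\Sstar}(x)\le \eps^{a}\ \text{and}\ p_{\Sstar}(x)\ge \eps^{b}\}$ with exponents chosen later (something like $a=1/2$, $b=1/6$). Markov's inequality with $\Exp_{\Dobs}[p_{S\triangle\Sstar}]\le\eps$ gives $\Pr_{\Dobs}(p_{S\triangle\Sstar}>\eps^a)\le \eps^{1-a}$. Also $\Pr_{\Dobs}(p_{\Sstar}<\eps^b)\le \eps^b/\alpha$, because $\Dobs$ has density $p_{\Sstar}\cD/\Exp_\cD p_{\Sstar}$. On $G$ we have $p_A\ge p_{\Sstar}-p_{S\triangle\Sstar}\ge \eps^b/2$, which is where the condition $\eps<\alpha^3/8$ enters. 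Hence $\abs{m_S-m_A}\lesssim \eps^{a-b}\log(1/\eps)$, and its contribution is $\eps^{a-b}\,\wt O(1)\cdot\Exp\abs{\inangle{v,x}}\le \eps^{a-b}\wt O(\Lambda/\sqrt\alpha)$.

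The main obstacle is the bad event $G^c$, where $p_A$ can be arbitrarily small and $m_S-m_A$ is unbounded a priori. Here I will use that $S\subseteq[-R,R]$ from \cref{thm:setLearningGuarantee}, with $R\le\wt O((1+\norm{\wStar})(\norm{\mu}+\sigma+\Lambda/\alpha))$. Because $A,S\subseteq[-R,R]$, both truncated means lie in $[-R,R]$, so $\abs{m_S-m_A}\le 2R$ (with the convention that the term vanishes if $p_A=0$ and then $p_S=0$, and otherwise handled by the $\Sstar\cap S$ definition). By Cauchy--Schwarz, the contribution is at most $2R\sqrt{\Pr(G^c)}\,\sqrt{\Exp_{\Dobs}\inangle{v,x}^2}\lesssim R\,(\eps^{(1-a)/2}+\eps^{b/2}/\sqrt\alpha)\,\Lambda/\sqrt\alpha$, using \cref{fact:expectationBounds} for the second moment.

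Finally I balance the exponents: $a-b$, $(1-a)/2$ and $b/2$ are all at least $1/6$ with $a=1/2$, $b=1/3$, giving respectively $1/6$, $1/4$ and $1/6$. Collecting terms yields a per-direction bound of $\wt O(\Lambda^2/\alpha\cdot(1+\norm{\wStar})(\norm{\mu}+\sigma)\,\eps^{1/6})$; the factor $\Lambda^2$ is a loose absorption of $\Lambda\cdot\Lambda/\alpha$ from $R$. Summing the squares over the $d$ coordinates gives the claimed $\sqrt d\,\eps^{1/6}$ bound.
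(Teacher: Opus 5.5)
Your argument is correct. Its skeleton matches the paper's: a good set defined by Markov on $p_{S\triangle\Sstar}(x)$ and by the density ratio of $\Dobs$ for small $p_{\Sstar}(x)$, and on the complement the containment $S\subseteq[-R,R]$ plus Cauchy--Schwarz.

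The genuine difference is the pointwise estimate on the good set.

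The paper uses \cref{lem:expectationDifferenceBound}, which it proves by a sampling/change-of-measure argument: empirical means of $n$ draws, a factor $(p_S/p_{S\cap\Sstar})^n$ to transfer tail bounds from $\cN(\inangle{\wStar,x},1,S)$ to $\cN(\inangle{\wStar,x},1,S\cap\Sstar)$, and then a limit. That lemma gives
$\abs{m_S-m_{S\cap\Sstar}}\lesssim\sqrt{(1+\log\frac{1}{p_S})\log\frac{p_S}{p_{S\cap\Sstar}}}$.
This is only the \emph{square root} of the relative mass $p_{S\setminus\Sstar}/p_{S\cap\Sstar}$. That is why the paper needs the thresholds $\alpha\eps^{1/3}$ and $\eps^{2/3}$, plus the estimate $\log\frac{1+z}{1-z}\lesssim z$, to reach $\eps^{1/6}$.

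Your exact identity $m_S-m_A=\frac{p_B}{p_A+p_B}(m_B-m_A)$, combined with \cref{lem:gaussianMeanDistanceBound}, is more elementary. It is also \emph{linear} in $p_B/p_A$, so the good set stops being the bottleneck; balancing thresholds $\eps^{3/5}$ and $\eps^{2/5}$ would give $\eps^{1/5}$.

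Your per-direction bound is also uniform over unit $v$. So you can take $v$ along $\nabla\negLL_S(\wStar)$ and drop the $\sqrt d$ altogether. In the paper, the $\sqrt d$ (and your coordinate-summing) comes only from putting $\norm{x}_2$ inside the expectation. The paper's lemma offers no advantage in this application, since your identity applies to any nested pair of truncation sets.

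Three small points, none affecting correctness.

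First, $p_A\ge\eps^{1/3}/2$ needs $\eps^{1/6}\le\frac12$, which $\eps<\alpha^3/8$ does not quite give. However, $p_A\ge(1-8^{-1/6})\eps^{1/3}$ is enough. With your $\alpha$-free threshold you only need $\eps$ below an absolute constant.

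Second, your convention for $p_A=0$ is unnecessary, and $p_A=0$ would not force $p_S=0$ anyway. Since $\Exp_{\Dobs}[p_{S\cap\Sstar}]\ge\alpha-\eps>0$, the set $S\cap\Sstar$ has positive Lebesgue measure, hence $p_A(x)>0$ for every $x$.

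Third, your bad-set term is of order $R\Lambda\eps^{1/6}/\alpha$. The $\Lambda/\alpha$ inside $R$ then produces $\Lambda^2/\alpha^2$, not the stated $\Lambda^2/\alpha$, so the ``loose absorption'' is not quite right. Adopting the paper's thresholds ($p_{\Sstar}\ge\alpha\eps^{1/3}$ and $p_{S\triangle\Sstar}\le\eps^{2/3}$, for which $\eps<\alpha^3/8$ is exactly the needed condition) removes a factor $\alpha^{-1/2}$. The paper's own proof has comparable slack: it bounds $\Exp_{\Dobs}\norm{x}_2^2$ by $\Lambda^2 d$, whereas \cref{fact:expectationBounds} only gives $\Lambda^2 d/\alpha$. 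So this is bookkeeping shared with the paper, not a gap.
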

Given the bound of \cref{lem:gradientNormBound}, it is now straightforward to control the desired distance of $w_{\min}$ from $\wStar$, as follows.
\begin{lemma}[Closeness] \label{lem:closeness}
    Suppose that \cref{asmp:mass,asmp:moments,asmp:conservation,asmp:subGaussian} hold, and let $S$ be a measurable set satisfying the guarantee of \cref{thm:setLearningGuarantee} for some $\eps > 0$ satisfying \cref{eq:epsRequirement}. Let $\cP$ be the projection set defined in \cref{eq:projectionSetDefinition} for some $\hw \in \R^d$, and let $w_{\min}$ be the minimizer of $\negLL_S$ over $\cP$. If $\wStar \in \cP$, then it holds that
    \[
        \norm{\wStar - w_{\min}}_2
        \leq e^{
            \widetilde{O}\inparen{
                \frac{
                    M^4 \Lambda^4
                }{
                    \rho^6 \alpha^3
                }
            }
        } \cdot \wt{O}\inparen{
            \inparen{1 + \norm{\wStar}_2} 
            \cdot \inparen{\norm{\mu}_2 + \sigma}
            \sqrt{d} \eps^{1/6}
        }
    \]
    where $\mu \coloneqq \Exp_{\cD} x$.
\end{lemma}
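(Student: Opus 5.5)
The plan is the standard argument that, for a strongly convex function on a convex set, a small gradient at a feasible point forces that point to be close to the constrained minimizer. Let $\lambda$ denote the strong convexity constant from \cref{eq:strongConvexityConstant}, namely
\[
    \lambda = \Omega\inparen{
        \frac{\alpha^5 \rho^{10}}{\Lambda^8}
        \cdot e^{-\frac{45 M^4 \Lambda^4}{\rho^6 \alpha^3}}
    }.
\]
Since $\eps$ satisfies \cref{eq:epsRequirement}, the Hessian bound underlying \cref{cor:strongConvexity} holds for every $w \in \cP$. That bound is deterministic once $\cP$ is a ball of radius $6\Lambda/(\rho^2\alpha)$ around $\hw$ and $\wStar \in \cP$: by \cref{lem:strongConvexity}, with $\norm{w-\wStar}_2 \leq 12\Lambda/(\rho^2\alpha)$, the first term dominates. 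Hence $\negLL_S$ is $\lambda$-strongly convex on the convex set $\cP$.

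The core step runs as follows. Write $w_{\min}$ for the minimizer of $\negLL_S$ over $\cP$, and apply strong convexity along the segment from $w_{\min}$ to $\wStar$, which lies in $\cP$ by convexity. Strong monotonicity of the gradient gives
\[
    \inangle{\nabla\negLL_S(\wStar) - \nabla\negLL_S(w_{\min}),\, \wStar - w_{\min}} \geq \lambda \norm{\wStar - w_{\min}}_2^2 .
\]
The first-order optimality condition for $w_{\min}$ over $\cP$ reads $\inangle{\nabla\negLL_S(w_{\min}), \wStar - w_{\min}} \geq 0$, because $\wStar \in \cP$. Combining the two and applying Cauchy--Schwarz yields
\[
    \lambda \norm{\wStar - w_{\min}}_2^2 \leq \norm{\nabla\negLL_S(\wStar)}_2 \, \norm{\wStar - w_{\min}}_2 ,
\]
and therefore $\norm{\wStar - w_{\min}}_2 \leq \norm{\nabla\negLL_S(\wStar)}_2/\lambda$.

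It remains to plug in \cref{lem:gradientNormBound}. This lemma requires $\eps < \alpha^3/8$, which follows from \cref{eq:epsRequirement} since that bound is far smaller. Dividing its bound by $\lambda$ turns the factor $\Lambda^2/\alpha$ and the factor $\Lambda^8/(\alpha^5\rho^{10})$ into polynomial prefactors in $\Lambda, 1/\alpha, 1/\rho$. These are absorbed into $e^{\wt O(M^4\Lambda^4/(\rho^6\alpha^3))}$ together with $e^{45 M^4\Lambda^4/(\rho^6\alpha^3)}$. What is left is the stated bound $\wt O\big((1+\norm{\wStar}_2)(\norm{\mu}_2+\sigma)\sqrt d\,\eps^{1/6}\big)$ times that exponential.

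Two points need care. First, one must check that \cref{eq:epsRequirement} suffices for both invoked lemmas. Second, the strong convexity must be justified on all of $\cP$ rather than only along a segment; the segment suffices, since the integral form of the Taylor expansion uses only Hessians at points of $\cP$. Neither point is substantive, and the argument is routine once \cref{lem:gradientNormBound} is available.
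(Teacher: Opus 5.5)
Your proposal is correct and is essentially the paper's own proof. You use strong convexity of $\negLL_S$ on $\cP$ via \cref{cor:strongConvexity}, then strong monotonicity of the gradient together with first-order optimality of $w_{\min}$ (using $\wStar \in \cP$), then Cauchy--Schwarz to get $\norm{\wStar - w_{\min}}_2 \leq \norm{\nabla \negLL_S(\wStar)}_2/\lambda$, and finally \cref{lem:gradientNormBound}, with the polynomial prefactors absorbed into the exponential. The two checks you flag are also left implicit in the paper: that the Hessian bound is deterministic once $\wStar \in \cP$, and that \cref{eq:epsRequirement} yields $\eps < \alpha^3/8$.
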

\begin{proof}
    Let
    $
        \kappa \coloneqq e^{
            - \widetilde{O}\inparen{
                \frac{
                    M^4 \Lambda^4
                }{
                    \rho^6 \alpha^3
                }
            }
        }
    $,
    such that by the guarantee on $\eps$ and \cref{cor:strongConvexity} it holds that $\nabla^2 \negLL_{S}(w) \succeq \kappa \cdot I$ for any $w \in \cP$. By monotonicity of the gradient of a strongly convex function, we have
    \[
        \inangle{
            \nabla \negLL_{S}(\wStar)
            - \nabla \negLL_{S}(w_{\min}),
            \wStar - w_{\min}
        }
        \geq \kappa \cdot \norm{\wStar - w_{\min}}_2^2
    \]
    First-order optimality of $w_{\min}$ implies that
    \[
        \inangle{
            \nabla \negLL_{S}(w_{\min}),
            \wStar - w_{\min}
        }
        \geq 0
    \]
    and therefore we obtain
    \[
        \norm{\wStar - w_{\min}}_2^2
        \leq \frac{1}{\kappa} \inangle{
            \nabla \negLL_{S}(\wStar),
            \wStar - w_{\min}
        }
        \leq \frac{1}{\kappa} 
        \norm{
            \nabla \negLL_{S}(\wStar)
        }_2
        \norm{\wStar - w_{\min}}_2\,.
    \]
    The claim now follows by \cref{lem:gradientNormBound}.
\end{proof}
Therefore, \cref{lem:closeness} establishes that we can make $w_{\min}$ be arbitrarily close to $\wStar$, by appropriately picking the set learning accuracy $\eps$ (of course, smaller values of $\eps$ imply increased sample complexity and runtime in \cref{cor:setEstimation}). As such, in order to find a parameter vector arbitrarily close to $\wStar$, we can simply run PSGD on $\negLL_S$ on $\cP$ (over which $\negLL_S$ is strongly convex) until we converge to the minimizer $w_{\min}$.

Having established this result, we now turn to algorithmic considerations for PSGD.
\subsubsection{Efficient Gradient Sampler} \label{subsubsec:gradientSampler}
To run the PSGD algorithm we need, at each iteration, to produce a vector $g(w)$ such that $\Exp\insquare{g(w)} = \nabla \negLL_S(w)$, where $w \in \R$ is the point we are considering at the current iteration. Recall from \cref{lem:gradientHessian} that
\[
    \nabla \negLL_S(w) = \Exp_{x \sim \Dobs} \insquare{
        - \Exp_{
            y \sim \cN\inparen{
                \inangle{\wStar, x}, 1, \Sstar \cap S
            }
        } \insquare{
            y \cdot x
        }
        + \Exp_{
            z \sim \cN\inparen{
                \inangle{w, x}, 1, S
            }
        } \insquare{
            z \cdot x
        }
    }\,.
\]
Implementing an exact gradient sampler is challenging. For example, to sample from the distribution of the first term, one approach would be to draw samples $(x,y)$ from our truncated regression model, and then perform rejection sampling to ensure that $y \in S$. This would indeed give the correct marginal of $y$ given $x$ (\ie{},$\cN\inparen{\inangle{\wStar, x}, 1, \Sstar \cap S}$), however the marginal of $x$ would not be $\Dobs$, but a biased version of it. Furthermore, for the second term, one could first draw $x \sim \Dobs$ from the truncated regression model to ensure that the marginal of $x$ is correct, and then use rejection sampling to draw $z \sim \cN\inparen{\inangle{w, x}, 1, S}$ conditionally on $x$. This would indeed give an unbiased sample for the second term; however, because we have not assumed that every $x$ in the support of $\Dobs$ has at least constant survival probability $p(x, w, S)$, we would be unable to bound the runtime of the rejection sampling procedure.

Therefore, instead of implementing an exact gradient sampler, we will instead focus on generating \emph{approximate} gradient samples $g(w)$ such that $\Exp\insquare{g(w)} = \nabla \negLL_S(w) + b$, for some controlled bias vector $b$. We will make use of the following result.
\begin{lemma}[Lemma 13 of \cite{daskalakis2019computationally}] \label{lem:truncatedGaussianSampler}
    Let $\mu \in R$, and let $S \subset R$ be a union of $k$ closed intervals, such that $\cN(S; \mu, 1) \geq a$ for some $a > 0$. Then, for any $\zeta > 0$, there exists an algorithm that runs in time $\poly(\log(1/a, \zeta), k)$ and returns a sample from a distribution with support $\subseteq S$ and TV distance from $\cN(\mu, 1, S)$ at most $\zeta$.
\end{lemma}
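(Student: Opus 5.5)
To prove \cref{lem:truncatedGaussianSampler}, I would first reduce to $\mu = 0$ by translation; the target is then $\cN(0,1,S-\mu)$, and the shifted set is still a union of $k$ closed intervals with Gaussian mass at least $a$. Write $S=\bigcup_{j=1}^k [a_j,b_j]$ with disjoint intervals, where endpoints may be infinite. The sampler is a two-stage mixture. It first picks an interval $j$ with probability proportional to its Gaussian mass $m_j \coloneqq \Phi(b_j)-\Phi(a_j)$. It then samples from the standard Gaussian truncated to $[a_j,b_j]$. Both stages only need to be done approximately.

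\textbf{Stage 1.} Intervals with $m_j \le \zeta a/(2k)$ carry total mass at most $\zeta a/2$, which is at most a $\zeta/2$ fraction of the conditional distribution. Dropping them therefore costs TV at most $\zeta/2$. For every remaining interval, I would compute $m_j$ to relative accuracy $\zeta/(4k)$. This is possible in $\poly(\log(k/(a\zeta)))$ time by evaluating $\Phi$ or its complement $Q$ on the tail side, using standard series or continued-fraction expansions. These expansions converge in time polylogarithmic in the inverse of the target accuracy, and here that accuracy is $\zeta a/\poly(k)$. Relative errors of this size in the weights change the mixture by TV $O(\zeta)$.

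\textbf{Stage 2.} To sample from $\cN(0,1,[a_j,b_j])$ with $m_j \ge \zeta a/(2k)$, I would split into two cases. If the interval intersects $[-1,1]$, or more generally if $\max(|a_j|,|b_j|)$ is small, the density on the interval is within constant factors of its maximum on a window of length $O(1)$ after truncating far tails. Rejection sampling from a uniform or exponential proposal then succeeds in $O(1/\text{(relative mass)})$ trials. If the interval lies entirely in one tail, say $a_j \ge 1$, I would use the classical exponential-proposal rejection sampler: propose $a_j+E/a_j$ with $E\sim\mathrm{Exp}(1)$ and accept with probability $e^{-(E/a_j)^2/2}$. This sampler has constant acceptance probability, and I would additionally reject proposals that fall above $b_j$.

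The main obstacle is the cost of that extra rejection when $b_j-a_j$ is tiny, since the acceptance rate can then be very small. I would handle this by capping the number of trials at $\poly(k\log(1/(a\zeta)))/\zeta$-ish and outputting $a_j$ (a point of $S$) on failure, which keeps the support inside $S$. Alternatively, when $b_j-a_j\le 1/a_j$, the density is nearly uniform up to factor $e^{O(1)}$, so plain uniform rejection works with constant acceptance. Combining the cases gives constant expected trials per interval. By truncating with $O(\log(1/\zeta))$ trials and outputting a point of $S$ on failure, the extra TV is at most $\zeta/4$. Summing the TV contributions from the dropped intervals, the weight errors, and the failures gives total TV at most $\zeta$. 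The running time is $\poly(k,\log(1/a),\log(1/\zeta))$, with arithmetic done to $O(\log(k/(a\zeta)))$ bits of precision.
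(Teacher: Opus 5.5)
Your proposal is correct in substance, but it takes a different route from the one behind this lemma. The paper gives no proof of its own. It imports the statement from \cite{daskalakis2019computationally}, whose argument first picks an interval in proportion to its mass. It then samples within that interval by inverse transform: it draws $u$ uniform and locates the corresponding quantile by bisection, using a high-precision evaluation of $\Phi$ (i.e.\ erf).

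That route has a deterministic worst-case running time and needs no case analysis. Its costs are that it calls the $\Phi$-evaluator at every bisection step, and that the TV guarantee must be extracted from the evaluation accuracy and the bisection resolution. Some smoothing or clipping is also needed so that the output is genuinely close in TV and supported in $S$.

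Your rejection-based route instead produces \emph{exact} samples from each $\cN(0,1,[a_j,b_j])$, except on the capped failure event. Numerical precision therefore enters only through the mixture weights. There, because of your lower bound $m_j\ge \zeta a/(2k)$, absolute accuracy $\poly(\zeta a/k)$ in $\Phi$ suffices. The TV accounting (dropped mass, weight error, cap failure) then becomes completely transparent. The price is the case split, plus converting expected time into worst-case time via the $O(\log(1/\zeta))$ cap.

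Two places need tightening.

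\textbf{The tail case.} Discard the option of capping the tail sampler at about $\poly(k\log(1/(a\zeta)))/\zeta$ trials. For $S=[2,2+\delta]$ with $\delta\to 0$, the per-trial success probability is $m_j/Q(2)$, which is of order $a$, so such a cap fails with high probability. What actually makes the tail case work is your other branch, which covers both sub-cases:
\begin{itemize}
\item If $b_j-a_j\le 1/a_j$, the density ratio on the interval is at least $e^{-3/2}$, so uniform proposals accept with constant probability.
\item If $b_j-a_j>1/a_j$, an accepted exponential-proposal draw lands in $[a_j,a_j+1/a_j]\subseteq[a_j,b_j]$ with probability at least $e^{-3/2}$.
\end{itemize}

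\textbf{The central case.} This should be stated precisely, in two sub-cases:
\begin{itemize}
\item If the interval meets $[-1,1]$ and has length at most $1$, it lies in $[-2,2]$, and a uniform proposal on it accepts with probability at least $e^{-2}$.
\item If its length exceeds $1$, it contains a unit window inside $[-2,2]$. Its mass is then at least $\cN(2;0,1)>0.05$, so proposing from $\cN(0,1)$ and rejecting outside the interval works.
\end{itemize}
With these two fixes, every interval has constant success probability per trial, as you claim.

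Finally, on failure output a finite point of the interval rather than $a_j$, which may be $-\infty$. The dropping threshold can also use the computed total mass instead of $a$.
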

Our method for obtaining approximate gradient samples is given in \cref{alg:gradientSampler}. 
\begin{algorithm}
    \caption{Approximate Gradient Sampler}
    \label{alg:gradientSampler}
\begin{algorithmic}
    \Require Parameter vector $w \in \R^d$, set $S \subseteq R$, accuracy parameter $\zeta$
    \Ensure Sample access to the truncated regression model
    \State
    \State Draw independent $(\hx, \hy), (\tx, \ty)$ from the truncated regression model
    \State Obtain sample $\tz$ by calling the algorithm of \cref{lem:truncatedGaussianSampler} for $S$, $\mu \coloneqq \inangle{w, \tx}$ and $\zeta$
    \State \Return $g(w) \coloneqq \tz \cdot \tx - \hy \cdot \hx$
\end{algorithmic}
\end{algorithm}

To give some intuition about our gradient sampler, let
\begin{enumerate}
    \item  $P$ be a distribution over $(x, y) \in \R^d \times \R$ such that the marginal of $x$ is $\Dobs$ and the marginal of $y$ given $x$ is $\cN\inparen{\inangle{\wStar, x}, 1, \Sstar \cap S}$. Note that $P$ depends only on $S$ (but not $w)$.

    \item $Q$ be a distribution over $(x, z) \in \R^d \times \R$ such that the marginal of $x$ is $\Dobs$ and the marginal of $z$ given $x$ is $\cN\inparen{\inangle{w, x}, 1, S}$. Note that $Q$ depends both on $S$ and $w$.
\end{enumerate}
If we could sample from $P$ and $Q$, we would have an exact sampler for the desired gradient and we would be done. Our sampler samples from two other distributions $P',Q'$ that serve as proxies to $P,Q$. In more detail, let
\begin{enumerate}
    \item $P'$ be the distribution on $(\hat x, \hat y)$, which corresponds to sampling from the true model with parameters $w^\star, S^\star$. We prove that the total variation distance between $P'$ and $P$ is small by using the fact that $S$ is a good approximation of $S^\star$.

    \item $Q'$ be the distribution on $(\wt x, \wt z)$, which corresponds to drawing first $x \sim \Dobs$ and then drawing the outcome from the Gaussian with mean $\inangle{w, x}$ and variance 1 conditional on $S$, using the algorithm of \cref{lem:truncatedGaussianSampler}. We prove that $Q'$ is close in total variation distance to $Q$, by relying on the guarantees of \cref{lem:truncatedGaussianSampler}.  
\end{enumerate}
These two results (see \cref{lem:tvDistanceBound}) allow us to control the bias of our gradient sampler.

To analyze our algorithm, we will first bound its runtime with high probability, and then we will control the bias of its samples. Clearly, the runtime is dominated by the call to the algorithm of \cref{lem:truncatedGaussianSampler}, and by the guarantee of the lemma it suffices to lower bound the survival probability $p(\tx, w; S)$ of the sample $\tx$ drawn by \cref{alg:gradientSampler}. As we show in the following \cref{lem:survivalProbabilityLowerBound}, this probability might be exponentially small, however this is not an issue since the algorithm in \cref{lem:truncatedGaussianSampler} runs in time polylogarithmic in the inverse survival probability.

\begin{restatable}{lemma}{lemSurvivalProbabilityLowerBound} \label{lem:survivalProbabilityLowerBound}
    Suppose that \cref{asmp:mass,asmp:moments,asmp:conservation,asmp:subGaussian} hold. Let $S \subseteq \R$ be a set satisfying the guarantees of \cref{thm:setLearningGuarantee} for some $\eps < \alpha$. Let $\cP$ be the projection set defined in \cref{eq:projectionSetDefinition} for some $\hw \in \R^d$, and suppose that $\wStar \in \cP$. Then, for any $w \in \cP$ and any $\eta \in (0, 1)$, if $x \sim \Dobs$, it holds with probability at least $1 - \eta$ that
    \[
        p(x, w; S) \geq \exp\inbrace{
            - O\inparen{
                \inparen{1 + \norm{\wStar}_2^2}
                \cdot \frac{
                    \Lambda^{12} \sigma^2
                }{
                    \rho^8 \alpha^6 (\alpha - \eps)^2
                }
                \cdot \inparen{d + \log \frac{1}{\eta}}
            }
        }\,.
    \]
\end{restatable}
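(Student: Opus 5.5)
The plan is to lower bound $p(x,w;S)$ in three steps: first pass from $w$ to $\wStar$, then from a single point $x$ to a ``typical'' point $x_0$, and finally from $S$ to $\Sstar$. Each step uses \cref{lem:survivalProbabilityBounds}, which loses a square and an $\exp(-\inangle{\cdot,\cdot}^2-2)$ factor.

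\emph{Step 1 (parameter shift).} By \cref{lem:survivalProbabilityBounds},
\[
p(x,w;S)\ge p(x,\wStar;S)^2 e^{-\inangle{w-\wStar,x}^2-2}.
\]
Both $w$ and $\wStar$ lie in $\cP$, so $\norm{w-\wStar}_2\le 12\Lambda/(\rho^2\alpha)$. By \cref{lem:truncatedXSubgaussianity} and sub-Gaussian concentration, with probability $1-\eta/2$ over $x\sim\Dobs$ we have $\inangle{w-\wStar,x}^2\lesssim \frac{\Lambda^2}{\rho^4\alpha^2}(\sigma^2\log\frac1\alpha+\norm{\mu}_2^2)\log\frac1\eta$. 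Since $w$ is fixed before $x$ is drawn, no union bound over $w$ is needed.

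\emph{Step 2 (reference point).} It remains to lower bound $p(x,\wStar;S)$ with high probability. We first find one good point. Set $\hmu=\Exp_{\Dobs}x$. The guarantee of \cref{thm:setLearningGuarantee} gives
\[
\Exp_{\Dobs}[p(x,\wStar;S)]\ge \Exp_{\Dobs}[p_{\Sstar}(x)]-\eps.
\]
Moreover, $\Exp_{\Dobs}[p_{\Sstar}]=\Exp_\cD[p_{\Sstar}^2]/\Exp_\cD[p_{\Sstar}]\ge \Exp_\cD[p_{\Sstar}]\ge\alpha$ by Jensen. Hence $\Exp_{\Dobs}[p(x,\wStar;S)]\ge\alpha-\eps>0$. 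It follows that
\[
\Pr_{\Dobs}\bigl(p(x,\wStar;S)\ge(\alpha-\eps)/2\bigr)\ge(\alpha-\eps)/2.
\]
Concentration puts mass $\ge1-(\alpha-\eps)/4$ on the set where $|\inangle{\wStar,x-\hmu}|\le r_0$, with $r_0=\wt O(\norm{\wStar}_2(\sigma+\Lambda/\alpha)\sqrt{\log\frac{1}{\alpha-\eps}})$. Intersecting the two events yields a deterministic point $x_0$ with $p(x_0,\wStar;S)\ge(\alpha-\eps)/2$ and $|\inangle{\wStar,x_0-\hmu}|\le r_0$.

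\emph{Step 3 (transfer to random $x$).} Apply the second inequality of \cref{lem:survivalProbabilityBounds}:
\[
p(x,\wStar;S)\ge p(x_0,\wStar;S)^2 e^{-\inangle{\wStar,x-x_0}^2-2}\ge\tfrac{(\alpha-\eps)^2}{4}e^{-2\inangle{\wStar,x-\hmu}^2-2r_0^2-2}.
\]
With probability $1-\eta/2$ we have $\inangle{\wStar,x-\hmu}^2\lesssim\norm{\wStar}_2^2(\sigma+\Lambda/\alpha)^2\log\frac1\eta$. The $d$ in the stated exponent then comes in comfortably; alternatively, one can bound $\norm{x-\hmu}_2^2\lesssim(\sigma+\Lambda/\alpha)^2(d+\log\frac1\eta)$ via a net and use Cauchy--Schwarz, which presumably matches the paper's form. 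Squaring and combining with Step 1, and using $\log\frac{1}{\alpha-\eps}\lesssim\frac{1}{(\alpha-\eps)^2}$ together with $\sigma,1/\rho,\Lambda/\alpha\ge$ const to absorb every term into the crude product $\frac{\Lambda^{12}\sigma^2}{\rho^8\alpha^6(\alpha-\eps)^2}(1+\norm{\wStar}_2^2)(d+\log\frac1\eta)$, gives the bound after a union bound over the two events.

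\emph{Main obstacle.} The delicate step is Step 2: since $S\neq\Sstar$, we must extract an individual point with constant survival mass under $S$ that is also near the mean. This is where the condition $\eps<\alpha$ and the averaging argument enter. The remaining work is bookkeeping: collecting the polynomial factors so that they fit under the stated, deliberately loose exponent, noting that $\Lambda\gtrsim\rho$ since $\rho^2 I\preceq\Exp_{\Dobs}xx^\top\preceq\Lambda^2 I/\alpha$.
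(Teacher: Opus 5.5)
Your argument is correct, and it takes a genuinely different route from the paper's.

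\textbf{The paper's route.} The paper works at the parameter $w$ throughout, in three moves.
\begin{itemize}
\item It shows $\Exp_{\Dobs}[p(x,w;S)]\ge e^{-q}$ with $q=\wt{O}(\Lambda^6/(\rho^4\alpha^2(\alpha-\eps)^2))$. It gets this by applying the parameter-shift inequality of \cref{lem:survivalProbabilityBounds} inside the expectation, then Jensen with weights $p_S(x)^2$.
\item It takes a Euclidean ball $\cB$ around $\hmu$ of $\Dobs$-mass at least $1-\eta e^{-q}/2$. The radius is of order $(\sigma+\Lambda/\alpha)(\sqrt d+\sqrt{q+\log(1/\eta)})$ up to logarithms. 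By averaging it finds some $x^*\in\cB$ with $p(x^*,w;S)\ge e^{-q}/2$.
\item It transfers this to every $x\in\cB$ via the feature-shift inequality and Cauchy--Schwarz, $\inangle{w,x-x^*}^2\le\norm{w}_2^2\norm{x-x^*}_2^2$. This is exactly where the factor $d$ and the product with $q$ enter the exponent.
\end{itemize}

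\textbf{Your route.} You pay for the parameter shift pointwise at the random $x$. The anchor point is then only needed at $\wStar$, where $\Exp_{\Dobs}[p(x,\wStar;S)]\ge\alpha-\eps$ is immediate and no Jensen-type loss is incurred. You also replace the ball by one-dimensional concentration along the fixed directions $w-\wStar$ and $\wStar$.

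The resulting exponent is of order
\[
\wt{O}\Bigl(\norm{\wStar}_2^2(\sigma+\Lambda/\alpha)^2\log\tfrac{1}{\eta(\alpha-\eps)}+\tfrac{\Lambda^2}{\rho^4\alpha^2}\bigl(\sigma^2\log\tfrac{1}{\alpha}+\Lambda^2\bigr)\log\tfrac{1}{\eta}\Bigr),
\]
with no $d$ at all. So the stated bound follows a fortiori, and the net-based alternative you mention is unnecessary.

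\textbf{What each route buys.} The paper's route gives uniformity: its good event $\{x\in\cB\}$ does not depend on $w$, so on that single event the bound holds for every $w\in\cP$ simultaneously. Your Step~1 event depends on $w$. For the lemma as stated, and for its use in \cref{alg:gradientSampler} (where $\tx$ is drawn independently of the current iterate), the fixed-$w$ version is all that is needed.

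\textbf{Bookkeeping caveat.} The relation $\rho^2 I\preceq\Exp_{\Dobs}xx^\top\preceq\Lambda^2 I/\alpha$ only yields $\Lambda\ge\sqrt{\alpha}\,\rho$. That does not make $\Lambda^{12}\sigma^2/(\rho^8\alpha^6)$ bounded below. So folding your additive constants (such as $6+4\log\frac{2}{\alpha-\eps}$) and lower-order terms into the stated exponent relies on two further facts.
\begin{itemize}
\item The convention $\Lambda\gtrsim 1$.
\item The bound $\norm{\mu}_2\le\Lambda$, which follows from $\mu\mu^\top\preceq\Exp_{\cD}xx^\top$ and removes the $\norm{\mu}_2^2$ term.
\end{itemize}
The paper's proof tacitly uses the same convention, e.g.\ when folding $e^{-2}(\alpha-\eps)^2$ into $e^{-q}$. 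This is a looseness of the statement rather than a gap in your argument.
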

The proof of \cref{lem:survivalProbabilityLowerBound} is given in \cref{sec:gradientSamplerProofs}. Given this result, the following is immediate from \cref{thm:setLearningGuarantee,lem:truncatedGaussianSampler,lem:survivalProbabilityLowerBound}.
\begin{corollary} \label{cor:gradientSamplerRuntime}
    Suppose that \cref{asmp:mass,asmp:moments,asmp:conservation,asmp:subGaussian,asmp:unionOfIntervals} hold. Let $S \subset R$ be a set satisfying the guarantees of \cref{thm:setLearningGuarantee} for some $\eps < \nfrac{\alpha}{2}$. Finally, let $\cP$ be the projection set defined in \cref{eq:projectionSetDefinition} for some $\hw \in \R^d$, and let $w \in \cP$. Then, for any $\eta \in (0, 1)$ with probability at least $1 - \eta$, a call to \cref{alg:gradientSampler} with input $w$, $S$, and some $\zeta > 0$ terminates in time
    \[
        \poly\inparen{
            \norm{\wStar}_2,
            d,
            \log\frac{1}{\zeta},
            k,
            \frac{1}{\eps},
            \log \frac{1}{\eta}
        }^{
            \poly \inparen{
                \Lambda,
                \sigma,
                \frac{1}{\rho},
                \frac{1}{\alpha},
                \norm{\mu}_2
            }
        }\,,
    \]
    where $\mu \coloneq \Exp_{\cD} x$.
\end{corollary}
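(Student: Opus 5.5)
The runtime of \cref{alg:gradientSampler} is dominated by the single call to the sampler of \cref{lem:truncatedGaussianSampler}; drawing $(\hx,\hy)$ and $(\tx,\ty)$ and forming $g(w)$ costs $O(d)$. The plan is therefore to bound the three inputs that control the cost of that call: the number of intervals of $S$, the mass $a = p(\tx, w; S)$, and the accuracy $\zeta$. The call then runs in time $\poly(\log(1/a), \log(1/\zeta), k')$, where $k'$ is the number of intervals of $S$.

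First, by \cref{thm:setLearningGuarantee}, $S$ is a union of at most
\[
k' \le k\,\eps^{-\wt\Theta\left(\frac{\Lambda^2}{\rho^4\alpha^2}(\sigma+\norm{\mu}_2)^2\right)}
\]
intervals. This is exactly $\poly(k, 1/\eps)^{\poly(\Lambda,\sigma,1/\rho,1/\alpha,\norm{\mu}_2)}$. One detail to handle here is that \cref{lem:truncatedGaussianSampler} is stated for closed intervals. We can take closures of the intervals of $S$, which changes $S$ only on a Lebesgue-null set and hence leaves every Gaussian mass unchanged.

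Second, we need $\wStar \in \cP$ to invoke \cref{lem:survivalProbabilityLowerBound}. This holds with high probability by \cref{lem:wHatWStarCloseness}; in the corollary it can simply be assumed, or absorbed into the failure probability. Since $\tx \sim \Dobs$ and $\eps < \alpha/2$ gives $\alpha - \eps \ge \alpha/2$, \cref{lem:survivalProbabilityLowerBound} shows that with probability $1-\eta$,
\[
\log\frac{1}{a} \le O\left((1+\norm{\wStar}_2^2)\,\frac{\Lambda^{12}\sigma^2}{\rho^8\alpha^8}\,\left(d+\log\frac1\eta\right)\right).
\]
This is polynomial in $\norm{\wStar}_2$, $d$ and $\log(1/\eta)$, with coefficients polynomial in the constants.

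Finally, we plug both bounds into the runtime of \cref{lem:truncatedGaussianSampler}, namely $\poly(\log(1/a), \log(1/\zeta), k')$. Polynomials of quantities of the form $\poly(\cdot)^{\poly(\text{constants})}$ remain of that form, which yields the stated bound. The only step needing care is tracking the exponent coming from $k'$: it carries $\eps^{-\poly(\text{constants})}$, and this is why the final bound has the constants in the exponent rather than a plain polynomial. No step is a genuine obstacle, because all the substantive work is in \cref{lem:survivalProbabilityLowerBound}.
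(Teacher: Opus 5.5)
Your proposal is correct and follows the paper's one-line argument: plug the interval count of $S$ from \cref{thm:setLearningGuarantee} and the survival-probability bound of \cref{lem:survivalProbabilityLowerBound} (using $\alpha-\eps\ge\alpha/2$) into the runtime of \cref{lem:truncatedGaussianSampler}. You are also right that \cref{lem:survivalProbabilityLowerBound} needs $\wStar\in\cP$, which the statement omits. Because $\hw$ is arbitrary here, this should be added as an explicit hypothesis, as in \cref{cor:psgdBounds}, rather than absorbed into the failure probability.
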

Hence, we have shown a high-probability bound on the runtime of \cref{alg:gradientSampler}, and we use \cref{cor:gradientSamplerRuntime} for an appropriate choice of the parameter $\eta$ to obtain our desired results. Recall, however, that \cref{alg:gradientSampler} is a \emph{biased} gradient sampler; indeed, it is easy to see that in general $\Exp\insquare{g(w)} \neq \nabla \negLL_S(w)$, where $g(w)$ is the output of the algorithm for input $w$. Nevertheless, the following \cref{lem:gradientSamplerBiasBound} establishes that the bias $\norm{\Exp\insquare{g(w)} - \nabla \negLL_S(w)}_2$ is bounded, and can be controlled by setting the set learning accuracy $\eps$ to be appropriately small. The proof of \cref{lem:gradientSamplerBiasBound} is given in \cref{sec:gradientSamplerProofs}.
\begin{restatable}{lemma}{lemGradientSamplerBiasBound}\label{lem:gradientSamplerBiasBound}
    Suppose that \cref{asmp:mass,asmp:moments,asmp:conservation,asmp:subGaussian} hold. Let $S \subseteq \R$ be a set satisfying the guarantees of \cref{thm:setLearningGuarantee} for some $\eps < \alpha$. Then, for any $w \in \R^d$ and $\zeta < \sqrt{\eps}$, if $g(w)$ is the output of a call to \cref{alg:gradientSampler} with input $w$, $S$, $\zeta$, it holds that
    \[
        \norm{
            \Exp\insquare{g(w)}
            - \nabla \negLL_S(w)
        }_2
        \leq \wt{O}\inparen{
            \inparen{1 + \norm{\wStar}_2} \cdot 
            \inparen{\sigma^2 + \norm{\mu}_2^2} \cdot
            \frac{\Lambda^2}{\alpha^{5/4}} \cdot
            \sqrt{d} \cdot
            \eps^{1/4}
        }
    \]
    where $\mu \coloneqq \Exp_{\cD} x$.
\end{restatable}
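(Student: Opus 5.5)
Write $\Exp[g(w)] = \Exp_{Q'}[\tz\,\tx] - \Exp_{P'}[\hy\,\hx]$. Here $P'$ is the law of $(\hx,\hy)$, namely the true truncated model: $x\sim\Dobs$ and $y\mid x\sim\cN(\inangle{\wStar,x},1,\Sstar)$. $Q'$ is the law of $(\tx,\tz)$: $x\sim\Dobs$ and $z$ drawn from the output of the sampler of \cref{lem:truncatedGaussianSampler}. By \cref{lem:gradientHessian}, $\nabla\negLL_S(w)=\Exp_Q[zx]-\Exp_P[yx]$, with $P,Q$ as defined before the lemma. So the bias is at most $\norm{\Exp_{P'}[yx]-\Exp_P[yx]}_2+\norm{\Exp_{Q'}[zx]-\Exp_Q[zx]}_2$. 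We bound each term coordinatewise, along a unit direction $v$, and then pay $\sqrt d$ to pass to the Euclidean norm. Alternatively we take $v$ to be the normalized bias itself; the $\sqrt d$ in the statement suggests the coordinatewise route.

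The generic tool is a TV-to-expectation transfer. For two distributions $A,B$ on $(x,y)$ and $f=\inangle{v,x}y$,
\[
\abs{\Exp_A f-\Exp_B f}\le \int\abs{f}\,\abs{dA-dB}\le \sqrt{2\,\tv{A}{B}}\cdot\sqrt{\Exp_A f^2+\Exp_B f^2}.
\]
This is Cauchy--Schwarz against $\abs{dA-dB}\le dA+dB$. The second moments are bounded by sub-exponential norms. For $P'$ we use \cref{lem:truncatedSubExp}, which gives $\Exp f^2\lesssim (1+\norm{\wStar}_2)^2(\sigma^2\log\frac1\alpha+\norm{\mu}_2^2)^2$, plus the square of the mean (bounded via \cref{lem:productMeanDistanceBound} and the moments). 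For $P$ we use the same lemma with $\Sstar$ replaced by $\Sstar\cap S$. Its proof only used the mass of the truncation set through $\Exp_{\Dobs}[1/p(x)]$-type terms, so we would redo it with $\Exp_{\Dobs}[1/p_{\Sstar\cap S}(x)]$ in mind. Since that expectation may blow up, it is safer to bound $\Exp_P f^2\le \Exp_{P'} f^2/\min$ ratio; I would handle this by writing $P$'s conditional density as at most $1/p_{\Sstar\cap S}(x)\cdot p_{\Sstar}(x)$ times that of $P'$ and restricting to $x$ with $p_{\Sstar\cap S}(x)\ge p_{\Sstar}(x)/2$. The complement event has $\Dobs$-mass $O(\eps/\alpha)$, since there $p_{\Sstar\setminus S}\ge p_{\Sstar}/2$ and Markov applies with \cref{fact:expectationBounds} and the guarantee $\Exp_{\Dobs}[p_{\Sstar\triangle S}]\le\eps$. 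On that small set we just use the crude moment bound times Cauchy--Schwarz, contributing $\eps^{1/2}$-type terms.

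The main obstacle is the TV bounds. For $\tv{P}{P'}$, both have $x$-marginal $\Dobs$. Conditionally on $x$, the distance between $\cN(\cdot,\Sstar)$ and $\cN(\cdot,\Sstar\cap S)$ equals $p_{\Sstar\setminus S}(x)/p_{\Sstar}(x)$. So $\tv{P}{P'}=\Exp_{\Dobs}[p_{\Sstar\setminus S}/p_{\Sstar}]$, which need not be small pointwise. I would split on $\{p_{\Sstar}(x)\ge \tau\}$. On that event the ratio is at most $p_{\Sstar\triangle S}/\tau$, with expectation at most $\eps/\tau$. The $\Dobs$-mass of $\{p_{\Sstar}<\tau\}$ is at most $\tau/\alpha$ by \cref{eq:Dobs}. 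Optimizing gives $\tau=\sqrt{\alpha\eps}$ and $\tv{P}{P'}\lesssim\sqrt{\eps/\alpha}$. Plugged into the transfer inequality, this produces $\eps^{1/4}\alpha^{-1/4}$ times moment factors, which together with the $1/\alpha$ from \cref{fact:expectationBounds} in the moments matches the claimed $\Lambda^2\alpha^{-5/4}\eps^{1/4}$ scaling. For $\tv{Q}{Q'}$ the $x$-marginals again agree, and conditionally the TV is at most $\zeta$ by \cref{lem:truncatedGaussianSampler}. Hence $\tv{Q}{Q'}\le\zeta<\sqrt\eps$, giving an $\eps^{1/4}$ contribution as well. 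The second moments of $Q,Q'$ are controlled because the sampler's output is supported in $S\subseteq[-R,R]$ from \cref{thm:setLearningGuarantee}, so $\abs{z}\le R$ and $\Exp f^2\le R^2\Lambda^2$, which is polylogarithmic in $1/\eps$ and absorbed into $\wt O$. Summing the two terms and multiplying by $\sqrt d$ yields the stated bound.
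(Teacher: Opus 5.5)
Your argument is correct and is essentially the paper's. You prove the same conditional-TV bounds, $\tv{P}{P'}\le 2\sqrt{\eps/\alpha}$ and $\tv{Q}{Q'}\le\zeta$; your split on $p_{\Sstar}(x)\ge\tau$, instead of the paper's split on $p_{\Sstar\triangle S}(x)<\eta$, gives the same result. Your Cauchy--Schwarz transfer inequality is the paper's coupling-plus-Cauchy--Schwarz step in another form, and you use $|z|\le R$ on $S$ and \cref{lem:truncatedSubExp} for the $P'$ term, as the paper does.

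Your detour for $\Exp_P f^2$ is unnecessary. Under $P$ one has $y\in\Sstar\cap S\subseteq[-R,R]$, and the paper simply uses $|y|\le R$ there, exactly as you do for $Q,Q'$.

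The detour also contains an unjustified claim: that $\{x : p_{\Sstar\cap S}(x)<p_{\Sstar}(x)/2\}$ has $\Dobs$-mass $O(\eps/\alpha)$. This does not follow from Markov, because the threshold $p_{\Sstar}(x)/2$ is not bounded below. The correct bound is $2\,\tv{P}{P'}=O(\sqrt{\eps/\alpha})$, which is still enough for your purposes.

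Finally, the alternative you mention, taking $v$ to be the normalized bias, is valid and strictly better. Every second moment you use is bounded uniformly in $v$, so this route gives the bias bound with no $\sqrt d$ factor at all. The paper's $\sqrt d$ comes only from bounding $\norm{x}_2$ rather than $\inangle{v,x}$.
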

As we will see shortly, the convergence guarantees of PSGD essentially generalize even when our gradient samples have bounded bias, and so the above result suffices for our purposes. In order to obtain convergence guarantees for the PSGD algorithm, the last property we require from our gradient sampler is that the second moment of the (biased) stochastic gradients is bounded by a constant at each step of the execution. This is established in the following \cref{lem:gradientSecondMomentBound}, whose proof is given in \cref{sec:gradientSamplerProofs}. 
\begin{restatable}{lemma}{lemGradientSecondMomentBound} \label{lem:gradientSecondMomentBound}
    Suppose that \cref{asmp:mass,asmp:moments,asmp:conservation,asmp:subGaussian} hold. Let $S \subseteq \R$ be a set satisfying the guarantees of \cref{thm:setLearningGuarantee} for some $\eps < \nfrac{\alpha^2}{4}$. Also, let $\cP$ be the projection set defined in \cref{eq:projectionSetDefinition} for some $\hw \in \R^d$, and suppose that $\wStar \in \cP$. Then, for any $w \in \cP$ and $\zeta < \sqrt{\eps}$, if $g(w)$ is the output of a call to \cref{alg:gradientSampler} with input $w$, $S$, $\zeta$, it holds that
    \[
        \Exp\insquare{
            \norm{g(w)}_2^2
            \given w
        }
        \leq \wt{O}\inparen{
            \inparen{
                1 + \norm{\wStar}_2^2
            } 
            \cdot \inparen{
                \norm{\mu}_2^4 + \sigma^4
            }
            \cdot \frac{
                M^4 \Lambda^4 d
            }{
                \rho^4 \alpha^2
            }
        }
    \]
    where $\mu \coloneqq \Exp_{\cD} x$.
\end{restatable}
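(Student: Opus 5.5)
We bound $\norm{g(w)}_2^2 \le 2\norm{\tz\,\tx}_2^2 + 2\norm{\hy\,\hx}_2^2$, where $g(w)=\tz\cdot\tx-\hy\cdot\hx$ is the output of \cref{alg:gradientSampler}, and handle the two terms separately.

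\textbf{The observed term.} Here $(\hx,\hy)$ is an exact sample from the truncated regression model, so $\hx\sim\Dobs$ and $\hy\mid\hx\sim\cN(\inangle{\wStar,\hx},1,\Sstar)$. Write $\Exp\norm{\hy\hx}_2^2=\sum_{i=1}^d \Exp\inangle{e_i,\hy\hx}^2$ over a standard basis. By \cref{lem:truncatedSubExp}, each coordinate $\inangle{e_i,\hy\hx-\Exp\hy\hx}$ has $\psi_1$-norm $O((1+\norm{\wStar}_2)(\sigma^2\log\tfrac1\alpha+\norm{\mu}_2^2))$, so its second moment is at most the square of this. The mean vector is controlled separately: $\abs{\inangle{e_i,\Exp \hy\hx}}\le \abs{\inangle{e_i,\Exp\inangle{\wStar,\hx}\hx}}+\tfrac{3\Lambda}{2\alpha}$ by \cref{lem:productMeanDistanceBound}. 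By \cref{fact:expectationBounds} and Cauchy--Schwarz, the first piece is at most $\norm{\wStar}_2\Lambda^2/\alpha$. Summing over $i$ gives a bound of the form $d(1+\norm{\wStar}_2^2)(\sigma^4+\norm{\mu}_2^4)\cdot\poly(\Lambda,1/\alpha)$, polylog factors included, which fits within the claimed bound.

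\textbf{The simulated term.} Here $\tx\sim\Dobs$ and $\tz$ is approximately drawn from $\cN(\inangle{w,\tx},1,S)$. Conditioned on $\tx$, the output of \cref{lem:truncatedGaussianSampler} is supported on $S$, and $S\subseteq[-R,R]$ by \cref{thm:setLearningGuarantee}. So I split on the TV coupling: with probability $1-\zeta$, $\tz$ is an exact draw from $\cN(\inangle{w,\tx},1,S)$; otherwise $\abs{\tz}\le R$. The exceptional part contributes at most $\zeta R^2\,\Exp\norm{\tx}_2^2\le \sqrt\eps R^2 d\Lambda^2/\alpha$. This is negligible because $R^2$ is only polylogarithmic in $1/\eps$, so $\sqrt\eps R^2$ is bounded by a constant.

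For the exact part, $\Exp[\tz^2\mid\tx]=\var+(\text{mean})^2$. \cref{lem:gaussianMeanDistanceBound} and \cref{lem:truncatedNormalSubgaussianity} give
\[
\Exp[\tz^2\mid\tx]\lesssim \inangle{w,\tx}^2+1+\log\frac{1}{p(\tx,w;S)}.
\]
Multiply by $\norm{\tx}_2^2$ and take expectations. The $\inangle{w,\tx}^2\norm{\tx}_2^2$ term is bounded via fourth moments: $\Exp_{\Dobs}\inangle{v,x}^4\le M^4/\alpha$ by \cref{fact:expectationBounds}. Also $\norm{w}_2\le\norm{\wStar}_2+12\Lambda/(\rho^2\alpha)$ since $w,\wStar\in\cP$. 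This yields $d(1+\norm{\wStar}_2^2)M^4\Lambda^4/(\rho^4\alpha^2)$ after Cauchy--Schwarz across coordinates.

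\textbf{Main obstacle.} The hard term is $\Exp[\log(1/p(\tx,w;S))\norm{\tx}_2^2]$, since $p$ can be tiny for some $x$ and \cref{lem:survivalProbabilityLowerBound} holds only with high probability and carries a factor of $d$. Rather than using it, I would argue directly with \cref{lem:survivalProbabilityBounds}:
\[
\log\frac{1}{p(x,w;S)}\le 2\log\frac{1}{p(x,\wStar;S)}+\inangle{w-\wStar,x}^2+2 .
\]
The second piece is again a fourth-moment term of size $(\Lambda/\rho^2\alpha)^2 M^4/\alpha$. For the first piece, \cref{eq:Dobs} gives
\[
\Exp_{\Dobs}\Bigl[\log\tfrac{1}{p_S}\norm{x}^2\Bigr]\le \tfrac1\alpha\Exp_\cD\Bigl[p_{\Sstar}\log\tfrac1{p_S}\norm{x}^2\Bigr].
\]
Now use $p_{\Sstar}\le p_{S\cap\Sstar}+p_{\Sstar\setminus S}$. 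On the first part, $p_{S\cap\Sstar}\log(1/p_S)\le p_S\log(1/p_S)\le 1/e$. On the second part, $p_{\Sstar\setminus S}\log(1/p_S)$ is the dangerous region: it has small total mass ($\eps$ by \cref{thm:setLearningGuarantee}) but an unbounded logarithm. There I would use $S\subseteq[-R,R]$ together with the hypothesis $\eps<\alpha^2/4$ to cap $\log(1/p_S)$ by $O(\inangle{w,x}^2+R^2)$ except on an exponentially unlikely event. Combined with Hölder, this should make the contribution lower order. I expect this step to need the most care, and I expect it to be the source of the $\log$ factors hidden in $\wt O$.
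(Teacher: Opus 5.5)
Your proof is correct in outline and follows the paper's skeleton. You split into $\tz\,\tx$ and $\hy\,\hx$, use a sub-exponential bound on the observed term and a TV coupling for the sampler, bound $\Exp[z^2\mid x]\lesssim 1+\inangle{w,x}^2+\log\frac{1}{p(x,w;S)}$ for an exact draw, and use \cref{lem:survivalProbabilityBounds} to pass from $w$ to $\wStar$. The genuine difference is how you control the logarithm.

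The paper restricts to $\cA=\{x: p_{\Sstar\triangle S}(x)\le\sqrt{\eps}\}$ and asserts $p_S(x)\ge\alpha-\sqrt{\eps}\ge\alpha^2/2$ for every $x\in\cA$. It then handles $\cA^c$ with $\abs{z}\le R$, Cauchy--Schwarz and $\Pr(\cA^c)\le\sqrt{\eps}$. That pointwise bound is deduced from $\Exp_{\Dobs}[p_{\Sstar}(x)]\ge\alpha$, which only controls survival on average: $p_{\Sstar}(x)$ can be tiny on a set of positive $\Dobs$-mass inside $\cA$.

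Your device avoids this. You write $\Exp_{\Dobs}[f]\le\frac{1}{\alpha}\Exp_{\cD}[p_{\Sstar}f]$, split $p_{\Sstar}=p_{S\cap\Sstar}+p_{\Sstar\setminus S}$, and use $p_{S\cap\Sstar}\log\frac{1}{p_S}\le p_S\log\frac{1}{p_S}\le\frac{1}{e}$. The density of $\Dobs$ then absorbs the logarithm, and \cref{asmp:mass} is used only as stated. So your route is the more robust one and in effect repairs that step. Your coupling conditional on $\tx$, giving $\zeta R^2\,\Exp_{\Dobs}\norm{x}_2^2$, is also slightly sharper than the paper's $\sqrt{\zeta}$.

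The part you leave open is easier than you expect, but your cap on $\log\frac{1}{p_S}$ is not justified as stated. $S\subseteq[-R,R]$ alone does not lower-bound $p_S(x)$, since a very thin $S$ makes it tiny. You would also need $\mathrm{Leb}(S)\ge\sqrt{2\pi}\,\Exp_{\Dobs}[p_S(x)]\ge\sqrt{2\pi}(\alpha-\eps)$. In addition, the cap involves $\inangle{\wStar,x}$, not $\inangle{w,x}$.

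A simpler fix is to split the weight $p_{\Sstar}$ before bounding $\Exp[z^2\mid x]$, and on the $p_{\Sstar\setminus S}$ part use just $\Exp[z^2\mid x]\le R^2$. Then combine three facts:
\begin{itemize}
\item $p_{\Sstar\setminus S}\le\sqrt{p_{\Sstar}\,p_{\Sstar\setminus S}}$;
\item $\Exp_{\cD}[p_{\Sstar}\,p_{\Sstar\setminus S}]=\Exp_{\cD}[p_{\Sstar}]\cdot\Exp_{\Dobs}[p_{\Sstar\setminus S}]\le\eps$;
\item Cauchy--Schwarz.
\end{itemize}
Together these bound this part by $\sqrt{\eps}\,R^2 dM^2/\alpha$, which is harmless because $R^2$ grows only like $\log\frac{1}{\eps}$. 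No exceptional event or high-moment H\"older step is needed, and with this the argument is complete.
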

\subsubsection{Analysis of PSGD with Biased Gradients} \label{subsubsec:psgdAnalysis}
Having established the desired properties of our gradient sampler, we can now turn to analyzing the convergence of PSGD with biased gradients, given in \cref{alg:psgd}. 

\begin{algorithm}[h]
    \caption{Projected Stochastic Gradient Descent}
    \label{alg:psgd}
\begin{algorithmic}
    \Require Projection set $\cP \subset \R^d$, initialization $w^{(0)} \in \cP$, set $S \subseteq \R$, parameters $T, \eps$
    \Ensure Sample access to the truncated regression model
    \State
    \State Set
    $
        \kappa \coloneqq e^{
            - \widetilde{O}\inparen{
                \frac{
                    M^4 \Lambda^4
                }{
                    \rho^6 \alpha^3
                }
            }
        }
    $
    \For{$t \in [T]$}
        \State Set $\eta_t \coloneqq \frac{1}{\kappa \cdot t}$
        \State Let $g^{(t)}$ be the output of \cref{alg:gradientSampler} with input $w^{(t-1)}$, $S$ and $\zeta \coloneqq \frac{\sqrt{\eps}}{2}$
        \State Update $w^{(t)} \coloneqq \Pi_{\cP} \inparen{w^{(t-1)} - \eta_t \cdot g^{(t)}}$
    \EndFor
    \State \Return $\frac{1}{T} \sum_{t \in [T]} w^{(t)}$
\end{algorithmic}
\end{algorithm}

The following guarantee follows directly by the proof of Lemma 6 of \cite{cherapanamjeri2023selfselection}.
\begin{lemma} \label{lem:psgdConvergence}
    Let $f \colon \R^d \to \R$ be a convex function, $\cP \subset \R^d$ be a convex set with diameter $D$, and fix some initial point $w^{(0)} \in \cP$. Let $w^{(1)}, \ldots, w^{(T)}$ be the iterates generated by running PSGD (\cref{alg:psgd}) for $T$ steps, using gradient estimates $g^{(1)}, \ldots, g^{(T)}$. Suppose that the following hold for all $t \in [T]$:
    \begin{enumerate}
        \item \textbf{Strong Convexity:} $f$ is $\kappa$-strongly convex over $\cP$
        \item \textbf{Bounded Gradient Second Moment:} 
        $
            \Exp\insquare{
                \norm{g(w^{(t)})}_2^2
                \given w^{(t-1)}
            }
            \leq \mu^2
        $
        \item \textbf{Bounded Gradient Bias:}
        $
            \norm{
                \Exp\insquare{g(w^{(t)}) \given w^{(t-1)}}
                - \nabla \negLL_S(w^{(t)})
            }_2
            \leq \beta
        $
    \end{enumerate}
    Then, the average iterate $\bar{w} \coloneqq \frac{1}{T} \sum_{t \in [T]} w^{(t)}$ satisfies
    \[
        \Exp\insquare{f(\bar{w})} - f(w_{\min})
        \leq \frac{\mu^2}{\kappa T} (1 + \log T)
        + \beta D
    \]
    where $w_{\min} \coloneqq \argmin_{w \in \cP} f(w)$.
\end{lemma}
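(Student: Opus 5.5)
We will follow the standard analysis of projected SGD for strongly convex objectives, treating the bias as an additive error term. Write $\bar g^{(t)} \coloneqq \Exp[g^{(t)} \mid w^{(t-1)}]$ and $b^{(t)} \coloneqq \bar g^{(t)} - \nabla f(w^{(t-1)})$. By assumption, $\norm{b^{(t)}}_2 \leq \beta$. (We read the gradient in item 3 as $\nabla f$; in our application $f = \negLL_S$.)

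\textbf{Step 1: one-step recursion.} The projection $\Pi_{\cP}$ is nonexpansive and $w_{\min} \in \cP$. Hence
\[
\norm{w^{(t)} - w_{\min}}_2^2 \leq \norm{w^{(t-1)} - w_{\min}}_2^2 - 2\eta_t \inangle{g^{(t)}, w^{(t-1)} - w_{\min}} + \eta_t^2 \norm{g^{(t)}}_2^2 .
\]
Taking conditional expectation given $w^{(t-1)}$, the inner product becomes $\inangle{\nabla f(w^{(t-1)}) + b^{(t)}, w^{(t-1)} - w_{\min}}$.

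\textbf{Step 2: handle the bias.} By Cauchy--Schwarz and the diameter bound,
\[
\inangle{b^{(t)}, w^{(t-1)} - w_{\min}} \geq -\beta D .
\]
By $\kappa$-strong convexity on $\cP$,
\[
\inangle{\nabla f(w^{(t-1)}), w^{(t-1)} - w_{\min}} \geq f(w^{(t-1)}) - f(w_{\min}) + \tfrac{\kappa}{2}\norm{w^{(t-1)} - w_{\min}}_2^2 .
\]
Rearranging and taking full expectations gives
\[
\Exp[f(w^{(t-1)})] - f(w_{\min}) \leq \tfrac{1}{2\eta_t}(1 - \eta_t\kappa) a_{t-1} - \tfrac{1}{2\eta_t} a_t + \tfrac{\eta_t}{2}\mu^2 + \beta D ,
\]
where $a_t \coloneqq \Exp\norm{w^{(t)} - w_{\min}}_2^2$.

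\textbf{Step 3: telescope with $\eta_t = 1/(\kappa t)$.} With this step size, $\tfrac{1}{2\eta_t}(1 - \eta_t\kappa) = \tfrac{\kappa(t-1)}{2}$ and $\tfrac{1}{2\eta_t} = \tfrac{\kappa t}{2}$. Summing over $t \in [T]$, the $a_t$ terms telescope to something nonpositive, leaving
\[
\sum_t \tfrac{\mu^2}{2\kappa t} + T\beta D \leq \tfrac{\mu^2}{2\kappa}(1 + \log T) + T\beta D .
\]
Divide by $T$ and apply Jensen's inequality (convexity of $f$) to the average iterate. This yields the claimed bound, up to two bookkeeping points. First, the constant improves by a factor of $2$. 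Second, the average runs over $w^{(0)}, \dots, w^{(T-1)}$ rather than $w^{(1)}, \dots, w^{(T)}$.

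\textbf{Main obstacle: the index shift.} We expect the index shift to be the only delicate point. To handle it, we will redo the recursion expanding around $w^{(t)}$: we bound $f(w^{(t)})$ using the gradient at $w^{(t)}$, with the bias and second-moment conditions conditioned as stated in the lemma. Alternatively, we can absorb the one-term difference into the $(1 + \log T)$ slack, in the same way the proof of Lemma 6 of \cite{cherapanamjeri2023selfselection} does.
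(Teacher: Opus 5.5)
Your Steps 1--3 are correct. They are the standard strongly-convex PSGD analysis, with the bias charged against the diameter, which is what the paper invokes by citing Lemma 6 of \cite{cherapanamjeri2023selfselection}. What they prove is
\[
\Exp\Bigl[f\Bigl(\tfrac{1}{T}\textstyle\sum_{t=0}^{T-1} w^{(t)}\Bigr)\Bigr] - f(w_{\min}) \leq \frac{\mu^2}{2\kappa T}(1+\log T) + \beta D .
\]
The gap is the last step: neither proposed repair of the index shift can work. The hypotheses only constrain $g^{(1)},\dots,g^{(T)}$, which are drawn at $w^{(0)},\dots,w^{(T-1)}$. The algorithm never queries anything at $w^{(T)}$, and the step producing $w^{(t)}$ carries no information about $\nabla f(w^{(t)})$. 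So ``expanding around $w^{(t)}$'' at $t=T$ needs a step the algorithm never takes. Moreover, $f$ is not assumed smooth or Lipschitz. Hence the extra term $\frac{1}{T}\bigl(\Exp f(w^{(T)}) - f(w_{\min})\bigr)$ is not bounded in terms of $\mu,\beta,\kappa,D$ at all, and no slack in the $\mu^2$-term can absorb it.

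In fact, under your (natural) reading of items 2--3, the stated conclusion is false. Take $d=1$, $\cP=[-D/2,D/2]$, $f(w)=\frac{L}{2}w^2$ with $L>2\kappa$, $w^{(0)}=0=w_{\min}$, and $T=1$. Let $g^{(1)}=\pm\mu$ with probability $\nfrac{1}{2}$ each, where $\mu\leq \kappa D/2$. The bias is $0$ and the second moment is $\mu^2$. Then $\bar w = w^{(1)} = \mp \mu/\kappa$, and $\Exp f(\bar w) - f(w_{\min}) = \frac{L\mu^2}{2\kappa^2} > \frac{\mu^2}{\kappa}$, which is the claimed bound. For general $T$, use exact gradients for $t<T$ so the iterates stay at $0$, then the $\pm\mu$ sample at step $T$; the claim fails once $L > 2\kappa T^3(1+\log T)$.

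So the statement has an off-by-one, and you should fix the statement rather than the proof. In the textbook version of this argument, the averaged points are the ones at which the gradients are evaluated. There are two ways to repair it.

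\emph{Change the average.} Conclude for $\frac{1}{T}\sum_{t=0}^{T-1} w^{(t)}$ and let \cref{alg:psgd} return that average. Nothing in \cref{cor:psgdBounds} changes.

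\emph{Use a virtual step.} In the paper's application, \cref{lem:gradientSamplerBiasBound,lem:gradientSecondMomentBound} hold at every $w\in\cP$. So you may add a virtual step at $w^{(T)}$, sum your one-step inequality over $t=1,\dots,T+1$, and discard the nonnegative term $\Exp f(w^{(0)})-f(w_{\min})$. For the average over $w^{(1)},\dots,w^{(T)}$ this gives $\frac{\mu^2}{2\kappa T}\bigl(1+\log(T+1)\bigr) + \bigl(1+\frac{1}{T}\bigr)\beta D \leq \frac{\mu^2}{\kappa T}(1+\log T) + 2\beta D$. That is all \cref{cor:psgdBounds} needs.
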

Using this convergence guarantee and our above analysis, it is straightforward to show that PSGD converges to the minimizer of the perturbed log-likelihood in $\cP$ with constant probability. 
\begin{corollary} \label{cor:psgdBounds}
    Suppose that \cref{asmp:mass,asmp:moments,asmp:conservation,asmp:subGaussian,asmp:unionOfIntervals} hold, define $\mu \coloneqq \Exp_{\cD} x$, and let $\zeta > 0$. Let $S \subset \R$ be a set satisfying the guarantees of \cref{thm:setLearningGuarantee} for some $\eps$ such that 
    \[
        \eps
        \leq e^{
            - \widetilde{O}\inparen{
                \frac{
                    M^4 \Lambda^4
                }{
                    \rho^6 \alpha^3
                }
            }
        }
        \cdot \wt{O} \inparen{
            \frac{
                \zeta^8
            }{
                d^2 
                \cdot (\sigma + \norm{\mu}_2)^{16}
                \cdot (1 + \norm{\wStar}_2)^4
            }
        }
    \]
    so that $\eps$ satisfies \cref{eq:epsRequirement}. Finally, let $\cP$ be the projection set defined in \cref{eq:projectionSetDefinition} for some $\hw \in \R^d$, and suppose that $\wStar \in \cP$. Consider running $T$ iterations of PSGD (\cref{alg:psgd}) over $\cP$ with arbitrary initialization $w^{(0)} \in \cP$, where $T$ satisfies
    \[
        T
        \geq e^{
            \widetilde{O}\inparen{
                \frac{
                    M^4 \Lambda^4
                }{
                    \rho^6 \alpha^3
                }
            }
        }
        \cdot \wt{O} \inparen{
            \frac{
                d
                \cdot (\sigma + \norm{\mu}_2)^4
                \cdot (1 + \norm{\wStar}_2)^2
            }{
                \zeta^2
            }
        } \;.
    \]
    Let $w^{(1)}, \ldots, w^{(T)}$ be the iterates generated. Then, with probability at least $\nfrac{2}{3}$, the average iterate $\bar{w} \coloneqq \frac{1}{T} \sum_{t \in [T]} w^{(t)}$ satisfies
    \[
        \norm{
            \bar{w} - w_{\min}
        }_2 
        \leq \frac{\zeta}{9}
    \]
    where $w_{\min} \coloneqq \argmin_{w \in \cP} \negLL_S(w)$. Moreover, for any $\eta \in (0,1)$, with probability at least $1 - T \cdot \eta$, each iteration of PSGD terminates in time
    \[
        \poly\inparen{
            \norm{\wStar}_2,
            d,
            k,
            \frac{1}{\zeta},
            \log \frac{1}{\eta}
        }^{
            \poly \inparen{
                \Lambda,
                M,
                \sigma,
                \frac{1}{\rho},
                \frac{1}{\alpha},
                \norm{\mu}_2
            }
        }\;.
    \]
\end{corollary}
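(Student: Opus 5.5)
We plan to derive \cref{cor:psgdBounds} by plugging the sampler guarantees into \cref{lem:psgdConvergence} with $f = \negLL_S$, and then converting the function-value bound into a distance bound via strong convexity. The first step is to check the hypotheses of \cref{lem:psgdConvergence}.

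\textbf{Strong convexity.} Since the required upper bound on $\eps$ implies \cref{eq:epsRequirement}, and $\wStar\in\cP$, \cref{cor:strongConvexity} gives $\kappa$-strong convexity on $\cP$ with $\kappa = e^{-\wt O(M^4\Lambda^4/(\rho^6\alpha^3))}$. This is exactly the $\kappa$ used in the step size of \cref{alg:psgd}.

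\textbf{Bounded second moment.} Because $\zeta_{\mathrm{sampler}}=\sqrt{\eps}/2<\sqrt{\eps}$ and $\eps<\alpha^2/4$, \cref{lem:gradientSecondMomentBound} gives a bound $G^2 = \wt O\big((1+\norm{\wStar}_2^2)(\sigma+\norm{\mu}_2)^4 d\big)\cdot \poly(M,\Lambda,1/\rho,1/\alpha)$.

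\textbf{Bounded bias.} \cref{lem:gradientSamplerBiasBound} gives $\beta=\wt O\big((1+\norm{\wStar}_2)(\sigma+\norm{\mu}_2)^2\sqrt d\,\eps^{1/4}\big)\cdot\poly(\Lambda,1/\alpha)$.

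\textbf{Diameter.} The diameter of $\cP$ is $D=12\Lambda/(\rho^2\alpha)$.

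With these in hand, \cref{lem:psgdConvergence} yields $\Exp[\negLL_S(\bar w)]-\negLL_S(w_{\min})\le \frac{G^2}{\kappa T}(1+\log T)+\beta D$.

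Next we convert this to a distance bound. Since $w_{\min}$ minimizes the $\kappa$-strongly convex $\negLL_S$ over the convex set $\cP$, first-order optimality together with strong convexity gives $\negLL_S(w)-\negLL_S(w_{\min})\ge \frac{\kappa}{2}\norm{w-w_{\min}}_2^2$ for all $w\in\cP$. The average $\bar w$ lies in $\cP$ by convexity. Markov's inequality on the nonnegative excess loss then gives, with probability at least $2/3$,
\[
\norm{\bar w-w_{\min}}_2^2\le \frac{6}{\kappa}\Big(\frac{G^2(1+\log T)}{\kappa T}+\beta D\Big).
\]

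It remains to choose parameters so that each term is at most $\zeta^2/(2\cdot 81)$. The first term requires $T\gtrsim G^2\log T/(\kappa^2\zeta^2)$, which matches the stated $T$ after absorbing $\poly$ factors and $1/\kappa^2$ into the exponential prefactor. The second term requires $\eps^{1/4}\lesssim \kappa\zeta^2/\big(D\sqrt d(1+\norm{\wStar}_2)(\sigma+\norm{\mu}_2)^2\big)$, that is, $\eps\lesssim \kappa^4\zeta^8/\big(d^2(1+\norm{\wStar}_2)^4(\sigma+\norm{\mu}_2)^8\big)$. The stated condition, with its $(\sigma+\norm{\mu}_2)^{16}$ factor, is stronger than this and hence suffices. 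The main bookkeeping obstacle is carefully tracking that all $\poly(\Lambda,M,1/\rho,1/\alpha)$ factors can be absorbed into the $e^{\wt O(\cdot)}$ prefactor; since $\Lambda,M\gtrsim1$ can be assumed, this works.

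For the runtime, apply \cref{cor:gradientSamplerRuntime} at each iteration with the given $\eta$; its hypotheses hold since $w^{(t-1)}\in\cP$ and $\eps<\alpha/2$. A union bound over the $T$ iterations gives failure probability at most $T\eta$. Substituting $\log(1/\zeta_{\mathrm{sampler}})=O(\log(1/\eps))$ and the chosen $\eps$, which is polynomial in $\zeta,1/d,\ldots$ up to the $e^{\poly}$ factor, gives a per-iteration time of the stated form; the $1/\eps$ dependence becomes $\poly(d,1/\zeta,\ldots)$ raised to $\poly(\cdot)$ powers. Projection onto the ball $\cP$ costs only $O(d)$.
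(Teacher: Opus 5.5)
Your proposal is correct and follows the paper's proof essentially step for step. It verifies the three hypotheses of \cref{lem:psgdConvergence} via \cref{cor:strongConvexity}, \cref{lem:gradientSecondMomentBound} and \cref{lem:gradientSamplerBiasBound}, applies Markov's inequality to the nonnegative excess loss, converts to a distance bound through $\kappa$-strong convexity over $\cP$, and gets the runtime from \cref{cor:gradientSamplerRuntime} with a union bound over the $T$ iterations. Your bookkeeping is slightly more explicit than the paper's, for instance noting that $\bar w\in\cP$ by convexity and that the stated $(\sigma+\norm{\mu}_2)^{16}$ factor is stronger than the $(\sigma+\norm{\mu}_2)^{8}$ actually needed.
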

\begin{proof}
    Since $\wStar \in \cP$ and $\cP$ is a ball of diameter $O\inparen{\frac{\Lambda}{\rho^2 \alpha}}$, the projection step of each iteration can be implemented in time $\poly\log\inparen{\norm{\wStar}_2 + \frac{\Lambda}{\rho^2 \alpha}}$, and so the bottleneck for the runtime is the call to \cref{alg:gradientSampler} to obtain a gradient sample. Hence, the runtime bound follows immediately by applying \cref{cor:gradientSamplerRuntime} and a union bound over all iterations, and \mbox{then substituting our bound on $\eps$.}

    To argue about the convergence guarantee, we will rely on \cref{lem:psgdConvergence}, and so we begin by checking the assumptions of the lemma: assumption (1) follows from \cref{cor:strongConvexity} for
    $
        \kappa \coloneqq e^{
            - \widetilde{O}\inparen{
                \frac{
                    M^4 \Lambda^4
                }{
                    \rho^6 \alpha^3
                }
            }
        }
    $,
    assumption (2) follows from \cref{lem:gradientSecondMomentBound}, while assumption (3) follows from \cref{lem:gradientSamplerBiasBound}. Since the projection set $\cP$ has diameter $O\inparen{\frac{\Lambda}{\rho^2 \alpha}}$, by \cref{lem:psgdConvergence} we obtain
    \[
        \Exp\insquare{f(\bar{w})} - f(w_{\min})
        \leq \wt{O}\inparen{
            \sigma^4 + \norm{\mu}_2^4
        } 
        \cdot e^{
            \widetilde{O}\inparen{
                \frac{
                    M^4 \Lambda^4
                }{
                    \rho^6 \alpha^3
                }
            }
        }
        \cdot \inparen{
            \frac{1 + \log T}{T} 
            \cdot d 
            \cdot \inparen{
                1 + \norm{\wStar}_2^2
            }
            + \eps^{1/4} 
            \cdot d^{1/2}
            \inparen{
                1 + \norm{\wStar}_2
            }
        }
    \]
    Applying Markov's inequality and hiding constants in the big-O notation, it follows that the above bound holds without the expectation, with probability at least $\nfrac{2}{3}$. Since $\negLL_S$ is $\kappa$-strongly convex over $\cP$ and $w_{\min}$ is its minimizer, standard properties imply
    \[
        f(\bar{w}) - f(w_{\min})
        \geq \frac{\kappa}{2} \norm{\bar{w} - w_{\min}}_2^2
    \]
    and the convergence guarantee follows by combining the above and substituting our bounds for $\eps$ and $T$.
\end{proof}
To convert this constant-probability guarantee into a high-probability one, it suffices to apply a standard boosting technique, similar to \cite{daskalakis2018efficient}. Namely, we run PSGD a total of $K \coloneqq O(\log \nfrac{1}{\delta})$ independent times to obtain average iterates $\inbrace{\bar{w}_1, \ldots, \bar{w}_K}$. The guarantee of \cref{cor:psgdBounds} along with a Chernoff bound imply that, with probability at least $1 - \delta$, at least $\nfrac{3}{5}$ of the points $\bar{w}_i$ satisfy $\norm{\bar{w}_i - w_{\min}}_2 \leq \nfrac{\zeta}{3}$. Conditioned on this event, we can return any point $\bar{w}_\ell$ such that $\norm{\bar{w}_\ell - \bar{w}_i}_2 \leq \nfrac{2 \zeta}{3}$ for at least $\nfrac{3}{5}$ of the points $\bar{w}_i$; by the triangle inequality and our conditioning event, any point satisfying $\norm{\bar{w}_i - w_{\min}}_2 \leq \nfrac{\zeta}{3}$ also satisfies this property, and so a desired point $\bar{w}_\ell$ always exists. We claim that this point $\bar{w}_\ell$ satisfies $\norm{\bar{w}_\ell - w_{\min}}_2 \leq \zeta$. Suppose otherwise; then the triangle inequality gives us that, for any $\bar{w}_i$, either $\norm{\bar{w}_i - w_{\min}}_2 > \nfrac{\zeta}{3}$ or $\norm{\bar{w}_\ell - \bar{w}_i}_2 > \nfrac{2 \zeta}{3}$. However, this is a contradiction, since at most $\nfrac{2}{5}$ of $\bar{w}_i$ can satisfy the first inequality (due to the conditioning event) and also at most $\nfrac{2}{5}$ of $\bar{w}_i$ can satisfy the second inequality by definition of $\bar{w}_\ell$.

In the above procedure, we run a total of $T \cdot O(\log \nfrac{1}{\delta})$ iterations of PSGD, each of which requires only a single sample from the truncated regression model in the call to the approximate gradient sampler (\cref{alg:gradientSampler}). By a union bound, the runtime guarantee of \cref{cor:psgdBounds} holds for each of these iterations with probability at least $1 - T \cdot O(\log \nfrac{1}{\delta}) \cdot \eta$, and so if $\delta$ is bounded away from 1 we can pick $\eta \coloneq \frac{\delta}{T \cdot O(\log \nfrac{1}{\delta})} < 1$ to make the total failure probability at most $1 - \nfrac{\delta}{2}$. Hence, after substituting the bound for $T$ from \cref{cor:psgdBounds}, our above discussion can be summarized in the following statement.
\begin{corollary} \label{cor:psgdHighProbability}
    Suppose that the assumptions of \cref{cor:psgdBounds} hold, and let $\zeta > 0$, $\delta \in (0, \nfrac{1}{2})$ be positive constants. There exists an algorithm that takes as input the projection set $\cP$, an arbitrary initialization $w^{(0)} \in \cP$, the truncation set $S \subset \R$ and the constants $\delta, \zeta$, and outputs a point $\bar{w} \in \cP$. The algorithm draws $N$ i.i.d.\ samples from the truncated regression model, where
    \[
        N =  e^{
            \widetilde{O}\inparen{
                \frac{
                    M^4 \Lambda^4
                }{
                    \rho^6 \alpha^3
                }
            }
        }
        \cdot \wt{O} \inparen{
            \frac{
                d
                \cdot (\sigma + \norm{\mu}_2)^4
                \cdot (1 + \norm{\wStar}_2)^2
            }{
                \zeta^2
            }
            \cdot \log \frac{1}{\delta}
        } \;.
    \]
    With probability at least $1 - \nfrac{\delta}{2}$, the output of the algorithm satisfies
    \[
        \norm{\bar{w} - w_{\min}}_2 \leq \zeta \;,
    \]
    and furthermore the algorithm terminates in time
    \[
        \poly\inparen{
            \norm{\wStar}_2,
            d,
            k,
            \frac{1}{\zeta},
            \log \frac{1}{\delta}
        }^{
            \poly \inparen{
                \Lambda,
                M,
                \sigma,
                \frac{1}{\rho},
                \frac{1}{\alpha},
                \norm{\mu}_2
            }
        }\;.
    \]
\end{corollary}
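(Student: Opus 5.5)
The plan is to obtain \cref{cor:psgdHighProbability} by running the procedure of \cref{cor:psgdBounds} $K \coloneqq O(\log \nfrac{1}{\delta})$ independent times and aggregating, as in the boosting argument sketched just before the statement. Each run uses accuracy parameter $\zeta' \coloneqq 3\zeta/9\cdot$ (that is, \cref{cor:psgdBounds} applied so that $\norm{\bar w_i - w_{\min}}_2 \le \nfrac{\zeta}{3}$ with probability at least $\nfrac{2}{3}$). The number of iterations per run is the $T$ given by \cref{cor:psgdBounds} with $\zeta$ rescaled by a constant. Each iteration calls \cref{alg:gradientSampler} once, which draws two samples from the truncated regression model. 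Hence the total sample count is $N = O(T K)$, which matches the stated bound after substituting $T$ and $K$.

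For correctness, I would let $Z_i$ be the indicator that run $i$ succeeds, meaning $\norm{\bar w_i - w_{\min}}_2 \le \nfrac{\zeta}{3}$. The runs use fresh samples, so the $Z_i$ are independent with mean at least $\nfrac{2}{3}$. A Chernoff bound then shows that with probability at least $1-\nfrac{\delta}{4}$, at least $\nfrac{3}{5}$ of the runs succeed, once the constant in $K$ is chosen appropriately.

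On this event, every successful $\bar w_i$ lies within $\nfrac{2\zeta}{3}$ of every other successful point by the triangle inequality, so a valid index $\ell$ exists. Now suppose the selected $\bar w_\ell$ had $\norm{\bar w_\ell - w_{\min}}_2 > \zeta$. Then every $i$ would be either unsuccessful or satisfy $\norm{\bar w_\ell - \bar w_i}_2 > \nfrac{2\zeta}{3}$. By the event and by the choice of $\ell$, each of these two categories contains at most $\nfrac{2}{5}$ of the indices, so together they cannot cover all $K$ indices, a contradiction. Finding $\ell$ takes $O(K^2 d)$ time, which is negligible.

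For the running time, \cref{cor:psgdBounds} gives the per-iteration time bound with probability at least $1 - T\eta$ per run. A union bound over the $K$ runs gives failure probability at most $KT\eta$. Setting $\eta \coloneqq \delta/(4KT)$ makes $\log \nfrac{1}{\eta} = O(\log T + \log\log \nfrac{1}{\delta} + \log \nfrac{1}{\delta})$, which is polylogarithmic in $d$, $\norm{\wStar}_2$, $\nfrac{1}{\zeta}$ and $\nfrac{1}{\delta}$ times $\poly(\Lambda, M, \nfrac{1}{\rho}, \nfrac{1}{\alpha})$. Substituting this into the per-iteration bound and multiplying by $KT$ iterations stays within the claimed form $\poly(\cdot)^{\poly(\cdot)}$. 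Combining the two failure events gives overall success probability at least $1-\nfrac{\delta}{2}$.

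The main obstacle is bookkeeping rather than a genuine difficulty: the exponential factor $e^{\wt O(M^4\Lambda^4/(\rho^6\alpha^3))}$ in $T$ must be absorbed into the $\poly(\cdot)^{\poly(\cdot)}$ runtime form. This works because $e^{c} \le 2^{c} \le \poly(d)^{c}$ for $d \ge 2$, so the constant exponent merges into the outer exponent.
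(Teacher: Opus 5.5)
Your proposal is correct and follows the paper's own argument essentially step for step. Like the paper, you use $K = O(\log \nfrac{1}{\delta})$ independent PSGD runs, a Chernoff bound giving a $\nfrac{3}{5}$ fraction of runs within $\nfrac{\zeta}{3}$ of $w_{\min}$, the same $\nfrac{2}{5}+\nfrac{2}{5}<1$ contradiction for the selected $\bar w_\ell$, and a union bound over all $KT$ sampler calls with $\eta \propto \delta/(KT)$ for the running time.

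One small fix: $e^{c} \le 2^{c}$ is false for $c>0$. Write instead $e^{c} = 2^{c\log_2 e} \le d^{2c}$ for $d \ge 2$, which still absorbs the exponential factor into the $\poly(\cdot)^{\poly(\cdot)}$ runtime form.
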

To conclude, note that by \cref{lem:closeness}, conditioned on the event that $\wStar \in \cP$, we can ensure that $\norm{w_{\min} - \wStar}_2 \leq \zeta$ by setting
\[
    \eps
    \leq e^{
        - \widetilde{O}\inparen{
            \frac{
                M^4 \Lambda^4
            }{
                \rho^6 \alpha^3
            }
        }
    }
    \cdot \wt{O} \inparen{
        \frac{
            \zeta^6
        }{
            d^3 
            \cdot (\sigma + \norm{\mu}_2)^6
            \cdot (1 + \norm{\wStar}_2)^6
        }
    } \;.
\]
To ensure that $\wStar \in \cP$ with probability at least $1 - \nfrac{\delta}{2}$, by \cref{cor:strongConvexity}, it suffices to draw $n$ i.i.d.\ samples from the truncated regression model and define $\cP$ as in \cref{eq:projectionSetDefinition} with $\hw$ as in \cref{eq:wHatDef}, where $n$ satisfies \cref{eq:initialPointSampleComplexity}, that is
\[
    n = \wt{O} \inparen{
        \inparen{1 + \norm{\wStar}_2}^2 \cdot 
        \inparen{
            \sigma
            + \norm{\mu}_2
        }^4
        \cdot \frac{\Lambda^4}{\rho^8 \alpha^2} 
        \cdot d
        \cdot \log\frac{1}{\delta}
    } \;.
\]
Hence, the sample complexity to find the initial point $\hw$ and define our projection set is dominated by the sample complexity of repeatedly running PSGD to approximate $\wStar$, as given in \cref{cor:psgdHighProbability}. The proof of \cref{thm:reduction} now follows by our above discussion and \cref{cor:psgdHighProbability}.

\subsection{Missing Proofs for Parameter Estimation} 

\subsubsection{Proof of \cref{lem:gradientHessian} (Gradient and Hessian of $\negLL_S$)} \label{subsec:gradientHessianProof}
In this section we prove the expressions for the gradient and Hessian of the perturbed log-likelihood given in \cref{lem:gradientHessian}, which we restate below.
\lemGradientHessian*
\begin{proof}
    For any $w, x \in \R^d$ and any $S \subset \R$, define
    \[
        L_S(w; x) \coloneqq -\Exp_{
            y \sim \truncatedNormal{
                \inangle{\wStar, x}
            }{
                1
            }{
                S \cap \Sstar
            }
        } \insquare{
            \log \truncatedNormalMass{
                y
            }{
                \inangle{w, x}
            }{
                1
            }{
                S
            }
        }
    \]
    so that $\negLL_S(w) = \Exp_{x \sim \Dobs}\insquare{L_S(w; x)}$. It holds that
    \begin{align*}
        L_S(w; x) &= -\Exp_{
            y \sim \truncatedNormal{
                \inangle{\wStar, x}
            }{
                1
            }{
                \Sstar
            }
        } \insquare{
            \log \truncatedNormalMass{
                y
            }{
                \inangle{w, x}
            }{
                1
            }{
                S
            }
            \given y \in S
        } \\
        &= -\Exp_{
            y \sim \truncatedNormal{
                \inangle{\wStar, x}
            }{
                1
            }{
                \Sstar
            }
        } \insquare{
            \log \frac{
                \normalMass{
                    y
                }{
                    \inangle{w, x}
                }{
                    1
                }
            }{
                \normalMass{
                    S
                }{
                    \inangle{w, x}
                }{
                    1
                }
            }
            \given y \in S
        }\,.
    \end{align*}
    Fix some $x \in \R^d$, and let $\ell(w)$ be the expression inside the expectation. We have that
    \begin{align*}
        \ell(w) &= \log \frac{
            \exp\inparen{
                \frac{
                    -\inparen{y - \inangle{w, x}}^2
                }{
                    2
                }
            }
        }{
            \int_S \exp\inparen{
                -\frac{
                    \inparen{z - \inangle{w, x}}^2
                }{
                    2
                }
            } \odif{z}
        }
        = -\frac{
            \inparen{y - \inangle{w, x}}^2
        }{
            2
        }
        - \log \int_S \exp\inparen{
            -\frac{
                \inparen{z - \inangle{w, x}}^2
            }{
                2
            }
        } \odif{z} \\
        &= -\frac{y^2}{2} 
        + y \cdot \inangle{w, x}
        - \log \int_S \exp\inparen{
            -\frac{z^2}{2} 
            + z \cdot \inangle{w, x}
        } \odif{z}\,.
    \end{align*}
    Taking the gradient we obtain
    \[
        \nabla \ell(w) = y \cdot x - \frac{
            \int_S z \exp\inparen{
                -\frac{z^2}{2} 
                + z \cdot \inangle{w, x}
            } \odif{z}
        }{
            \int_S \exp\inparen{
                -\frac{z^2}{2} 
                + z \cdot \inangle{w, x}
            } \odif{z}
        } \cdot x
        = y \cdot x - \Exp_{
            z \sim \truncatedNormal{
                \inangle{w, x}
            }{
                1
            }{
                S
            }
        } \insquare{
            z \cdot x
        }\,.
    \]
    Substituting into the definition of $L_S(w)$ and exchanging expectation and differentiation
    \[
        \nabla L_S(w) = -\Exp_{
            y \sim \truncatedNormal{
                \inangle{\wStar, x}
            }{
                1
            }{
                \Sstar
            }
        } \insquare{
            \nabla \ell(w) \given y \in S
        }
        = -\Exp_{
            y \sim \truncatedNormal{
                \inangle{\wStar, x}
            }{
                1
            }{
                S \cap\Sstar
            }
        } \insquare{
            y \cdot x
        } + \Exp_{
            z \sim \truncatedNormal{
                \inangle{w, x}
            }{
                1
            }{
                S
            }
        } \insquare{
            z \cdot x
        }\,. 
    \]
    The desired expression for $\nabla \negLL_S(w)$ follows directly. For the Hessian, note that
    {\allowdisplaybreaks
    \begin{align*}
        \nabla_w \inparen{
            \Exp_{
                z \sim \truncatedNormal{
                    \inangle{w, x}
                }{
                    1
                }{
                    S
                }
            } \insquare{
                z
            }
        }
        &=\nabla_w \inparen{
            \frac{
                \int_S z \exp\inparen{
                    -\frac{z^2}{2} 
                    + z \cdot \inangle{w, x}
                } \odif{z}
            }{
                \int_S \exp\inparen{
                    -\frac{z^2}{2} 
                    + z \cdot \inangle{w, x}
                } \odif{z}
            }
        } \\
        &= \frac{
            \int_S z^2 \exp\inparen{
                -\frac{z^2}{2} 
                + z \cdot \inangle{w, x}
            } \odif{z}
        }{
            \int_S \exp\inparen{
                -\frac{z^2}{2} 
                + z \cdot \inangle{w, x}
            } \odif{z}
        } \cdot x
        - \inparen{
            \frac{
                \int_S z \exp\inparen{
                    -\frac{z^2}{2} 
                    + z \cdot \inangle{w, x}
                } \odif{z}
            }{
                \int_S \exp\inparen{
                    -\frac{z^2}{2} 
                    + z \cdot \inangle{w, x}
                } \odif{z}
            }
        }^2 \cdot x \\
        &= \inparen{
            \Exp_{
                z \sim \truncatedNormal{
                    \inangle{w, x}
                }{
                    1
                }{
                    S
                }
            } \insquare{
                z^2
            }
            - \inparen{
                \Exp_{
                    z \sim \truncatedNormal{
                        \inangle{w, x}
                    }{
                        1
                    }{
                        S
                    }
                } \insquare{
                    z
                }
            }^2
        } \cdot x
        = \var_{
            z \sim \truncatedNormal{
                \inangle{w, x}
            }{
                1
            }{
                S
            }
        } \insquare{z} \cdot x\,.
    \end{align*}
    }
    Therefore, by our previous expression for $\nabla L_S(w)$ we have
    \[
        \nabla^2 L_S(w) = \nabla_w \inparen{
            \Exp_{
                z \sim \truncatedNormal{
                    \inangle{w, x}
                }{
                    1
                }{
                    S
                }
            } \insquare{
                z
            }
        } x^\top
        = \var_{
            z \sim \truncatedNormal{
                \inangle{w, x}
            }{
                1
            }{
                S
            }
        } \insquare{z} \cdot x x^\top \,.
    \]
    Again, the desired expression for $\nabla^2 \negLL_S(w)$ follows directly.
\end{proof}

\subsubsection{Proof of \cref{lem:strongConvexity}} \label{subsec:strongConvexityProof}
We continue by establishing the lower bound on the eigenvalues of $\nabla^2 \negLL_S$ given by \cref{lem:strongConvexity}.

\lemStrongConvexity*
\begin{proof}
    By \cref{lem:truncatedVarianceLB,lem:survivalProbabilityBounds}, we have
    \[
        \var_{
            z \sim \cN\inparen{\inangle{w, x}, 1, S}
        } \insquare{z}
        \geq \frac{
            p\inparen{x,w; S}^2
        }{12}
        \quadand
        p(x, w, S)
        \geq p(x, \wStar, S)^2 \cdot \exp\inparen{
            - \inangle{w - \wStar, x}^2
            - 2
        }\,.
    \]
    Let $v$ be an arbitrary unit vector, and denote $p_S(x) \coloneqq p(x, \wStar; S)$ for brevity. By applying the above two inequalities to our expression for the Hessian in \cref{lem:gradientHessian}, we obtain that
    \[ 
        v^{\top} \nabla^2 \negLL_{S}(w) v
        \gtrsim \Exp_{x \sim \Dobs} \insquare{
            p_S(x)^4 \cdot 
            e^{- 2\inangle{w - \wStar, x}^2} \cdot
            \inangle{x, v}^2
        }  \,.
    \]
    To lower bound the above, observe that $p_S(x) = p_{\Sstar \cup S}(x) - p_{\Sstar \setminus S}(x) \geq p_{\Sstar}(x) - p_{\Sstar \setminus S}(x) \geq 0$, so: %
    \begin{align}
        v^{\top} \nabla^2 \negLL_{S}(w) v
        &\gtrsim \Exp_{x \sim \Dobs} \insquare{
            \inparen{p_{\Sstar}(x) - p_{\Sstar \setminus S}(x)}^4 \cdot 
            e^{- 2\inangle{w - \wStar, x}^2} \cdot
            \inangle{x, v}^2
        } \notag\\
        &\geq \Exp_{x \sim \Dobs} \insquare{
            p_{\Sstar}(x)^4 \cdot 
            e^{- 2\inangle{w - \wStar, x}^2} \cdot
            \inangle{x, v}^2
        }
        - 4 \Exp_{x \sim \Dobs} \insquare{
            p_{\Sstar}(x)^3 p_{\Sstar \setminus S}(x) \cdot 
            e^{- 2\inangle{w - \wStar, x}^2} \cdot
            \inangle{x, v}^2
        } \label{eq:hessianLowerBound}
    \end{align}
    where we used the inequality $(a-b)^4 \geq a^4 - 4 a^3 b$. We will bound each of the above expectations separately. To bound the first expectation, note that
    \begin{multline*}
        \Exp_{\Dobs} \insquare{
            p_{\Sstar}(x)^4 \cdot
            e^{- 2\inangle{w - \wStar, x}^2} \cdot
            \inangle{x,v}^2
        } \\
        \begin{aligned}
            &\geq \Exp_{\cD} \insquare{
                p_{\Sstar}(x)^5 \cdot
                e^{- 2\inangle{w - \wStar, x}^2} \cdot
                \inangle{x,v}^2
            } \\
            &\geq \frac{
                1
            }{
                \Exp_{\cD} \insquare{
                    e^{- 2\inangle{w - \wStar, x}^2} \cdot 
                    \inangle{x,v}^2
                }^4
            } 
            \cdot \inparen{
                \Exp_{\cD} \insquare{
                    p_{\Sstar}(x) \cdot
                    e^{- 2\inangle{w - \wStar, x}^2} \cdot
                    \inangle{x,v}^2
                }
            }^5
        \end{aligned}
    \end{multline*}
    where the second line is by \cref{fact:expectationBounds} and the third is by H\"older's inequality. Substituting \cref{eq:Dobs} and then applying \cref{asmp:mass,asmp:moments} yields
    \begin{multline*}
        \Exp_{\Dobs} \insquare{
            p_{\Sstar}(x)^4 \cdot
            e^{- 2\inangle{w - \wStar, x}^2} \cdot
            \inangle{x,v}^2
        } \\
        \begin{aligned}
            &\geq \frac{
                1
            }{
                \Exp_{\cD} \insquare{
                    e^{- 2\inangle{w - \wStar, x}^2} \cdot 
                    \inangle{x,v}^2
                }^4
            } 
            \cdot \inparen{
                \Exp_{\Dobs} \insquare{
                    e^{- 2\inangle{w - \wStar, x}^2} \cdot \inangle{x,v}^2
                } \cdot \Exp_{\cD} \insquare{
                    p_{\Sstar}(x)
                }
            }^5 \\
            &\geq \frac{
                \alpha^5
            }{
                \Lambda^8
            } \cdot \Exp_{\Dobs} \insquare{
                e^{- 2\inangle{w - \wStar, x}^2} \cdot \inangle{x,v}^2
            }^5
        \end{aligned}
    \end{multline*}
    We can complete the bound by Jensen's inequality
    \begin{equation*}
        \Exp_{\Dobs} \insquare{
            e^{- 2\inangle{w - \wStar, x}^2} \cdot \inangle{x,v}^2
        }
        \geq \Exp_{\Dobs} \insquare{\inangle{x,v}^2} \cdot \exp\inparen{
            - \frac{
                2 \Exp_{\Dobs}\insquare{
                    \inangle{w - \wStar, x}^2 \cdot \inangle{x,v}^2
                }
            }{
                \Exp_{\Dobs} \insquare{\inangle{x,v}^2}
            }
        }
        \geq \rho^2 e^{- \frac{2 M^4}{\rho^2 \alpha} \norm{w - \wStar}_2^2}
    \end{equation*}
    where in the last step we used the Cauchy--Schwarz inequality along with \cref{asmp:conservation,asmp:moments} and \cref{fact:expectationBounds}. So overall, we obtain
    \[
        \Exp_{\Dobs} \insquare{
            p_{\Sstar}(x)^4 \cdot
            e^{- 2\inangle{w - \wStar, x}^2} \cdot
            \inangle{x,v}^2
        } 
        \geq \frac{
            \alpha^5 \rho^{10}
        }{
            \Lambda^8
        } \cdot e^{
            - \frac{
                10 M^4
            }{
                \rho^2 \alpha
            } \norm{w - \wStar}_2^2
        } \,.
    \]
    To bound the second expectation appearing in \cref{eq:hessianLowerBound}, let 
    $
        \gamma \coloneqq \inparen{
            \frac{M^4 \eps \alpha}{4 \Lambda^4}
        }^{1/3}
    $ 
    and observe that
    \begin{multline*}
        \Exp_{x \sim \Dobs} \insquare{
            p_{\Sstar}(x)^3 p_{\Sstar \setminus S}(x) \cdot 
            e^{- 2\inangle{w - \wStar, x}^2} \cdot
            \inangle{x, v}^2
        } \\
        \begin{aligned}
            &\leq \Exp_{\Dobs} \insquare{
                p_{\Sstar \setminus S}(x) \cdot
                e^{- 2\inangle{w - \wStar, x}^2} \cdot
                \inangle{x,v}^2
            } \\
            &\leq \Exp_{\Dobs} \insquare{
                p_{\Sstar \triangle S}(x) \cdot
                \inangle{x,v}^2
            } \\
            &= \Exp_{\Dobs} \insquare{
                p_{\Sstar \triangle S}(x) \cdot
                \inangle{x,v}^2 \cdot
                \ind\inbrace{p_{\Sstar \triangle S}(x) \leq \gamma}
            } 
            + \Exp_{\Dobs} \insquare{
                p_{\Sstar \triangle S}(x) \cdot
                \inangle{x,v}^2 \cdot
                \ind\inbrace{p_{\Sstar \triangle S}(x) \geq \gamma}
            }\\
            &\leq \gamma \cdot \Exp_{\Dobs} \insquare{
                \inangle{x,v}^2
            } 
            + \Exp_{\Dobs} \insquare{
                \inangle{x,v}^2 \cdot
                \ind\inbrace{p_{\Sstar \triangle S}(x) \geq \gamma}
            }\\
            &\leq \gamma \cdot \Exp_{\Dobs} \insquare{
                \inangle{x,v}^2
            } 
            + \sqrt{
                \Exp_{\Dobs} \insquare{
                    \inangle{x,v}^4
                }
                \Exp_{\Dobs} \insquare{
                    \ind\inbrace{p_{\Sstar \triangle S}(x) \geq \gamma}
                }
            }\\
            &\leq \gamma \cdot \Exp_{\Dobs} \insquare{
                \inangle{x,v}^2
            } 
            + \sqrt{
                \Exp_{\Dobs} \insquare{
                    \inangle{x,v}^4
                } \cdot \frac{
                    \Exp_{\Dobs} \insquare{
                        p_{\Sstar \triangle S}(x)
                    }
                }{
                    \gamma
                }
            }\\
            &\leq \frac{\gamma \cdot \Lambda^2}{\alpha}
            + M^2 \sqrt{\frac{\eps}{\gamma \alpha}}\,.
        \end{aligned}
    \end{multline*}
    The first five lines follow by trivial bounds. The third-to-last is by the Cauchy--Schwarz inequality, the second-to-last is by Markov's inequality, and the last line is by \cref{asmp:moments,fact:expectationBounds} and the fact that $S$ satisfies \cref{eq:closenessGuaranteeS}. Finally, plugging our last two displays into \cref{eq:hessianLowerBound} and substituting our choice of $\gamma$, we obtain that
    \begin{align*}
        v^{\top} \nabla^2 \negLL_{S}(w) v
        &\gtrsim \frac{
            \alpha^5 \rho^{10}
        }{
            \Lambda^8
        } \cdot e^{
            - \frac{
                10 M^4
            }{
                \rho^2 \alpha
            } \norm{w - \wStar}_2^2
        } 
        - \frac{
            M^{4/3} \Lambda^{2/3} \eps^{1/3}
        }{
            \alpha^{2/3}
        }\,.
    \end{align*}
\end{proof}

\begin{remark}[Improving exponential dependencies]
    In the case where the truncation set $\Sstar$ is known (hence $\eps = 0$), our strong convexity constant has exponential dependence on the survival probablitity $\alpha$ and on the moment bounds $\rho, M$. This is different compared to \cite{daskalakis2019computationally}, who achieve a polynomial dependence. The technical reason for this is that \cite{daskalakis2019computationally} show strong convexity over a carefully designed projection set, such that all $w$ in that projection set satisfy a useful algebraic property; our projection set $\cP$ (see \cref{cor:strongConvexity}) is much simpler, and only bounds the distance $\norm{w - \wStar}_2$ of $w \in \cP$.
    The reason for using this simpler projection set as opposed to the one in \cite{daskalakis2019computationally} is that we also correspondingly make a simpler assumption on the survival probability (namely, \cref{asmp:massMain}) in contrast to their (arguably) more complicated assumption which, in turn, also enabled them to obtain a better dependency on $\rho$ and $M$. 
    We believe that replacing \cref{asmp:conservationMain} with their assumption and $\cP$ with their projection set in our analysis would yield polynomial dependence on $\rho$ and $M$.
\end{remark}

\subsubsection{Efficient Warm-Start} \label{sec:warmStartProofs}
In this section, we prove the results given in \cref{subsec:warmStart}. Recall that our objective is to show that $\hw$ is a constant distance away from $\wStar$, where $\hw$ is defined in \cref{eq:wHatDef} as
\[
    \hw \coloneqq \inparen{
        \frac{1}{n} \sum_i x^{(i)} x^{(i) \top}
    }^{-1} \cdot \inparen{
        \frac{1}{n} \sum_i y^{(i)} \cdot x^{(i)}
    }\,.
\]
In \cref{subsec:warmStart}, we showed the following guarantee for the infinite-sample setting.
\lemWTildeWStarCloseness*
\noindent To convert this into a finite-sample guarantee for $\hw$, we will show that the two terms appearing in the right-hand side of \cref{eq:wHatDef} concentrate rapidly to their expected values. We begin by establishing concentration of the vector $\frac{1}{n} \sum_i y^{(i)} \cdot x^{(i)}$.
\begin{lemma} \label{lem:vectorConcentration}
    Let $\inbrace{\inparen{x^{(i)}, y^{(i)}}}_{i \in [n]}$ be \iid{}\ samples such that the marginal of $x^{(i)}$ is $\Dobs$ and the marginal of $y^{(i)}$ conditioned on $x^{(i)}$ is $\cN(\inangle{\wStar, x^{(i)}}, 1, \Sstar)$. Suppose that \cref{asmp:mass,asmp:subGaussian} hold, and furthermore
    \[
        n \geq \wt{\Omega} \inparen{
            \inparen{1 + \norm{\wStar}_2}^2 \cdot 
            \inparen{
                \sigma^2 \log\frac{1}{\alpha}
                + \norm{\mu}_2^2
            }^2 \cdot 
            \frac{d}{\zeta^2}
            \log\frac{1}{\delta}
        }
    \]
    where $\mu \coloneqq \Exp_{\cD} x$. Then, with probability at least $1 - \delta$, it holds that
    \[
        \norm{
            \frac{1}{n} \sum_i y^{(i)} \cdot x^{(i)}
            - \Exp \insquare{y \cdot x}
        }_2
        \leq \zeta\,.
    \]
\end{lemma}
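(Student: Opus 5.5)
The plan is to reduce to one-dimensional sub-exponential concentration via \cref{lem:truncatedSubExp}, then lift to the Euclidean norm with a net argument. Write $Z^{(i)} \coloneqq y^{(i)} x^{(i)} - \Exp[y\cdot x]$, which are i.i.d.\ mean-zero random vectors. By \cref{lem:truncatedSubExp}, for every unit vector $v$ the scalar $\inangle{v, Z^{(i)}}$ is sub-exponential with
\[
    \norm{\inangle{v, Z^{(i)}}}_{\psi_1} \leq K \coloneqq C\inparen{1+\norm{\wStar}_2}\inparen{\sigma^2\log\frac{1}{\alpha} + \norm{\mu}_2^2}.
\]
Here $\mu$ should be read as $\Exp_{\cD} x$. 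The proof of \cref{lem:truncatedSubExp} in fact goes through \cref{lem:truncatedXSubgaussianity}, which is stated with $\mu = \Exp_{\cD}x$, so this is consistent.

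First, fix a unit vector $v$. Bernstein's inequality for sums of independent mean-zero sub-exponential variables (Theorem 2.8.1 of \cite{vershynin2018high}) gives, for every $t>0$,
\[
    \Pr\inparen{\abs{\frac1n\sum_i \inangle{v,Z^{(i)}}} > t} \leq 2\exp\inparen{-c\,n\min\inparen{\frac{t^2}{K^2},\frac{t}{K}}}.
\]

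Second, take a $1/2$-net $\cN$ of the unit sphere with $\abs{\cN}\leq 5^d$. For any vector $u$, the standard net argument gives $\norm{u}_2 \leq 2\max_{v\in\cN}\inangle{v,u}$. Apply this with $u = \frac1n\sum_i Z^{(i)}$ and $t = \zeta/2$, and take a union bound over the net. The failure probability is then at most
\[
    2\cdot 5^d \exp\inparen{-c\,n\min\inparen{\frac{\zeta^2}{4K^2},\frac{\zeta}{2K}}}.
\]

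Third, choose $n$ so that this is at most $\delta$. We may assume $\zeta\lesssim K$; if not, we shrink $\zeta$, which only strengthens the conclusion. In that regime the quadratic term of the minimum is active, so it suffices that $n \gtrsim \frac{K^2}{\zeta^2}\inparen{d + \log\frac1\delta}$. This is dominated by the stated requirement $\wt{\Omega}\inparen{(1+\norm{\wStar}_2)^2(\sigma^2\log\frac1\alpha+\norm{\mu}_2^2)^2\frac{d}{\zeta^2}\log\frac1\delta}$, since $d+\log\frac1\delta \leq 2d\log\frac1\delta$ for $\delta<1/2$.

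The only step that needs care is that \cref{lem:truncatedSubExp} bounds the $\psi_1$ norm of the centered quantity along each fixed direction, which is exactly the hypothesis Bernstein's inequality requires. Beyond that, the argument is a routine net-plus-union-bound, and I do not expect any real obstacle.
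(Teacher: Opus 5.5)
Your proof is sound in the regime that matters, and it differs from the paper only in how you lift the one-dimensional Bernstein bound; both proofs use \cref{lem:truncatedSubExp} in the same way. The paper fixes an orthonormal basis and applies Bernstein along each basis vector with threshold $\zeta/\sqrt d$ and failure probability $\delta/d$. It then concludes via $\norm{u}_2^2=\sum_i\inangle{e_i,u}^2\le\zeta^2$, which costs $n\gtrsim K^2 d\log(d/\delta)/\zeta^2$. Your $\tfrac12$-net with threshold $\zeta/2$ pays for the dimension additively, $n\gtrsim K^2(d+\log\frac1\delta)/\zeta^2$, which is the sharper and more standard bound. The paper's route avoids nets at the price of a multiplicative $\log(d/\delta)$. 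Both sit inside the stated $\wt{\Omega}$ requirement.

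One step does not hold as written. You cannot make $\zeta\lesssim K$ ``without loss of generality'' by shrinking $\zeta$, because the hypothesis on $n$ is stated for the original $\zeta$. Running your argument at $\zeta'=\Theta(K)$ would need $n\gtrsim d+\log\frac1\delta$, and the hypothesis does not give this when $\zeta\gg K$. If you handle the linear branch of the Bernstein minimum directly, then for $\zeta>2K$ you need $n\gtrsim\frac{K}{\zeta}(d+\log\frac1\delta)$. The hypothesis implies this only for $\zeta\lesssim K\min(d,\log\frac1\delta)$, up to logarithms. The paper's proof has the same kind of silent restriction: its quadratic branch is active only for $\zeta\lesssim K\sqrt d$. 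In the lemma's only use, \cref{lem:wHatWTidleCloseness}, one has $\zeta=\rho^2<1\le K$, so nothing downstream breaks. Still, you should state $\zeta\lesssim K$ as an explicit assumption rather than claim it for free.
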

\begin{proof}
    By \cref{lem:truncatedSubExp} and the Bernstein inequality for subexponential random variables (see Theorem 2.9.1 of \cite{vershynin2018high}), it follows that, for any unit vector $v$ and any $t>0$
    \[
        \Pr\inparen{
            \abs{
                \inangle{
                    v, 
                    \frac{1}{n} \sum_i y^{(i)} \cdot x^{(i)}
                    - \Exp \insquare{y \cdot x}
                }
            }
            \geq t
        }
        \leq 2 \exp \inparen{
            -c \min\inparen{
                \frac{n t^2}{K^2},
                \frac{n t}{K}
            }
        }
    \]
    where 
    $
        K = \Theta \inparen{
            \inparen{
                1 + \norm{\wStar}_2
            } \cdot \inparen{
                \sigma^2 \log\frac{1}{\alpha}
                + \norm{\mu}_2^2
            }
        }
        \geq 1
    $
    and $c$ is an absolute constant. Therefore, for $t = \frac{\zeta}{\sqrt{d}}$, we have
    \[
        \Pr\inparen{
            \abs{
                \inangle{
                    v, 
                    \frac{1}{n} \sum_i y^{(i)} \cdot x^{(i)}
                    - \Exp \insquare{y \cdot x}
                }
            }
            \geq \frac{\zeta}{\sqrt{d}}
        }
        \leq 2 \exp \inparen{
            -c \min\inparen{
                \frac{n \zeta^2}{K^2 d},
                \frac{n \zeta}{K \sqrt{d}}
            }
        }
        \leq \frac{\delta}{d}
    \]
    since $n = \Omega\inparen{\frac{K^2 d}{\zeta^2} \log\frac{d}{\delta}}$. The lemma now follows by choosing any orthonormal basis in $\R^d$ and requiring that the above event does not hold for each vector in that basis, which by a union bound happens with probability at least $1 - \delta$.
\end{proof}
Next, we show concentration of the matrix $\frac{1}{n} \sum_i x^{(i)} x^{(i) \top}$, in operator norm.
\begin{lemma} \label{lem:matrixConcentration}
    Let $\inbrace{x^{(i)}}_{i \in [n]}$ be \iid{}\ samples from $\Dobs$. Suppose that \cref{asmp:mass,asmp:moments,asmp:subGaussian} hold, and %
    \[
        n \geq \wt{\Omega} \inparen{
            \frac{\Lambda^4}{\rho^4 \alpha^2} 
            \inparen{
                \sigma
                + \norm{\mu}_2
            }^4
            \frac{
                \inparen{d + \log\frac{1}{\delta}}
            }{
                \zeta^2
            }
        }\,.
    \]
    Then, with probability at least $1 - \delta$, it holds that
    \[
        \norm{
            \frac{1}{n} \sum_i x^{(i)} x^{(i) \top}
            - \Exp \insquare{x x^\top}
        }_{\op}
        \leq \zeta\,.
    \]
\end{lemma}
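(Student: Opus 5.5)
We will derive \cref{lem:matrixConcentration} from the standard covariance concentration bound for sub-Gaussian vectors (e.g., Theorem 4.6.1 / Exercise 4.7.3 of \cite{vershynin2018high}), applied to $x \sim \Dobs$. By \cref{lem:truncatedXSubgaussianity}, for every unit $v$ we have $\norm{\inangle{v,x}}_{\psi_2} \leq K$ with $K = O(\sigma\sqrt{\log(1/\alpha)} + \norm{\mu}_2)$. Since this bound is on the uncentered norm, we can use the uncentered form of the theorem, which gives, with probability at least $1-\delta$,
\[
    \norm{\tfrac{1}{n}\textstyle\sum_i x^{(i)}x^{(i)\top} - \Exp[xx^\top]}_{\op}
    \lesssim K^2\inparen{\sqrt{\tfrac{d+\log(1/\delta)}{n}} + \tfrac{d+\log(1/\delta)}{n}}\,.
\]
As a fallback, we can prove this directly. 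Fix a $1/4$-net $\cN$ of the unit sphere, of size $9^d$. For each $v\in\cN$, the variable $\inangle{v,x}^2 - \Exp\inangle{v,x}^2$ is sub-exponential with norm $O(K^2)$ by \cref{eq:subExpSubGausNorm,eq:centeredNormBounds}. Bernstein's inequality (Theorem 2.9.1 of \cite{vershynin2018high}) then controls each such term, and a union bound over $\cN$ handles all of them simultaneously. Finally, the standard net argument for symmetric matrices, $\norm{A}_{\op}\leq 2\max_{v\in\cN}\abs{v^\top A v}$, passes from the net to the full sphere.

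To conclude, we set the right-hand side to be at most $\zeta$. For $\zeta \leq K^2$ the square-root term dominates, and this requires $n \gtrsim K^4 (d+\log(1/\delta))/\zeta^2$. Since $K^4 \lesssim (\sigma+\norm{\mu}_2)^4 \log^2(1/\alpha)$, the stated sample size suffices. Its extra factor $\Lambda^4/(\rho^4\alpha^2) \gtrsim 1$ only helps: indeed $\Lambda^2 \geq \Exp_{\cD}\inangle{x,v}^2 \geq \alpha\,\Exp_{\Dobs}\inangle{x,v}^2 \geq \alpha\rho^2$ by \cref{fact:expectationBounds} and \cref{asmp:conservation}. The $\wt{\Omega}$ absorbs the logarithmic factors. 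In the regime $\zeta > K^2$, the linear term is at most $\zeta$ whenever $n \gtrsim K^2(d+\log(1/\delta))/\zeta$, which is implied as well. The factor $\Lambda^4/(\rho^4\alpha^2)$ is presumably there because the paper intends to call the lemma with $\zeta \asymp \rho^2$ and convert it into a bound involving $\Lambda/\alpha$. For our purposes it is just slack.

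The main point requiring care is the use of \emph{uncentered} sub-Gaussianity. Vershynin's theorem is often stated for mean-zero or isotropic vectors, and $\Dobs$ is neither. We will therefore either invoke the version for general sub-Gaussian rows (where $\norm{\inangle{v,x}}_{\psi_2}\le K$ uncentered suffices), or run the net and Bernstein argument above, which never needs centering of $x$ itself. Only $\inangle{v,x}^2$ gets centered, and \cref{eq:centeredNormBounds} covers that step.
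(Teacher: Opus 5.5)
Your proof is correct in the regime where the lemma is used, but it takes a different route from the paper.

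\textbf{How the paper argues.} It converts sub-Gaussianity into the \emph{relative} form required by Vershynin's covariance-estimation theorem (Theorem 4.7.1 with Remark 4.7.3). Combining \cref{lem:truncatedXSubgaussianity} with \cref{asmp:conservation}, it obtains $\norm{\inangle{x,v}}_{\psi_2}\le K\sqrt{\Exp_{\Dobs}\inangle{x,v}^2}$ with $K=O(\sqrt{\sigma^2\log(1/\alpha)+\norm{\mu}_2^2})/\rho$. The theorem's error then carries the multiplicative factor $\norm{\Exp_{\Dobs}xx^\top}_{\op}\le\Lambda^2/\alpha$. These two steps are exactly where $\Lambda^4/(\rho^4\alpha^2)$ comes from: it is an artifact of normalizing and then un-normalizing. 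It is not anticipation of a later call with $\zeta\asymp\rho^2$, as you guessed.

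\textbf{How yours differs.} You use the absolute bound $\norm{\inangle{v,x}}_{\psi_2}\le K$ directly, via a net and Bernstein. This needs neither \cref{asmp:conservation} nor $\Lambda$, and gives the sharper requirement $n\gtrsim K^4(d+\log(1/\delta))/\zeta^2$. Your check $\Lambda^2\ge\alpha\rho^2$ then correctly shows that the stated sample size dominates it. The relative theorem gains nothing here, because only an absolute operator-norm error is needed. Your version is both more elementary and tighter.

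\textbf{One claim is wrong.} In the regime $\zeta>K^2$, the stated $n$ does \emph{not} in general imply $n\gtrsim K^2(d+\log(1/\delta))/\zeta$. Once $\zeta$ exceeds $K^2$ times the slack factor, the stated bound is weaker than what the linear term needs, and the statement itself can fail. For example, take untruncated $x\sim\cN(0,I_d)$ and let $\zeta$ be a large multiple of $\sqrt d$. The stated bound then permits $n=1$, yet a single sample gives error $\norm{x}_2^2-1\approx d\gg\zeta$. So the lemma tacitly requires $\zeta\lesssim K^2$; it is only ever applied with $\zeta\le\rho^2$. The paper's proof makes the same tacit restriction by balancing only the square-root term. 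This is a caveat on the statement rather than a defect peculiar to your argument, but you should drop the assertion that the large-$\zeta$ regime is covered.
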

\begin{proof}
    For any vector $v$, if $x \sim \Dobs$, then by \cref{lem:truncatedXSubgaussianity,asmp:conservation} we have
    \[
        \frac{
            \norm{\inangle{x, v}}_{\psi_2}
        }{
            \sqrt{\Exp_{\Dobs} \inangle{x, v}^2}
        }
        \leq \frac{1}{\rho} \cdot O\inparen{
            \sqrt{
                \sigma^2 \log\frac{1}{\alpha}
                + \norm{\mu}_2^2
            }
        }
        \eqcolon K\,.
    \]
    Therefore, by Theorem 4.7.1 and Remark 4.7.3 of \cite{vershynin2018high}, there exists some absolute constant $C$ such that, with probability at least $1 - \delta$
    \[
        \norm{
            \frac{1}{n} \sum_i x^{(i)} x^{(i) \top}
            - \Exp_{\Dobs} \insquare{x x^\top}
        }_{\op}
        \leq C K^2 \inparen{
            \sqrt{
                \frac{d + \log\frac{2}{\delta}}{n}
            }
            + \frac{d + \log\frac{2}{\delta}}{n}
        } \cdot \norm{\Exp_{\Dobs} \insquare{x x^\top}}_{\op}\,.
    \]
    Since $\norm{\Exp_{\Dobs} \insquare{x x^\top}}_{\op} \leq \frac{\Lambda^2}{\alpha}$ by \cref{asmp:moments,fact:expectationBounds}, the right-hand side of the above becomes equal to $\zeta$ by setting
    $
        n \geq O\inparen{
            \frac{
                K^4 \Lambda^4 (d + \log \frac{1}{\delta})
            }{
                \alpha^2 \zeta^2
            }
        }
    $
    and the lemma follows by the definition of $K$ above.
\end{proof}
Lastly, we combine our above results to show that the finite-sample estimate $\hw$ is a constant distance away from the infinite-sample estimate $\tw$, with high probability.
\begin{lemma} \label{lem:wHatWTidleCloseness}
    Suppose that \cref{asmp:mass,asmp:moments,asmp:conservation,asmp:subGaussian} hold. Let $\inbrace{\inparen{x^{(i)}, y^{(i)}}}_{i \in [n]}$ be \iid{}\ samples from the truncated regression model, and suppose that
    \[
        n \geq \wt{\Omega} \inparen{
            \inparen{1 + \norm{\wStar}_2}^2 \cdot 
            \inparen{
                \sigma
                + \norm{\mu}_2
            }^4
            \cdot \frac{\Lambda^4}{\rho^8 \alpha^2} 
            \cdot d
            \cdot \log\frac{1}{\delta}
        }
    \]
    where $\mu \coloneqq \Exp_{\cD} x$. Let $\hw, \tw$ be defined as in \cref{eq:wTildeDef,eq:wHatDef}. Then with probability at least $1 - \delta$
    \[
        \norm{\hw - \tw}_2 \leq \frac{
            9 \Lambda
        }{
            2 \rho^2 \alpha
        }\,.
    \]
\end{lemma}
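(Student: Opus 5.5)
We plan to use a standard perturbation argument for the product $A^{-1}b$. Write $A \coloneqq \Exp_{\Dobs}[x x^\top]$ and $b \coloneqq \Exp[y \cdot x]$, where the second expectation is over the truncated model. Let $\hat A \coloneqq \frac{1}{n}\sum_i x^{(i)}x^{(i)\top}$ and $\hat b \coloneqq \frac1n \sum_i y^{(i)} x^{(i)}$, so that $\tw = A^{-1}b$ and $\hw = \hat A^{-1}\hat b$. We decompose
\[
    \hw - \tw = \hat A^{-1}(\hat b - b) + (\hat A^{-1} - A^{-1})\, b = \hat A^{-1}(\hat b - b) + \hat A^{-1}(A - \hat A)\, \tw\,,
\]
and bound each term in norm.

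First, we apply \cref{lem:matrixConcentration} with accuracy $\zeta_1 \coloneqq \rho^2/2$ and confidence $\delta/2$. The requirement on $n$ becomes $\wt{\Omega}\big(\frac{\Lambda^4}{\rho^8\alpha^2}(\sigma+\norm{\mu}_2)^4 (d+\log\frac1\delta)\big)$, which is implied by the hypothesis. On this event, \cref{asmp:conservation} gives $\hat A \succeq A - \frac{\rho^2}{2} I \succeq \frac{\rho^2}{2} I$, hence $\norm{\hat A^{-1}}_{\op} \le 2/\rho^2$ (and in particular $\hat A$ is invertible). Second, we apply \cref{lem:vectorConcentration} with confidence $\delta/2$ and accuracy $\zeta_2$ of constant order, say $\zeta_2 \coloneqq \Lambda/\alpha$ (or any fixed fraction of it). Since $\sigma^2\log\frac1\alpha + \norm{\mu}_2^2 \lesssim \wt{O}((\sigma+\norm{\mu}_2)^2)$ and $\Lambda/\alpha \gtrsim$ a constant, the stated sample size suffices (the $\log\frac1\alpha$ is absorbed in $\wt{\Omega}$). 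This yields $\norm{\hat A^{-1}(\hat b - b)}_2 \le 2\zeta_2/\rho^2$.

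The main obstacle is the second term. Its size is $\norm{\hat A^{-1}}_{\op}\norm{A-\hat A}_{\op}\norm{\tw}_2$, and $\norm{\tw}_2 \le \norm{\wStar}_2 + \frac{3\Lambda}{2\rho^2\alpha}$ by \cref{lem:wTildeWStarCloseness}, so it involves $\norm{\wStar}_2$, which is not constant. We handle this by tightening the matrix accuracy: we invoke \cref{lem:matrixConcentration} with $\zeta_1 \coloneqq \min\{\rho^2/2,\ \rho^2 \Lambda / (\alpha (1+\norm{\wStar}_2)(1+\Lambda/(\rho^2\alpha)))\}$ instead. The extra factor $(1+\norm{\wStar}_2)^2$ in the sample requirement is exactly what the hypothesis provides, along with possible additional $1/\rho$ or $1/\alpha$ factors that are absorbed by the slack in $\rho^8$ and the tilde. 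Then the second term is at most $\frac{2}{\rho^2}\zeta_1\norm{\tw}_2 \lesssim \frac{\Lambda}{\rho^2\alpha}$.

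Finally, we choose the constants in $\zeta_1$ and $\zeta_2$ so that the two contributions sum to at most $\frac{9\Lambda}{2\rho^2\alpha}$; for example, making each at most $\frac{9\Lambda}{4\rho^2\alpha}$ suffices. A union bound over the two events gives probability at least $1-\delta$. Some care with the bookkeeping of polynomial factors in $\rho,\alpha,\Lambda$ is needed to check that they fit within the stated $n$; if they do not exactly fit, they can be absorbed by choosing the hidden constants appropriately.
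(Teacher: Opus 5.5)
Your decomposition $\hw-\tw=\hat{A}^{-1}(\hat{b}-b)+\hat{A}^{-1}(A-\hat{A})\tw$ matches the paper's proof. So do your use of \cref{lem:matrixConcentration,lem:vectorConcentration}, the bound on $\norm{\tw}_2$ from \cref{lem:wTildeWStarCloseness}, and the idea of shrinking the matrix accuracy by a factor $1+\norm{\wStar}_2$. The gap is in your specific $\zeta_1$ and the bookkeeping claim attached to it.

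From $\rho^2\le\Exp_{\Dobs}\inangle{x,v}^2\le\Lambda^2/\alpha$ we get $\Lambda/\alpha\ge\rho/\sqrt{\alpha}\ge\rho^2$. The second argument of your $\zeta_1$ equals $\frac{\rho^2}{1+\norm{\wStar}_2}\cdot\frac{\Lambda/\alpha}{1+\Lambda/(\rho^2\alpha)}$, and the last factor lies in $[\rho^2/2,\rho^2)$. So your $\zeta_1$ is $\Theta\bigl(\rho^4/(1+\norm{\wStar}_2)\bigr)$. At that accuracy, \cref{lem:matrixConcentration} requires $n\gtrsim\frac{\Lambda^4}{\rho^{12}\alpha^2}(1+\norm{\wStar}_2)^2(\sigma+\norm{\mu}_2)^4(d+\log\frac{1}{\delta})$, which is a factor $\rho^{-4}$ beyond the hypothesis.

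There is no ``slack in $\rho^8$''. Already at accuracy $\Theta(\rho^2)$, which you need anyway to get $\norm{\hat{A}^{-1}}_{\op}\le 2/\rho^2$, that lemma consumes the whole $\rho^{-8}$: a $\rho^{-4}$ from $K^4$ and a $\rho^{-4}$ from $\zeta^{-2}$. Moreover, $\wt{\Omega}$ hides only absolute constants and logarithmic factors, so no choice of hidden constants can absorb a polynomial factor of $\rho$. As written, your argument proves the bound only under a larger sample size than the lemma assumes. The extra $\rho^2$ is also unnecessary: with it the second term is $O(\rho^2+\Lambda/\alpha)$, a factor $\rho^2$ below what the target $\frac{9\Lambda}{2\rho^2\alpha}$ allows.

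The repair is to drop that factor. The paper takes $\zeta_1=\frac{\rho^2}{2(1+\norm{\wStar}_2)}$, which costs exactly the $(1+\norm{\wStar}_2)^2$ in the hypothesis. This gives $\frac{2}{\rho^2}\zeta_1\norm{\tw}_2=\frac{\norm{\tw}_2}{1+\norm{\wStar}_2}\le 1+\frac{3\Lambda}{2\rho^2\alpha}$. In other words, you only need to neutralize the $\norm{\wStar}_2$ part of $\norm{\tw}_2$, not push the whole second term below $\Lambda/\alpha$.

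The paper then takes $\zeta_2=\rho^2$, so the first term is at most $2$ and the total is $3+\frac{3\Lambda}{2\rho^2\alpha}$. The additive constant is absorbed via $\rho^2\alpha\le\Lambda$, which follows from $\rho<1$, $\alpha\le 1$ and $\rho^2\alpha\le\Lambda^2$. Your $\zeta_2=\Lambda/\alpha$ also fits the sample budget, but because $\Lambda^2\ge\alpha\rho^2$, not because $\Lambda/\alpha$ is a constant. Combined with the corrected $\zeta_1$, it gives $1+\frac{7\Lambda}{2\rho^2\alpha}\le\frac{9\Lambda}{2\rho^2\alpha}$ by the same inequality.
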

\begin{proof}
    For brevity, we denote 
    \[
        \Sigma \coloneqq \Exp_{x \sim \Dobs} \insquare{x x^\top}, \quad
        \nu \coloneqq \Exp_{x \sim \Dobs} \insquare{
            \Exp_{y \sim \cN(\inangle{\wStar, x}, 1, \Sstar)} \insquare{
                y
            } \cdot x
        }  
    \]
    and
    \[
        \hSigma \coloneqq \frac{1}{n} \sum_i x^{(i)} x^{(i) \top}, \quad
        \hnu \coloneqq \frac{1}{n} \sum_i y^{(i)} \cdot x^{(i)}
    \]
    so that $\hw = \hSigma^{-1} \hnu$ and $\tw = \Sigma^{-1} \nu$. Given our lower bound for the number of samples $n$, we can apply a union bound over the conclusions of \cref{lem:vectorConcentration,lem:matrixConcentration} to obtain that, with probability $1 - \delta$
    \[
        \norm{\Sigma - \hSigma}_{\op} \leq \frac{\rho^2}{2(1 + \norm{\wStar}_2)}
        \quadand
        \norm{\nu - \hnu}_2 \leq \rho^2\,.
    \]
    We condition on both of the above events holding. Letting $\lambda_d(A)$ denote the minimum eigenvalue of a symmetric matrix $A \in \mathbb{S}^{d \times d}$, the first event holding implies that $\abs{\lambda_d(\hSigma - \Sigma)} \leq \frac{\rho^2}{2(1 + \norm{\wStar}_2)}$, and therefore Weyl's inequality gives
    \[
        \lambda_d(\hSigma)
        \geq \lambda_d(\Sigma)
        + \lambda_d(\hSigma - \Sigma)
        \geq \rho^2 \inparen{
            1 - \frac{1}{2(1 + \norm{\wStar}_2)}
        }
    \]
    where we used the fact that $\Sigma \succeq \rho^2 \cdot I$ by \cref{asmp:conservation}. Now, observe that
    \begin{align*}
        \hw - \tw
        &= \hSigma^{-1} \hnu - \Sigma^{-1} \nu \\
        &= \hSigma^{-1} \insquare{
            (\hnu - \nu)
            + \inparen{\Sigma - \hSigma} \Sigma^{-1} \nu 
        } \\
        &= \hSigma^{-1} \insquare{
            (\hnu - \nu)
            + \inparen{\Sigma - \hSigma} \tw
        }\,.
    \end{align*}
    Applying the triangle inequality and $\norm{\hSigma^{-1}}_{\op} = \frac{1}{\lambda_d(\hSigma)}$ (since $\lambda_d(\hSigma) > 0$), we have that
    \[
        \norm{\hw - \tw}_2
        \leq \frac{1}{\lambda_d(\hSigma)} \inparen{
            \norm{\hnu - \nu}_2
            + \norm{\Sigma - \hSigma}_{\op} \norm{\tw}
        }\,.
    \]
    To bound the above, note that, by \cref{lem:wTildeWStarCloseness}, we have
    \[
        \norm{\tw}_2
        \leq \norm{\wStar}_2 + \frac{
            3 \Lambda
        }{
            2 \rho^2 \alpha
        }\,.
    \]
    Therefore, combining all our above bounds, we finally obtain
    \begin{align*}
        \norm{\hw - \tw}_2
        &\leq \frac{
            1
        }{
            \rho^2 \inparen{
                1 - \frac{1}{2 (1 + \norm{\wStar}_2)}
            }
        } 
        \inparen{
            \rho^2
            + \frac{
                \rho^2 \norm{\wStar}_2
            }{
                2 (1 + \norm{\wStar}_2)
            }
            + \frac{
                3 \Lambda
            }{
                4 \alpha (1 + \norm{\wStar}_2)
            }
        } \\
        &\leq \frac{2}{\rho^2} \inparen{
            \rho^2
            + \frac{\rho^2}{2}
            + \frac{3 \Lambda}{4 \alpha}
        } \\
        &= \frac{3 \Lambda}{2 \rho^2 \alpha} + 3
    \end{align*}
    and the lemma follows.
\end{proof}
Our final result of this section now follows immediately.
\corWHatWStarCloseness*
\begin{proof}
    The result follows from \cref{lem:wTildeWStarCloseness,lem:wHatWTidleCloseness} and the triangle inequality.
\end{proof}
\subsubsection{Closeness of $w_{\min}$ and $\wStar$} \label{sec:closenessProofs}
In this section, we prove \cref{lem:gradientNormBound}, which, as we discussed in \cref{subsec:closeness}, is the key technical result that allows us to control the distance between $\wStar$ and the minimizer $w_{\min}$ of $\negLL_S$ over the projection set $\cP$. We begin with two auxiliary lemmas. First, we show a simple concentration inequality for truncated Gaussians in one dimension.
\begin{proposition} \label{prop:truncatedGaussianConcentration}
    Let $\inbrace{y^{(i)}}_{i \in [n]}$ be $n$ i.i.d.\ samples from the truncated gaussian $\cN\inparen{\nu, 1, T}$, for some $\nu \in \R$ and some measurable $T \subseteq \R$. Denote by $a \coloneqq \cN\inparen{T; \nu, 1}$ the mass that the untruncated gaussian places on $T$. Assume that
    \[
        n \geq \Omega\inparen{
            \frac{1 + \log \frac{1}{a}}{\zeta^2} \cdot
            \sqrt{
                \log \frac{1}{\delta}
            }
        }\,.
    \]
    Then with probability at least $1 - \delta$ it holds that
    \[
        \abs{
            \frac{1}{n} \sum_{i \in [n]} y^{(i)}
            - \Exp \insquare{y}
        }
        \leq \zeta\,.
    \]
\end{proposition}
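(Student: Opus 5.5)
The plan is to combine the sub-Gaussianity of a single truncated Gaussian (\cref{lem:truncatedNormalSubgaussianity}) with the standard Hoeffding-type bound for averages of independent sub-Gaussian variables. One caveat, explained below, is that this argument needs $\log\frac{1}{\delta}$ rather than $\sqrt{\log\frac{1}{\delta}}$ in the sample-size hypothesis.

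\textbf{Step 1 (one sample).} Each $y^{(i)}$ is distributed as $\cN(\nu,1,T)$ with $\cN(T;\nu,1)=a$. By \cref{lem:truncatedNormalSubgaussianity} applied with $\mu=\nu$ and $S=T$, the centered variable $X_i \coloneqq y^{(i)}-\Exp[y]$ satisfies
\[
\norm{X_i}_{\psi_2}\le K \coloneqq O\inparen{\sqrt{1+\log\tfrac{1}{a}}}.
\]
This is the only place where the truncation set enters, and it does so only through $a$.

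\textbf{Step 2 (the average).} The $X_i$ are independent and mean zero. By the general Hoeffding inequality (Theorem 2.6.2 of \cite{vershynin2018high}), for an absolute constant $c$,
\[
\Pr\inparen{\abs{\tfrac{1}{n}\textstyle\sum_i X_i}\ge\zeta}\le 2\exp\inparen{-\tfrac{c\,n\zeta^2}{K^2}}.
\]

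\textbf{Step 3 (choosing $n$).} The right-hand side is at most $\delta$ once $n \ge \frac{K^2}{c\,\zeta^2}\log\frac{2}{\delta}$, which after substituting $K$ reads
\[
n=\Omega\inparen{\tfrac{1+\log(1/a)}{\zeta^2}\log\tfrac{1}{\delta}}.
\]

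\textbf{The obstacle.} This final matching step is where the plan does not close as stated. The hypothesis only provides $\sqrt{\log\frac{1}{\delta}}$, and that can be reconciled with Step 3 only when $\log\frac{1}{\delta}=O(1)$, i.e.\ $\delta$ bounded below by a constant, where the difference is absorbed into the hidden constant.

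For general $\delta$ no argument can close this gap, because the proposition is then false as written. Take $T=\R$, so $a=1$ and the $y^{(i)}$ are untruncated $\cN(\nu,1)$. Then $\frac{1}{n}\sum_i y^{(i)}-\nu\sim\cN(0,\frac{1}{n})$, and for $n=C\sqrt{\log(1/\delta)}/\zeta^2$ the deviation probability is
\[
\Pr\inparen{\abs{\cN(0,\tfrac{1}{n})}>\zeta}\ge \exp\inparen{-O(n\zeta^2)}=\exp\inparen{-O\inparen{C\sqrt{\log(1/\delta)}}},
\]
which exceeds $\delta$ once $\delta$ is small enough.

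I would therefore prove the statement via Steps 1--3 with $\log\frac{1}{\delta}$ in place of $\sqrt{\log\frac{1}{\delta}}$, which is presumably what is intended. I would also remark that the version as written holds verbatim only in the constant-$\delta$ regime. This is harmless downstream, since any use of the proposition can pay the extra factor of $\sqrt{\log\frac{1}{\delta}}$ in $n$.
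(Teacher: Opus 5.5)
Your argument is exactly the paper's proof: \cref{lem:truncatedNormalSubgaussianity} bounds the centered $\psi_2$-norm of each sample by $O(\sqrt{1+\log(1/a)})$, and sub-Gaussian (Hoeffding) concentration then gives the tail $2\exp(-cn\zeta^2/(1+\log(1/a)))$. Your caveat is also correct: the paper's closing step (``the claim follows by the lower bound on $n$'') only works with $\log\frac{1}{\delta}$ in place of $\sqrt{\log\frac{1}{\delta}}$, your $T=\R$ example shows the statement as literally written fails for small $\delta$, and the nearby proof of \cref{lem:expectationDifferenceBound} in fact uses the $\log\frac{1}{\delta}$ form.
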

\begin{proof}
    By \cref{lem:truncatedNormalSubgaussianity} and standard subgaussian concentration (see Proposition 2.6.1 of \cite{vershynin2018high}), it follows that for any $\zeta > 0$
    \[
        \Pr\inparen{
            \abs{
                \frac{1}{n} \sum_{i \in [n]} y^{(i)}
                - \Exp \insquare{y}
            }
            \geq \zeta
        }
        \leq 2 \exp\inparen{
            - \frac{
                c n \zeta^2
            }{
                1 + \log\nfrac{1}{a}
            }
        }
    \]
    where $c > 0$ is some absolute constant. The claim follows by the lower bound on $n$.
\end{proof}
We now show a change-of-measure inequality between two truncated Gaussians with the same mean but different truncation sets, in the case where the truncation set of one Gaussian is a subset of the truncation set of the other.
\begin{proposition}[Change-of-Measure between Truncated Gaussians] \label{prop:changeOfMeasure}
    Let $\nu \in \R$, and let $S, T \subset \R$ be measurable sets, such that $S \subseteq T$. Define $a \coloneqq \cN(S; \nu, 1)$ and $b \coloneqq \cN(T; \nu, 1)$ the mass that the untruncated Gaussian $\cN(\nu, 1)$ places on $S$ and $T$ respectively. Then for any event $\cE$ it holds that
    \[
        \Pr_{z \sim \cN(\nu, 1, S)} (\cE)
        \leq \frac{
            \cN(T; \nu, 1)
        }{
            \cN(S; \nu, 1)
        } 
        \cdot \Pr_{z \sim \cN(\nu, 1, T)} (\cE)\,.
    \]
\end{proposition}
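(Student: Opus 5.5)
The plan is to write both truncated Gaussian probabilities as ratios of integrals against the untruncated Gaussian $\cN(\nu,1)$ and compare them directly. Fix an event $\cE$, viewed as a measurable subset of $\R$, and assume $a>0$ so that $\cN(\nu,1,S)$ is well defined (then $b\ge a>0$ as well). By the definition of the truncated density,
\[
    \Pr_{z \sim \cN(\nu, 1, S)} (\cE)
    = \frac{\cN(\cE \cap S; \nu, 1)}{\cN(S; \nu, 1)}
    \qquadand
    \Pr_{z \sim \cN(\nu, 1, T)} (\cE)
    = \frac{\cN(\cE \cap T; \nu, 1)}{\cN(T; \nu, 1)}\,.
\]

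The key step is monotonicity of measure. Since $S \subseteq T$, we have $\cE \cap S \subseteq \cE \cap T$, and the untruncated Gaussian measure is monotone, so $\cN(\cE \cap S; \nu, 1) \le \cN(\cE \cap T; \nu, 1)$. Substituting and using $\cN(\cE \cap T; \nu,1) = b \cdot \Pr_{z\sim\cN(\nu,1,T)}(\cE)$ gives
\[
    \Pr_{z \sim \cN(\nu, 1, S)} (\cE)
    \le \frac{\cN(\cE \cap T; \nu, 1)}{a}
    = \frac{b}{a} \cdot \Pr_{z \sim \cN(\nu, 1, T)} (\cE)\,,
\]
which is the claim. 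Equivalently, the density ratio $\cN(z;\nu,1,S)/\cN(z;\nu,1,T)$ equals $(b/a)\,\ind\{z\in S\}$, which is at most $b/a$ pointwise.

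There is no real obstacle. The only point needing care is that $a>0$ is required for the left-hand side to be defined; in the degenerate case $a=0$ the statement is vacuous.
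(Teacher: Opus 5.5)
Your proof is correct and is essentially the paper's argument: the paper writes $\Pr_{z\sim\cN(\nu,1,S)}(\cE)$ as the integral of $\ind_{\cE}(z)\,\cN(z;\nu,1)\,\ind\{z\in S\}/\cN(S;\nu,1)$, multiplies and divides by $\cN(T;\nu,1)$, and uses $\ind\{z\in S\}\le \ind\{z\in T\}$, which is exactly your monotonicity step $\cN(\cE\cap S;\nu,1)\le\cN(\cE\cap T;\nu,1)$. Your remark that $a>0$ is needed for the left-hand side to be defined is a harmless addition that the paper leaves implicit.
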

\begin{proof}
    By definition of the truncated Gaussian we have
    \begin{align*}
        \Pr_{z \sim \cN(\nu, 1, S)} (\cE)
        &= \int \ind_{\cE}(z) \cdot \cN(z; \nu, 1, S) \odif{z} \\
        &= \int \ind_{\cE}(z) \cdot \frac{
            \cN(z; \nu, 1) \cdot \ind\inbrace{z \in S}
        }{
            \cN(S; \nu, 1)
        } \odif{z} \\
        &= \frac{
            \cN(T; \nu, 1)
        }{
            \cN(S; \nu, 1)
        } 
        \int \ind_{\cE}(z) \cdot \frac{
            \cN(z; \nu, 1) \cdot \ind\inbrace{z \in S}
        }{
            \cN(T; \nu, 1)
        } \odif{z} \\
        &\leq \frac{
            \cN(T; \nu, 1)
        }{
            \cN(S; \nu, 1)
        } 
        \int \ind_{\cE}(z) \cdot \frac{
            \cN(z; \nu, 1) \cdot \ind\inbrace{z \in T}
        }{
            \cN(T; \nu, 1)
        } \odif{z} \\
        &= \frac{
            \cN(T; \nu, 1)
        }{
            \cN(S; \nu, 1)
        } 
        \int \ind_{\cE}(z) \cdot \cN(z; \nu, 1, T) \odif{z} \\
        &=\frac{
            \cN(T; \nu, 1)
        }{
            \cN(S; \nu, 1)
        } \cdot 
        \Pr_{z \sim \cN(\nu, 1, T)} (\cE)
    \end{align*}
    where the inequality is because $S \subseteq T$.
\end{proof}
Having established our helper lemmas, we now turn to showing the following key bound on the distance of the expectations of two truncated Gaussians with different truncation sets.
\begin{lemma} \label{lem:expectationDifferenceBound}
    Let $\nu \in \R$ and $S, T \subseteq \R$ be measurable sets such that $S \subseteq T$. Let $p_S \coloneqq \cN(S; \nu, 1)$ and $p_T \coloneqq \cN(T; \nu, 1)$ be the mass that the Gaussian $\cN(\nu, 1)$ places on $S$, $T$ respectively. Then it holds that
    \[
        \abs{
            \Exp_{
                y \sim \cN\inparen{
                    \nu, 1, S
                }
            } \insquare{
                y
            }
            - \Exp_{
                z \sim \cN\inparen{
                    \nu, 1, T
                }
            } \insquare{
                z
            }
        }
        \leq O \inparen{
            \sqrt{
                \inparen{
                    1 + \log\frac{1}{p_T}
                } \cdot
                \log\frac{p_T}{p_S}
            }
        }\,.
    \]
\end{lemma}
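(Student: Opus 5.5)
The plan is to write $\cN(\nu,1,S)$ as the conditioning of $\cN(\nu,1,T)$ on the event $\{z\in S\}$, which has probability $\pi \coloneqq p_S/p_T$ under $\cN(\nu,1,T)$. Let $z\sim\cN(\nu,1,T)$ and $m_T\coloneqq\Exp[z]$. Then
\[
\Exp_{y\sim\cN(\nu,1,S)}[y]-m_T=\frac{\Exp\insquare{(z-m_T)\ind\{z\in S\}}}{\pi}.
\]
So it suffices to bound the conditional mean deviation of a sub-Gaussian variable given an event of probability $\pi$.

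For the sub-Gaussian input, we use \cref{lem:truncatedNormalSubgaussianity}. It gives $\norm{z-m_T}_{\psi_2}\le K\coloneqq O(\sqrt{1+\log(1/p_T)})$. We then apply the standard fact that for $X$ with $\norm{X}_{\psi_2}\le K$ and any event $A$ with $\Pr(A)=\pi$, one has $|\Exp[X\ind_A]|\le O(K\pi\sqrt{\log(2/\pi)})$.

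We prove this fact directly via a Young/Jensen-type argument. Write $|\Exp[X\mid A]|\le \lambda^{-1}\log \Exp[e^{\lambda|X|}\mid A]$. Use $\Exp[e^{\lambda|X|}\ind_A]\le \Exp[e^{\lambda |X|}]$, and the sub-Gaussian MGF bound $\Exp e^{\lambda|X|}\le 2e^{c\lambda^2K^2}$. This gives
\[
|\Exp[X\mid A]|\le \frac{\log(2/\pi)+c\lambda^2K^2}{\lambda}.
\]
Optimizing at $\lambda=\sqrt{\log(2/\pi)}/K$ gives $O(K\sqrt{\log(2/\pi)})$.

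Combining, we get
\[
\abs{\Exp_{\cN(\nu,1,S)}[y]-\Exp_{\cN(\nu,1,T)}[z]}\le O\inparen{\sqrt{(1+\log(1/p_T))\,(1+\log(p_T/p_S))}}.
\]
\cref{prop:changeOfMeasure} is the same density-ratio bound $\pi^{-1}$ seen from the event side, and could alternatively be used to transfer tail bounds from $\cN(\nu,1,T)$ to $\cN(\nu,1,S)$ and integrate them. I will use the MGF route above since it is shorter.

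The main obstacle is the additive constant. The natural argument yields $\log(2p_T/p_S)$, i.e.\ $1+\log(p_T/p_S)$, instead of $\log(p_T/p_S)$. The stated bound has no ``$1+$'', so it would claim the difference tends to $0$ as $p_S\to p_T$. To handle this regime, I will treat two cases.

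\textbf{Case $\pi\le 1/2$.} Here $\log(2/\pi)\le 2\log(1/\pi)$, so the first bound already has the stated form.

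\textbf{Case $\pi>1/2$.} Write the deviation differently, using $\Exp[(z-m_T)\ind_S]=-\Exp[(z-m_T)\ind_{T\setminus S}]$. The event $T\setminus S$ has probability $1-\pi$. The same fact gives a bound of
\[
O\inparen{K\frac{(1-\pi)\sqrt{\log(2/(1-\pi))}}{\pi}}.
\]
Since $1-\pi\le\log(1/\pi)$, we have $(1-\pi)\sqrt{\log(2/(1-\pi))}\lesssim\sqrt{1-\pi}\lesssim\sqrt{\log(1/\pi)}$. This yields the claimed $O(K\sqrt{\log(p_T/p_S)})$.
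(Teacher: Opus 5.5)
Your proof is correct, but it takes a different route from the paper.

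\textbf{The paper's route.} The paper argues by tensorization. It draws $n$ samples from $\cN(\nu,1,S)$ and notes that their empirical mean is within $\zeta$ of $\Exp_{\cN(\nu,1,S)}[y]$ with high probability. It then transfers the Hoeffding tail bound $2\exp\bigl(-cnt^2/(1+\log\frac{1}{p_T})\bigr)$ for the empirical mean under $\cN(\nu,1,T)^{\otimes n}$ to $\cN(\nu,1,S)^{\otimes n}$, applying \cref{prop:changeOfMeasure} to each coordinate and paying a factor $(p_T/p_S)^n$. Choosing $t^2 \approx (1+\log\frac{1}{p_T})\log\frac{p_T}{p_S}+\zeta^2$ balances the two exponents, and letting $n\to\infty$, $\zeta\to 0$ gives the bound. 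The change-of-measure cost and the concentration exponent both grow linearly in $n$, while the prefactor $2$ does not. So the additive constant that troubles you never appears, and the paper gets $\log(p_T/p_S)$ directly.

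\textbf{Your route.} You write the difference as $\Exp[(z-m_T)\ind_S]/\pi$ and use a single exponential-moment (Jensen) bound, then split into cases to remove the constant. Both cases check out. For $\pi\le 1/2$ you use $\log(2/\pi)\le 2\log(1/\pi)$. For $\pi>1/2$ you use the complementary event, the fact that $u\log(2/u)$ is bounded on $(0,1/2]$, and $1-\pi\le\log(1/\pi)$.

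\textbf{Comparison.} Your argument is shorter and needs no auxiliary sampling or limit. It uses the sub-Gaussian MGF bound only for $\cN(\nu,1,T)$. The paper's argument needs only tail bounds for empirical means and reuses its change-of-measure proposition. It also invokes sub-Gaussianity of $\cN(\nu,1,S)$, though only to make the empirical mean converge.

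\textbf{A simplification.} The case split is avoidable. The constant $2$ enters only because you bound $\Exp e^{\lambda|X|}$. If you apply Jensen to $X$ and $-X$ separately, you get $\Exp[X\mid A]\le \lambda^{-1}\bigl(\log(1/\pi)+\log \Exp e^{\lambda X}\bigr)\le \lambda^{-1}\bigl(\log(1/\pi)+C\lambda^2K^2\bigr)$. Optimizing over $\lambda$ gives $O(K\sqrt{\log(1/\pi)})$ in one step. This is exactly the Donsker--Varadhan (transport) inequality, since $\mathrm{KL}\bigl(\cN(\nu,1,S)\,\|\,\cN(\nu,1,T)\bigr)=\log(p_T/p_S)$.
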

\begin{proof}
    Our argument is similar to that of Lemma B.4 of \cite{lee2024efficient}. Let $\zeta > 0$ and $\delta \in (0, 1)$ be arbitrary, and let $\inbrace{y^{(i)}}_{i \in [n]}$ be $n$ i.i.d.\ samples from the truncated Gaussian $\cN\inparen{\nu, 1, S}$, where
    \[
        n \geq \Omega\inparen{
            \inparen{
                1 + \log\frac{1}{p_T}
            }
            \inparen{
                1 + \log\frac{1}{p_S}
            }
            \frac{1}{\zeta^2} 
            \log\frac{1}{\delta}
        }\,.
    \]
    Given \cref{lem:truncatedNormalSubgaussianity}, standard sub-Gaussian concentration (see Proposition 2.6.1 of \cite{vershynin2018high}) gives, for some absolute constant $c > 0$
    \[
        \Pr\inparen{
            \abs{
                \frac{1}{n} \sum_{i \in [n]} y^{(i)}
                - \Exp \insquare{y}
            }
            \leq \zeta
        }
        \geq 1 - 2 \exp\inparen{
            - \frac{
                c n \zeta^2
            }{
                1 + \log\frac{1}{p_S}
            }
        }
        \geq 1 - \frac{\delta}{2}
    \]
    since
    $
        n \geq \Omega\inparen{
            \inparen{
                1 + \log\frac{1}{p_S}
            }
            \frac{1}{\zeta^2}
            \log \frac{1}{\delta}
        }
    $.
    Condition on this high-probability event. To bound the desired distance, note that an application of the triangle inequality yields
    \[
        \abs{
            \Exp_{
                y \sim \cN\inparen{
                    \nu, 1, S
                }
            } \insquare{
                y
            }
            - \Exp_{
                z \sim \cN\inparen{
                    \nu, 1, T
                }
            } \insquare{
                z
            }
        }
        \leq \abs{
            \frac{1}{n} \sum_{i \in [n]} y^{(i)}
            - \Exp_{
                y \sim \cN\inparen{
                    \nu, 1, T
                }
            } \insquare{
                y
            }
        } + \zeta\,.
    \]
    So, to prove the claim, we focus on bounding this latter difference. Towards this, again by sub-Gaussian concentration we have, for any $t > 0$ and some absolute constant $c > 0$
    \[
        \Pr_{
            y^{(1)}, \ldots, y^{(n)} \sim 
            \cN\inparen{\nu, 1, T}
        }
        \inparen{
            \abs{
                \frac{1}{n} \sum_i y^{(i)}
                - \Exp_{
                    y \sim \cN\inparen{\nu, 1, T}
                } \insquare{y}
            }
            \geq t
        }
        \leq 2 \exp \inparen{
            - \frac{
                c n t^2
            }{
                1 + \log\frac{1}{p_T}
            }
        }\,.
    \]
    Since $S \subseteq T$ and $y^{(1)}, \ldots, y^{(n)}$ are independent, we can repeatedly apply \cref{prop:changeOfMeasure} to obtain
    \[
        \Pr_{
            y^{(1)}, \ldots, y^{(n)} \sim 
            \cN\inparen{\nu, 1, S}
        }
        \inparen{
            \abs{
                \frac{1}{n} \sum_i y^{(i)}
                - \Exp_{
                    y \sim \cN\inparen{\nu, 1, T}
                } \insquare{y}
            }
            \geq t
        }
        \leq 2 \inparen{
            \frac{
                p_T
            }{
                p_S
            }
        }^n
        \cdot \exp \inparen{
            - \frac{
                c n t^2
            }{
                1 + \log\frac{1}{p_T}
            }
        }\,.
    \]
    If we pick
    $
        t \coloneqq \sqrt{
            \frac{1 + \log\frac{1}{p_T}}{c}
            \cdot \log\frac{p_T}{p_S}   
            + \zeta^2
        }
    $,
    the right hand side becomes 
    $
        2 \exp\inparen{- \frac{
                c n \zeta^2
            }{
                1 + \log\frac{1}{p_T}
            }
        }
    $, 
    which is at most $\nfrac{\delta}{2}$ since
    $
        n \geq \Omega\inparen{
            \inparen{
                1 + \log\frac{1}{p_T}
            }
            \frac{1}{\zeta^2}
            \log \frac{1}{\delta}
        }
    $.
    Therefore, with probability at least $1 - \nfrac{\delta}{2}$, it holds that
    \[
        \abs{ 
            \frac{1}{n} \sum_i y^{(i)}
            - \Exp_{
                y \sim \cN\inparen{\nu, 1, T}
            } \insquare{y}
        }
        \leq \sqrt{
            \frac{1 + \log\frac{1}{p_T(x)}}{c} \cdot
            \log\frac{p_T(x)}{p_S(x)}   
            + \zeta^2
        }\,.
    \]
    Thus, by a union bound, with probability $1 - \delta$ both this event and our previous conditioning event hold, and combining the above yields
    \[
        \abs{
            \Exp_{
                y \sim \cN\inparen{
                    \nu, 1, S
                }
            } \insquare{
                y
            }
            - \Exp_{
                z \sim \cN\inparen{
                    \nu, 1, T
                }
            } \insquare{
                z
            }
        }^2
        \leq O \inparen{
            \inparen{
                1 + \log\frac{1}{p_T}
            } \cdot
            \log\frac{p_T}{p_S}
        }
        + 3 \zeta^2\,.
    \]
    Since this holds for any $\zeta > 0$ and $\delta \in (0, 1)$, the lemma follows by taking the limit as $\zeta, \delta \to 0^+$.
\end{proof}
Armed with the distance bound of \cref{lem:expectationDifferenceBound}, we are ready to show the main result of this section.
\lemGradientNormBound*
\begin{proof}
    Let $p_T(x) \coloneqq p(x, \wStar; T)$ for brevity. To bound the norm of the above, we define the following set of ``good'' points in $\R^d$
    \[
        \cA \coloneqq \inbrace{~
            x \in \R^d \given
            p_{\Sstar}(x) \geq \alpha \eps^{1/3}
            \quadand
            p_{\Sstar \triangle S}(x) \leq \eps^{2/3}~
        }\,.
    \]
    We begin by bounding the mass that $\Dobs$ places in $\cA$. Note that, by \cref{fact:expectationBounds}
    \begin{align*}
        \Pr_{x \sim \Dobs} \inparen{
            p_{\Sstar}(x) \leq \alpha \eps^{1/3}
        }
        &= \Exp_{x \sim \Dobs} \insquare{
            \ind\inbrace{
                p_{\Sstar}(x) \leq \alpha \eps^{1/3}
            }
        } \\
        &\leq \frac{1}{\alpha} \cdot \Exp_{x \sim \cD} \insquare{
            p_{\Sstar}(x) \cdot \ind\inbrace{
                p_{\Sstar}(x) \leq \alpha \eps^{1/3}
            }
        } \\
        &\leq \eps^{1/3} \cdot \Exp_{x \sim \cD} \insquare{
            \ind\inbrace{
                p_{\Sstar}(x) \leq \alpha \eps^{1/3}
            }
        } \\
        &\leq \eps^{1/3}
    \end{align*}
    and furthermore Markov's inequality yields
    \[
        \Pr_{x \sim \Dobs} \inparen{
            p_{\Sstar \triangle S}(x) \geq \eps^{2/3}
        }
        \leq \frac{
            \Exp_{x \sim \Dobs} \insquare{
                p_{\Sstar \triangle S}(x)
            }
        }{
            \eps^{2/3}
        }
        \leq \eps^{1/3}
    \]
    where we used the guarantee on $S$ by \cref{thm:setLearningGuarantee}. Therefore, by a union bound it follows that $\Pr_{x \sim \Dobs}(x \not\in \cA) \leq 2 \eps^{1/3}$.
    Now, towards proving the claim, recall by \cref{lem:gradientHessian} that
    \[
        \nabla \negLL_S(\wStar) = \Exp_{x \sim \Dobs} \insquare{
            - \Exp_{
                y \sim \cN\inparen{
                    \inangle{\wStar, x}, 1, \Sstar \cap S
                }
            } \insquare{
                y \cdot x
            }
            + \Exp_{
                z \sim \cN\inparen{
                    \inangle{\wStar, x}, 1, S
                }
            } \insquare{
                z \cdot x
            }
        }\,.
    \]
    An application of the law of total expectation and the triangle inequality then gives
    \begin{multline} \label{eq:gradientBoundDecomposition}
        \norm{\nabla \negLL_S(\wStar)}_2
        \leq \norm{
            \Exp_{x \sim \Dobs} \insquare{
                \inparen{
                    \Exp_{
                        z \sim \cN\inparen{
                            \inangle{\wStar, x}, 1, S
                        }
                    } \insquare{
                        z \cdot x
                    }
                    - \Exp_{
                        y \sim \cN\inparen{
                            \inangle{\wStar, x}, 1, \Sstar \cap S
                        }
                    } \insquare{
                        y \cdot x
                    }
                }
                \cdot \ind\inbrace{x \in \cA}
            }
        }_2 \\
        + \norm{
            \Exp_{x \sim \Dobs} \insquare{
                \inparen{
                    \Exp_{
                        z \sim \cN\inparen{
                            \inangle{\wStar, x}, 1, S
                        }
                    } \insquare{
                        z \cdot x
                    }
                    - \Exp_{
                        y \sim \cN\inparen{
                            \inangle{\wStar, x}, 1, \Sstar \cap S
                        }
                    } \insquare{
                        y \cdot x
                    }
                }
                \cdot \ind\inbrace{x \not\in \cA}
            }
        }_2\,.
    \end{multline}
    To bound the first norm above, note that by the triangle inequality and \cref{lem:expectationDifferenceBound} we have
    \begin{multline*}
        \norm{
            \Exp_{x \sim \Dobs} \insquare{
                \inparen{
                    \Exp_{
                        z \sim \cN\inparen{
                            \inangle{\wStar, x}, 1, S
                        }
                    } \insquare{
                        z \cdot x
                    }
                    - \Exp_{
                        y \sim \cN\inparen{
                            \inangle{\wStar, x}, 1, \Sstar \cap S
                        }
                    } \insquare{
                        y \cdot x
                    }
                }
                \cdot \ind\inbrace{x \in \cA}
            }
        }_2 \\
        \begin{aligned}
            &\leq \Exp_{x \sim \Dobs} \insquare{
                \abs{
                    \Exp_{
                        z \sim \cN\inparen{
                            \inangle{\wStar, x}, 1, S
                        }
                    } \insquare{
                        z 
                    }
                    - \Exp_{
                        y \sim \cN\inparen{
                            \inangle{\wStar, x}, 1, \Sstar \cap S
                        }
                    } \insquare{
                        y
                    }
                }
                \cdot \norm{x}_2
                \cdot \ind\inbrace{x \in \cA}
            } \\
            &\lesssim \Exp_{x \sim \Dobs} \insquare{
                \sqrt{
                    \inparen{
                        1 + \log\frac{1}{p_S(x)}
                    } \cdot
                    \log\frac{p_S(x)}{p_{S \cap \Sstar}(x)}
                }
                \cdot \norm{x}_2
                \cdot \ind\inbrace{x \in \cA}
            }\,.
        \end{aligned}
    \end{multline*}
    An application of the Cauchy--Schwarz inequality, combined with \cref{asmp:moments}, yields
    \begin{multline*}
        \norm{
            \Exp_{x \sim \Dobs} \insquare{
                \inparen{
                    \Exp_{
                        z \sim \cN\inparen{
                            \inangle{\wStar, x}, 1, S
                        }
                    } \insquare{
                        z \cdot x
                    }
                    - \Exp_{
                        y \sim \cN\inparen{
                            \inangle{\wStar, x}, 1, \Sstar \cap S
                        }
                    } \insquare{
                        y \cdot x
                    }
                }
                \cdot \ind\inbrace{x \in \cA}
            }
        }_2 \\
        \lesssim \Lambda \sqrt{d} \cdot \sqrt{
            \Exp_{x \sim \Dobs} \insquare{
                \inparen{
                    1 + \log\frac{1}{p_S(x)}
                } \cdot
                \log\frac{p_S(x)}{p_{S \cap \Sstar}(x)}
                \cdot \ind\inbrace{x \in \cA}
            }
        }\,.
    \end{multline*}
    To bound the latter, observe that the following simple relations hold
    \begin{align*}
        p_S(x) &= p_{S \cup \Sstar}(x) - p_{S \setminus \Sstar}(x)
        \geq p_{\Sstar}(x) - p_{S \triangle \Sstar}(x)\,, \\
        p_S(x) &= p_{S \setminus \Sstar}(x) + p_{S \cap \Sstar}(x)
        \leq p_{S \triangle \Sstar}(x) + p_{\Sstar}(x)\,, \\
        p_{S \cap \Sstar}(x) &= p_{\Sstar}(x) - p_{S \setminus \Sstar}(x)
        \geq p_{\Sstar}(x) - p_{S \triangle \Sstar}(x)
    \end{align*}
    and so, for any $x \in \cA$, it holds that $\alpha \eps^{1/3} - \eps^{2/3} \leq p_S(x) \leq \alpha \eps^{1/3} + \eps^{2/3}$ and $p_{S \cap \Sstar}(x) \geq \alpha \eps^{1/3} - \eps^{2/3}$. Note that, by our assumption on $\eps$, we have $\alpha \eps^{1/3} - \eps^{2/3} > 0$. Combining with the previous expression, we obtain the following bound
    \begin{multline*}
        \norm{
            \Exp_{x \sim \Dobs} \insquare{
                \inparen{
                    \Exp_{
                        z \sim \cN\inparen{
                            \inangle{\wStar, x}, 1, S
                        }
                    } \insquare{
                        z \cdot x
                    }
                    - \Exp_{
                        y \sim \cN\inparen{
                            \inangle{\wStar, x}, 1, \Sstar \cap S
                        }
                    } \insquare{
                        y \cdot x
                    }
                }
                \cdot \ind\inbrace{x \in \cA}
            }
        }_2 \\
        \leq O \inparen{
            \sqrt{
                \inparen{
                    1 + \log\frac{1}{\alpha \eps^{1/3} - \eps^{2/3}}
                } \cdot
                \log\frac{
                    \alpha + \eps^{1/3}
                }{
                    \alpha - \eps^{1/3}
                }
            }
            \cdot \Lambda \sqrt{d}
        }\,.
    \end{multline*}
    To simplify the bound, set $z \coloneqq \frac{\eps^{1/3}}{\alpha} < \frac{1}{2}$, and observe that the quantity under the square root is
    \[
        O\inparen{
            \inparen{
                \log\frac{1}{\alpha \eps}
                + \log\frac{1}{1-z}
            }
            \cdot \log\frac{1+z}{1-z}
        }\,.
    \]
    Using the inequality $\ln\frac{1 + z}{1 - z} \leq 3z$ for $z \in [0, \nfrac{1}{2}]$, we get that 
    $
        \log\frac{1}{\alpha \eps} \cdot \log\frac{1+z}{1-z} 
        \lesssim z \cdot \log\frac{1}{\alpha \eps}
    $,
    and furthermore using the inequalities $\ln(1 + z) \leq z$, $z \ln \frac{1}{1-z} \leq z$ and $\ln \frac{1}{1-z} \leq \sqrt{z}$ for $z \in [0, \nfrac{1}{2}]$ we have
    $
        \log\frac{1}{1-z} \cdot \log\frac{1+z}{1-z}
        = \log(1+z) \cdot \log\frac{1}{1-z}
        + \log \inparen{\frac{1}{1-z}}^2
        \lesssim z
    $.
    Thus, recalling the definition of $z$, we can write the above bound as
    \[
        \norm{
            \Exp_{x \sim \Dobs} \insquare{
                \inparen{
                    \Exp_{
                        z \sim \cN\inparen{
                            \inangle{\wStar, x}, 1, S
                        }
                    } \insquare{
                        z \cdot x
                    }
                    - \Exp_{
                        y \sim \cN\inparen{
                            \inangle{\wStar, x}, 1, \Sstar \cap S
                        }
                    } \insquare{
                        y \cdot x
                    }
                }
                \cdot \ind\inbrace{x \in \cA}
            }
        }_2
        \leq \wt{O}\inparen{
            \frac{\Lambda}{\alpha^{1/2}}
            \cdot \sqrt{d}
            \cdot \eps^{1/6}
        }\,.
    \]
    To bound the second norm appearing in \cref{eq:gradientBoundDecomposition}, applying the triangle inequality followed by Jensen's inequality yields
    \begin{multline*}
        \norm{
            \Exp_{x \sim \Dobs} \insquare{
                \inparen{
                    \Exp_{
                        z \sim \cN\inparen{
                            \inangle{\wStar, x}, 1, S
                        }
                    } \insquare{
                        z \cdot x
                    }
                    - \Exp_{
                        y \sim \cN\inparen{
                            \inangle{\wStar, x}, 1, \Sstar \cap S
                        }
                    } \insquare{
                        y \cdot x
                    }
                }
                \cdot \ind\inbrace{x \not\in \cA}
            }
        }_2 \\
        \leq \Exp_{x \sim \Dobs} \insquare{
            \Exp_{
                z \sim \cN\inparen{
                    \inangle{\wStar, x}, 1, S
                }
            } \abs{z}
            \cdot \norm{x}_2
            \cdot \ind\inbrace{x \in \cA}
        }
        + \Exp_{x \sim \Dobs} \insquare{
            \Exp_{
                y \sim \cN\inparen{
                    \inangle{\wStar, x}, 1, \Sstar \cap S
                }
            } \abs{y}
            \cdot \norm{x}_2
            \cdot \ind\inbrace{x \not\in \cA}
        }\,.
    \end{multline*}
    By the guarantee of \cref{thm:setLearningGuarantee}, it follows that
    \[
        \Exp_{
            z \sim \cN\inparen{
                \inangle{\wStar, x}, 1, S
            }
        } \abs{z},
        \Exp_{
            y \sim \cN\inparen{
                \inangle{\wStar, x}, 1, \Sstar \cap S
            }
        } \abs{y}
        \leq R
    \]
    where
    $
        R \leq \wt{O}\inparen{
            \inparen{1 + \norm{\wStar}_2} \cdot \inparen{
                \norm{\mu}_2
                + \inparen{
                    \sigma + \frac{\Lambda}{\alpha}
                } 
                \sqrt{
                    \log\frac{1}{\eps}
                }
            }
        }
    $.
    Therefore, we obtain
    \begin{multline*}
        \norm{
            \Exp_{x \sim \Dobs} \insquare{
                \inparen{
                    \Exp_{
                        z \sim \cN\inparen{
                            \inangle{\wStar, x}, 1, S
                        }
                    } \insquare{
                        z \cdot x
                    }
                    - \Exp_{
                        y \sim \cN\inparen{
                            \inangle{\wStar, x}, 1, \Sstar \cap S
                        }
                    } \insquare{
                        y \cdot x
                    }
                }
                \cdot \ind\inbrace{x \not\in \cA}
            }
        }_2 \\
        \begin{aligned}
            &\leq 2 R \cdot \Exp_{x \sim \Dobs} \insquare{
                \norm{x}_2 \cdot \ind\inbrace{x \not\in \cA}
            } \\
            &\leq 2 R \cdot \sqrt{
                \Exp_{x \sim \Dobs} \insquare{
                    \norm{x}_2^2
                }
                \Exp_{x \sim \Dobs} \insquare{
                    \ind\inbrace{x \not\in \cA}
                }
            } \\
            &\leq 2 \Lambda R \sqrt{d} \sqrt{
                \Pr_{x \sim \Dobs}(x \not\in \cA)
            }
            \leq 2 \sqrt{2} \cdot \Lambda R \sqrt{d} \eps^{1/6}
        \end{aligned}
    \end{multline*}
    where we used the Cauchy--Schwarz inequality and \cref{asmp:moments}. Substituting our above bounds into \cref{eq:gradientBoundDecomposition} yields the lemma.
\end{proof}

\subsubsection{Efficient Gradient Sampler} \label{sec:gradientSamplerProofs}
Next we prove desirable properties of our efficient (biased) gradient sampler, namely, bounds on its runtime, bias, and on the second moment of its stochastic gradients.

\paragraph{Proof of \cref{lem:survivalProbabilityLowerBound} (Survival Probability Lower Bound).} We begin with the following lower bound for the survival probability of a sample drawn from the observed distribution $\Dobs$.
\lemSurvivalProbabilityLowerBound*
\begin{proof}
    We begin by bounding $\Exp_{\Dobs} \insquare{p(x, w; S)}$. Letting $p_T(x) \coloneqq p(x, \wStar; T)$ for brevity, we have by the first part of \cref{lem:survivalProbabilityBounds} that
    \begin{align*}
        \Exp_{\Dobs} \insquare{
            p(x, w; S)
        }
        &\geq \Exp_{x \sim \Dobs} \insquare{
            p_S(x)^2 \cdot \exp\inparen{
                - \inangle{w - \wStar, x}^2 - 2
            }
        } \\
        &\geq e^{-2} \Exp_{x \sim \Dobs} \insquare{p_S(x)^2}
        \cdot \exp\inparen{
            - \frac{
                \Exp_{x \sim \Dobs} \insquare{
                    p_S(x)^2 \cdot \inangle{w - \wStar, x}^2
                }
            }{
                \Exp_{x \sim \Dobs} \insquare{p_S(x)^2}
            }
        } \\
        &\geq e^{-2} \Exp_{x \sim \Dobs} \insquare{p_S(x)^2}
        \cdot \exp\inparen{
            - \frac{
                \Lambda^2 \norm{w - \wStar}_2^2
            }{
                \Exp_{x \sim \Dobs} \insquare{p_S(x)^2}
            }
        } \\
        &\geq e^{-2} \Exp_{x \sim \Dobs} \insquare{p_S(x)^2}
        \cdot \exp\inparen{
            - \frac{
                36 \Lambda^6
            }{
                \rho^4 \alpha^2 \cdot \Exp_{x \sim \Dobs} \insquare{p_S(x)^2}
            }
        }
    \end{align*}
    where the second line is by Jensen's inequality, the third is by \cref{asmp:moments} and the fact that $p_S(x) \leq 1$, while the last is by the assumption that $w, \wStar \in \cP$ and the definition of $\cP$ in \cref{eq:projectionSetDefinition}. To complete the bound, note that $p_S(x) = p_{S \cup \Sstar}(x) - p_{S \setminus \Sstar}(x) \geq p_{\Sstar}(x) - p_{S \triangle \Sstar}(x)$, and therefore
    \begin{align*}
        \Exp_{x \sim \Dobs} \insquare{p_S(x)}
        &\geq \Exp_{x \sim \Dobs} \insquare{p_{\Sstar}(x)}
        - \Exp_{x \sim \Dobs} \insquare{p_{S \triangle \Sstar}(x)}\\
        &\geq \frac{
            \Exp_{x \sim \cD} \insquare{p_{\Sstar}(x)^2}
         }{
            \Exp_{x \sim \cD} \insquare{p_{\Sstar}(x)}
         } - \eps \\
         &\geq \Exp_{x \sim \cD} \insquare{p_{\Sstar}(x)} - \eps \\
         &\geq \alpha - \eps
    \end{align*}
    where the second step is by definition of $\Dobs$ and the guarantee of \cref{thm:setLearningGuarantee}, the third is by Jensen's inequality, and the last is by \cref{asmp:mass}. Since $\eps < \alpha$ by assumption, another application of Jensen's inequality gives 
    $
        \Exp_{x \sim \Dobs} \insquare{p_S(x)^2}
        \geq \inparen{
            \Exp_{x \sim \Dobs} \insquare{p_S(x)}
        }^2
        \geq \inparen{\alpha - \eps}^2
    $
    and so we can combine our above bounds to obtain
    \[
        \Exp_{\Dobs} \insquare{
            p(x, w; S)
        }
        \geq e^{-2} \cdot (\alpha - \eps)^2 \cdot \exp\inparen{
            - \frac{
                36 \Lambda^6
            }{
                \rho^4 \alpha^2 (\alpha - \eps)^2
            }
        }
        = e^{
            - \wt{O}\inparen{
                \frac{
                    \Lambda^6
                }{
                    \rho^4 \alpha^2 (\alpha - \eps)^2
                }
            }
        }\,.
    \]
    Let 
    $
        q \coloneqq \wt{O}\inparen{
            \frac{
                \Lambda^6 
            }{
                \rho^4 \alpha^2 (\alpha - \eps)^2
            }
        }
    $
    for brevity, so that $\Exp_{\Dobs} \insquare{p(x, w; S)} \geq e^{-q}$. Having established this bound, we now turn to proving the lemma. Let $\hmu \coloneqq \Exp_{\Dobs} x$ and $\eta \in (0, 1)$, and define the following ball around $\hmu$
    \[
        \cB \coloneqq \inbrace{
            x \in \R^d \given
            \norm{x - \hmu}_2 \leq 
            C K \inparen{
                \sqrt{d} + \sqrt{q + \log\frac{2}{\eta}}
            }
        }
    \]
    where 
    $
        K \coloneqq O\inparen{
            \sqrt{
                \sigma^2 \log\frac{1}{\alpha}
                + \frac{\Lambda^2}{\alpha^2}
            }
        }
    $
    is the sub-Gaussian norm of the mean-zero random vector $x - \hmu$ (see \cref{lem:truncatedXSubgaussianity}), and $C > 0$ is a constant. By Proposition 6.2.1 of \cite{vershynin2018high}, we can pick $C$ so that
    \[
        \Pr_{x \sim \Dobs} \inparen{
            x \in \cB
        }
        \geq 1 - \frac{e^{-q}}{2} \cdot \eta
        \geq \max \inparen{
            1 - \frac{e^{-q}}{2},
            1 - \eta
        }\,.
    \]
    We claim that there is $x^* \in \cB$, s.t., $p(x^*, w; S) \geq \frac{e^{-q}}{2}$. Indeed, if $p(x, w; S) < \frac{e^{-q}}{2}$ for all $x \in \cB$, then
    \begin{align*}
        \Exp_{\Dobs} \insquare{p(x, w; S)}
        &= \Exp_{\Dobs} \insquare{
            p(x, w; S) \cdot \ind\inbrace{x \in \cB}
        }
        + \Exp_{\Dobs} \insquare{
            p(x, w; S) \cdot \ind\inbrace{x \not\in \cB}
        } \\
        &< \frac{e^{-q}}{2} 
        + \Exp_{\Dobs} \insquare{
            \ind\inbrace{x \not\in \cB}
        } \\
        &\leq e^{-q}
    \end{align*}
    where the second line is by the assumption and the trivial bounds $\ind\inbrace{\cdot} \leq 1$ and $p(x, w; S) \leq 1$, while the last line is by our above bound on $\Pr_{x \sim \Dobs} \inparen{x \in \cB}$. However, this contradicts our previous lower bound of $\Exp_{\Dobs} \insquare{p(x, w; S)} \geq e^{-q}$. Thus, there exists indeed some $x^* \in \cB$ such that $p(x^*, w; S) \geq \frac{e^{-q}}{2}$. Now, we can use the second part of \cref{lem:survivalProbabilityBounds} to obtain, for any $x \in \cB$
    \begin{align*}
        p(x, w; S)
        &\gtrsim p(x^*, w; S)^2 \cdot \exp\inparen{
            - \inangle{w, x - x^*}^2
        } \\
        &\gtrsim e^{-2q} \cdot \exp\inparen{
            - \norm{w}_2^2 \norm{x - x^*}_2^2
        } \\
        &\geq e^{-2q} \cdot \exp\inparen{
            - O\inparen{
                \inparen{
                    \norm{\wStar}_2^2
                    + \frac{\Lambda^4}{\rho^4 \alpha^2}
                } 
                \cdot K^2 \inparen{d + q + \log\frac{1}{\eta}}
            }
        } \\
        &\geq \exp\inbrace{
            - O\inparen{
                \inparen{1 + \norm{\wStar}_2^2}
                \cdot \frac{
                    \Lambda^{12} \sigma^2
                }{
                    \rho^8 \alpha^6 (\alpha - \eps)^2
                }
                \cdot \inparen{d + \log\frac{1}{\eta}}
            }
        }\,.
    \end{align*}
    The second line is by the assumption on $x^*$ and the Cauchy--Schwarz inequality. The third line is by the assumption $w, \wStar \in \cP$ and the definition of $\cP$ in \cref{eq:projectionSetDefinition}, as well as the fact that $\norm{x - x^*}_2 \leq O(K (\sqrt{d} + \sqrt{q + \log\nfrac{1}{\eta}}))$ for any $x, x^* \in \cB$. The last line is by substituting our above definitions of $q, K$. Since this holds for all $x \in \cB$, the lemma now follows by our previous bound on the mass that $\Dobs$ places on $\cB$.
\end{proof}
As discussed in \cref{subsubsec:gradientSampler}, the above bound suffices to establish a high-probability bound on the runtime of the gradient sampler given in \cref{alg:gradientSampler}.

\paragraph{Proof of \cref{lem:gradientSamplerBiasBound} (Bound on the Bias of Gradients).}
We continue by establishing the bound on the bias of \cref{alg:gradientSampler}. Towards this, let $\hat{\cD}_S, \tilde{\cD}_{w, S}$ be the distributions of the samples $(\hx, \hy), (\tx, \tz) \in \R^d \times \R$ generated by \cref{alg:gradientSampler}. Also, let $\hat{\cQ}_S$ be a distribution over $(x, y) \in \R^d \times \R$ such that the marginal of $x$ is $\Dobs$ and the marginal of $y$ given $x$ is $\cN\inparen{\inangle{\wStar, x}, 1, \Sstar \cap S}$. Similarly, let $\tilde{\cQ}_{w, S}$ be a distribution over $(x, z) \in \R^d \times \R$ such that the marginal of $x$ is $\Dobs$ and the marginal of $z$ given $x$ is $\cN\inparen{\inangle{w, x}, 1, S}$. Clearly, if we could sample directly from $\hat{\cQ}_S, \tilde{\cQ}_{w, S}$, then we would be able to obtain an exact gradient sample. Instead, \cref{alg:gradientSampler} samples from $\hat{\cD}_S, \tilde{\cD}_{w, S}$, and we will show that the distributions $\hat{\cD}_S, \hat{\cQ}_S$ and $\tilde{\cD}_{w, S}, \tilde{\cQ}_{w, S}$ are close in TV distance.
\begin{lemma} \label{lem:tvDistanceBound}
    Suppose that \cref{asmp:mass,asmp:moments,asmp:conservation,asmp:subGaussian} hold. Let $S \subseteq \R$ be a set satisfying the guarantees of \cref{thm:setLearningGuarantee} for some $\eps < \alpha$. Then, with the above notation, for any $w \in \R^d$ it holds that
    \[
        \tv{\hat{\cD}_S}{\hat{\cQ}_S} \leq 2 \sqrt{
            \frac{\eps}{\alpha}
        }
        \qquadand
        \tv{\tilde{\cD}_{w, S}}{\tilde{\cQ}_{w, S}} \leq \zeta\,.
    \]
\end{lemma}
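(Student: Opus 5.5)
The second bound is the easier one, so I handle it first. Both $\tilde{\cD}_{w,S}$ and $\tilde{\cQ}_{w,S}$ have the same $x$-marginal $\Dobs$: the sampler draws $\tx$ from the truncated regression model. Conditionally on $x$, the $z$-marginals are, respectively, the output law of the sampler of \cref{lem:truncatedGaussianSampler} and $\cN(\inangle{w,x},1,S)$. These are within TV distance $\zeta$ by that lemma. By the standard coupling/conditioning inequality,
$\tv{\tilde{\cD}_{w,S}}{\tilde{\cQ}_{w,S}} \le \Exp_{x\sim\Dobs}[\tv{\cdot\mid x}{\cdot\mid x}] \le \zeta$.

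For the first bound, $(\hx,\hy)$ is an exact draw from the model. So $\hat{\cD}_S$ has $x$-marginal $\Dobs$ and conditional $\cN(\inangle{\wStar,x},1,\Sstar)$, while $\hat{\cQ}_S$ has the same $x$-marginal but conditional $\cN(\inangle{\wStar,x},1,\Sstar\cap S)$. Again the marginals agree, so
$\tv{\hat{\cD}_S}{\hat{\cQ}_S}=\Exp_{x\sim\Dobs}[\tv{\cN(\inangle{\wStar,x},1,\Sstar)}{\cN(\inangle{\wStar,x},1,\Sstar\cap S)}]$.
For a fixed $x$, conditioning a distribution on an event of probability $1-r$ moves it by TV distance exactly $r$. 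Here $r(x)=p_{\Sstar\setminus S}(x)/p_{\Sstar}(x)$. So the target is $\Exp_{\Dobs}[\min(1, p_{\Sstar\setminus S}(x)/p_{\Sstar}(x))]$.

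The main obstacle is that $p_{\Sstar}(x)$ can be tiny, so this ratio cannot be averaged directly. I would split on the event $\{p_{\Sstar}(x)\ge\gamma\}$.
\begin{itemize}
\item On this event the ratio is at most $p_{\Sstar\triangle S}(x)/\gamma$. Its $\Dobs$-expectation is at most $\eps/\gamma$ by the guarantee of \cref{thm:setLearningGuarantee}.
\item On the complement, bound the ratio by $1$. By \cref{fact:expectationBounds} (as in the proof of \cref{lem:gradientNormBound}), $\Pr_{\Dobs}(p_{\Sstar}(x)<\gamma)\le \frac{1}{\alpha}\Exp_{\cD}[p_{\Sstar}\ind\{p_{\Sstar}<\gamma\}]\le \gamma/\alpha$.
\end{itemize}
Summing gives $\eps/\gamma+\gamma/\alpha$. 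Choosing $\gamma=\sqrt{\eps\alpha}$ yields $2\sqrt{\eps/\alpha}$, as claimed. The hypothesis $\eps<\alpha$ only ensures that $\gamma<1$ and that the bound is nontrivial.
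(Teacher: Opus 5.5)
Your proof is correct and has the same skeleton as the paper's. Equal $x$-marginals reduce both distances to averages of conditional TV distances, and the second bound is immediate from \cref{lem:truncatedGaussianSampler}. For the first bound, the conditional distance is exactly $p_{\Sstar\setminus S}(x)/p_{\Sstar}(x)$, and it is controlled by a two-event split balanced at $\sqrt{\eps\alpha}$.

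The only real difference is which quantity you threshold.
\begin{itemize}
\item The paper splits on $p_{\Sstar\triangle S}(x)\ge\eta$. Markov bounds the bad event by $\eps/\eta$. On the good event the ratio is at most $\eta/p_{\Sstar}(x)$, and the exact identity $\Exp_{\Dobs}[1/p_{\Sstar}(x)]=1/\Exp_{\cD}[p_{\Sstar}(x)]\le 1/\alpha$ handles it; this identity holds because $\Dobs\propto p_{\Sstar}\cD$.
\item You split on $p_{\Sstar}(x)\ge\gamma$. The same reweighting gives the tail bound $\Pr_{\Dobs}(p_{\Sstar}<\gamma)\le\gamma/\alpha$, as in the proof of \cref{lem:gradientNormBound}. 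On the good event you apply the set-learning guarantee directly, with no Markov step.
\end{itemize}
These are dual uses of the same two facts and give the identical constant $2\sqrt{\eps/\alpha}$.

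One small correction to your closing remark: nothing in your split needs $\gamma<1$. Where $\eps<\alpha$ genuinely matters is well-definedness. If $\Sstar\cap S$ were Lebesgue-null, then $\Exp_{\Dobs}[p_{\Sstar\triangle S}]\ge\Exp_{\Dobs}[p_{\Sstar}]\ge\alpha>\eps$, contradicting the set-learning guarantee. So the hypothesis ensures that the conditional law $\cN(\inangle{\wStar,x},1,\Sstar\cap S)$ defining $\hat{\cQ}_S$ exists for every $x$.
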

\begin{proof}
    The second part of the lemma is straightforward to see. Indeed, for any $x \in \R^d$, let $\tilde{\cD}_{w, S}^{z \vert x}$ be the marginal of $z$ given $x$ under $\tilde{\cD}_{w, S}$, and similarly let $\tilde{\cQ}_{w, S}^{z \vert x} = \cN(\inangle{w, x}, 1, S)$ be the marginal of $z$ given $x$ under $\tilde{\cQ}_{w, S}$. Then \cref{lem:truncatedGaussianSampler} guarantees that there exists an optimal coupling $\cC_x$ of $\tilde{\cD}_{w, S}^{z \vert x}, \tilde{\cQ}_{w, S}^{z \vert x}$, such that $\Pr_{(\tz, \tz') \sim \cC_x}(\tz \neq \tz') \leq \zeta$. Hence, we can construct a coupling of  $\tilde{\cD}_{w, S}, \tilde{\cQ}_{w, S}$ as follows: first, sample $\tx \sim \Dobs$ and set $\tx' = \tx$, and then sample $(\tz, \tz') \sim \cC_{\tx}$. By the description of \cref{alg:gradientSampler} and the definition of $\cC_{\tx}$, since $\tilde{\cD}_{w, S}, \tilde{\cQ}_{w, S}$ have the same $x$-marginal $\Dobs$, it is clear that $(\tx, \tz) \sim \tilde{\cD}_{w, S}$ and $(\tx', \tz') \sim \tilde{\cQ}_{w, S}$, so the above is indeed the coupling, and clearly $(\tx, \tz) \neq (\tx', \tz')$ iff $\tz \neq \tz'$, which happens with probability at most $\zeta$.

    To show the first part of the claim, since $\hat{\cD}_S$ and $\hat{\cQ}_S$ have the same $x$-marginal $\Dobs$, we have
    \begin{align*}
        \tv{\hat{\cD}_S}{\hat{\cQ}_S}
        &= \frac{1}{2} \int \abs{
            \hat{\cD}_S(x, y) - \hat{\cQ}_S(x, y)
        } \odif{x,y} \\
        &= \frac{1}{2} \int \abs{
            \cN\inparen{y; \inangle{\wStar, x}, 1, \Sstar}
            - \cN\inparen{y; \inangle{\wStar, x}, 1, \Sstar \cap S}
        } \Dobs(x) \odif{x,y} \\
        &= \Exp_{x \sim \Dobs} \insquare{
            \tv{
                \cN\inparen{\inangle{\wStar, x}, 1, \Sstar}
            }{
                \cN\inparen{\inangle{\wStar, x}, 1, \Sstar \cap S}
            }
        }\,.
    \end{align*}
    Note that, for any $\mu \in \R$, since $\Sstar \cap S \subseteq \Sstar$, it is immediate that the density of $\cN\inparen{\mu, 1, \Sstar \cap S}$ is above that of $\cN\inparen{\mu, 1, \Sstar}$ in the set $\Sstar \cap S$, and it is zero outside of that set. Therefore, for any $\mu \in \R$
    \begin{align*}
        \tv{
            \cN\inparen{\mu, 1, \Sstar}
        }{
            \cN\inparen{\mu, 1, \Sstar \cap S}
        }
        &= \sup_{T \subseteq \R} \insquare{
            \cN\inparen{T; \mu, 1, \Sstar}
            - \cN\inparen{T; \mu, 1, \Sstar \cap S}
        } \\
        &= \cN\inparen{\Sstar \setminus S; \mu, 1, \Sstar} \\
        &= \frac{
            \cN\inparen{\Sstar \setminus S; \mu, 1}
        }{
            \cN\inparen{\Sstar; \mu, 1}
        }\,.
    \end{align*}
    Combining the above, and setting $p_T(x) \coloneqq p(x, \wStar; T) = \cN\inparen{T; \inangle{\wStar, x}, 1}$ for brevity, we obtain that
    \[
        \tv{\hat{\cD}_S}{\hat{\cQ}_S}
        = \Exp_{x \sim \Dobs} \insquare{
            \frac{
                p_{\Sstar \setminus S}(x)
            }{
                p_{\Sstar}(x)
            }
        }\,.
    \]
    To bound this quantity, note that, by the guarantee of \cref{thm:setLearningGuarantee} and Markov's inequality, we have for any $\eta > 0$ that
    \[
        \Pr_{x \sim \Dobs} \inparen{
            p_{\Sstar \triangle S}(x) \geq \eta
        }
        \leq \frac{
            \Exp_{x \sim \Dobs} \insquare{p_{\Sstar \triangle S}(x)}
        }{
            \eta
        }
        \leq \frac{\eps}{\eta}\,.
    \]
    Thus, an application of the law of total expectation yields
    \begin{align*}
        \tv{\hat{\cD}_S}{\hat{\cQ}_S}
        &= \Exp_{x \sim \Dobs} \insquare{
            \frac{
                p_{\Sstar \setminus S}(x)
            }{
                p_{\Sstar}(x)
            }
            \cdot \ind\inbrace{
                p_{\Sstar \triangle S}(x) < \eta
            }
        }
        + \Exp_{x \sim \Dobs} \insquare{
            \frac{
                p_{\Sstar \setminus S}(x)
            }{
                p_{\Sstar}(x)
            }
            \cdot \ind\inbrace{
                p_{\Sstar \triangle S}(x) \geq \eta
            }
        } \\
        &\leq \eta \cdot \Exp_{x \sim \Dobs} \insquare{
            \frac{1}{p_{\Sstar}(x)}
        }
        + \Exp_{x \sim \Dobs} \insquare{
            \ind\inbrace{
                p_{\Sstar \triangle S}(x) \geq \eta
            }
        } \\
        &\leq \frac{
            \eta
        }{
            \Exp_{\cD} \insquare{p_\Sstar(x)}
        }
        + \frac{\eps}{\eta} \\
        &\leq \frac{\eta}{\alpha}
        + \frac{\eps}{\eta}\,.
    \end{align*}
    In the second line, we used the trivial bounds $p_{\Sstar \setminus S}(x) \leq p_{\Sstar \triangle S}(x), p_{\Sstar}(x)$ and $\ind\inbrace{\cdot} \leq 1$. In the third line, we substituted the definition of $\Dobs$ from \cref{eq:Dobs}, and the last line is by \cref{asmp:mass}. Picking $\eta \coloneqq \sqrt{\alpha \eps}$ gives the first part of the lemma.
\end{proof}
It is now straightforward to bound the bias of the gradient samples returned by \cref{alg:gradientSampler}, thus establishing the second main result of \cref{subsubsec:gradientSampler}.
\lemGradientSamplerBiasBound*
\begin{proof}
    Keeping the above notation, consider the product distributions $\hat{\cD}_S \times \tilde{\cD}_{w, S}$ and $\hat{\cQ}_S \times \tilde{\cQ}_{w, S}$, and the following random variables with their respective marginals: $(\hx, \hy) \sim \hat{\cD}_S$, $(\hx', \hy') \sim \hat{\cQ}_S$, $(\tx, \tz) \sim \tilde{\cD}_{w, S}$ and $(\tx', \tz') \sim \tilde{\cQ}_{w, S}$. By \cref{lem:tvDistanceBound} and a union bound, there exists some coupling $\cC$ of these two product distributions, such that
    \[
        \Pr_{\cC} \inparen{
            (\hx, \hy) \neq (\hx', \hy')
            \quad\text{or}\quad
            (\tx, \tz) \neq (\tx', \tz')
        }
        \leq 2 \sqrt{\frac{\eps}{\alpha}} + \zeta\,.
    \]
    Let $\cE$ be this low-probability event. By the definitions of $\hat{\cD}_S, \tilde{\cD}_{w, S}, \hat{\cQ}_S, \tilde{\cQ}_{w, S}$, it is clear that
    \begin{align*}
        \Exp\insquare{g(w)} - \nabla \negLL_S(w)
        &= \Exp\insquare{
            \tz \cdot \tx - \hy \cdot \hx
        }
        - \Exp\insquare{
            \tz' \cdot \tx' - \hy' \cdot \hx'
        } \\
        &= \Exp_{\cC} \insquare{
            (\tz \cdot \tx - \hy \cdot \hx)
            - (\tz' \cdot \tx' - \hy' \cdot \hx')
        } \\
        &= \Exp_{\cC} \insquare{
            \inparen{
                (\tz \cdot \tx - \hy \cdot \hx)
                - (\tz' \cdot \tx' - \hy' \cdot \hx')
            }
            \cdot \ind_{\cE}
        }
    \end{align*}
    where the last line is because, when $\cE$ does not hold, then the two terms inside the expectation are equal by definition of $\cE$. 
    Therefore, the triangle inequality and Jensen's inequality yields
    \begin{align*}
        \norm{
            \Exp\insquare{g(w)}
            - \nabla \negLL_S(w)
        }_2
        &\leq \Exp_{\cC} \insquare{
            \abs{\tz'} \cdot \norm{\tx'}_2 \cdot \ind_{\cE}
        } 
        + \Exp_{\cC} \insquare{
            \abs{\tz} \cdot \norm{\tx}_2 \cdot \ind_{\cE}
        }
        + \Exp_{\cC} \insquare{
            \abs{\hy'} \cdot \norm{\hx'}_2 \cdot \ind_{\cE}
        }
        + \Exp_{\cC} \insquare{
            \norm{\hy \cdot \hx}_2 \cdot \ind_{\cE}
        }\,.
    \end{align*}
    For the first three terms, note that $\tz', \tz, \hy' \in S$ a.s.\ under $\cC$, and since $S$ satisfies the guarantee of \cref{thm:setLearningGuarantee}, it holds a.s.\ that $\abs{\tz'}, \abs{\tz}, \abs{\hy'} \leq R$, where
    $
        R \leq \wt{O}\inparen{
            \inparen{1 + \norm{\wStar}_2} \cdot \inparen{
                \norm{\mu}_2
                + \inparen{
                    \sigma
                    + \frac{\Lambda}{\alpha}
                }
                \sqrt{
                    \log\frac{1}{\eps}
                }
            }
        }
    $. Hence, the above becomes
    \[
        \norm{
            \Exp\insquare{g(w)}
            - \nabla \negLL_S(w)
        }_2
        \leq R \inparen{
            \Exp_{\cC} \insquare{
                \norm{\tx'}_2 \cdot \ind_{\cE}
            }
            + \Exp_{\cC} \insquare{
                \norm{\tx}_2 \cdot \ind_{\cE}
            }
            + \Exp_{\cC} \insquare{
                \norm{\hx'}_2 \cdot \ind_{\cE}
            }
            + \Exp_{\cC} \insquare{
                \norm{\hy \cdot \hx}_2 \cdot \ind_{\cE}
            }
        }\,.
    \]
    Applying the Cauchy--Schwarz inequality to each of the expectations appearing on the right-hand side, and recalling that the marginal of each one of $\tx', \tx, \hx'$ under $\cC$ is $\Dobs$, we obtain
    \[
        \norm{
            \Exp\insquare{g(w)}
            - \nabla \negLL_S(w)
        }_2
        \leq 3R \sqrt{
            \Exp_{\Dobs} \norm{x}_2^2     
            \cdot \Exp_{\cC} \insquare{\ind_\cE}
        }
        + \sqrt{
            \Exp_{\hat{\cD}_S} \norm{\hy \cdot \hx}_2^2
            \cdot \Exp_{\cC} \insquare{\ind_\cE}
        }\,.
    \]
    Now, by \cref{asmp:moments} we have that $\Exp_{\Dobs} \norm{x}_2^2 \leq \Lambda^2 d$, while by definition of $\cE$ and the assumption on $\zeta$ we have $\Exp_{\cC} \insquare{\ind_\cE} \leq 2 \sqrt{\frac{\eps}{\alpha}} + \zeta \lesssim \sqrt{\frac{\eps}{\alpha}}$. Lastly, recall by definition of $\hat{\cD}_S$ that $(\hx, \hy)$ is simply a sample from our truncated regression model, and \cref{lem:truncatedSubExp} implies that, for any unit vector $v$, it holds that $\norm{\inangle{v, \hy \cdot \hx}}_{\psi_1} \leq K$, where
    $
        K \leq O\inparen{
            \inparen{
                1 + \norm{\wStar}_2
            } \cdot \inparen{
                \sigma^2 \log\frac{1}{\alpha}
                + \norm{\mu}_2^2
            }
        }
    $.
    Hence, Proposition 2.8.1 of \cite{vershynin2018high} gives $\Exp \inangle{v, \hy \cdot \hx}^2 \lesssim K^2$, which implies that $\Exp \norm{\hy \cdot \hx}_2^2 \lesssim K^2 d$. Substituting our bounds into the above display yields the lemma.
\end{proof}

\paragraph{Proof of \cref{lem:gradientSecondMomentBound} (Bound on the Second Moment of the Gradients).} Finally, we show \cref{lem:gradientSecondMomentBound}, which upper bounds the second moment of the stochastic gradients generated by \cref{alg:gradientSampler}. We begin with the following auxilliary lemma, which essentially upper bounds the second moment of one of the two terms appearing in $g(w)$, where $g(w)$ is the output of \cref{alg:gradientSampler} on input $w$.
\begin{lemma} \label{lem:secondMomentBoundAux}
    Suppose that \cref{asmp:mass,asmp:moments,asmp:conservation,asmp:subGaussian} hold. Let $S \subseteq \R$ be a set satisfying the guarantees of \cref{thm:setLearningGuarantee} for some $\eps < \nfrac{\alpha^2}{4}$. Also, let $\cP$ be the projection set defined in \cref{eq:projectionSetDefinition}, suppose that $\wStar \in \cP$, and let $w \in \cP$ be a parameter vector in that set. Consider the random vector $(x, z) \in \R^d \times \R$, where the marginal of $x$ is $\Dobs$, and the marginal of $z$ given $x$ is $\cN(\inangle{w, x}, 1, S)$. Then it holds that
    \[
        \Exp \norm{z \cdot x}_2^2 
        \leq \wt{O}\inparen{
            \inparen{
                1 + \norm{\wStar}_2^2
            }
            \inparen{
                1 + \inparen{
                    \norm{\mu}_2^2 + \sigma^2
                } \eps^{1/4}
            }
            \cdot \frac{
                M^4 \Lambda^4 d
            }{
                \rho^4 \alpha^2
            }
        }\,.
    \]
\end{lemma}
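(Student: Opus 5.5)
We plan to condition on $x$ and write $\Exp\norm{z\cdot x}_2^2 = \Exp_{x\sim\Dobs}\insquare{\norm{x}_2^2\cdot \Exp_{z\sim\cN(\inangle{w,x},1,S)}[z^2]}$. Then we bound the inner conditional second moment via the decomposition $z = \inangle{w,x} + (z - \Exp[z\mid x]) + (\Exp[z\mid x]-\inangle{w,x})$. By \cref{lem:truncatedNormalSubgaussianity} and \cref{lem:gaussianMeanDistanceBound}, the last two pieces each have conditional second moment $O(1+\log\frac{1}{p(x,w;S)})$. This gives the pointwise bound
\[
\Exp[z^2\mid x]\lesssim \inangle{w,x}^2 + 1+\log\tfrac{1}{p(x,w;S)}.
\]
This reduces the lemma to two terms: $\Exp_{\Dobs}[\inangle{w,x}^2\norm{x}_2^2]$ and $\Exp_{\Dobs}[\norm{x}_2^2(1+\log\frac{1}{p(x,w;S)})]$.

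\textbf{First term.} Write $\norm{x}_2^2=\sum_i\inangle{x,e_i}^2$ and apply Cauchy--Schwarz together with \cref{asmp:moments} and \cref{fact:expectationBounds}. This gives $\Exp_{\Dobs}[\inangle{w,x}^2\inangle{x,e_i}^2]\le \norm{w}_2^2M^4/\alpha$, so the term is at most $d\norm{w}_2^2M^4/\alpha$. Since $w,\wStar\in\cP$, the definition of $\cP$ in \cref{eq:projectionSetDefinition} gives $\norm{w}_2\le\norm{\wStar}_2+\frac{12\Lambda}{\rho^2\alpha}$. This produces the $(1+\norm{\wStar}_2^2)\frac{M^4\Lambda^4 d}{\rho^4\alpha^2}$ shape, after absorbing $M^4\le \Lambda^4$-type factors loosely into the $\wt O$.

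\textbf{Second term (main obstacle).} The logarithmic term is the hard part, because $p(x,w;S)$ can be tiny on a small set of $x$. We first transfer to $\wStar$ using the first part of \cref{lem:survivalProbabilityBounds}:
\[
\log\tfrac{1}{p(x,w;S)}\le 2\log\tfrac{1}{p_S(x)}+\inangle{w-\wStar,x}^2+2.
\]
The $\inangle{w-\wStar,x}^2\norm{x}_2^2$ contribution is handled exactly like the first term, using $\norm{w-\wStar}_2\le 12\Lambda/(\rho^2\alpha)$. For $\log\frac{1}{p_S(x)}$ we use $p_S\ge p_{\Sstar}-p_{\Sstar\triangle S}$ and split on the event $\cA=\{p_{\Sstar\triangle S}(x)\le p_{\Sstar}(x)/2\}$.

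On $\cA$ we have $\log\frac{1}{p_S}\le \log\frac{2}{p_{\Sstar}}$. The quantity $\Exp_{\Dobs}[\norm{x}_2^2\log\frac{1}{p_{\Sstar}(x)}]$ is controlled via the density $\Dobs\propto p_{\Sstar}\cD$ and the bound $t\log(1/t)\le 1/e$, which gives at most $\frac{1}{\alpha}\Exp_{\cD}\norm{x}_2^2\lesssim \Lambda^2 d/\alpha$.

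Off $\cA$ we only have a crude bound, so we instead use $S\subseteq[-R,R]$ from \cref{thm:setLearningGuarantee}, which gives $\log\frac{1}{p_S(x)}\lesssim (R+|\inangle{\wStar,x}|)^2$ whenever $p_S(x)>0$. Combining this with $\Pr_{\Dobs}(\cA^c)\le \Exp[p_{\Sstar\triangle S}\cdot 2/p_{\Sstar}]$ and Markov's inequality with a threshold as in \cref{lem:tvDistanceBound}, we get $\Pr(\cA^c)\lesssim\sqrt{\eps/\alpha}$. Hölder/Cauchy--Schwarz with sub-Gaussian fourth moments (\cref{lem:truncatedXSubgaussianity}) then bounds this contribution by $(1+\norm{\wStar}_2^2)(\norm{\mu}_2^2+\sigma^2)\,d\,\eps^{1/4}$ up to logarithmic factors.

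One concern remains: if $p_S(x)=0$ on a set of positive measure, the conditional law of $z$ is undefined. We expect to dismiss this by noting that $S$ is a nonempty finite union of intervals of positive length, so $p_S>0$ everywhere. Summing the contributions yields the claimed bound.
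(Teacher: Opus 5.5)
Your conditional-moment reduction and the transfer from $w$ to $\wStar$ via \cref{lem:survivalProbabilityBounds} match the paper. The two arguments diverge in how the $\log(1/p_S(x))$ term is controlled, and there your route is the sounder one.

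The paper splits on the absolute event $\cA=\{p_{\Sstar\triangle S}(x)\le\sqrt{\eps}\}$. Off $\cA$ it uses $|z|\le R$, Cauchy--Schwarz and $\Pr_{\Dobs}(\cA^c)\le\sqrt{\eps}$. On $\cA$ it asserts $p_S(x)\ge\alpha-\sqrt{\eps}$ for every $x\in\cA$. That step needs $p_{\Sstar}(x)\ge\alpha$ pointwise, but only $\Exp_{\Dobs}[p_{\Sstar}]\ge\alpha$ is available. For instance, take $S=\Sstar$ and any $x$ in the support with small survival probability; then the bound $\log(1/p_S(x))\le\log(2/\alpha^2)$ fails. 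Your multiplicative event $\{p_{\Sstar\triangle S}\le p_{\Sstar}/2\}$, combined with $\Dobs\propto p_{\Sstar}\cD$ and $t\log(1/t)\le 1/e$, gives a correct bound $\Exp_{\Dobs}[\norm{x}_2^2\log(2/p_{\Sstar}(x))]\lesssim\Lambda^2 d/\alpha$. The price is the slightly weaker $\Pr_{\Dobs}(\cA^c)\lesssim\sqrt{\eps/\alpha}$.

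Three details need repair or comment.

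\textbf{Bounding $\Pr(\cA^c)$.} The inequality $\Pr(\cA^c)\le 2\Exp_{\Dobs}[p_{\Sstar\triangle S}/p_{\Sstar}]$ is not useful by itself. That expectation equals $2\Exp_{\cD}[p_{\Sstar\triangle S}]/\Exp_{\cD}[p_{\Sstar}]$, which the guarantee of \cref{thm:setLearningGuarantee} (stated under $\Dobs$) does not control. Apply the threshold directly to the event instead: $\cA^c\subseteq\{p_{\Sstar\triangle S}\ge\eta\}\cup\{p_{\Sstar}<2\eta\}$. Under $\Dobs$ these have probabilities at most $\eps/\eta$ and $2\eta/\alpha$, and $\eta=\sqrt{\alpha\eps}$ gives the claimed bound.

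\textbf{The pointwise bound off $\cA$.} $S\subseteq[-R,R]$ alone does not give $\log(1/p_S(x))\lesssim(R+|\inangle{\wStar,x}|)^2$. Since $p_S(x)\ge\frac{|S|}{\sqrt{2\pi}}e^{-(R+|\inangle{\wStar,x}|)^2/2}$, you also need a lower bound on the Lebesgue measure $|S|$. It follows from $|S|\ge\sqrt{2\pi}\,\Exp_{\Dobs}[p_S]\ge\sqrt{2\pi}(\alpha-\eps)$, which also settles your $p_S>0$ concern. Simpler still, bound $\Exp[z^2\mid x]\le R^2$ on $\cA^c$ directly, as the paper does, and skip the logarithm there.

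\textbf{Constants.} Your bookkeeping carries extra $1/\alpha$ factors from moving moments from $\cD$ to $\Dobs$. The $\eps^{1/4}$ term also picks up $R^2$ times a squared sub-Gaussian norm of $x$ under $\Dobs$. So the result does not literally match the displayed bound. The paper is equally loose here (it applies \cref{asmp:moments} directly under $\Dobs$), and only polynomial dependence is used downstream.
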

\begin{proof}
    First, note that
    \[
        \Exp_{z \sim \cN(\inangle{w, x}, 1, S)} \insquare{
            z^2 \given x
        }
        \lesssim \Exp_{z \sim \cN(\inangle{w, x}, 1, S)} \insquare{
            \inparen{
                z - \Exp z
            }^2 
            \given x
        }
        + \inparen{
            \Exp_{z \sim \cN(\inangle{w, x}, 1, S)} \insquare{
                z \given x
            }
        }^2\,.
    \]
    By \cref{lem:truncatedNormalSubgaussianity} and Proposition 2.8.1 of \cite{vershynin2018high}, it follows that the first term is bounded by $O\inparen{1 + \log \frac{1}{p(x, w; S)}}$, while by \cref{lem:gaussianMeanDistanceBound} it follows that the second term is bounded by $O\inparen{\inangle{w, x}^2 + \log \frac{1}{p(x, w; S)}}$. Therefore, we have that
    \begin{align*}
        \Exp_{z \sim \cN(\inangle{w, x}, 1, S)} \insquare{
            z^2 \given x
        }
        &\leq O\inparen{
            1 + \inangle{w, x}^2
            + \log \frac{1}{p(x, w; S)}
        } \\
        &\leq O\inparen{
            1 + \inangle{\wStar, x}^2
            + \inangle{w - \wStar, x}^2
            + \log \frac{1}{p(x, w; S)}
        }\,.
    \end{align*}
    Also, \cref{lem:survivalProbabilityBounds} implies that 
    $
        \log \frac{1}{p(x, w; S)} 
        \lesssim 1 + \log \frac{1}{p_S(x)} 
        + \inangle{w - \wStar, x}^2
    $,
    where we denote $p_S(x) \coloneqq p(x, \wStar; S)$ for brevity. Hence, we can write
    \[
        \Exp_{z \sim \cN(\inangle{w, x}, 1, S)} \insquare{
            z^2 \given x
        }
        \leq O\inparen{
            1 + \inangle{\wStar, x}^2
            + \inangle{w - \wStar, x}^2
            + \log \frac{1}{p_S(x)}
        }\,.
    \]
    Next, define the set $\cA \coloneqq \inbrace{x \in \R^d \given p_{\Sstar \triangle S}(x) \leq \sqrt{\eps}}$, and note that by the guarantee of \cref{thm:setLearningGuarantee} and Markov's inequality
    \[
        \Pr_{x \sim \Dobs} \inparen{
            x \in \cA
        }
        \geq 1 - \frac{
            \Exp_{x \sim \Dobs} \insquare{p_{\Sstar \triangle S}(x)}
        }{
            \sqrt{\eps}
        }
        \geq 1 - \sqrt{\eps}\,.
    \]
    Also, observe that
    \[
        \Exp_{x \sim \Dobs} \insquare{p_{\Sstar}(x)}
        = \frac{
            \Exp_{x \sim \Dobs} \insquare{p_{\Sstar}(x)^2}
        }{
            \Exp_{x \sim \Dobs} \insquare{p_{\Sstar}(x)}
        }
        \geq \Exp_{x \sim \cD} \insquare{p_{\Sstar}(x)}
        \geq \alpha
    \]
    where the first step is by definition of $\Dobs$, the second is by Jensen's inequality and the last is by \cref{asmp:mass}. Hence, by the fact that
    $
        p_S(x) 
        = p_{S \cup \Sstar}(x) - p_{S \setminus \Sstar}(x) 
        \geq p_{\Sstar}(x) - p_{S \triangle \Sstar}(x)
    $,
    it follows that $p_S(x) \geq \alpha - \sqrt{\eps} \geq \nfrac{\alpha^2}{2}$ for all $x \in \cA$, given our assumption on $\eps$.

    Lastly, by the guarantee of \cref{thm:setLearningGuarantee} and the fact that $z \in S$ a.s., it follows that $\abs{z} \leq R$ a.s., where
    $
        R \leq \wt{O}\inparen{
            \inparen{1 + \norm{\wStar}_2} \cdot \inparen{
                \norm{\mu}_2
                + \inparen{
                    \sigma
                    + \frac{\Lambda}{\alpha}
                }
                \sqrt{
                    \log\frac{1}{\eps}
                }
            }
        }
    $.
    Having established all the above relations, we now turn to bounding the desired expectation, starting by an application of the law of total expectation
    \[
        \Exp \norm{z \cdot x}_2^2 
        = \Exp\insquare{
            \norm{z \cdot x}_2^2
            \cdot \ind\inbrace{x \not\in \cA}
        }
        + \Exp\insquare{
            \norm{z \cdot x}_2^2
            \cdot \ind\inbrace{x \in \cA}
        }\,.
    \]
    For the first term, by the Cauchy--Schwarz inequality we have
    \[
        \Exp\insquare{
            \norm{z \cdot x}_2^2
            \cdot \ind\inbrace{x \not\in \cA}
        }
        \leq \sqrt{
            \Exp\insquare{
                \norm{z \cdot x}_2^4
            }
            \cdot \Exp\insquare{
                \ind\inbrace{x \not\in \cA}
            }
        }
        \leq R^2 M^2 d \eps^{1/4}
    \]
    where we used \cref{asmp:moments}. For the second term, we have that
    \begin{align*}
        \Exp\insquare{
            \norm{z \cdot x}_2^2
            \cdot \ind\inbrace{x \in \cA}
        }
        &= \Exp\insquare{
            \Exp\insquare{
                z^2 \given x
            }
            \cdot \norm{x}_2^2
            \cdot \ind\inbrace{x \in \cA}
        } \\
        &\lesssim \Exp\insquare{
            \inparen{
                1 + \inangle{\wStar, x}^2
                + \inangle{w - \wStar, x}^2
                + \log \frac{1}{p_S(x)}
            }
            \cdot \norm{x}_2^2
            \cdot \ind\inbrace{x \in \cA}
        } \\
        &\leq \Exp\insquare{
            \inparen{
                1 + \inangle{\wStar, x}^2
                + \inangle{w - \wStar, x}^2
                + \log \frac{2}{\alpha^2}
            }
            \cdot \norm{x}_2^2
        } \\
        &\leq \inparen{
            1 + \log \frac{2}{\alpha^2}
        } \cdot \Lambda^2 d
        + \Exp\insquare{
            \inangle{\wStar, x}^2
            \cdot \norm{x}_2^2
        }
        + \Exp\insquare{
            \inangle{w - \wStar, x}^2
            \cdot \norm{x}_2^2
        } \\
        &\leq \inparen{
            1 + \log \frac{2}{\alpha^2}
        } \cdot \Lambda^2 d
        + \inparen{
            \norm{\wStar}_2^2
            + \norm{w - \wStar}_2^2
        } \cdot M^4 d
    \end{align*}
    where the second and third lines follow by our above relations, the second-to-last line is by \cref{asmp:moments}, while the last line is by applying the Cauchy--Schwarz inequality followed by \cref{asmp:moments}. Now, since $w, \wStar \in \cP$ by assumption, it follows that $\norm{w - \wStar}_2 \lesssim \frac{\Lambda^2}{\rho^2 \alpha}$ by \cref{eq:projectionSetDefinition}, and so we can write
    \[
        \Exp\insquare{
            \norm{z \cdot x}_2^2
            \cdot \ind\inbrace{x \in \cA}
        }
        \leq \wt{O}\inparen{
            \inparen{
                1 + \norm{\wStar}_2^2
                + \frac{\Lambda^4}{\rho^4 \alpha^2}
            }
            \cdot M^4 d
        }\,.
    \]
    The lemma now follows by combining our bounds and recalling the definition of $R$.
\end{proof}
Having established the above helper lemma, we now turn to showing our main result.
\lemGradientSecondMomentBound*
\begin{proof}
    We will drop the conditioning on $w$ for brevity. Using the same notation as in the proof of \cref{lem:gradientSamplerBiasBound}, we consider the product distributions $\hat{\cD}_S \times \tilde{\cD}_{w, S}$ and $\hat{\cQ}_S \times \tilde{\cQ}_{w, S}$, and the following random variables with their respective marginals: $(\hx, \hy) \sim \hat{\cD}_S$, $(\tx, \tz) \sim \tilde{\cD}_{w, S}$ and $(\tx', \tz') \sim \tilde{\cQ}_{w, S}$. Then it holds that $g(w) = \tz \cdot \tx - \hy \cdot \hx$, and we can write
    \[
        \Exp\insquare{
            \norm{g(w)}_2^2
        }
        = \Exp\insquare{
            \norm{
                \tz \cdot \tx - \hy \cdot \hx
            }_2^2
        }
        \leq 2 \Exp\insquare{
            \norm{\tz \cdot \tx}_2^2
        }
        + 2 \Exp\insquare{
            \norm{\hy \cdot \hx}_2^2
        }\,.
    \]
    To bound the second term, as in the last part of the proof of \cref{lem:gradientSamplerBiasBound}, we have that $\Exp \norm{\hy \cdot \hx}_2^2 \lesssim K^2 d$, where
    $
        K \leq O\inparen{
            \inparen{
                1 + \norm{\wStar}_2
            } \cdot \inparen{
                \sigma^2 \log\frac{1}{\alpha}
                + \norm{\mu}_2^2
            }
        }
    $.
    For the first term, observe first that $\tz \in S$ a.s., and since $S$ satisfies the guarantee of \cref{thm:setLearningGuarantee}, it follows that $\abs{\tz} \leq R$ a.s., where
    $
        R \leq \wt{O}\inparen{
            \inparen{1 + \norm{\wStar}_2} \cdot \inparen{
                \norm{\mu}_2
                + \inparen{
                    \sigma
                    + \frac{\Lambda}{\alpha}
                }
                \sqrt{
                    \log\frac{1}{\eps}
                }
            }
        }
    $.
    Also, by \cref{lem:tvDistanceBound}, there exists a coupling $\cC$ of $\tilde{\cD}_{w, S}$ and $\tilde{\cQ}_{w, S}$ such that $(\tx, \tz) = (\tx', \tz')$ with probability at least $1 - \zeta$ under $\cC$. Letting $\cE$ be this high-probability event, we can write
    \begin{align*}
        \Exp\insquare{
            \norm{\tz \cdot \tx}_2^2
        }
        &= \Exp_{\cC} \insquare{
            \norm{\tz' \cdot \tx'}_2^2
            \cdot \ind_{\cE}
        }
        + \Exp_{\cC} \insquare{
            \norm{\tz \cdot \tx}_2^2
            \cdot \ind_{\bar{\cE}}
        } \\
        &\leq \Exp\insquare{
            \norm{\tz' \cdot \tx'}_2^2
        }
        + \sqrt{
            \Exp\insquare{
                \norm{\tz \cdot \tx}_2^4
            } \cdot
            \Exp_{\cC} \insquare{
                \ind_{\bar{\cE}}
            }
        } \\
        &\leq \Exp\norm{\tz' \cdot \tx'}_2^2
        + \sqrt{\zeta} R^2 M^2 d
    \end{align*}
    where the second line is by the Cauchy--Schwarz inequality and the trivial bound $\ind\inbrace{\cdot} \leq 1$, while the last line is by definition of $\cE$, \cref{asmp:moments}, and the fact that $\abs{\tz} \leq R$. Therefore, we can combine the above, along with the assumption on $\zeta$, to obtain
    \[
        \Exp\insquare{
            \norm{g(w)}_2^2
        }
        \lesssim \Exp\norm{\tz' \cdot \tx'}_2^2
        + \inparen{
            K^2 + R^2 M^2 \eps^{1/4}
        } d\,.
    \]
    The lemma now follows by applying \cref{lem:secondMomentBoundAux} to bound the first term, and using the definitions of $K, R$.
\end{proof}

\section{Necessity of \cref{asmp:conservationMain}} \label{sec:asmpConservation}
In this section, we will give a simple example of a setting where \cref{asmp:conservationMain} fails and recovering $\wStar$ is impossible, even while the rest of our assumptions are satisfied. Intuitively, if there exists a direction of $\R^d$ where we see almost no samples from the truncated regression model (\ie{}, if the eigenvalues of $\Ex_{x \sim \Dobs} \insquare{x x^\top}$ are not bounded away from zero), then we cannot hope to recover the component of $\wStar$ along that direction. 

To state our example, we begin with some notation. For $i \in [d]$, let $x_i \in \R^d$ be the $i$-th standard unit vector of $\R^d$, so that $\inbrace{x_i}_{i \in [d]}$ are the vertices of the simplex on $\R^d$. Let $\cD$ be the uniform distribution over $\inbrace{x_i}_{i \in [d]}$, and let $\Sstar \coloneq [-1, 1]$ be our truncation set in one dimension. Since $\cD$ has bounded support, it is immediate that \cref{asmp:subGaussianMain} is satisfied, and clearly $\Sstar$ is a union of a bounded number of intervals. For large $B \gg 1$, we consider two possible parameter vectors $w^{(1)} \coloneq B \cdot x_1$ and $w^{(2)} \coloneq -B \cdot x_1$. For $j = 1, 2$, let $\cP^{(j)}$ denote the distribution over $(x,y) \in \R^d \times \R$ induced by our truncated regression model (\cref{def:truncatedRegression}) with base distribution $\cD$, survival set $\Sstar$, and parameter vector $w^{(j)}$. 

Our claim is that $\cP^{(1)}$ and $\cP^{(2)}$ have vanishingly small TV distance as $B$ grows, and therefore, even if we know a priori that the true parameter vector is either $w^{(1)}$ and $w^{(2)}$, we still cannot deduce which one it actually is using only sample access to the truncated regression model. To see this, we first compute the survival probability of each point $x_i$ in the support of $\cD$. For $i = 1$, by symmetry of the Gaussian distribution and of $\Sstar$, the survival probability is the same under both $w^{(1)}$ and $w^{(2)}$, and it is given by
\[
    \zeta
    \coloneq p_{\Sstar}(x_1)
    = \cN(B, 1; [-1, 1])
    \leq \exp\inparen{
        -\frac{(B-1)^2}{2}
    }
\]
where we used standard Gaussian concentration. Clearly, $\zeta \to 0$ as $B \to \infty$. For $j > 1$, the survival probability is again the same under both $w^{(1)}$ and $w^{(2)}$, and it is given by $\beta \coloneq p_{\Sstar}(x_j) = \cN(0, 1; [-1, 1]) \approx 0.68$. Hence, in both cases we have
\[
    \Exp_{\cD} \insquare{p_{\Sstar}(x)}
    = \frac{1}{d} \cdot \zeta 
    + \frac{d-1}{d} \cdot \beta
    \eqcolon \alpha \;.
\]
Clearly, $\alpha \geq \nfrac{\beta}{2}$ for $d \geq 2$ regardless of our choice of $B$, and so \cref{asmp:massMain} is also satisfied in both cases. We now turn our attention to the observed distribution $\Dobs$ of the features under $w^{(1)}$ and $w^{(2)}$. By \cref{eq:Dobs}, it is immediate that $\Dobs$ is the same under both $w^{(1)}$ and $w^{(2)}$, and its mass function is given by
\[
    \Dobs(x_1) = \frac{\zeta}{\alpha d}
    \qquadand
    \Dobs(x_j) = \frac{\beta}{\alpha d},\; \forall j > 1 \;.
\]
It is clear, then, that \cref{asmp:conservationMain} is violated, since $\Exp_{x \sim \Dobs} \inangle{x, x_1}^2 = \frac{\zeta}{\alpha d}$, and we can make this arbitrarily small by choosing $B$ sufficiently large. Now, to show our claim, for $j = 1, 2$, let $\cP^{(j)}_{y \vert x_i}$ be the $y$-marginal of $\cP^{(j)}$ conditioned on $x_i$, so that $\cP^{(j)}_{y \vert x_i} = \cN(\inangle{w^{(j)}, x_i}, 1, \Sstar)$. For $i > 2$, it is clear that $\cP^{(1)}_{y \vert x_i} = \cP^{(2)}_{y \vert x_i}$ by definition of $w^{(1)}$ and $w^{(2)}$. Therefore
\begin{align*}
    \tv{\cP^{(1)}}{\cP^{(2)}}
    &= \frac{1}{2} \sum_{i \in [d]} 
        \int_{\R^d} \abs{
            \cP^{(1)}(x_i, y) - \cP^{(2)}(x_i, y)
        } 
        \odif{y} \\
    &= \frac{1}{2} \int_{\R^d}
        \sum_{i \in [d]} 
            \Dobs(x_i) \cdot 
            \abs{
                \cP^{(1)}_{y \vert x_i}(y)
                - \cP^{(2)}_{y \vert x_i}(y)
            } 
    \odif{y} \\
    &= \frac{1}{2} \int_{\R^d}
        \Dobs(x_1) \cdot 
        \abs{
            \cP^{(1)}_{y \vert x_1}(y)
            - \cP^{(2)}_{y \vert x_1}(y)
        } 
    \odif{y} \\
    &\leq \Dobs(x_1)
    = \frac{\zeta}{\alpha d}
\end{align*}
which, again, can be made vanishingly small by choosing $B$ sufficiently large.

\end{document}